\documentclass{article}


\PassOptionsToPackage{numbers, compress}{natbib}
\usepackage[final]{neurips_2023}




\usepackage[utf8]{inputenc} 
\usepackage[T1]{fontenc}    
\usepackage{hyperref}       
\usepackage{url}            
\usepackage{booktabs}       
\usepackage{amsfonts}       
\usepackage{nicefrac}       
\usepackage{xfrac}
\usepackage{mathrsfs}
\usepackage{microtype}      
\usepackage{xcolor}         
\usepackage{xspace}
\usepackage{subcaption}
\usepackage[inline]{enumitem}
\usepackage{multirow}

\usepackage{amsmath}
\usepackage{amsfonts}
\usepackage{amssymb}
\usepackage{amsthm}
\usepackage{bm}
\usepackage{bbm}
\usepackage{mathtools}
\usepackage{enumitem}
\usepackage{thmtools,thm-restate}
\usepackage{natbib}
\usepackage[capitalise]{cleveref}
\usepackage{color}
\usepackage{comment}


\newcommand{\rev}[1]{{\leavevmode\color{blue}#1}}


\theoremstyle{plain}
\newtheorem{lemma}{Lemma}

\newtheorem{proposition}{Proposition}
\newtheorem{corollary}{Corollary}

\theoremstyle{definition}
\newtheorem{definition}{Definition}

\newtheorem{assumption}{Assumption}

\theoremstyle{definition}


\newcommand{\blue}[1]{\textcolor{blue}{#1}}

\def\AA{\mathcal{A}}\def\CC{\mathcal{C}}
\def\DD{\mathcal{D}}\def\EE{\mathcal{E}}\def\FF{\mathcal{F}}
\def\GG{\mathcal{G}}
\def\KK{\mathcal{K}}\def\LL{\mathcal{L}}
\def\MM{\mathcal{M}}\def\NN{\mathcal{N}}
\def\PP{\mathcal{P}}\def\RR{\mathcal{R}}
\def\SS{\mathcal{S}}\def\UU{\mathcal{U}}
\def\XX{\mathcal{X}}
\def\YY{\mathcal{Y}}


\def\Ebb{\mathbb{E}}

\def\Rbb{\mathbb{R}}

\def\R{\Rbb}
\def\one{{\mathbbm1}}

\def\*{\star}
\DeclareMathSymbol{\mhef}{\mathord}{operators}{`\-}

\newcommand{\abs}[1]{ \left| #1 \right|  }

\DeclareMathOperator*{\argmin}{arg\,min}
\DeclareMathOperator*{\argmax}{arg\,max}



\newcommand{\E}{\Ebb}

\usepackage{hyperref}

\newcommand{\change}[1]{\textcolor{red}{#1}}

\newcommand{\supp}[1]{\textrm{supp}(#1)}

\newcommand{\sensitivity}[2]{\kappa(#1)}

\usepackage{algorithm}
\usepackage{algorithmic}

\usepackage{wrapfig}

\renewcommand{\rev}[1]{#1}

\renewcommand{\change}[1]{#1}
\newcommand{\crchange}[1]{{\leavevmode\color{black}#1}}
\newcommand{\new}[1]{\crchange{#1}}

\title{
Survival Instinct in Offline Reinforcement Learning
}

%

\author{%
  Anqi Li \\
  University of Washington\\
  \And
  Dipendra Misra\\
  Microsoft Research\\
  \And
  Andrey Kolobov\\
  Microsoft Research \\
  \And
  Ching-An Cheng \\
  Microsoft Research\\
}


\begin{document}

\maketitle

\begin{abstract}

We present a novel observation about the behavior of offline reinforcement learning (RL) algorithms: on many benchmark datasets, offline RL can produce well-performing and safe policies even when trained with ``wrong'' reward labels, such as those that are zero everywhere or are negatives of the true rewards. 
This phenomenon cannot be easily explained by offline RL's return maximization objective. Moreover, it gives offline RL a degree of robustness that is uncharacteristic of its online RL counterparts, which are known to be sensitive to reward design.
We demonstrate that this surprising robustness property is attributable to an interplay between the notion of \emph{pessimism} in offline RL algorithms and certain implicit biases in common data collection practices.
As we prove in this work, pessimism endows the agent with a \emph{survival instinct}, i.e., an incentive to stay within the data support in the long term, while the limited and biased data coverage further constrains the set of survival policies.
Formally, given a reward class --- which may not even contain the true reward --- we identify conditions on the training data distribution that enable offline RL to learn a near-optimal and safe policy from any reward within the class.
We argue that the survival instinct should be taken into account when interpreting results from existing offline RL
    benchmarks and when creating future ones.
Our empirical and theoretical results suggest a new paradigm for RL, whereby an agent is ``nudged'' to learn a desirable behavior with imperfect reward but purposely biased data coverage.
\crchange{
Please visit our website
\href{https://survival-instinct.github.io}{https://survival-instinct.github.io} for accompanied code and videos.
}
\end{abstract}

\section{Introduction}\label{sec:intro}


%

In offline reinforcement learning (RL), an agent optimizes its performance given an offline dataset. 
Despite being its main objective, we find that return maximization is not sufficient for explaining some of its empirical behaviors. In particular, \emph{in many existing benchmark datasets, we observe that offline RL can produce surprisingly good policies even when trained on utterly wrong reward labels.}

In~\cref{fig:hopper_atac}, we present such results on the \texttt{hopper} task from D4RL~\citep{fu2020d4rl}, a popular offline RL benchmark, using a state-of-the-art offline RL algorithm ATAC~\citep{cheng2022adversarially}.
The goal of an RL agent in the \texttt{hopper} task is to move forward as fast as possible while avoiding falling down. We trained ATAC agents on the original datasets and on three modified versions of each dataset, 
with ``wrong'' rewards:
\begin{enumerate*}[label=\emph{\arabic*)}]
    \item \textit{zero}: assigning a zero reward to all transitions,
    \item  \textit{random}: labeling each transition with a reward sampled uniformly from $[0, 1]$, and
    \item \textit{negative}: using the negation of the true reward.
\end{enumerate*}
%
%
Although these wrong rewards contain no information about the underlying task or are even  misleading, the policies learned from them in~\cref{fig:hopper_atac} (left) often perform significantly better than the behavior (data collection) policy and the behavior cloning (BC) policy. They even outperform policies trained with the true reward (denoted as \emph{original} in \cref{fig:hopper_atac}) in some cases.
This is puzzling,
since RL is notorious for being sensitive to reward 
mis-specification~\citep{leike2017ai,hadfield2017inverse,ibarz2018reward,pan2022effects}: in general, maximizing the wrong rewards  with RL leads to sub-optimal performance that can be worse than simply performing BC.
%
In addition, these wrong-reward policies demonstrate a ``safe'' behavior, which keeps the \texttt{hopper} from falling down for a longer period than other comparators in \cref{fig:hopper_atac} (right).
%
This is yet another peculiarity hard to link to return maximization, as none of the wrong rewards encourage the agent to stay alive.
As we will show empirically in~\cref{sec:experiments}, these effects are not unique to ATAC or the \texttt{hopper} task. They occur with multiple offline RL algorithms, including ATAC~\citep{cheng2022adversarially}, PSPI~\citep{xie2021bellman}, IQL~\citep{kostrikov2021offline}, CQL~\citep{kumar2020conservative} and the Decision Transformer (DT)~\citep{chen2021decision}, on dozens of datasets from D4RL~\citep{fu2020d4rl} and Meta-World~\citep{yu2020meta} benchmarks.

\begin{figure}[t]
    \centering
    \includegraphics[width=0.75\columnwidth]{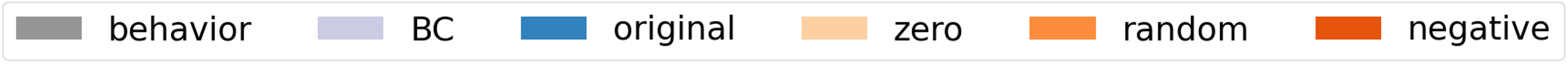}\\
    \begin{subfigure}[b]{0.495\textwidth}
        \includegraphics[width=\columnwidth,trim={0 5mm 0 6mm},clip]{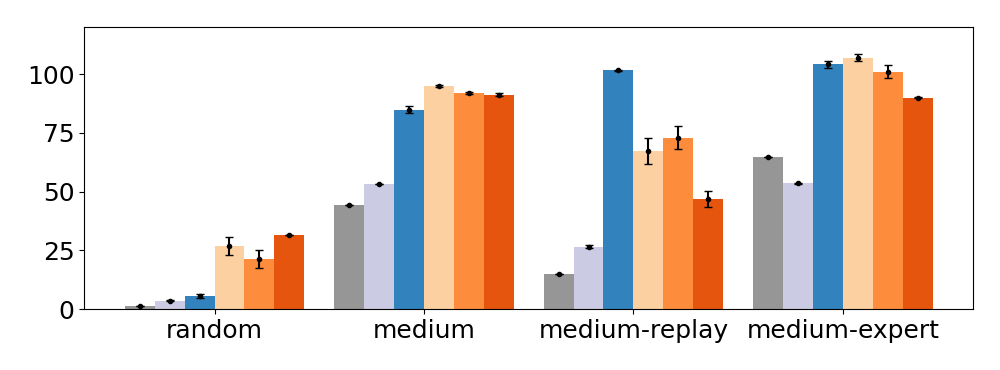}
        \caption{Normalized scores}
    \end{subfigure}
    \begin{subfigure}[b]{0.495\textwidth}
        \includegraphics[width=\columnwidth,trim={0 5mm 0 6mm},clip]{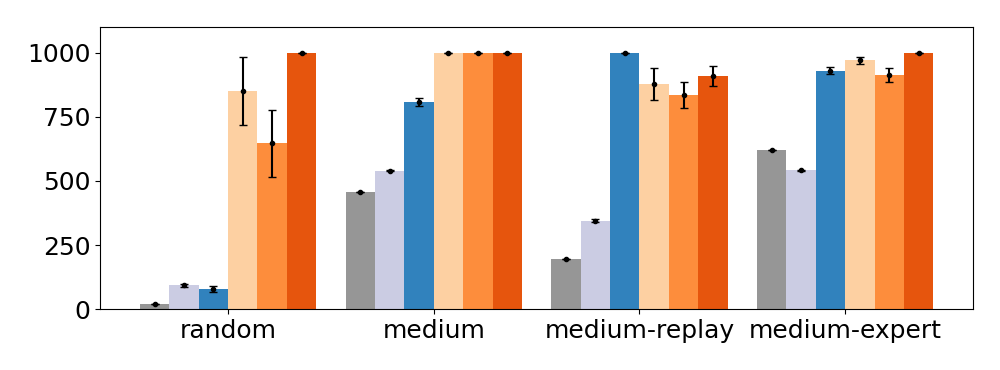}
        \caption{Episode lengths}
    \end{subfigure}
    \caption{On the \texttt{hopper} task from D4RL~\citep{fu2020d4rl}, ATAC~\citep{cheng2022adversarially}, an offline RL algorithm, can produce high-performance and safe policies even when trained on wrong rewards. 
    }
    \label{fig:hopper_atac}
    \vspace{-7mm}
\end{figure}

This robustness of offline RL is not only counter-intuitive but also cannot be explained fully by the literature.
Offline RL theory~\cite{cheng2022adversarially,xie2021bellman,jin2021pessimism,liu2020provably} provides performance guarantees only when the data reward is the true reward. Although offline imitation learning (IL)~\cite{uehara2021pessimistic,chang2021mitigating,ma2022versatile} makes no assumptions about reward, it only shows that the learned policy can achieve performance comparable to that of the behavior policy, not beyond. 
Robust offline RL~\citep{zhang2022corruption,wu2022copa,zhou2021finite,ma2022distributionally,panagantirobust} shows that specialized algorithms can perform well 
when the size of data perturbation is small. In contrast, we demonstrate that standard offline RL algorithms can produce good policies even when we completely change the reward. 
Constrained offline RL algorithms~\cite{le2019batch,xu2022constraints,lee2022coptidice} can learn safe behaviors when constraint violations are both explicitly labeled and optimized. However, in the phenomena we observe, no safety signal is given to off-the-shelf, unconstrained offline RL algorithms.
\new{Recently, \citep{shin2023benchmarks} observes a similar robustness phenomena of offline RL, but does not provide a complete explanation.\footnote{\new{We learned about this paper after the submission. Our theoretical results not only validate their conjecture on the expert dataset, but also predict how this robustness can happen in data collected by suboptimal policies.}}
}
In~\cref{sec:related_work}, we discuss the gap between the related work and our findings in more detail.

In this work, we provide an explanation for this seemingly surprising observation. 
In theory, we prove that this robustness property is attributed to the interplay between the use of pessimism in offline RL algorithms and the implicit bias in typical data collection processes.
Offline RL algorithms often use pessimism to avoid taking actions that lead to unknown future events.
We show that this risk-averse tendency bakes a \emph{``survival instinct''} into the agent, an incentive to stay within the data coverage in the long term. On the other hand, the limited coverage of offline data further constrains the set of \emph{survival policies} (policies that remain in the data support in the long term). When this set of survival policies correlates with policies that achieve high returns w.r.t. the true reward (as in the example in \cref{fig:hopper_atac}), robust behavior emerges.



Our theoretical and empirical results have two important implications. First and foremost, offline RL has a survival instinct that leads to inherent robustness and safety properties that online RL does not have. Unlike online RL, offline RL is \emph{doubly robust}:  as long as the data reward is correct or the data has a positive implicit bias, a pessimistic offline RL agent can perform well. Moreover, offline RL is safe as long as the data only contains safe states; thus safe RL in the offline setup can be achieved without specialized algorithms.
Second, because of the existence of the survival instinct, the data coverage has a profound impact on offline RL. While a large data coverage improves the best policy that can be learned by offline RL with the true reward, it can also make offline RL more sensitive to imperfect rewards.
In other words, collecting a large set of diverse data might not be necessary or helpful (see~\cref{sec:experiments}).
This goes against the common wisdom in the RL community that data should be as exploratory as possible~\citep{munos2003error,chen2019information,yarats2022don}.

We emphasize that survival instinct's implications  should be taken into account when interpreting results on existing offline RL benchmarks as well as when designing future ones. We should treat the evaluation of offline RL algorithms differently from online RL algorithms. We suggest evaluating the performance of an offline RL algorithm by training it with wrong rewards in addition to the true reward so that we can isolate the performance due to return maximization from the compounded effects of survival instinct and implicit data bias.
\new{We propose to use this performance as a score to \emph{quantify} the data bias (see \cref{eq:empirical bias estimate} in \cref{sec:experiment-reward}) in practice. }

We believe that our findings shed new light on RL applicability and research.
To practitioners, we demonstrate that offline RL does not always require the correct reward to succeed.
This opens up the possibility of using offline RL in domains where obtain high-quality rewards is challenging. 
Research-wise, the existence of the survival instinct raises the question of how to design data collection or data filtering procedures that would help offline RL to leverage this instinct in order to improve RL's performance with incorrect or missing reward labels.
\crchange{While in this paper we focus on positive data biases, we caution that in practice the data bias might be negatively correlated with a user's intention. In that case, running offline RL even with the right reward would not lead to the right behavior.}
\vspace{-2mm}
\section{Background}
\vspace{-2mm}

The goal of offline RL is to solve an unknown Markov decision process (MDP) from offline data.
Typically, an offline dataset is a collection of tuples,
$\DD \coloneqq \{ (s,a,r,s') | (s,a) \sim \mu(\cdot, \cdot), r = r(s,a), s' \sim P(\cdot |s,a) \} $,
where $r$ is the reward, $P$ captures the MDP's transition dynamics, and $\mu$ denotes the state-action data distribution induced by some data collection process.
Modern offline RL algorithms adopt pessimism to address the issue of policy learning when $\mu$ does not have the full state-action space as its support.
The basic idea is to optimize for a performance lower bound that penalizes actions leading to out-of-support future states.
Such a penalty can take the form of behavior regularization~\cite{kostrikov2021offline,fujimoto2021minimalist,wu2019behavior},
negative bonuses to discourage visiting less frequent state-action pairs~\cite{jin2021pessimism,rashidinejad2021bridging,yu2020mopo}, pruning less frequent actions \citep{liu2020provably,kidambi2020morel}, adversarial training~\citep{cheng2022adversarially,xie2021bellman,uehara2021pessimistic,xie2022armor,rigter2022rambo} or value penalties in modified dynamic programming~\citep{kumar2020conservative,yu2021combo}.

We are interested in settings where the offline RL agent learns not from $\DD$ itself but from its corrupted version $\tilde{\DD}$ with a wrong reward $\tilde{r}$, 
i.e.,
$\tilde{\DD} \coloneqq \{ (s,a, \tilde{r},s') | (s,a,s') \in \DD,  \tilde{r} = \tilde{r}(s,a)\in[-1,1]\}$. We assume  $\tilde{r}$ is from a reward class $\tilde{\RR}$ --- which may not necessarily contain the true reward $r$ --- and we wish to explain why offline RL can learn good behaviors from the corrupted dataset $\tilde{\DD}$ (e.g., as in \cref{fig:hopper_atac}). To this end, we introduce notations and assumptions we will use in the paper.

\vspace{-2mm}
\paragraph{Notation}
We focus on the setting of infinite-horizon discounted MDPs.
We denote the task MDP that the agent aims to solve as $\MM = (\SS,\AA,r,P,\gamma)$, where $\SS$ is the state space, $\AA$ is the action space, and $\gamma\in[0,1)$ is the discount. Without loss of generality, we assume $r:\SS\times\AA\to[0,1]$.
Let $\Delta(\UU)$ denote the set of probability distributions over a set $\UU$. We denote a policy as $\pi:\SS\to\Delta(\AA)$. 
For a reward function $r$, we define a policy $\pi$'s state value function as $V_r^\pi(s) \coloneqq \E_{\pi,P}[ \sum_{t=0}^\infty \gamma^t r(s_t, a_t) | s_0 = s]$ and the state-action value function as $Q_r^\pi(s,a) \coloneqq  r(s,a) + \gamma \E_{s'\sim P(\cdot \mid s,a)}[ V_r^\pi(s')]$. Solving MDP $\MM$ requires learning a policy that maximizes the return at an initial state distribution $d_0\in\Delta(\SS)$, that is, $\max_\pi \E_{s\sim d_0}[ V_r^\pi(s)]$.  We denote the optimal policy as $\pi^*$ and the optimal value functions as $V_r^*$ and $Q_r^*$.
Given $d_0$, we define the average state-action visitation of a policy $\pi$ as $d^\pi(s,a) \coloneqq  (1-\gamma)\E_{\pi,P}[ \sum_{t=0}^\infty  \gamma^t d_t^\pi(s,a)  ]$, where $d_t^\pi(s,a)$ denotes the probability of visiting $(s,a)$ at time $t$ when running $\pi$ starting at an initial state sampled from $d_0$. Note that $(1-\gamma) \E_{s\sim d_0}[V_r^\pi(s)] = \E_{(s,a)\sim d^\pi}[r(s,a)]$.
For a function $f:\SS\times\AA\to\R$, we use the shorthand $f(s,\pi) = \E_{a\sim\pi|s}[ f(s,a) ]$; similarly for $f:\SS\to\R$, we write $f(p) = \E_{s\sim p}[f(s)]$, e.g., $V_r^\pi(d_0)=\E_{s\sim d_0} [V_r^\pi(s)]$.
For a distribution $p$, we use $\supp{p}$ to denote its support.

\vspace{-2mm}
\paragraph{Assumption}
We make the typical assumption in offline RL that the data distribution $\mu$ assigns positive probabilities to these state-actions visited by running the optimal policy $\pi^*$ starting from  $d_0$.
\begin{assumption}[Single-policy Concentrability] \label{as:concentrability}
    We assume $ \sup_{s\in\SS,a\in\AA} \frac{d^{\pi^*}(s,a)}{ \mu(s,a)} < \infty$.
\end{assumption}
This is a standard assumption in the offline RL literature, which in the worst case is a necessary condition to have no regret w.r.t. $\pi^*$~\citep{zhan2022offline}. There are generalized notions~\citep{xie2021bellman,xie2022armor} of this kind, which are weaker but requires other smoothness assumptions. 
We note that \cref{as:concentrability} does not assume that $\mu$ is the average state-action visitation frequency of a single behavior policy, nor does it assume $\mu$ has a full coverage of all states and actions or all policy distributions~\citep{chen2019information,foster2021offline}.

\vspace{-2mm}
\section{Why Offline RL can Learn Right Behaviors from Wrong Rewards}\label{sec:why}
\vspace{-2mm}

%
In this section, we 
provide conditions 
under which offline RL's aforementioned robustness w.r.t. misspecified rewards emerges.
Our main finding is summarized in the theorem below.
\begin{restatable}{theorem}{InformalPerformanceGuarantee}
\label{th:performance guarantee (informal)}
    (Informal)
    Under \cref{as:concentrability} and certain regularity assumptions, if an offline RL algorithm $Algo$ is set to be sufficiently pessimistic \textcolor{red}{\emph{and}} the data distribution $\mu$ has a positive bias, for any data reward $\tilde{r}\in\tilde{\RR}$, the policy $\hat{\pi}$ learned by $Algo$ from the dataset $\tilde{\DD}$ has performance guarantee
    $
        V_r^{\pi^*}(d_0) - V_r^{\hat\pi}(d_0) \leq O(\iota) 
    $
    as well as safety guarantee
    $
         (1-\gamma) \sum_{t=0}^\infty \gamma^t \mathrm{Prob}\left( \exists \tau\in[0,t], s_\tau \notin \supp{\mu}  | \hat{\pi}  \right)  \leq O(\iota)
    $
    for a small $\iota$ that decreases to zero as the degree of pessimism and dataset size increase.%
\end{restatable}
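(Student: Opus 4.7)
The plan is to combine the standard pessimism-based offline RL performance lemma with two structural arguments that together translate ``near-optimality under $\tilde{r}$ among policies covered by $\mu$'' into the two claimed guarantees. The starting point is the performance bound that every sufficiently pessimistic offline RL algorithm inherits: for any comparator policy $\pi_c$ with bounded single-policy concentrability w.r.t.\ $\mu$,
\begin{equation*}
V_{\tilde r}^{\pi_c}(d_0) - V_{\tilde r}^{\hat\pi}(d_0) \;\leq\; O(\iota_{pess}),
\end{equation*}
where $\iota_{pess}$ shrinks with the pessimism level and the dataset size. Because the offline oracle treats the reward as a black box, this inequality holds verbatim for any $\tilde r \in \tilde\RR$, and \cref{as:concentrability} guarantees that $\pi^*$ itself is a valid comparator.

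First I would establish the safety guarantee by exploiting that pessimistic estimates $\hat V_{\tilde r}^{\pi}$ assign essentially worst-case value to trajectories that leak out of $\supp{\mu}$; the effective per-step penalty is some $L>0$ that grows with the pessimism parameter. Fix a ``survival'' comparator $\pi_s$ that stays inside $\supp{\mu}$ from $d_0$ (its existence and bounded concentrability follow from \cref{as:concentrability}, e.g.\ $\pi_s=\pi^*$). Chaining (i) $\hat V_{\tilde r}^{\hat\pi} \geq \hat V_{\tilde r}^{\pi_s}$ from the algorithm's optimality, (ii) $\hat V_{\tilde r}^{\pi_s}(d_0) \geq V_{\tilde r}^{\pi_s}(d_0) - O(\iota_{pess})$ since $\pi_s$ incurs no pessimistic penalty, and (iii) $\hat V_{\tilde r}^{\hat\pi}(d_0) \leq V_{\tilde r}^{\hat\pi}(d_0) - L\cdot(\text{discounted out-of-support visitation of } \hat\pi)$, one rearranges to
\begin{equation*}
L \cdot (1-\gamma)\sum_{t=0}^\infty \gamma^t \Pr\bigl(\exists \tau\leq t:\, s_\tau \notin \supp{\mu} \,\big|\, \hat\pi\bigr) \;\leq\; O(\iota_{pess}),
\end{equation*}
and dividing through by $L$ yields the stated $O(\iota)$ safety bound.

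Second I would convert $\tilde r$-near-optimality into $r$-near-optimality via a \emph{positive-bias condition} on $\mu$, which I would formalize as: for every $\tilde r \in \tilde\RR$, any policy that (a) stays in $\supp{\mu}$ and (b) is $\delta$-optimal for $\tilde r$ is also $(\delta+\varepsilon_{bias})$-optimal for the true reward $r$. Applying the pessimistic lemma with the comparator $\pi^\dagger_{\tilde r}$ (the best in-support policy under $\tilde r$, whose concentrability is finite because its occupancy sits inside $\supp{\mu}$) gives $\hat\pi$ property (b) with $\delta=O(\iota_{pess})$, and the safety bound from the previous paragraph gives property (a) up to an $O(\iota)$ slack. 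Invoking positive bias and then a triangle inequality against $\pi^*$ produces
\begin{equation*}
V_r^{\pi^*}(d_0) - V_r^{\hat\pi}(d_0) \;\leq\; \varepsilon_{bias} + O(\iota_{pess}),
\end{equation*}
which, after absorbing $\varepsilon_{bias}$ into the tuning of the pessimism and reward-class assumptions, is the claimed $O(\iota)$ performance guarantee.

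The main obstacle will be step three: finding a formal positive-bias condition that is simultaneously (a) weak enough to be plausibly satisfied by realistic biased data (e.g.\ a hopper demonstrator that never shows falling, a door-opening dataset that never shows broken hinges) and (b) strong enough to convert $\tilde r$-gaps into $r$-gaps uniformly over the \emph{entire} reward class $\tilde\RR$. A subsidiary technical hurdle is controlling the concentrability coefficient of $\pi^\dagger_{\tilde r}$ uniformly in $\tilde r$: because $\tilde r$ can be chosen adversarially within $\tilde\RR$, it might push $\pi^\dagger_{\tilde r}$ onto rare in-support transitions and inflate the hidden constant in the pessimistic lemma. Once the right assumption on $\mu$ is in hand, the remainder of the argument is routine bookkeeping on top of the existing pessimistic offline RL machinery.
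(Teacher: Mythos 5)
Your three-part architecture --- pessimism gives a regret bound under $\tilde r$ against in-support comparators, pessimism forces the learned policy to stay (approximately) within $\supp{\mu}$, and a positive-bias condition on $\mu$ converts approximate optimality for the implicit constrained problem into near-optimality under the true reward $r$ --- is exactly the decomposition the paper uses (\cref{th:survival instinct} combined with \cref{def:distribution robustness}), and your proposed formalization of positive bias essentially coincides with the paper's \cref{def:distribution robustness}. However, your safety step contains a genuine error. From (i)--(iii) what you actually obtain is $L\cdot m(\hat\pi) \le \bigl(V_{\tilde r}^{\hat\pi}(d_0) - V_{\tilde r}^{\pi_s}(d_0)\bigr) + O(\iota_{pess})$, and the first term on the right does \emph{not} vanish: under a wrong reward $\tilde r$, a policy that escapes the support may collect strictly more $\tilde r$-return than any surviving comparator, so this difference is only bounded by the return range $O(1/(1-\gamma))$ (or, more sharply, by the sensitivity function $\kappa(\delta)$ of the implicit CMDP). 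Your displayed inequality $L\cdot(1-\gamma)\sum_{t}\gamma^t\Pr(\cdot)\le O(\iota_{pess})$ is therefore false as written; the correct conclusion is $m(\hat\pi)\le O(1/((1-\gamma)L)) + O(\iota_{pess}/L)$, which still decreases with the pessimism level but forces you to quantify \emph{how large} $L$ must be relative to the achievable $\tilde r$-gain from leaving the support. This is precisely where the paper's machinery enters: it bounds the optimal Lagrange multiplier of the relaxed constrained MDP by $\kappa(\delta)/\delta$ (\cref{lm:bound on dual variable}) and requires the algorithm to be $(K/\delta+2)$-admissible, yielding $\iota = \delta + \kappa(\delta) + o(1)$.

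A second, related gap is that your step (iii) presumes the algorithm's internal estimate literally charges a per-step penalty $L$ to out-of-support state-actions. That holds for bonus- or truncation-based methods (VI-LCB, MOPO, MOReL) but not for adversarially trained ones (ATAC, PSPI, ARMOR), which never form such an estimate. The paper's admissibility abstraction (\cref{def:admissibility}) --- small regret simultaneously for every reward that agrees with $\tilde r$ on $\supp{\mu}$ and takes values in $[-R,1]$ off support, in particular the Lagrange reward $\tilde r - \lambda c_\mu$ --- is the device that makes your per-step-penalty intuition rigorous for all of these algorithms at once, via saddle-point duality rather than explicit value estimates. Your remaining concern about controlling the concentrability of $\pi^\dagger_{\tilde r}$ uniformly over $\tilde\RR$ is handled in the paper simply by assumption (\cref{as:data assumption}, item 4).
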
\vspace{-1mm}
\new{In other words, the robustness originates from an \emph{interplay} between pessimism and an implicit positive bias in data. Here is the insight on why \cref{th:performance guarantee (informal)} is true.}
Because of pessimism, offline RL endows agents with a ``survival instinct'' --- it implicitly solves a constrained MDP (CMDP) problem~\citep{altman1999constrained} that 
enforces the policy to stay within the data support. 
When combined with a training data distribution that has a positive bias (e.g., all policies staying within data support are near-optimal) such a {survival instinct} 
results in robustness to reward misspecification.

Overall, \cref{th:performance guarantee (informal)} has two important implications:
\begin{enumerate}[topsep=0pt,itemsep=-1ex,partopsep=1ex,parsep=1ex]
    \item Offline RL is \emph{doubly robust}: it can learn near optimal policies so long as either the reward label is correct or the data distribution is positively biased;
    \item Offline RL is \emph{intrinsically safe} when data are safe, regardless of reward labeling, without the need of explicitly modeling safety constraints.
\end{enumerate}
\crchange{In the remaining of this section, we} provide details and discussion of the statements above. The complete theoretical statements and proofs can be found in \cref{app:why}.

\vspace{-2mm}
\subsection{Intuitions for Survival Instinct and Positive Data Bias} \label{sec:grid world}
\vspace{-2mm}

\begin{wrapfigure}{r}{0.25\textwidth}
   \vspace{-3mm}
  \centering
  \includegraphics[width=0.7\linewidth]{./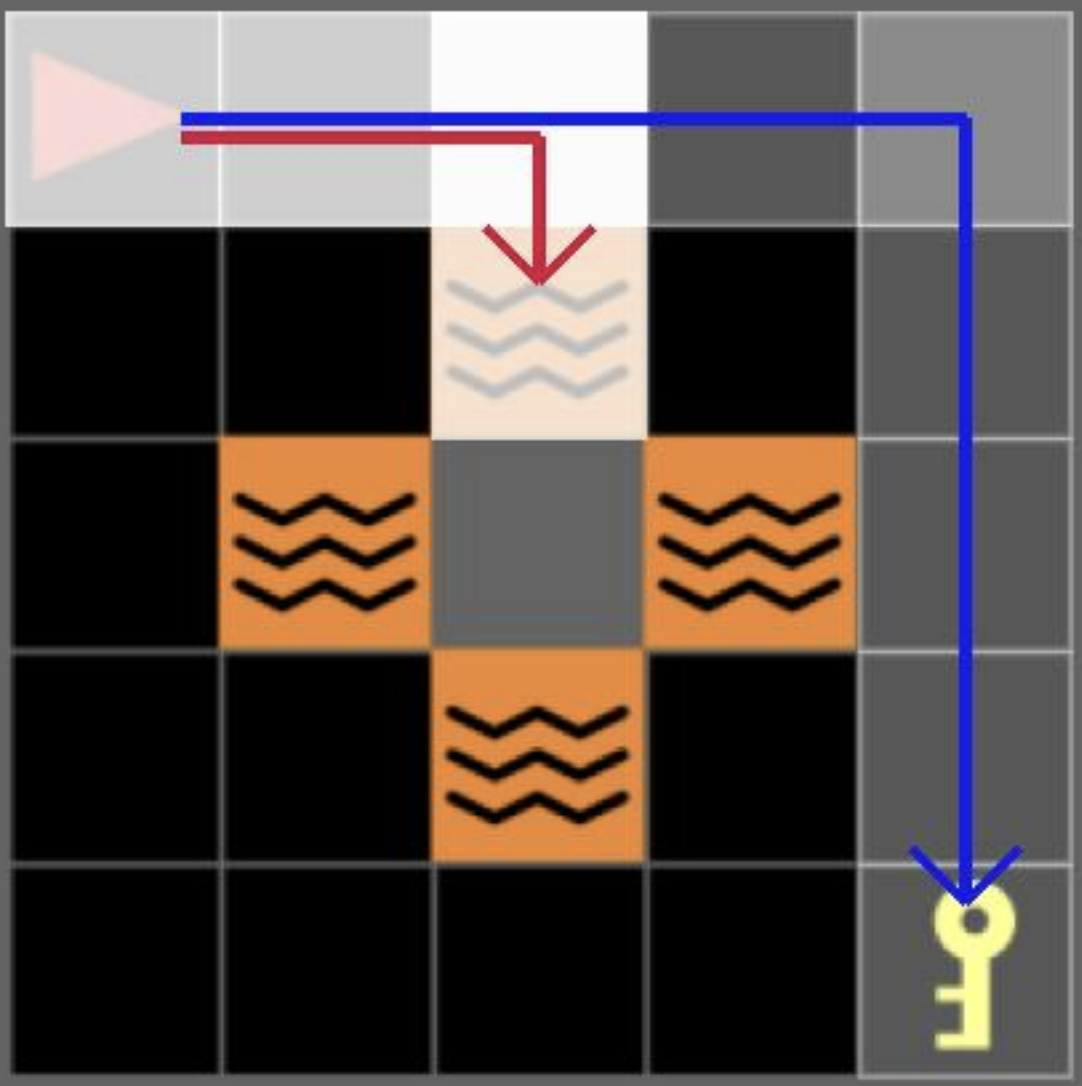}
  \caption{A grid world. BC (red); offline RL with wrong rewards (blue). The opacity indicates the frequency of a state  in the data (more opaque means more frequent). \crchange{Offline RL with the three wrong rewards produces the same policy.}}
\label{fig:gridworld small}
  \vspace{-5mm}
\end{wrapfigure}
We first use a grid world example to build some intuitions.
\cref{fig:gridworld small} shows a goal-directed problem, where the true reward is +1 and -1 upon touching the key and the lava, respectively.
The offline data is suboptimal and does not have full support. All data trajectories that touched the lava were stopped early, while others were allowed to continue until the end of the episode. The goal state (key) is an absorbing state, where the agent can stay \new{\emph{forever}} beyond the episode length.
We use the wrong rewards in \cref{fig:hopper_atac} to train PEVI~\cite{jin2021pessimism}, a finite-horizon, tabular offline RL method. %
PEVI performs dynamic programming similar to value iteration, but with a pessimistic value initialization and an instantaneous pessimism-inducing penalty of $O(\nicefrac{-1}{\sqrt{n(s, a)}})$, where $n(s, a)$ is the empirical count in data. The penalties ensure that the learned value lower bounds the true one.

We see the PEVI agent learned with any wrong reward in  \cref{fig:hopper_atac} is able to solve the problem despite data imperfection, while the BC agent that mimics the data directly fails.
The main reasons are:
\begin{enumerate*}[label=\emph{\arabic*)}]
\item There is a \emph{data bias} whereby longer trajectories end closer to the goal \new{(especially, the longest trajectories are the ones that reach the goal, since the goal is an absorbing state).} We call this a \emph{length bias} (see \cref{sec:implicit data bias}).
This positive data bias is due to bad trajectories (touching the lava or not reaching the goal) being cut short or timing out.
\item Pessimism in PEVI gives the agent an \emph{algorithmic bias} (i.e., survival instinct) that favors longer data trajectories. Because of the pessimistic value initialization, PEVI treats trajectories shorter than the episode length as having the lowest return.
\end{enumerate*}
As a result, by maximizing the pessimistically estimated values, PEVI learns good behaviors despite wrong rewards by leveraging the survival instinct and positive data bias \emph{together}. We now make this claim more general.

\vspace{-2mm}
\subsection{Survival Instinct}
\vspace{-2mm}
\new{
Survival instinct is a pessimism induced behavior that offline RL algorithms tend to favor policies leading to \emph{in-support trajectories}. We formally characterize this risk aversion behavior by the concept of constrained MDP (CMDP)~\citep{altman1999constrained}, which we define below.}
\begin{definition} \label{def:cmdp}
    Let $f,g:\SS\times\AA\to[-1,1]$.
    A \emph{CMDP} $\CC(\SS,\AA, f,g,P,\gamma)$ is a constrained optimization problem:
    $
        \max_\pi  V_f^\pi(d_0) \textrm{ s.t. } V_g^\pi(d_0) \leq 0.
    $
    Let $\pi^\dagger$ denote its optimal policy.
    For $\delta\geq 0$, we define the set of \emph{$\delta$-approximately optimal policies}
    $   \Pi^\dagger_{\change{f,g}}(\delta) \coloneqq     \{ \pi : V_{f}^{\pi^\dagger}(d_0) - V_{f}^{\pi}(d_0) \leq \delta ,  V_g^\pi(d_0) \leq \delta \}.$
\end{definition}

\new{We prove that when trained on $\tilde{D}$, offline RL, because of its pessimism, implicitly solves the CMDP below  even when the algorithm does not explicitly model any constraints:}
\begin{align}\label{eq:implicit CMDP}
    \CC_\mu(\tilde{r}) \coloneqq  \CC(\SS,\AA, \tilde{r},c_\mu,P,\gamma),
\end{align}\vspace{-1mm}
where $c_\mu(s,a) \coloneqq \one[ \mu(s,a)= 0]$ indicates whether $(s,a)$ is out of the support of $\mu$\footnote{
The use of an indicator function is not crucial here. We can extend the current analysis here to other costs that are zero in the support and strictly positive out of the support.}
\new{i.e., the constraint in \cref{eq:implicit CMDP} enforces the agent's trajectories to stay within the data support.} \rev{Note the constraint in this CMDP is feasible because of \cref{as:concentrability}.}
\begin{proposition}[Survival Instinct] (Informal) \label{th:survival instinct (informal)}
Under certain regularity conditions on $\CC_\mu(\tilde{r})$, the policy learned by an offline RL algorithm $Algo$ with the offline dataset $\tilde{D}$ is $\iota$-approximately optimal \change{with respect} to $\CC_\mu(\tilde{r})$, for some small $\iota$ which decreases as the algorithm becomes more pessimistic.
\end{proposition}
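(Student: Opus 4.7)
The approach is to recast pessimistic offline RL as implicitly optimizing a Lagrangian relaxation of $\CC_\mu(\tilde r)$ and then invoke a standard Lagrangian-to-CMDP duality argument to translate approximate optimization of that Lagrangian into $\iota$-approximate constrained optimality. Concretely, I would abstract ``sufficiently pessimistic'' by the existence of an internal value proxy $\hat V^\pi(d_0)$ produced by $Algo$ that, for a pessimism strength $\beta>0$ controllable by the algorithm's hyperparameters, satisfies three properties: (i) a penalized upper bound $\hat V^\pi(d_0)\le V_{\tilde r}^\pi(d_0)-\beta\, V_{c_\mu}^\pi(d_0)+\epsilon_{\mathrm{stat}}$ uniformly in $\pi$; (ii) tightness on any in-support comparator $\hat V^{\pi^\dagger}(d_0)\ge V_{\tilde r}^{\pi^\dagger}(d_0)-\epsilon_{\mathrm{stat}}$; and (iii) an optimization guarantee $\hat V^{\hat\pi}(d_0)\ge \sup_\pi \hat V^\pi(d_0)-\epsilon_{\mathrm{opt}}$. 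Both $\epsilon_{\mathrm{stat}}$ and $\epsilon_{\mathrm{opt}}$ should shrink with dataset size and compute; property (ii) uses \cref{as:concentrability} to certify that $\pi^\dagger$ is in-support, which also makes $\CC_\mu(\tilde r)$ feasible.

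The key steps, in order, would be as follows. First, I would verify that each of the pessimistic algorithms discussed in the paper (PEVI's count-based bonus, ATAC/PSPI's adversarial Bellman penalty, CQL/IQL-style value penalties) instantiates (i)--(iii) with concrete expressions for $\beta$, $\epsilon_{\mathrm{stat}}$, $\epsilon_{\mathrm{opt}}$; this is largely a re-reading of existing pessimistic suboptimality bounds applied to the data reward $\tilde r$ instead of the true $r$, combined with the elementary observation that any valid value lower bound must collapse to the worst achievable return along trajectories leaving $\supp{\mu}$. Second, chaining (i)--(iii) at $\pi^\dagger$ yields
\begin{equation*}
V_{\tilde r}^{\hat\pi}(d_0)-\beta\, V_{c_\mu}^{\hat\pi}(d_0)+\epsilon_{\mathrm{stat}}\ \ge\ \hat V^{\hat\pi}(d_0)\ \ge\ \hat V^{\pi^\dagger}(d_0)-\epsilon_{\mathrm{opt}}\ \ge\ V_{\tilde r}^{\pi^\dagger}(d_0)-\epsilon_{\mathrm{stat}}-\epsilon_{\mathrm{opt}}.
\end{equation*}
Third, using $V_{\tilde r}^{\hat\pi}(d_0)\le \tfrac{1}{1-\gamma}$ and $V_{\tilde r}^{\pi^\dagger}(d_0)\ge 0$, I would solve this inequality for the constraint value to get $V_{c_\mu}^{\hat\pi}(d_0)\le \tfrac{1}{\beta}\bigl(\tfrac{1}{1-\gamma}+2\epsilon_{\mathrm{stat}}+\epsilon_{\mathrm{opt}}\bigr)$, which is the safety half of $\iota$-approximate CMDP optimality. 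Fourth, substituting this safety bound back into the same chain recovers $V_{\tilde r}^{\pi^\dagger}(d_0)-V_{\tilde r}^{\hat\pi}(d_0)\le 2\epsilon_{\mathrm{stat}}+\epsilon_{\mathrm{opt}}$, the optimality half; taking $\iota$ to be the maximum of the two bounds, and choosing $\beta$ appropriately large as a function of the target $\iota$, yields the proposition.

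The main obstacle will be establishing property (i) uniformly across the different pessimism mechanisms that the informal statement is meant to cover, since each mechanism induces a different quantitative link between its algorithmic penalty and the support indicator $c_\mu$. The natural remedy, which I would adopt, is to introduce a ``pessimistic oracle'' abstraction in the appendix that packages (i)--(iii) and then verify the instantiation algorithm by algorithm; the unspecified ``regularity conditions on $\CC_\mu(\tilde r)$'' are presumably what lets one pass from algorithm-native pessimism (typically expressed through empirical state--action density) to the support-based penalty appearing in (i) -- for instance, a Slater-type condition giving a bounded Lagrange multiplier so a finite $\beta$ suffices, together with value-function realizability and Bellman completeness for the algorithms that rely on function approximation.
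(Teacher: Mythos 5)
Your proposal captures the same core mechanism as the paper's proof: pessimism is recast as implicit optimization of the Lagrangian $V_{\tilde r}^\pi(d_0) - \lambda V_{c_\mu}^\pi(d_0)$ with a multiplier tied to the degree of pessimism, and since the CMDP optimum $\pi^\dagger$ is in-support (by \cref{as:concentrability}) the penalty is inactive for it, which simultaneously yields the regret and constraint-violation halves. The packaging differs in two ways. First, where you axiomatize a ``pessimistic oracle'' via the penalized upper bound (i) and tightness (ii), the paper instead defines \emph{admissibility} (\cref{def:admissibility}): the algorithm has small regret simultaneously against every reward that agrees with $\tilde r$ on $\supp{\mu}$ and lies in $[-R,1]$ off support --- a class that contains the Lagrange reward $\tilde r - \lambda c_\mu$ whenever $R \ge \lambda+1$. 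These are essentially interchangeable (a value estimate valid for all data-consistent rewards is automatically dominated by $V_{\tilde r}^\pi - (R-1)V_{c_\mu}^\pi$), but the paper's formulation is deliberately chosen so that existing suboptimality analyses apply verbatim without a union bound, since concentration only occurs on-support where all these rewards coincide; your property (i) would need exactly the per-algorithm translation from algorithmic penalties to $c_\mu$ that you flag as the main obstacle. Second, your constraint-violation step simply divides the value range by $\beta$, giving $V_{c_\mu}^{\hat\pi}(d_0) \le O(1/(\beta(1-\gamma)))$; the paper instead runs the bound through the saddle point of the relaxed CMDP and the sensitivity function $\kappa$, bounding the dual variable by $K/\delta$ (\cref{lm:bound on dual variable}) to get violation $\delta + \kappa(\delta) + o(1)$. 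Your route is more elementary and suffices for the informal proposition, but the paper's form is what interfaces with the positive-data-bias definition (parameterized by $\Pi^\dagger_{\tilde r, c_\mu}(\delta)$) in the main theorem, and it is where the ``regularity conditions on $\CC_\mu(\tilde r)$'' actually enter (namely $\kappa(\delta)\to 0$), rather than as a Slater condition as you guess. One small slip: $V_{\tilde r}^{\pi^\dagger}(d_0) \ge 0$ need not hold since $\tilde r$ can be negative (e.g., the negated reward); using $V_{\tilde r}^{\pi^\dagger}(d_0) \ge -\tfrac{1}{1-\gamma}$ only changes your constant.
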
 \vspace{-1mm}

\cref{th:survival instinct (informal)} says if an offline RL algorithm is sufficiently pessimistic, then the learned policy 
is approximately optimal to $\CC_\mu(\tilde{r})$. The learned policy has not only small regret with respect to the data reward $\tilde{r}$, 
but also small chances of escaping the support of the data distribution $\mu$.
%

We highlight that the survival instinct in \cref{th:survival instinct (informal)} is a \emph{long-term} behavior. Such a survival behavior cannot be achieved by myopically
\new{taking actions in the data support} in general (such as BC).\footnote{BC requires a stronger condition, e.g., the data are all complete trajectories without intervention or timeouts. }
For instance, when some trajectories generating the data are truncated (e.g., due to early-stopping or intervention for safety reasons), 
taking in-support actions may still lead to out-of-support states in the future, as the BC agent in \cref{sec:grid world}.
%

\paragraph{Proof Sketch} The key to prove \cref{th:survival instinct (informal)} is to show that an offline RL algorithm by pessimism has small regret for not just $\tilde{r}$ but a set of reward functions consistent with $\tilde{r}$ on data but different outside of data, including the Lagrange reward of \eqref{eq:implicit CMDP} (i.e., $\tilde{r} - \lambda c_\mu$, \new{with a large enough} $\lambda\geq 0$).
We call this property admissibility and we prove in \cref{app:algo details} that many existing offline RL algorithms are admissible, including model-free algorithms ATAC~\cite{cheng2022adversarially}, VI-LCB~\cite{rashidinejad2021bridging} PPI/PQI~\cite{liu2020provably}, PSPI~\cite{xie2021bellman}, as well as model-algorithms, ARMOR~\cite{xie2022armor,bhardwaj2023adversarial}, MOPO~\cite{yu2020mopo}, MOReL~\cite{kidambi2020morel}, and CPPO~\cite{uehara2021pessimistic}.
By \cref{th:survival instinct (informal)} 
these algorithms have survival instinct. 
Please see \cref{app:why} for details.

\vspace{-2mm}
\subsection{Positive Data Bias} \label{sec:implicit data bias}
\vspace{-2mm}
%


In addition to survival instinct, another key factor is an implicit positive bias in common offline datasets.
Typically these data manifest meaningful behaviors. 
For example, in collecting data for goal-oriented problems, data recording is normally stopped if the agent fails to reach the goal within certain time limit. Another example is problems (like robotics or healthcare) where wrong decisions can have detrimental consequences, \new{i.e., problems that safe RL studies}. In these domains, the data are collected by qualified policies only or under an intervention mechanism to prevent catastrophic failures.
%
Such a one-sided bias can creates an effect that staying within data support would lead to meaningful behaviors. Below we formally define the positive data bias; Later in~\cref{sec:experiments} (\cref{fig:positive-bias}), we provide empirical estimates of the degree of positive data bias.
\begin{restatable}[Positive Data Bias]{definition}{PositiveDataBias}
\label{def:distribution robustness}
    A distribution $\mu$ is \emph{$\frac{1}{\epsilon}$-positively biased} w.r.t. a reward class $\tilde{\RR}$ if
    \begin{align}
         \max_{\tilde{r}\in\tilde{\RR}} \max_{\pi \in  \Pi^\dagger_{\change{\tilde{r}, c_\mu}}(\delta)} V_r^{\pi^*}(d_0) -  V_r^{\pi}(d_0)
         \leq \epsilon + O(\delta)
    \end{align}
    for all $\delta\geq 0$, where $\Pi^\dagger_{\change{\tilde{r}, c_\mu}}(\delta)$ denotes the set of $\delta$-approximately optimal policy of $\CC_\mu(\tilde{r})$.
\end{restatable} \vspace{-1mm}
We measure the degree of positive data bias based on how bad a policy can perform in terms of the true reward $r$ when approximately solving the CMDP $\CC_\mu(\tilde r)$ in~\eqref{eq:implicit CMDP} defined with the wrong reward $\tilde{r}\in\tilde{\RR}$. 
If the data distribution $\mu$ is $\infty$-positively biased, then any approximately optimal policy to $\CC_\mu(\tilde r)$ 
(which includes the policies learned by offline RL, as shown earlier) can achieve high return in the true reward $r$.
%
%
We also can view the degree of positiveness as reflecting whether $\tilde{r}$ provides a similar ranking as $r$, \emph{among policies within the support of $\mu$}. When there is a positive bias, offline RL can learn with $\tilde{r}$ to perform well under $r$, even when $\tilde{r}$ is not aligned with $r$ globally. This is \crchange{in} contrast to online RL, which requires global alignment due to the exploratory nature of online RL.

%

\paragraph{Examples} \new{We provide a few examples of positive data bias. (See \cref{app:implicit data bias} for proofs.)}
\begin{itemize}[leftmargin=*]
    \item  The distribution $d^\pi$ induced by any policy \change{$\pi$} is $\infty$-positively biased for any rewards resulting from potential-based reward shaping~\citep{ng1999policy} since it provides the same ranking for all policies.
    \item For an IL setup, the data distribution $\mu$ is $\infty$-positively biased for any rewards $\tilde r: \SS\times\AA\to \R$ if the data is generated by the optimal policy (i.e., $\mu=d^{\pi^\star}$). \new{This is because following the optimal policy is the only way to stay within the data support. In~\cref{sec:experiments}, we empirically show that offline RL algorithms can learn good policies with wrong rewards on D4RL \texttt{expert} datasets, which are collected by near-optimal policies.}
    \item A positive bias happens when longer trajectories in the data have smaller optimality gap. \new{This is a generalized formal definition of the \emph{length bias} mentioned in \cref{sec:grid world}.} This condition is typically satisfied when intervention is taken in the data collection process (despite the data collection policy being suboptimal), as in the motivating example in \cref{fig:hopper_atac}. Later in~\cref{sec:experiments}, we will investigate deeper into this kind of length bias empirically.
\end{itemize}

\vspace{-2mm}
\paragraph{Remark}
    We highlight that the positive data bias assumption in \cref{def:distribution robustness} is \emph{different} from assuming that the data is collected by expert policies, which is typical in the IL literature. 
Positive data bias assumption can hold in cases when data is generated by highly suboptimal policies, which we observe in the \texttt{hopper} example from~\cref{sec:intro}. 
On the other hand, there are also cases where IL can learn well, while positive data bias does not exist \new{(e.g., when learning from data collected by a stochastic expert policy that covers highly suboptimal actions with small probabilities).}


\vspace{-2mm}
\subsection{Summary: Offline RL is Doubly Robust and Intrinsically Safe} \label{sec doubly robust and safe}
\vspace{-2mm}

We have discussed the survival instinct from pessimism and the potential positive data bias in common datasets.
In short, \change{survival instinct enables offline RL algorithms to learn policies that benefit}
from a favorable inductive bias of staying within the support of a positive data distribution. As a result, offline RL becomes robust to reward mis-specification (namely, \cref{th:performance guarantee (informal)}).

\new{Below we discuss some direct implications of \cref{th:performance guarantee (informal)}.}
First, we can view this phenomenon as a doubly robust property of offline RL. We borrow the name ``doubly robust'' from the offline policy \emph{evaluation} literature~\citep{dudik2011doubly} to highlight the robustness of offline RL to reward mis-specification as an offline policy \emph{optimization} approach.
\begin{corollary}
Under \cref{as:concentrability}, offline RL can learn a near optimal policy, as long as the reward is correct, \emph{or} the data has a positive implicit bias.
\end{corollary}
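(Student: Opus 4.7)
The plan is to recognize that the corollary is a qualitative restatement of \cref{th:performance guarantee (informal)} once one notices that the ``reward is correct'' hypothesis is itself a special case of positive data bias. I would therefore reduce the first disjunct to the second, and then invoke \cref{th:performance guarantee (informal)} uniformly.

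\textbf{Case 2 (positive data bias).} Assume $\mu$ is $\tfrac{1}{\epsilon}$-positively biased w.r.t.\ a reward class $\tilde{\RR}\ni\tilde{r}$. I would invoke \cref{th:survival instinct (informal)} (survival instinct) to conclude that the policy $\hat\pi$ returned by the pessimistic offline RL algorithm lies in $\Pi^\dagger_{\tilde r, c_\mu}(\iota)$, where $\iota$ shrinks with pessimism and sample size. Feeding this into \cref{def:distribution robustness} yields $V_r^{\pi^*}(d_0) - V_r^{\hat\pi}(d_0) \le \epsilon + O(\iota)$, which is exactly the bound of \cref{th:performance guarantee (informal)}.

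\textbf{Case 1 (correct reward).} I would show that $\tilde r = r$ makes $\mu$ automatically $\infty$-positively biased w.r.t.\ the singleton class $\{r\}$, so Case~2 delivers the guarantee with $\epsilon=0$. Concretely, fix any $\pi\in\Pi^\dagger_{r,c_\mu}(\delta)$. \cref{as:concentrability} implies that $d^{\pi^*}$ is supported on $\supp{\mu}$, so
\[
V_{c_\mu}^{\pi^*}(d_0) \;=\; \tfrac{1}{1-\gamma}\,\E_{(s,a)\sim d^{\pi^*}}\bigl[\one[\mu(s,a)=0]\bigr] \;=\; 0,
\]
i.e.\ $\pi^*$ is feasible for $\CC_\mu(r)$. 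Hence the CMDP optimum $\pi^\dagger$ of $\CC_\mu(r)$ satisfies $V_r^{\pi^\dagger}(d_0)\ge V_r^{\pi^*}(d_0)$, and combining with the approximate-optimality bound $V_r^{\pi}(d_0)\ge V_r^{\pi^\dagger}(d_0)-\delta$ gives $V_r^{\pi^*}(d_0)-V_r^{\pi}(d_0)\le\delta$. This verifies \cref{def:distribution robustness} for $\tilde{\RR}=\{r\}$ with $\epsilon=0$, after which Case~2 finishes the argument.

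\textbf{Main obstacle.} The argument is essentially bookkeeping; the only non-trivial step is the feasibility calculation for $\pi^*$, which is the bridge that lets ``correct reward'' piggyback on the machinery of \cref{th:performance guarantee (informal)}. All other ingredients --- admissibility of the offline RL algorithm across the relevant reward class, vanishing $\iota$ with growing dataset and pessimism, and the CMDP-to-regret translation --- are already in \cref{th:survival instinct (informal)} and the admissibility results cited in the appendix. The main care point is phrasing the conclusion as a disjunction: neither hypothesis is assumed to imply the other globally, yet either one alone is enough to route through Theorem~1.
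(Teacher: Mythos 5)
Your proposal is correct, but it takes a genuinely different route from the paper on the correct-reward branch. The paper presents this corollary as an immediate disjunction of two independently established facts: when the data is positively biased, \cref{th:performance guarantee (informal)} applies; when the reward is correct, the standard pessimistic offline-RL guarantee applies directly --- admissibility with $\tilde{r}=\bar{r}=r$ and comparator $\pi^*$ gives $V_r^{\pi^*}(d_0)-V_r^{\hat\pi}(d_0)\leq \EE(\pi^*,\mu)=o(1)$ under \cref{as:concentrability}, with no reference to the CMDP machinery at all. You instead fold the first branch into the second by showing that a correct reward makes $\mu$ automatically $\infty$-positively biased w.r.t.\ the singleton class $\{r\}$: your feasibility computation $V_{c_\mu}^{\pi^*}(d_0)=0$ is precisely the remark the paper records after restating \cref{def:distribution robustness} in the appendix, and the chain $V_r^{\pi^\dagger}(d_0)\geq V_r^{\pi^*}(d_0)$ (hence equality, since $\pi^*$ is the unconstrained optimum) followed by the $\delta$-approximate optimality of $\pi\in\Pi^\dagger_{r,c_\mu}(\delta)$ is sound and does verify the definition with $\epsilon=0$. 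What your unification buys is economy and a conceptual point --- ``correct reward'' is exhibited as a degenerate instance of positive bias, so a single invocation of \cref{th:performance guarantee (informal)} covers both cases. What it costs is that the correct-reward branch now inherits the regularity hypotheses of \cref{th:performance guarantee (informal)}, in particular a bounded sensitivity function $\sensitivity{\delta}{\CC_\mu(r)}$ with $\kappa(\delta)\to 0$ as $\delta\to 0^+$; the paper explicitly warns that $\kappa$ can be discontinuous at $0$ in the discounted infinite-horizon setting, so your version of the correct-reward guarantee is strictly weaker than the direct one (the other regularity condition, $\sup_{s,a} d^{\pi^\dagger}(s,a)/\mu(s,a)<\infty$, is harmless here since $\pi^\dagger$ may be taken to be $\pi^*$ itself). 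If you want the corollary at the strength the paper intends, keep your Case~2 as is but prove Case~1 directly from admissibility and \cref{as:concentrability}.
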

\vspace{-1mm}

\cref{th:performance guarantee (informal)} also implies that offline RL is an intrinsically safe learning algorithm, unlike its counterpart online RL where additional penalties or constraints need to be explicitly modelled~\citep{achiam2017constrained,wachi2020safe,wagener2021safe,paternain2022safe,nguyen2023provable}.
\begin{corollary}\label{cr:safety guarantee (informal)}
    If $\mu$  only covers safe states and there exists a safe policy staying within the support of $\mu$, then the policy of offline RL only visits safe states with high probability (see~\cref{th:performance guarantee (informal)}). 
\end{corollary}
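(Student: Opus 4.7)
The plan is to show that this corollary follows almost immediately from the safety component of \cref{th:performance guarantee (informal)}, together with a simple containment argument and a feasibility check for the implicit CMDP $\CC_\mu(\tilde r)$. First, I would fix the safety notion: let $\SS_{\mathrm{safe}} \subseteq \SS$ denote the set of safe states, and consider the ``visited-unsafe-by-time-$t$'' event $\EE_t^{\mathrm{unsafe}} \coloneqq \{\exists \tau \in [0,t]: s_\tau \notin \SS_{\mathrm{safe}}\}$ under $\hat\pi$. The hypothesis that $\mu$ only covers safe states means $\supp{\mu} \subseteq \SS_{\mathrm{safe}}$, so every out-of-support state is unsafe; equivalently, every unsafe visit forces an out-of-support visit. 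Hence $\EE_t^{\mathrm{unsafe}} \subseteq \{\exists \tau \in [0,t]: s_\tau \notin \supp{\mu}\}$ and $\mathrm{Prob}(\EE_t^{\mathrm{unsafe}}\mid\hat\pi) \le \mathrm{Prob}(\exists \tau \in [0,t]: s_\tau \notin \supp{\mu}\mid \hat\pi)$.

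Next, I would apply the safety bound from \cref{th:performance guarantee (informal)} directly: summing the inequality above with weights $(1-\gamma)\gamma^t$ yields
\begin{equation*}
(1-\gamma)\sum_{t=0}^\infty \gamma^t \mathrm{Prob}(\EE_t^{\mathrm{unsafe}} \mid \hat\pi)
\;\le\; (1-\gamma)\sum_{t=0}^\infty \gamma^t \mathrm{Prob}(\exists \tau \in [0,t]: s_\tau \notin \supp{\mu}\mid \hat\pi)
\;\le\; O(\iota).
\end{equation*}
Since each term on the left is nonnegative and monotone in $t$, Markov's inequality then gives a per-horizon guarantee that the probability of the learned policy visiting any unsafe state within a time horizon of interest is small, decaying to zero as the pessimism level and dataset size increase. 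This is precisely the ``only visits safe states with high probability'' claim.

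The remaining step is to justify that the $\iota$ appearing in \cref{th:performance guarantee (informal)} is in fact finite and decreasing. This is where the assumption that \emph{there exists a safe policy staying within the support of $\mu$} enters: it guarantees that the implicit CMDP $\CC_\mu(\tilde r)$ defined in \eqref{eq:implicit CMDP} is strictly feasible (a Slater-type condition), so the Lagrangian reformulation underlying \cref{th:survival instinct (informal)} admits a bounded multiplier, and hence the admissibility-based analysis underlying \cref{th:performance guarantee (informal)} yields a nontrivial bound on the support-escape probability. Note that, unlike the reward-performance half of \cref{th:performance guarantee (informal)}, no positive-data-bias assumption is needed here, since safety is inherited from $\supp{\mu} \subseteq \SS_{\mathrm{safe}}$ regardless of how $\tilde r$ ranks in-support policies.

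The main obstacle, insofar as there is one, is the feasibility/regularity bookkeeping: one must verify that the ``certain regularity assumptions'' invoked in \cref{th:performance guarantee (informal)} are all implied by (or compatible with) the existence of an in-support safe policy, so that no extra conditions need to be imposed beyond what the corollary already states. Once that is in place, the containment argument and the CMDP-level safety bound give the result with essentially no additional calculation.
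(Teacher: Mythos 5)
Your proposal is correct and follows essentially the same route the paper intends: the safety guarantee in \cref{th:performance guarantee (informal)} bounds the discounted probability of escaping $\supp{\mu}$, the hypothesis $\supp{\mu}\subseteq\SS_{\mathrm{safe}}$ makes every unsafe visit an out-of-support visit, and the existence of an in-support safe policy is exactly the feasibility condition the paper imposes on the implicit CMDP (item 3 of \cref{as:data assumption}). One small correction: since $c_\mu\geq 0$, the constraint $V_{c_\mu}^\pi(d_0)\leq 0$ can only be satisfied with equality, so Slater's condition cannot hold here; the paper explicitly notes this and instead invokes a duality result that does not require strict feasibility, so your feasibility conclusion stands but not via a Slater-type argument.
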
 \vspace{-1mm}
We should note that covering safe states is a very mild assumption in the safe RL literature~\cite{wagener2021safe,hans2008safe}, e.g., compared with all (data) states have a safe action.
It does not require the data collection policies that generate $\mu$ are safe. This condition can be easily satisfied by filtering out unsafe states in post processing. The existence of a safe policy is also mild and common assumption. 
\crchange{In~\cref{sec:experiment-safety}, we validate this inherent safety property on offline SafetyGymnasium~\cite{liu2023datasets}, an offline safe RL benchmark.}

\vspace{-2mm}
\section{Experiments}\label{sec:experiments}
\vspace{-2mm}

\crchange{
We conduct two group of experiments. In~\cref{sec:experiment-reward}, we conduct large scale experiment, showing that multiple offline RL algorithms can be robust to reward mis-specification on a variety of datasets.
In~\cref{sec:experiment-safety}, we experimentally validate the inherent safety of offline RL algorithms stated in~\cref{cr:safety guarantee (informal)}. We show that admissible offline RL algorithms, without modifications, can achieve state-of-the-art performance on an offline safe RL benchmark~\cite{liu2023datasets}.}

\vspace{-2mm}
\subsection{Robustness to Mis-specified Rewards}\label{sec:experiment-reward}
We empirically study the performance of offline RL algorithms under wrong rewards in a variety of tasks. We use the same set of wrong rewards as in~\cref{fig:hopper_atac}.
\begin{enumerate*}[label=\emph{\arabic*)}]
\item {zero}: the zero reward,
\item random: labeling each transition with a reward value randomly sampled from $\text{Unif}[0, 1]$, and
\item negative: the negation of true reward.
\end{enumerate*}
We consider five offline RL algorithms, ATAC~\citep{cheng2022adversarially}, PSPI~\citep{xie2021bellman}, IQL~\citep{kostrikov2021offline},  CQL~\citep{kumar2020conservative} and decision transformer (DT)\footnote{We condition the decision transformer on the return of a trajectory sampled randomly from trajectories that achieve top 10\% of returns in terms of data reward.}~\citep{chen2021decision}.
We deliberately choose offline RL algorithms to cover those that are provably pessimistic~\citep{cheng2022adversarially,xie2021bellman} and those that are popular among practitioners~\citep{kostrikov2021offline,kumar2020conservative}, as well as an unconventional offline RL algorithm~\citep{chen2021decision}. We consider a variety of tasks from D4RL~\citep{fu2020d4rl} and Meta-World~\citep{yu2020meta} 
ranging from safety-critical tasks (i.e., the agent dies when reaching bad states), goal-oriented tasks, and tasks that belong to neither. We train a total of around $16$k offline RL agents (see~\cref{sec:compute}). Please see~\cref{sec:experiment_details} for details.

\vspace{-2mm}
\paragraph{Messages} We would like to convey three main messages through our experiments. First, implicit data bias \emph{can} exist naturally in a wide range of datasets, and offline RL algorithms that are sufficiently pessimistic can leverage such a bias to succeed when given wrong rewards. Second, offline RL algorithms, regardless of how pessimistic they are, become sensitive to reward when the data does not possess a positive bias. Third, offline RL algorithms without explicit pessimism, e.g., IQL~\citep{kostrikov2021offline}, can sometimes still be pessimistic enough to achieve good performance under wrong rewards.

\vspace{-2mm}
\paragraph{Remark on negative results} We consider ``negative'' results, i.e., when offline RL \emph{fails} under wrong rewards, as important as the positive ones. Since they tell us \emph{how} positive data bias can be broken or avoided. We hope our study can provide insights to researchers who are interested in actively incorporating positive bias in data collection, as well as who hope to design offline RL benchmarks specifically with or without positive data bias.

\vspace{-2mm}
\subsubsection{Locomotion Tasks from D4RL}
\vspace{-2mm}
We evaluate offline RL algorithms on three locomotion tasks, \texttt{hopper}, \texttt{walker2d}\footnote{We remove terminal transitions from \texttt{hopper} and \texttt{walker2d} datasets when learning with wrong rewards.
In~\cref{sec:experiment_details}, we include a ablation study on the effect of removing terminals when using true reward.} and \texttt{halfcheetah}, from D4RL~\citep{fu2020d4rl}, a widely used offline RL benchmark. \crchange{For each task, we consider five datasets with different qualites: \texttt{random}, \texttt{medium}, \texttt{medium-replay}, \texttt{medium-expert}, and \texttt{expert}.}
 We refer readers to~\cite{fu2020d4rl} for the construction of these datasets.
We measure policy performance in D4RL normalized scores~\citep{fu2020d4rl}. 
We provide three baselines  1) behavior: normalized score directly computed from the dataset,
2) BC: the behavior cloning policy, and 3) original: the policy produced by the offline RL algorithm using the original dataset (with true reward). 
%
The normalized scores for baselines and offline RL algorithms with wrong rewards
are presented in~\cref{fig:d4rl_results}. The exact numbers can be found in tables in~\cref{sec:experiment_details}.

\begin{figure*}[t]
    \centering
    \includegraphics[width=0.95\columnwidth,trim={0 21mm 0 6mm},clip]{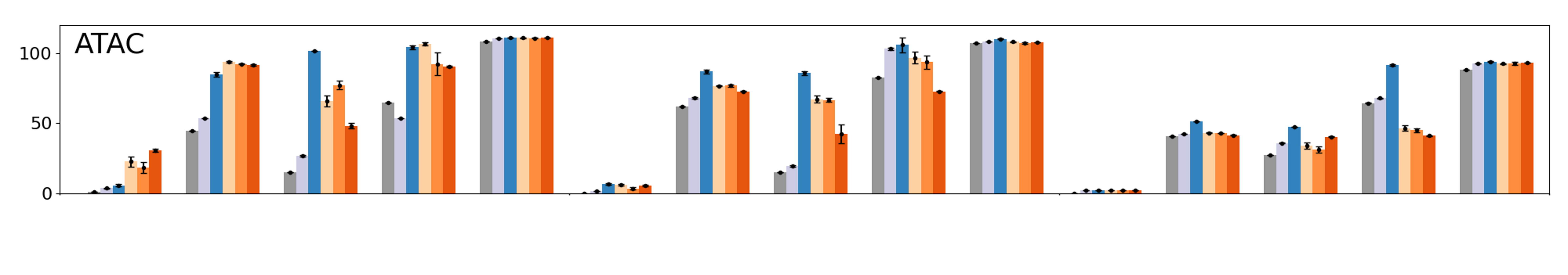}
    \includegraphics[width=0.95\columnwidth,trim={0 6mm 0 6mm},clip]{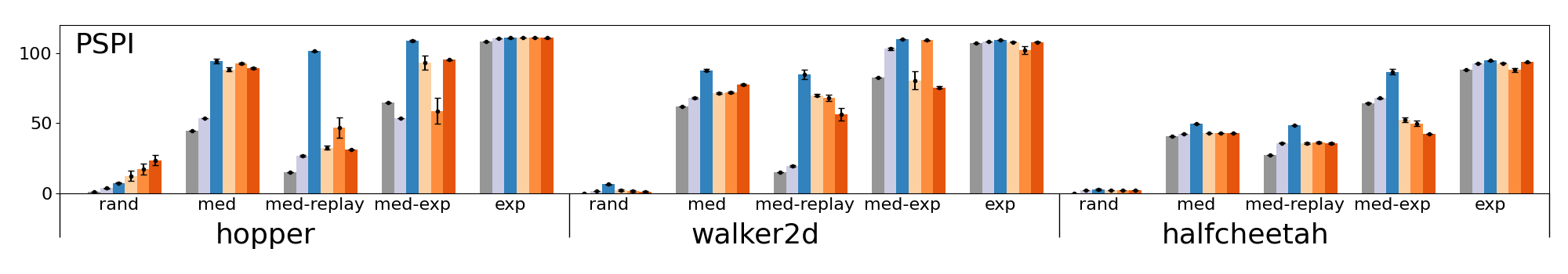}
    \includegraphics[width=0.75\columnwidth]{figures/legend.png}
    \includegraphics[width=0.95\columnwidth,trim={0 21mm 0 6mm},clip]{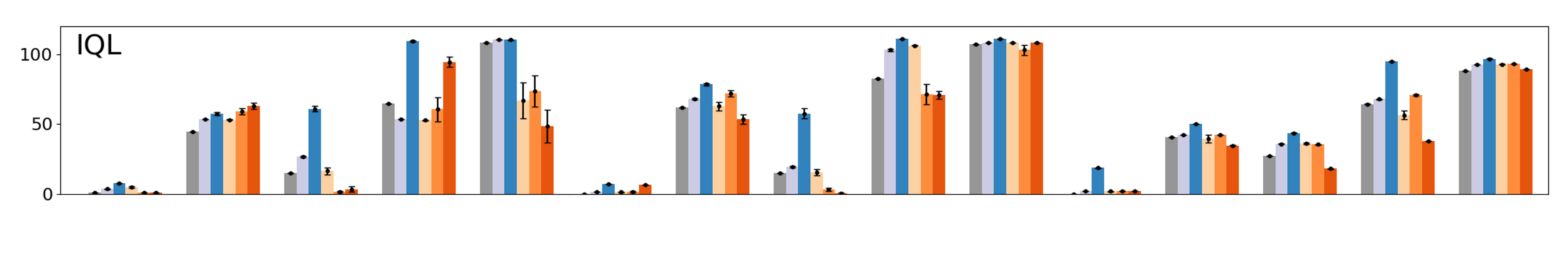}
    \includegraphics[width=0.95\columnwidth,trim={0 21mm 0 6mm},clip]{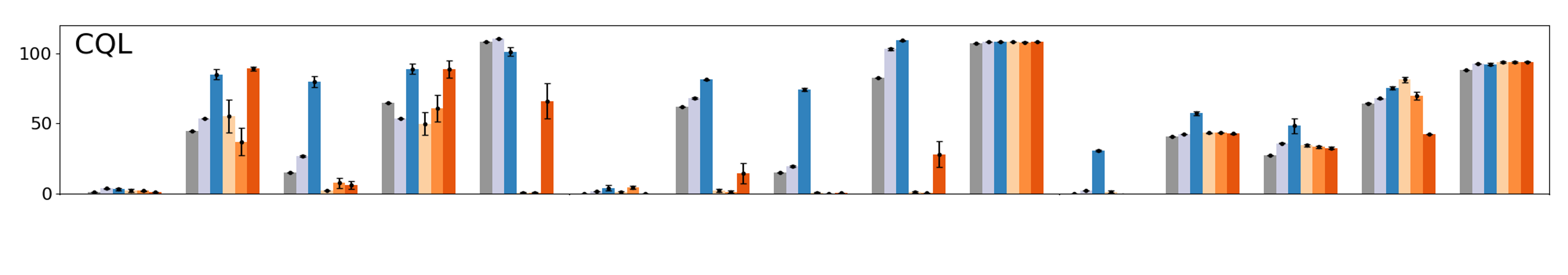}
    \includegraphics[width=0.95\columnwidth,trim={0 6mm 0 6mm},clip]{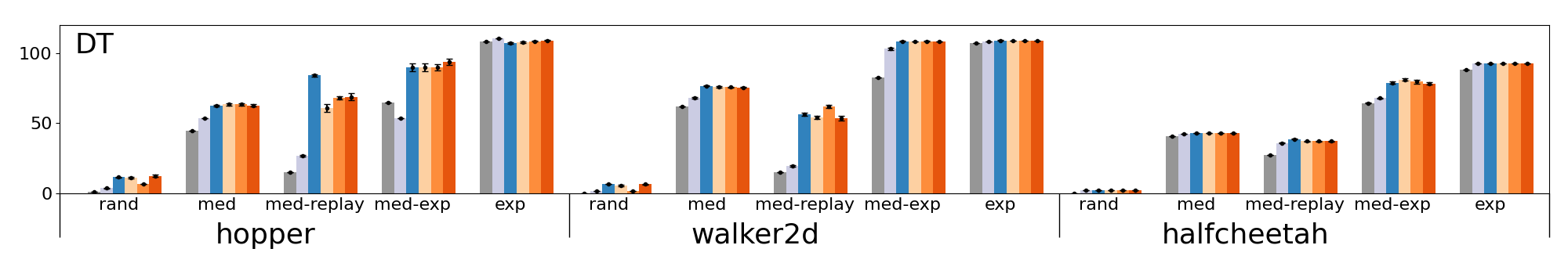}
    \caption{Normalized scores for locomotion tasks from D4RL~\citep{fu2020d4rl}. The mean and standard error for normalized scores are computed across $10$ random seeds. For each random seed, we evaluate the final policy of each algorithm over $50$ episodes.
    }
    \label{fig:d4rl_results}
    \vspace{-4mm}
\end{figure*}

\vspace{-2mm}
\paragraph{\crchange{Positive} bias exists in some D4RL datasets.}
We visualize the length bias of D4RL datasets in~\cref{fig:d4rl_datasets}. Here are our main observations.
\begin{enumerate*}[label=\emph{\arabic*)}]
    \item Most datasets for \texttt{hopper} and \texttt{walker2d}, with the exception of the medium-replay datasets, have a strong length bias, where longer trajectories have higher returns. This length bias is due to the safety-critical nature of \texttt{hopper} and \texttt{walker2d} tasks, as the trajectories get terminated when reaching bad states. The length bias is especially salient in \texttt{hopper-medium}, where the normalized score is almost proportional to episode length. 
    \item The medium-replay datasets for \texttt{hopper} and \texttt{walker2d} have more diverse behavior, so the bias is smaller.
    \item All \texttt{halfcheetah} datasets do not have length bias, as they all have the same length of $1000$.
    \item \texttt{hopper-expert} dataset has a length bias, while \texttt{walker2d} and \texttt{halfcheetah-expert} datasets do not have an obvious length bias. However, it is worth noting that all \texttt{expert} datasets have an IL-type positive bias as discussed in~\cref{sec:implicit data bias}.
\end{enumerate*}

\vspace{-2mm}
\paragraph{Offline RL can learn good policies with wrong rewards on datasets with strong length bias.}
On datasets with strong length bias 
(\texttt{hopper-random}, \texttt{hopper-medium} and \texttt{walker2d-medium}),
we observe that ATAC and PSPI with wrong rewards generally produce well-performing policies, in a few cases even out-performing the policies learned from the true reward (original in~\cref{fig:d4rl_datasets}).
DT is mostly insensitive to reward quality.
IQL and CQL with wrong \crchange{rewards} can sometimes achieve good performance; 
among the two, we find IQL to be more robust to wrong rewards and 
CQL with wrong rewards almost fails completely on all \texttt{walker2d} datasets.

\vspace{-2mm}
\paragraph{Offline RL can learn good policies with wrong rewards on \texttt{expert} datasets}
\crchange{
In~\cref{sec:why}, we point out that the data has a positive bias when it is generated by the optimal policy. In~\cref{fig:d4rl_results}, we observe that all offline RL algorithms, when trained with wrong rewards, can achieve expert-level performance on \texttt{walker2d} and \texttt{halfcheetah-expert} datasets. 
ATAC, PSPI and DT perform well on the \texttt{hopper-expert} dataset while IQL and CQL receive lower scores when using wrong rewards.
}

\vspace{-6mm}
\paragraph{Offline RL needs stronger reward signals on datasets with diverse behavior policy.}
The medium-replay datasets of \texttt{hopper} and \texttt{walker2d} are generated from multiple behavior policies. Due to their diverse nature, they are a multiple ways to stay within data support in a long term. 
As a result, the survival instinct of offline RL by itself is not sufficient to guarantee good performance. 
Here algorithms with wrong rewards generally under-perform the policies trained with true reward. As datasets have a more diverse coverage, they can be less positively biased and offline RL requires a stronger reward signal to differentiate good survival policies and the bad ones.
Practically speaking, when high-quality reward is not available, a diverse dataset can even hurt offline RL performance. This is contrary to the common belief that larger data support is more preferable~\citep{munos2003error,chen2019information,yarats2022don}. 

\begin{figure}[t]
    \centering
    \includegraphics[width=0.70\columnwidth]{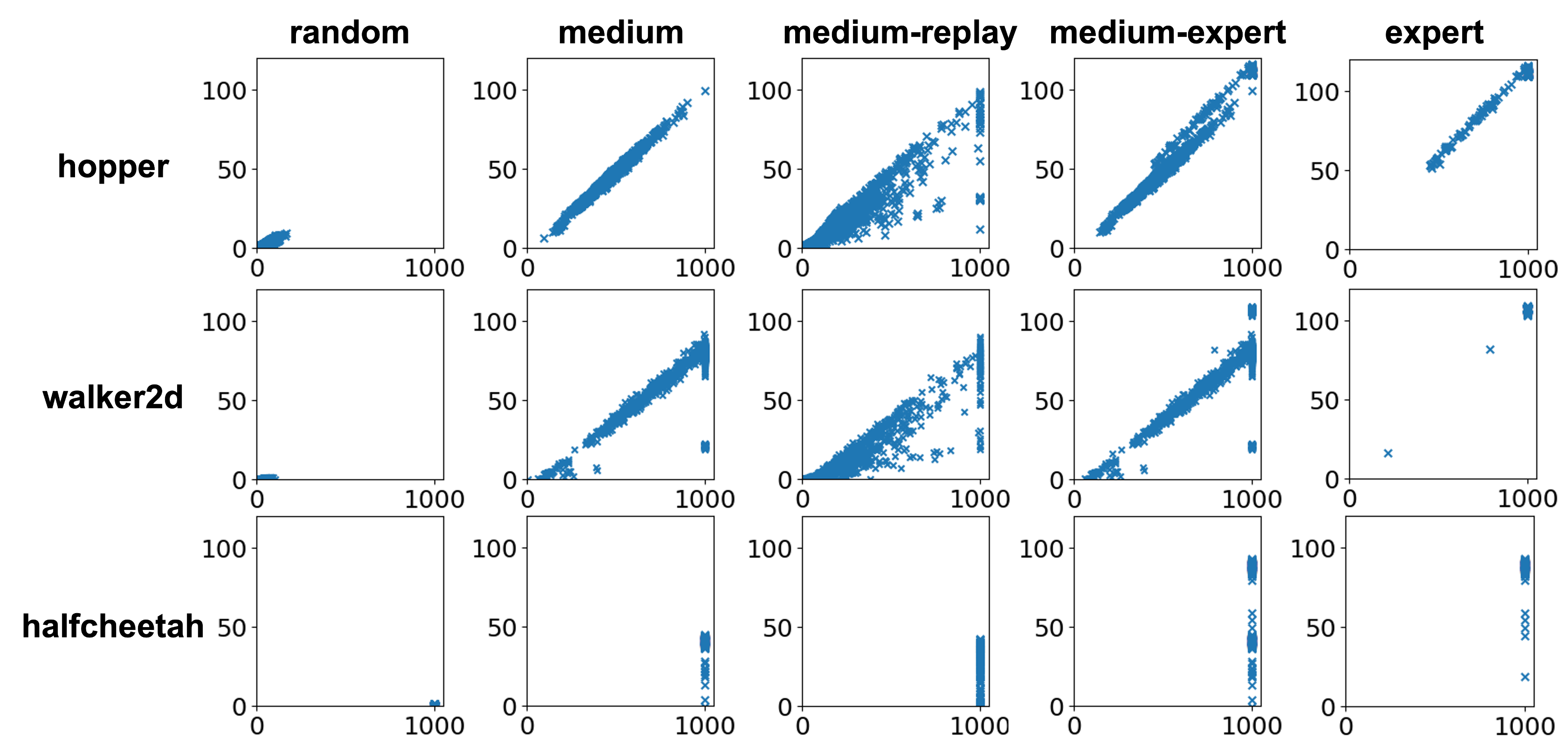}
    \caption{A visualization of length bias in 
    datasets from D4RL~\citep{fu2020d4rl}. Each plot corresponds to a dataset for a task (row) with a dataset (column). Each trajectory in a dataset is represented as a data point with the $x$-coordinate being its episode length and $y$-coordinate being its normalized score. 
    }
    \label{fig:d4rl_datasets}
    \vspace{-4mm}
\end{figure}

\vspace{-2mm}
\paragraph{Offline RL requires good reward to perform well on datasets without length bias.}
In all four \texttt{halfcheetah} datasets, the trajectories all have the same length, as is demonstrated in~\cref{fig:d4rl_datasets}. This means that there is no data bias. We observe that all algorithms with wrong rewards at best perform similarly as the behavior and BC policies in most cases; this is an imitation-like effect due to the survival instinct. In the \texttt{halfcheetah-medium-expert} dataset, since there are a variety of surviving trajectories, with score from $0$ to near $100$, we observe that the performance of the resulting policy degrades as data reward becomes more different from the true reward. 

\vspace{-2mm}
\paragraph{Offline RL algorithms with \crchange{provable} pessimism produce safer policies.} For \texttt{hopper} and \texttt{walker2d}, we observe that policies learned by ATAC and PSPI, regardless of the reward, can keep the agent from falling for longer. The episode lengths of IQL and CQL policies are often comparable to that of the behavior policies. We provide statistics of episode lengths in~\cref{sec:experiment_details}.
\crchange{In~\cref{sec:experiment-safety}, we provide further results validating the safety properties of ATAC and PSPI.}

\vspace{-8mm}
\crchange{
\paragraph{On empirically estimating positive data bias} Inspired by~\cref{def:distribution robustness}, we propose an empirical estimate of positive data bias. Let $J_\text{zero}$, $J_\text{rand}$ and $J_\text{neg}$ be the return (with respect to the true reward) of an offline RL algorithm learned with zero, random, and negative rewards, respectively. Then, given an estimated optimal return $\hat J^\star$, we propose to empirically estimate the positive data bias by 
\begin{align} \label{eq:empirical bias estimate}
\text{estimated positive bias} = \nicefrac{\hat{J}^\star}{\max\{\hat J^\star - \min\{J_\text{zero}, J_\text{rand}, J_\text{neg}\}, 0\}}.
\end{align}
In~\cref{fig:positive-bias}, we visualize the estimated positive data bias of all $15$ D4RL datasets given by ATAC. Estimated positive data bias of all D4RL and Meta-World datasets is given in~\cref{fig:positive-bias-full}.
}

\begin{figure*}[t]
    \centering
    \includegraphics[width=0.90\columnwidth,trim={0 6mm 0 3mm},clip]{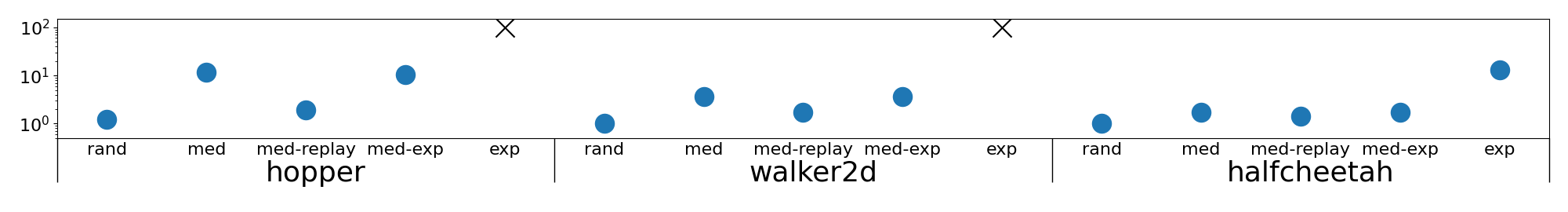}
    \caption{Estimated positive data bias of all $15$ D4RL datasets given by ATAC. Datasets marked by ``$\times$'', i.e., \texttt{hopper-expert} and \texttt{walker2d-expert}, have infinite positive bias.}
    \label{fig:positive-bias}
    \vspace{-4mm}
\end{figure*}

\vspace{-2mm}
\paragraph{Remark on benchmarking offline RL} Our observations on the popular D4RL datasets raise an interesting question for benchmarking offline RL algorithms. \crchange{An} implicit positive data bias can give certain algorithms a hidden advantage, as they can already achieve good performance without using the reward. Without controlling data bias, it is hard to differentiate whether the performance of those algorithms are due to their ability to maximize returns or simply due \crchange{to} their survival instinct.

\vspace{-2mm}
\subsubsection{Goal-oriented Manipulation Tasks from Meta-World}
\vspace{-1mm}

We study goal-oriented manipulation tasks from the Meta-World benchmark~\citep{yu2020meta}.
We consider 15 goal-oriented tasks from Meta-World: the 10 tasks from the MT-10 set and the 5 testing tasks from the ML-45 set. 
For each task, we generate a dataset of $110$ trajectories using the scripted policies provided by~\cite{yu2020meta}; $10$ are produced by the scripted policy, and $100$ are generated by perturbing the scripted policy with a zero-mean Gaussian noise of standard deviation $0.5$. In the datasets, unsuccessful trajectories receive a time-out signal after $500$ steps (maximum episode length for Meta-World). 

For each task, we measure the success rate of policies for $50$ \emph{new} goals unseen during training. Similar to D4RL experiments, we consider BC policies and policies trained with the true reward (original) as baselines. Since the training dataset is generated for a different set of goals, we do not compare the success rate of learned policies with the success rate in data.

\vspace{-2mm}
\paragraph{Goal-oriented problems have data bias.}
Goal-oriented problems are fundamentally different from safety-critical tasks (such as \texttt{hopper} and \texttt{walker2d}). A good goal-oriented policy should reach the goal as fast as possible rather than wandering around until the end of \crchange{an} episode. 
But another form of length bias still exists here, as successful trajectories are labeled with a termination signal that indicates an unbounded length and failed trajectories have a bounded length (see \cref{sec:grid world}).

\begin{figure*}[t]
    \centering
    \includegraphics[width=0.95\columnwidth,trim={0 19mm 0 6mm},clip]{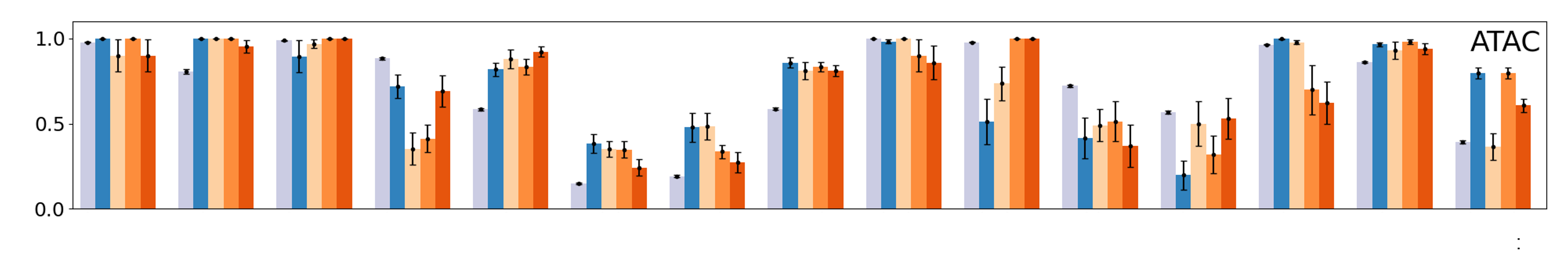}
    \includegraphics[width=0.95\columnwidth,trim={0 6mm 0 6mm},clip]{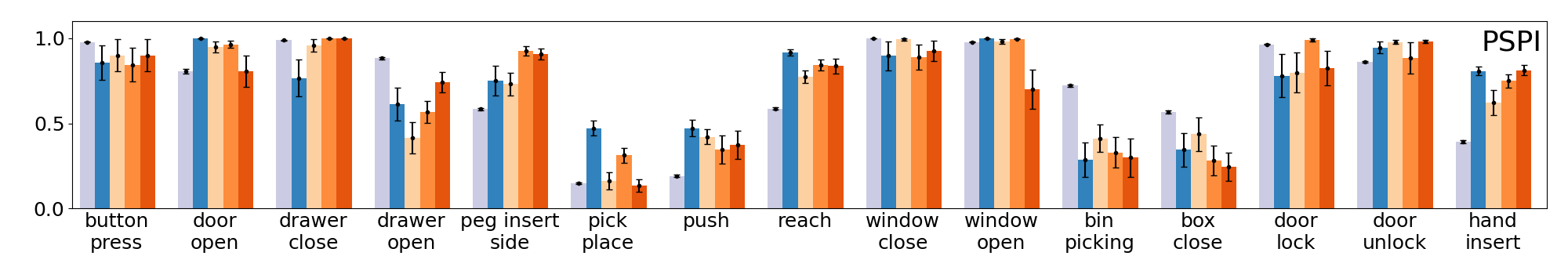}
    \includegraphics[width=0.65\columnwidth]{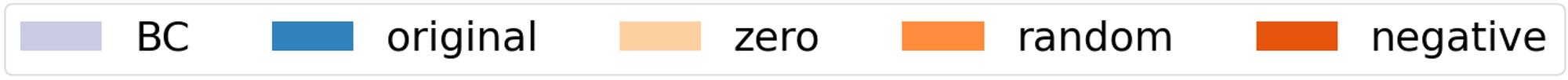}
    \includegraphics[width=0.95\columnwidth,trim={0 19mm 0 6mm},clip]{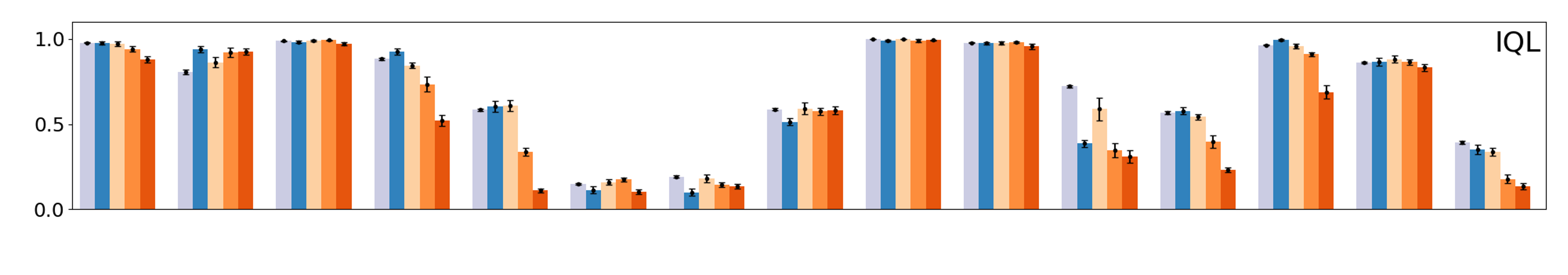}
    \includegraphics[width=0.95\columnwidth,trim={0 19mm 0 6mm},clip]{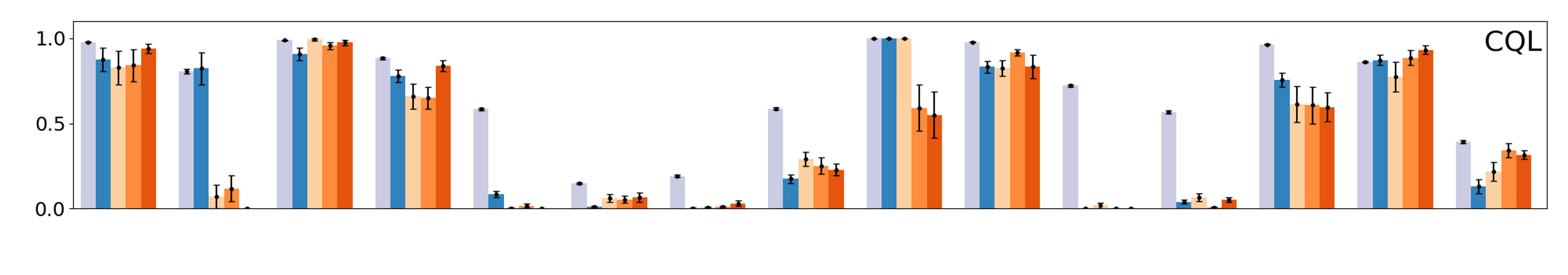}
    \includegraphics[width=0.95\columnwidth,trim={0 6mm 0 6mm},clip]{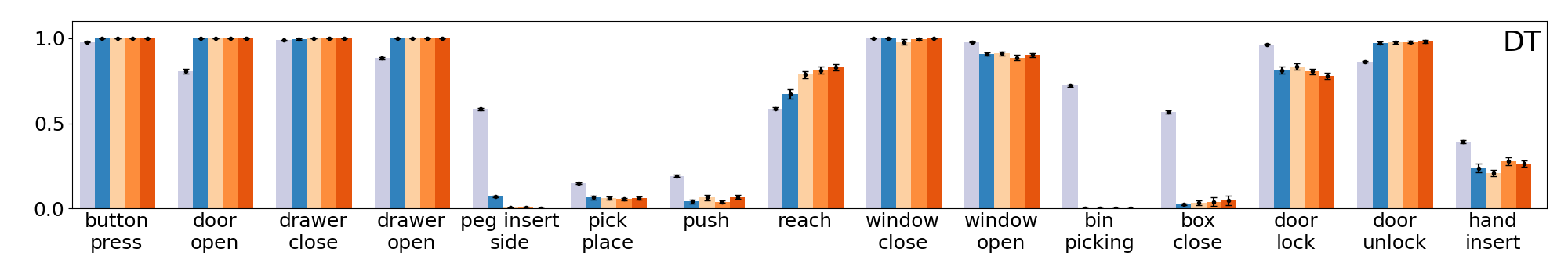}
    \caption{Success rate for goal-oriented tasks from Meta-World~\citep{yu2020meta}. The average success rate and confidence interval are computed across $10$ random seeds. For each seed, we evaluate final policies for $50$ episodes, each with a new goal unseen in the dataset. 
    }
    \label{fig:mw_results}
    \vspace{-4mm}
\end{figure*}

\vspace{-2mm}
\paragraph{Offline RL can learn with wrong rewards on goal-oriented tasks.}
The success rate of learned policies with different rewards are shown in~\cref{fig:mw_results}. We observe that ATAC and PSPI with wrong rewards generally achieve comparable or better success rate than BC policies. The exceptions are often due to that ATAC or PSPI, even with the true reward, does not work as well as BC in those tasks, e.g. drawer-open, bin-picking and box-close. This could be caused by overfitting or optimization errors (due to challenges of dynamics programming over long horizon problems). In a number of tasks, such as peg-insert-side, push and reach, ATAC and PSPI with wrong rewards can out-perform BC by a margin. This shows that data collection for goal-oriented problems have a positive bias that offline RL algorithms can succeed without true reward. This is remarkable as when learning with random and negative rewards, unsuccessful trajectories are long and generally have significantly higher return, as reward for each step is non-negative. The offline RL algorithms need to be sufficiently pessimistic and be able to plan over a long horizon to propagate pessimism to the unsuccessful data trajectories.
For IQL, there is a gentle performance decrease as the reward becomes more different than the true reward, even though policies learned with wrong rewards are not much worse than BC in many cases. CQL shows robustness to reward in a few tasks such as button-press, drawer-close and door-unlock. Interestingly, DT performs almost uniformly across all rewards, potentially because DT does not explicitly maximize returns.

\vspace{-2mm}
\paragraph{Remark on BC performance} It is noticeable that BC policies in general achieve high performance, in many tasks often out-performing offline RL policies learned with true reward. This effect has also been observed in existing work on similar tasks~\cite{zhou2022real}. We would like to clarify that the goal of our experiments is to study the behavior of offline RL algorithms when trained with wrong rewards, rather than showing that offline RL performs better than BC.
We refer interested readers to existing studies~\cite{zhou2022real,kumar2022should} on when using offline RL algorithms is or is not preferable over BC.

\vspace{-2mm}
\subsection{Inherent Safety Properties of Offline RL Algorithms}\label{sec:experiment-safety}
\vspace{-2mm}
\crchange{
We show offline RL algorithms (without any modifications) can behave as a safe RL algorithm. We conduct experiments on offline SafetyGymnasium~\cite{liu2023datasets} using ATAC~\cite{cheng2022adversarially} and PSPI~\cite{xie2021bellman}.
We first remove all transitions with non-zero cost and then run offline RL algorithms on the filtered datasets. 


%

The results are summarized in~\cref{tab:safety-gym} in~\cref{sec:safety-gym-results}. We observe that \emph{offline RL with this na\"ive data filtering strategy (using less data) can achieve comparable performance as the per-task best performing state-of-the-art offline safe RL algorithm}, which uses the full dataset and has knowledge of the cost target.
Our agents in general incur low cost except for the circle tasks. We hypothesize that learning a safe policy from the circle datasets is hard, as other baselines also struggle to produce safe policies. Note that these are preliminary results, and better performance might be achievable through using a more sophisticated data filtering strategy.
Please see \cref{sec:safety-gym-results} for experiment details.

}

\vspace{-2mm}
\section{Concluding Remarks}
\vspace{-2mm}


We present unexpected results on the robustness of offline RL to reward mis-specification. We show that this property originates from the interaction between the \emph{survival instinct} of offline RL and hidden positive biases in common data collection processes.
Our findings suggest extra considerations should be taken when interpreting offline RL results and designing future benchmarks. In addition, our findings open a new space for offline RL research: the possibility of designing algorithms that proactively leverage the survival instinct to learn policies for domains where rewards are nontrivial to specify or unavailable. \new{Please see \cref{sec:limitation} for a discussion on limitations and broader impacts.}


\vspace{-2mm}

\bibliographystyle{ieeetr}
\bibliography{ref}

\newpage

\tableofcontents

\appendix

\crchange{
\section{Limitations and Broader Impacts} \label{sec:limitation}
\paragraph{Limitations} We would like to point out that our results do not suggest that offline RL can always learn with wrong rewards (since the data may not have the needed positive bais). This is shown in our experimental results. In addition, our theoretical analysis makes certain simplifications (e.g., ignoring the optimization difficulty of finding a survival policy), which may not hold in practice. We experimented with a limited number of offline RL algorithms and benchmarks. It would be interesting to see if similar experimental results can be obtained by, e.g., model-based offline RL algorithms and on other types of dataset such as the ones which use image observations. Future work may also explore approaches to quantifying positive data bias without having to run offline RL algorithms. 

\paragraph{Broader Impacts}
Our discovery might shed light on a potential positive impact of offline RL. By survival instinct, offline RL may be able to train sequential-decision policies that have good behaviors without needing to collect negative data. This is different from the common belief that RL can learn to behave well only if it has seen both positive (success) and negative (failure) data. This ability to learn from one-sided data is important especially for applications where collecting negative data is unethical, e.g., learning not to produce hateful speech by first collecting hateful speech, learning not to harm patients in medical applications by first harming some, etc.

On the flip side, also by survival instinct, offline RL can be prone to existing societal (e.g., gender, racial) biases in data, and, moreover, such a bias cannot be easily corrected by relabeling data with a different reward. As a result, when using an offline RL algorithm, more strategic thinking on data collection might be needed. We encourage researchers and practitioners to collect datasets using methodology such as those proposed in~\cite{gebru2021datasheets} and provide details such as how data was collected and cleaned so that users can assess whether it is appropriate to train offline RL with these datasets.
}
\section{Related Work}\label{sec:related_work}
\paragraph{Offline RL}
Offline RL studies the problem of return maximization given an offline dataset. 
Offline RL algorithms can be broadly classified into model-based approaches, e.g., ~\cite{yu2020mopo,kidambi2020morel,uehara2021pessimistic,rigter2022rambo,xie2022armor}, and model-free approaches, e.g.,~\cite{jin2021pessimism,fujimoto2021minimalist,xie2021bellman,cheng2022adversarially,kumar2020conservative,kostrikov2021offline}. Since the agent needs to learn without online data collection, offline RL becomes more challenging when the offline dataset does not provide good coverage over the state-action distribution of all feasible policies. There have been two common strategies to handle insufficient coverage, behavior regularization approaches, e.g.,~\cite{fujimoto2021minimalist,wu2019behavior,kostrikov2021offline}, which restricts the learned policy to be close to the behavior policy, and value regularization approaches, e.g.,~\cite{jin2021pessimism,liu2020provably,xie2021bellman,cheng2022adversarially,kumar2020conservative,xie2022armor} which provide a pessimistic value estimates for the learned policy. 
Both approaches can be viewed as optimizing for a performance lower bound of the agent's return.

Offline RL commonly assumes compatibility between offline dataset and MDP, i.e., the offline dataset should be generated from the task MDP of interest. 
Our work is distinctive in the offline RL literature as we study the behavior of existing offline RL algorithms when data reward \emph{differs} from the true MDP reward. 
Existing tools for analyzing offline RL are not directly applicable to our setting: policies with good performance under data reward, as guaranteed by such tools, do not necessarily attain high return under the true reward.
Our novel analysis is made possible by studying the properties of offline data distribution, an aspect mostly neglected by existing work in the literature. 

\paragraph{Offline RL with imperfect reward}
In the offline RL literature, there is a line of work studying scenarios where the reward is missing in a subset of the dataset~\citep{singh2020cog,hu2023provable,li2023mahalo,yu2022leverage} or when the data reward is misspecified~\citep{li2023mind}.
These approaches propose to construct a new reward function and apply offline RL algorithms on the relabeled dataset. For example, \cite{singh2020cog,yu2022leverage} label the missing reward with a constant minimum reward. \cite{li2023mahalo,hu2023provable} learn a pessimistic reward function. \cite{li2023mind} optimize for a reward function which minimizes the reward difference among expert data.

Different from this line of work, we are interested in running offline RL algorithms using the misspecified reward \emph{as it is}. We show both theoretically and empirically that offline RL algorithms can produce well-performing policies with rewards different from the true reward as long as the underlying data distribution satisfies certain conditions. It can be an interesting future work to combine these techniques with our analysis to explore the robustness of offline RL algorithms under the class of learned reward functions.

\new{
After paper submission, we learned that \citep{shin2023benchmarks} observed a similar robustness phenomena of offline RL. However, the authors did not provide a complete explanation.  They found that offline RL algorithms such as AWR~\citep{peng2019advantage} empirically can learn well with random or zero rewards on D4RL locomotion datasets. For the expert datasets, they conjectured that this robustness may be due to the fact that offline RL algorithms behave like imitation learning on expert data, 
but they did not provide details of why offline RL algorithms have this robustness behavior on other datasets. 
Here we provide a formal theoretical proof to show that this robustness is due to the interaction between the survival instinct of offline RL and an implicit positive data bias. Our results support \citep{shin2023benchmarks}'s conjecture on the expert dataset; we show that expert datasets generated by an optimal policy can theoretically be proved to be infinitely positively biased, and we also empirically estimate the positive bias on the D4RL datasets in \cref{fig:positive-bias}, which agrees with the theoretical prediction.
Our results further provide conditions for positive data bias beyond the expert data scenario, such as the length bias in~\cref{sec:why}. 
These new insights show why offline RL algorithms can learn good policies with wrong rewards, on datasets collected by non-expert suboptimal policies.
Finally, we present strong empirical evidence to show this robustness phenomenon happens in a large number of existing benchmarks with multiple offline RL algorithms.
}

\paragraph{Offline IL} Offline imitation learning (IL) can be considered as a special case of offline RL when reward is missing from the entire dataset, or when the agent has the largest uncertainty about the reward~\citep{cheng2022adversarially}. In offline IL, the learner is instead given the information on whether a transition is generated by an expert. The goal of offline IL is to produce a policy that has comparable performance with the expert. \cite{zolna2020offline} learns a discriminator reward to classify expert and non-expert data, and apply offline RL algorithms with the learned reward.
\cite{xu2022discriminator} uses the discriminator as the weight for weighted behavior cloning. 
\cite{kim2021demodice,ma2022versatile,zhu2020off} proposes DICE~\citep{dai2020coindice}-style algorithms to minimize the divergence between the state-action or state-(next state) occupancy measure of the learner and the expert. 
\cite{chang2021mitigating,kidambi2021mobile} minimize state-action or state-(next state) distribution divergence under a pessimistic model.
\cite{yue2023clare} extends maximum entropy inverse RL~\citep{ziebart2008maximum} to an offline setting. \cite{smith2023strong} proposes to run offline RL under an indicator reward function on whether the data is expert data. \cite{li2023mahalo} learns a pessimistic reward function where the expert data labeled as the maximum reward. 

Our experiments with zero reward function is similar to an offline IL setting with the behavior policy being the expert. However, we show that offline RL algorithms can sometimes achieve significant performance gain over behavior and behavior cloning policies, which is a phenomenon that cannot be explained by existing analysis in offline IL. 
We provide an explanation to this puzzle: in \cref{app:implicit data bias}, we show that the data distribution is $\infty$-positively biased (according our definition) for offline RL, when the data is generated by the optimal policy of the true MDP. 
This means that an admissible offline RL algorithm can achieve near-optimal performance with \emph{any} reward function from its corresponding admissible reward class in this case. We empirically show that in~\cref{sec:experiments}.

\paragraph{Robust offline RL} Robust offline RL~\citep{zhang2022corruption,chen2023byzantine,wu2022copa,panagantirobust,yangrorl,zhou2021finite,ma2022distributionally,shi2022distributionally} aims to develop new offline RL algorithms for cases when the compatibility assumption does not hold, i.e., the offline dataset may not be generated from the task MDP. Among this line of work, \cite{panagantirobust,zhou2021finite,ma2022distributionally,shi2022distributionally} study the robust MDP setting, where the dataset can be generated from any MDP within a $\delta$-ball of a nominal MDP. An adversary can choose any MDP from the $\delta$-ball when evaluating the learned policy. \cite{zhang2022corruption} considers when the adversary can modify $\epsilon$ fraction of transition tuples in the offline dataset, while \cite{wu2022copa} considers when up to $K$ trajectories can be added in or removed from the original dataset. \cite{chen2023byzantine} uses the formulation of Byzantine-robust: when multiple offline datasets are presented to the learner, among them an $\alpha$ fraction are corrupted. \cite{yangrorl} assumes the dataset is generated from the true MDP, but the state presented to the policy during test time can be a corrupted up to $\epsilon$ distance.

Our work is fundamentally different from this line of work in that we study the \emph{inherent} and somewhat \emph{unintended} robustness of offline RL algorithms. We show that, despite originally designed under the compatibility assumption, offline RL algorithms show strong robustness against perturbation of reward. Another salient difference is that robust offline RL typically requires an upper bound on the size of perturbation to the true or nominal MDP, and most of proposed algorithms 
assume knowledge of this upper bound. However, our work does not make assumptions on the size of perturbation and, moreover, the algorithms we study are not even aware of the existence of such perturbation. Indeed, we empirically show that offline RL algorithms, without any modifications, can succeed even when data reward is a constant or the negation of the true reward. 
Additionally, in this work, we consider the reward perturbation while most work in robust offline RL focuses more on dynamics perturbation.

\paragraph{Constrained and safe offline RL}
Our work is related to the literature of constrained and safe offline RL~\citep{le2019batch,xu2022constraints,liu2023constrained,lee2022coptidice,polosky2022constrained}. In constrained offline RL, the agent solves for a CMDP given an offline dataset $\{(s,a,r,c,s')\}$. The goal of constrained offline RL is to produce a policy which maximizes the expected return while ensuring that constraint violation is below a given upper bound. \cite{le2019batch} uses an offline policy evaluation oracle to solve for the Lagrangian relaxation of the CMDP. \cite{xu2022constraints} constructs a pessimistic estimation of value and constraint violation.~\cite{lee2022coptidice,polosky2022constrained} proposes DICE~\cite{dai2020coindice}-style algorithms to solve for the optimal state-action occupancy measure. \cite{liu2023constrained} proposes a variant of decision transformer~\cite{chen2021decision} capable of producing policies for different constraint violation upper bounds during test time.

Our work, instead, considers a different scenario where the constraint is only given \emph{implicitly} --- through the support of the offline dataset. We assume that any transition in the dataset satisfies the constraint, which can be achieved through intervention during data collection. We show that offline RL can inherently produce approximately safe policies due to its survival instinct. Although our assumption seems more restrictive as in that we require a constraint violation-free dataset, we argue that our assumption is more practical in many safety critical scenarios: compared to executing unsafe actions, it is perhaps more reasonable to intervene before constraint violation happens. 
We believe that our findings open up new possibilities for applying offline RL as a safe RL oracle, which can be especially promising in an offline-online RL setup~\citep{xu2023uniformly,lee2022offline}.
\section{Experiment Details}\label{sec:experiment_details}
In the experiments, we train a total of around $16$k offline RL agents (see~\cref{sec:compute} for details). In this appendix, we provide details of experiment setup, implementation of algorithms, hyperparameter tuning, as well as additional experimental results and ablation studies. At the end of the appendix, we include~\cref{tab:d4rl_scores} and \cref{tab:mw_success} which contain the actual numbers used in generating~\cref{fig:d4rl_results} and~\cref{fig:mw_results}.

\subsection{Benchmark Datasets}
\crchange{In the experiments, we consider three tasks (15 datasets) from D4RL~\cite{fu2020d4rl}, 15 datasets collected from the Meta-World~\cite{yu2020meta} benchmark, and 16 datasets from SafetyGymnasium~\cite{liu2023datasets}.} The \crchange{D4RL and Meta-World} tasks are visualized in~\cref{fig:experiments}. Below we provide more details on these tasks.

\begin{figure}[t]
    \centering
    \begin{subfigure}[b]{0.25\textwidth}
         \centering
         \includegraphics[width=\textwidth]{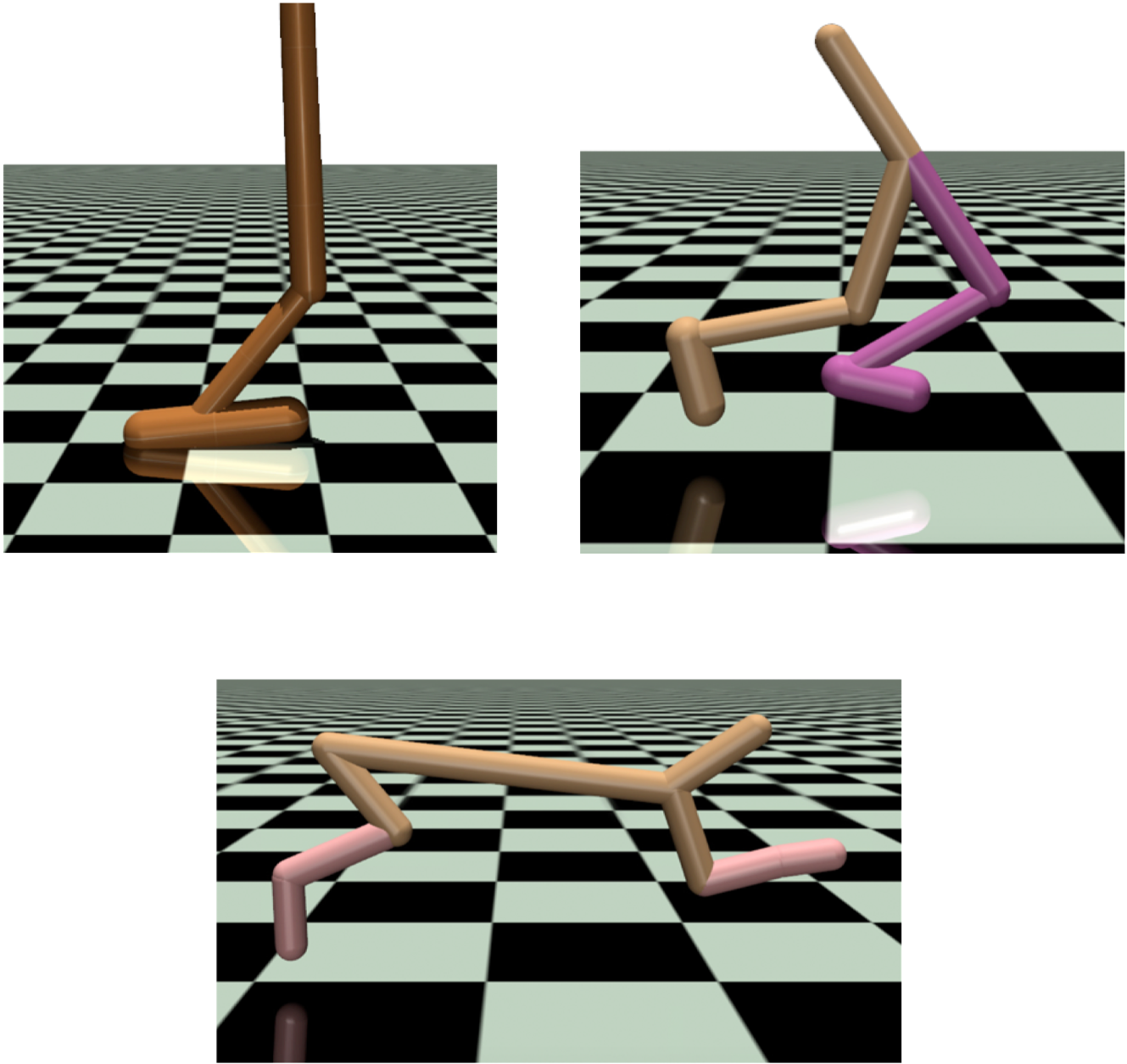}
         \caption{D4RL~\citep{fu2020d4rl}}
         \label{fig:d4rl}
     \end{subfigure}
     \qquad
     \begin{subfigure}[b]{0.6\textwidth}
         \centering
         \includegraphics[width=\textwidth]{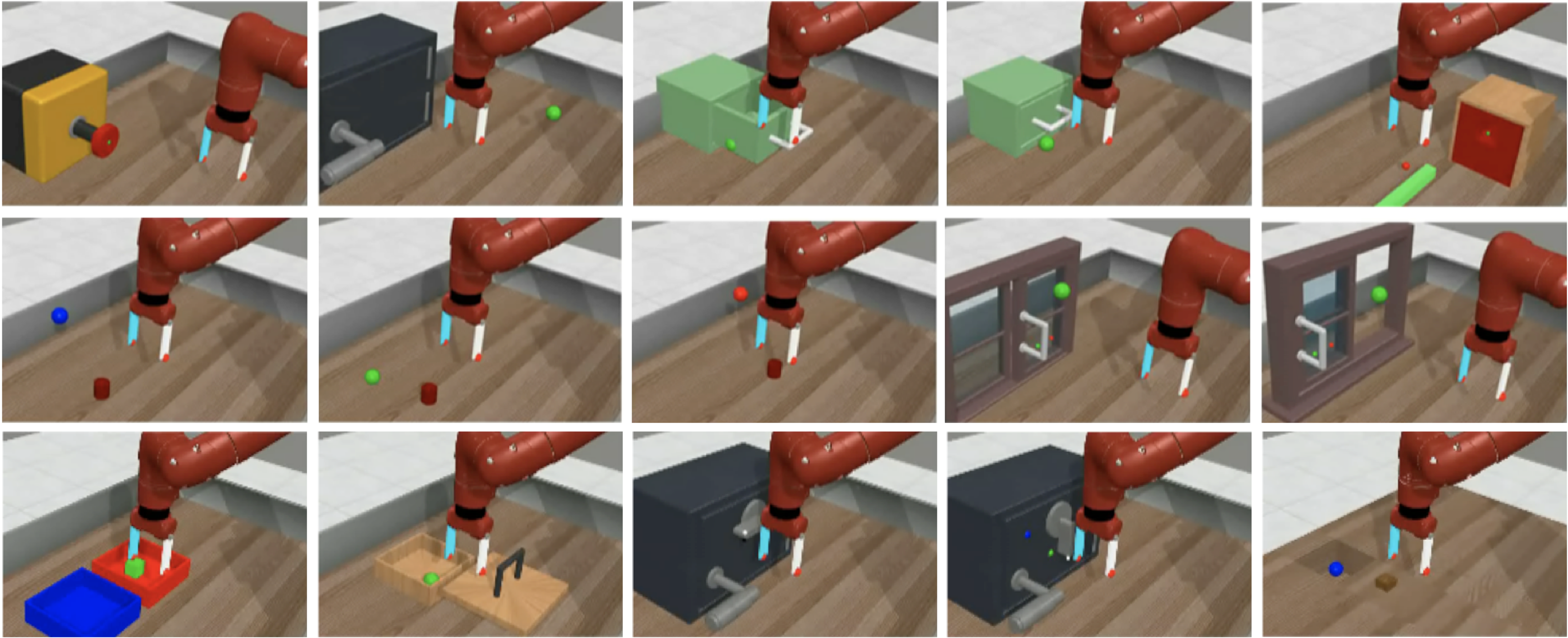}
         \caption{Meta-World~\citep{yu2020meta}}
         \label{fig:metaworld}
     \end{subfigure}
    \caption{We study offline RL with wrong rewards for a variety of tasks: (a) three locomotion tasks from D4RL~\citep{fu2020d4rl}; hopper (top left), walker2d (top right) and halfcheetah (bottom). 
    the figures are from~\cite{yu2019policy} (b) 15 goal-oriented tasks from Meta-World~\citep{yu2020meta}. The 15 tasks are composed of the 10 tasks from MT-10 (top 2 rows) and the 5 testing tasks from ML-45 (bottom row).
    }
    \label{fig:experiments}
    \vspace{-4mm}
\end{figure}

\paragraph{D4RL}
We consider three locomotion tasks from D4RL~\cite{fu2020d4rl}, \texttt{hopper}, \texttt{walker2d} and \texttt{halfcheetah}\footnote{We use the v2 version of all three tasks.} (see~\cref{fig:experiments}). For each task, we use five datasets of different behavior policy quality: \texttt{random}, \texttt{medium}, \texttt{medium-replay}, \texttt{medium expert}, and \texttt{expert}.
\begin{description}
    \item[hopper:] The goal of \texttt{hopper} is to move forward as fast as possible without falling down. An episode terminates when either of the three happens:
    \begin{enumerate*}[label=\emph{\arabic*)}]
        \item the height of the \texttt{hopper} is below a threshold,
        \item the body of the \texttt{hopper} is too tilted, or
        \item the velocity of any joint is too large.
    \end{enumerate*}
    The original true reward function (which has been used to train online RL agents) is 
    $$r_\text{hopper} \coloneqq \one[\text{\texttt{hopper} is alive}] + \frac{1}{\Delta t} \Delta x - 0.001 \|\tau\|^2,$$
    where $\Delta x$ is the displacement in the forward direction, and $\tau$ is the applied torque. The true reward function is designed such that the \texttt{hopper} can receive bonus when staying alive and moving forward. We show that, for a few \texttt{hopper} datasets, offline RL agent can succeed without such a carefully designed reward, and even with the negation of the true reward.
    \item[walker2d:] The \texttt{walker2d} task is similar to \texttt{hopper}, except that the \texttt{walker2d} agent has a higher degree of freedom. We observe similarly that offline RL can learn good behaviors without a reward that is specifically designed to encourage staying alive and moving forward.
    \item[halfcheetah:] The \texttt{halfcheetah} task differs from the two tasks above since episodes here always have the same length and never terminates early. The true reward of \texttt{halfcheetah} is 
    $$r_\text{halfcheetah} \coloneqq \frac{1}{\Delta t} \Delta x - 0.1 \|\tau\|^2.$$
    Since the \texttt{halfcheetah} datasets (with the exception of the \texttt{expert} dataset) do not have a positive bias, we find that offline RL agent is sensitive to reward quality.
\end{description}

\paragraph{Meta-World}
We consider 15 goal-oriented manipulation tasks from~\cite{yu2020meta}, see~\cref{fig:experiments}. We use the 10 tasks from the MT-10 set and the 5 testing tasks from the ML-45 set. Each Meta-World task comes with a hand-crafted reward function that encourage progress toward task completion, e.g., when the end-effector grasps the target object, when the object is placed at the goal, etc. We refer readers to~\cite{yu2020meta} for the specific reward functions used for each tasks. We make a few modifications to the Meta-World environment: \begin{enumerate*}[label=\emph{\arabic*)}]
        \item we shift and scale the true reward such that the agent receives a reward of range $[-1, 0]$ rather than $[0, 10]$ of the original environment,
        \item we terminate an episode and mark the last state with a terminal flag when the goal is achieved.
    \end{enumerate*}
We show that, data collected by goal-oriented tasks can have a positive bias, and an offline RL agent can make use of such bias to learn the right behavior even with wrong rewards. We would like to clarify that the data generated by the original Meta-World environment (with no terminal signals) does not have a positive bias in general, since all trajectories, success or failure, would have a same length of $500$.

\crchange{
\paragraph{SafetyGymnasium}
We experiment with $16$ datasets from offline SafetyGymnasium~\cite{liu2023datasets}. The $16$ datasets are given by two embodiments, \texttt{point} and \texttt{car}, with $4$ tasks, \texttt{button}, \texttt{circle}, \texttt{goal}, and \texttt{push}, each with $2$ difficulty levels. 

It is worth noting that \cite{liu2023datasets} focuses on learning from a dataset which contains \emph{both} safe and unsafe behaviors, while our results (\cref{cr:safety guarantee (informal)}) focus on offline RL’s ability to produce safe policies given a dataset that \emph{only} contains safe states. Therefore, we use a na\"ive filtering strategy of removing all transitions with non-zero cost and then run offline RL algorithms  on the filtered datasets. 

Many algorithms benchmarked in~\cite{liu2023datasets} use a cost target, which sets the maximum total cost allowed by safety.~\cite{liu2023datasets} uses three different cost targets, $ \{20, 40, 80\}$, and reports the normalized total reward and cost for each algorithm averaged over the three targets. By contrast, standard offline RL algorithms do not use such a cost target. We report results of offline RL algorithms according to an ``effective cost target'' of $34.29$, which is similar to how BC-All results are presented in~
\citep{liu2023datasets}. 
}

\subsection{Implementation Details}
We use the implementation of ATAC and PSPI from \href{https://github.com/chinganc/lightATAC}{https://github.com/chinganc/lightATAC}. We make a small modification to the ATAC and PSPI policy class. Instead of a $\tanh$-Gaussian policy, we use a scaled $\tanh$-Gaussian policy, i.e., $c\cdot \tanh(\NN(\mu, \Sigma))$ with $c\geq 1$. We find scaled $\tanh$-Gaussian policies to be more numerically stable when the data has a lot of on-boundary actions, i.e., actions that takes the value of either $-1$ or $1$ on any dimension. For PSPI, we let the critic $f$ to minimize for $f(s,\pi)$ for all states from the dataset (rather than all initial states). We find it to provide better results when the effective horizon given by the discount factor is smaller than the actual episode length. We use the implementation of IQL and BC from \href{https://github.com/gwthomas/IQL-PyTorch/}{https://github.com/gwthomas/IQL-PyTorch/}, and the implementation of CQL from \href{https://github.com/young-geng/CQL}{https://github.com/young-geng/CQL}. For decision transformer, we use the implementation from \href{https://github.com/kzl/decision-transformer}{https://github.com/kzl/decision-transformer}. We feed the decision transformer with wrong rewards (consistent with data reward) during testing. 

\subsection{Hyperparameters}
We use the default values (ones provided by the original paper or by the implementation we use) for most of the hyperparameters for all algorithms. 
For each algorithm, we tune one or two hyperparameters (with 4 combinations at most) which affect the degree of pessimism\footnote{For decision transformer, we do not tune hyperparameters since it has no notion of pessimism.}. The values of hyperparamters of all algorithms are given in~\cref{tab:atac_hp}--\ref{tab:dt_hp}, where the choices of hyperparameters used in tuning are highlighted in \blue{blue}. The tuned hyperparameter values for all experiments and all algorithms are provided in~\cref{tab:atac_tuned_hp}--\ref{tab:cql_tuned_hp}. 

For D4RL and Meta-World experiments, we tune hyperparameter per each dataset and per reward label. During tuning, we evaluate each combination of hyperparameter(s) for $2$ random seeds for D4RL and $3$ random seeds for Meta-World. We choose the best-performing values and report the results of these hyperparameters over $10$ \emph{new} random seeds (not including the tuning seeds). We find the D4RL scores for the true reward to be close to what is reported in the original papers. 

For SafetyGymnasium, we follow procedures from~\cite{liu2023datasets} and report results of the best hyperparameters over $3$ random seeds. We take the safe agent with the highest cumulative reward if there is at least one safe agent, and take the agent with the lowest cumulative cost if there are no safe agents.

\begin{table}[ht]
\begin{small}
    \centering
    \begin{tabular}{ll}
        \toprule
        \textbf{Hyperparameter} & \textbf{Value}\\
        \midrule
        Bellman consistency coefficient $\beta$ & {\color{blue}$\{0.1, 1, 10, 100\}$}\\
        Number of hidden layers & $3$\\
        Number of hidden units & $256$\\
        Nonlinearity & ReLU\\
        Policy distribution & scaled $\tanh$-Gaussian\\
        Action scale $c$ & $1.0$ for D4RL and SafetyGymnasium\\
        & $1.2$ for Meta-World\\
        Fast learning rate $\eta_\text{fast}$ & $5\times 10^{-4}$\\
        Slow learning rate $\eta_\text{slow}$ & $5\times 10^{-7}$\\
        Minibatch size $|\DD_\text{mini}|$ & $256$\\
        Gradient steps & $10^6$\\
        Warmstart (BC) steps & $10^5$\\
        Temporal difference loss weight $w$ & $0.5$\\
        Target smoothing coefficient $\tau$ & $0.005$\\
        Discount factor $\gamma$ & $0.99$ for D4RL and SafetyGymnasium\\
        & $0.998$ for Meta-World\\
        Minimum target value $V_\text{min}$ & $\frac{2}{1-\gamma} \min(-1, \tilde{r}_\text{min})$\\
        Maximum target value $V_\text{max}$ & $\frac{2}{1-\gamma} \max(1, \tilde{r}_\text{max})$\\
        \bottomrule
    \end{tabular}
    \vspace{2mm}
    \caption{Hyperparameters for ATAC and PSPI}
    \label{tab:atac_hp}
\end{small}
\end{table}

\begin{table}[h]
\begin{small}
    \centering
    \begin{tabular}{ll}
        \toprule
        \textbf{Hyperparameter} & \textbf{Value}\\
        \midrule
        Inverse temperature $\beta$ & {\color{blue}$\{0.3, 3\}$}\\
        Expectile $\tau$ & {\color{blue}$\{0.7, 0.9\}$}\\
        Number of hidden layers & $3$\\
        Number of hidden units & $256$\\
        Nonlinearity & ReLU\\
        Policy distribution & Gaussian\\
        Learning rate $\eta$ & $3\times 10^{-4}$\\
        Minibatch size $|\DD_\text{mini}|$ & $256$\\
        Gradient steps & $10^6$\\
        Target smoothing coefficient $\alpha$ & $0.005$\\
        Discount factor $\gamma$ & $0.99$ for D4RL\\
        & $0.998$ for Meta-World\\
        \bottomrule
    \end{tabular}
    \vspace{2mm}
    \caption{Hyperparameters for IQL}
    \label{tab:iql_hp}
\end{small}
\end{table}

\begin{table}[h]
\begin{small}
    \centering
    \begin{tabular}{ll}
        \toprule
        \textbf{Hyperparameter} & \textbf{Value}\\
        \midrule
        Critic pessimism coefficient $\alpha$ & {\color{blue}$\{0.1, 1, 10, 100\}$}\\
        Number of hidden layers & $2^*$\\
        Number of hidden units & $256$\\
        Nonlinearity & ReLU\\
        Policy distribution & $\tanh$-Gaussian$^*$\\
        Learning rate $\eta$ & $3\times 10^{-4}$\\
        Minibatch size $|\DD_\text{mini}|$ & $256$\\
        Gradient steps & $10^6$\\
        Target smoothing coefficient $\tau$ & $0.005$\\
        Discount factor $\gamma$ & $0.99$ for D4RL\\
        & $0.998$ for Meta-World\\
        \bottomrule
    \end{tabular}
    \vspace{2mm}
    \caption{Hyperparameters for CQL. $^*$We include an ablation study of CQL with $3$ hidden layers and scaled $\tanh$-Gaussian policy, i.e., the same architecture as ATAC and PSPI, in~\cref{sec:cql-ablation}. }
    \label{tab:cql_hp}
\end{small}
\end{table}

\begin{table}[h]
\begin{small}
    \centering
    \begin{tabular}{ll}
        \toprule
        \textbf{Hyperparameter} & \textbf{Value}\\
        \midrule
        Number of hidden layers & $3$\\
        Number of hidden units & $128$\\
        Nonlinearity & ReLU\\
        Context length $K$ & $20$\\
        Dropout & $0.1$\\
        Warmup steps & $2 \times 10^2$ \\
        Learning rate $\eta$ & $5 \times 10^{-4}$\\
        Minibatch size $|\DD_\text{mini}|$ & $256$\\
        Gradient steps & $7\times 10^4$ for D4RL\\
        & $5\times 10^3$ for Meta-World\\
        Gradient norm clip & $0.25$\\
        Weight decay & $10^{-4}$\\
        Return-to-goal conditioning & Return randomly sampled from the top $10\%$ \\
        & of trajectory returns in the data, w.r.t. the data reward \\
        \bottomrule
    \end{tabular}
    \vspace{2mm}
    \caption{Hyperparameters for the decision transformer (DT)}
    \label{tab:dt_hp}
\end{small}
\end{table}

\begin{figure}[t]
    \centering
    \includegraphics[width=\columnwidth]{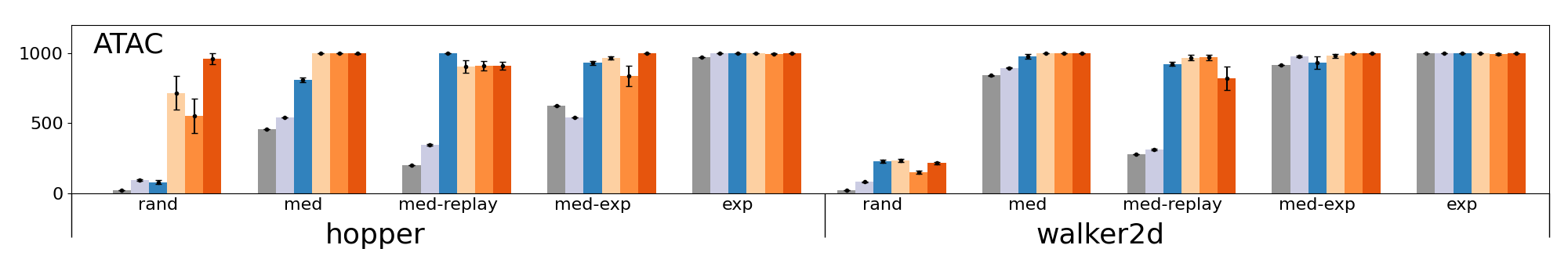}
    \includegraphics[width=\columnwidth]{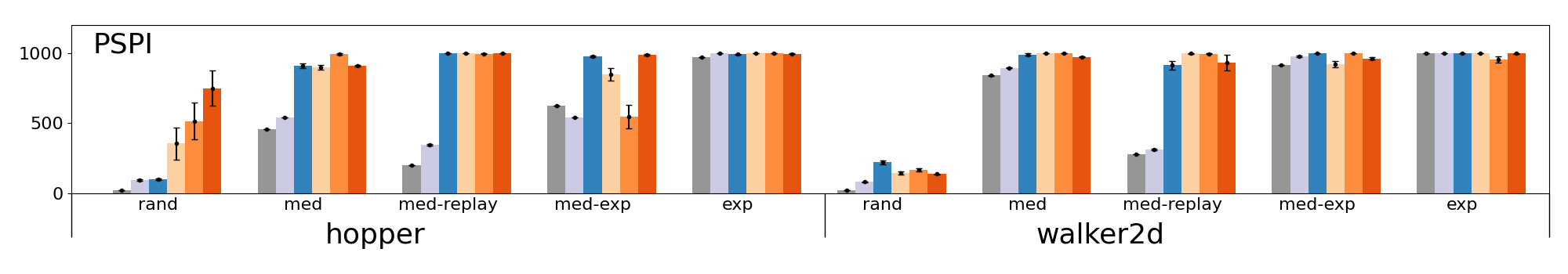}
    \includegraphics[width=0.75\columnwidth]{figures/legend.png}
    \includegraphics[width=\columnwidth]{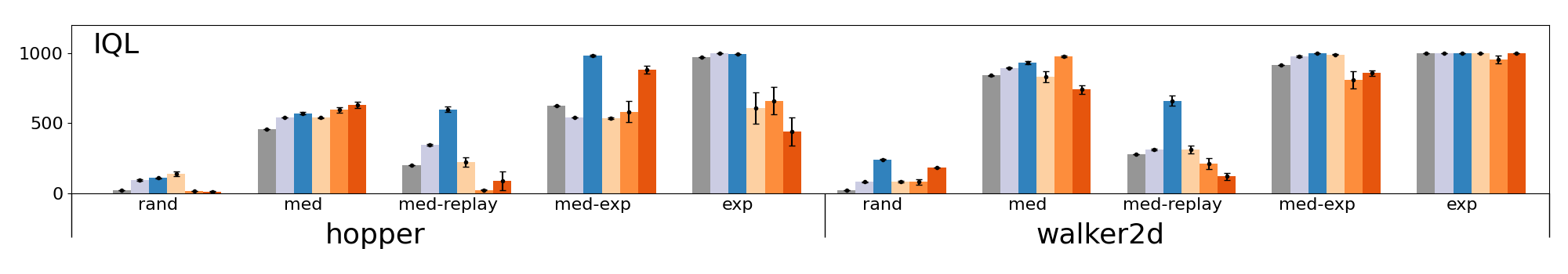}
    \includegraphics[width=\columnwidth]{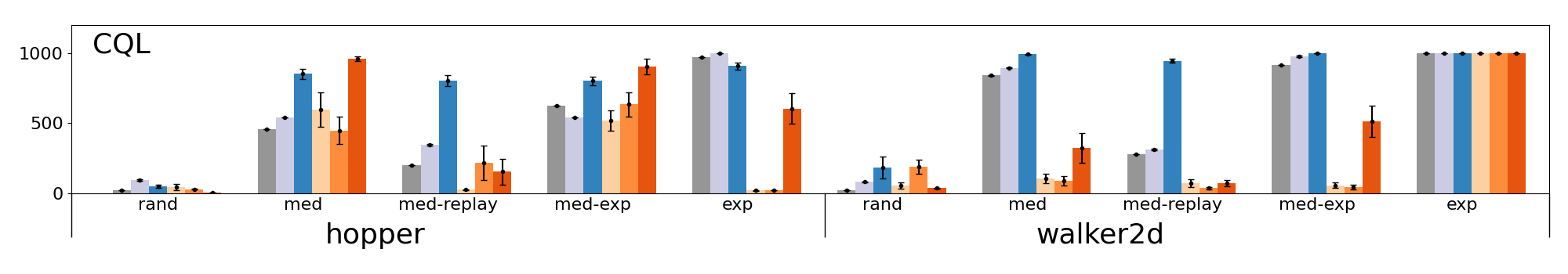}
    \includegraphics[width=\columnwidth]{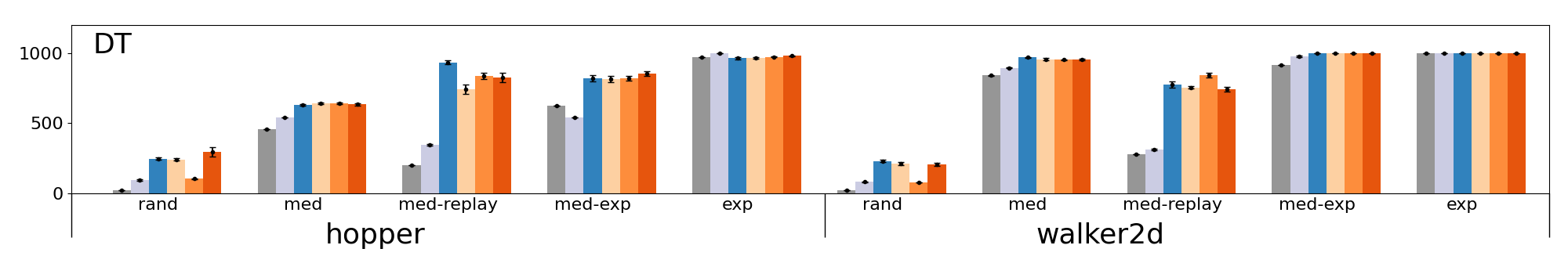}
    \caption{Episode lengths for \texttt{hopper} and \texttt{walker2d} datasets from D4RL. The mean and standard error of episode lengths are computed across $10$ random seeds. For each random seed, we evaluate the final policy of each algorithm over $50$ episodes. Note that the hyperparameters are tuned based on normalized scores rather than episode lengths.
    }
    \label{fig:d4rl_steps}
    \vspace{-4mm}
\end{figure}

\begin{table}[h]
    \centering
    \begin{scriptsize}

\begin{tabular}{c|c|c|c|c|c|c|c|c}
\hline
Dataset & Behavior & BC & Algo & Original & Zero & Random & Negative & No-terminal \\ 
\hline
 &  &  & ATAC & 79.3$\pm$13.0 & 714.2$\pm$120.0 & 552.7$\pm$123.4 & 958.8$\pm$39.1 & 567.2$\pm$144.9 \\ 
hopper &  &  & PSPI & 99.2$\pm$6.2 & 354.5$\pm$113.0 & 513.9$\pm$131.0 & 749.1$\pm$124.5 & 517.3$\pm$142.9 \\ 
random & 22.1 & 93.2$\pm$5.0 & IQL & 108.7$\pm$4.0 & 137.1$\pm$18.0 & 12.8$\pm$0.1 & 12.1$\pm$2.7 & 14.5$\pm$1.0 \\ 
 &  &  & CQL & 48.4$\pm$12.5 & 41.8$\pm$20.9 & 28.4$\pm$3.8 & 6.0$\pm$0.0 & 16.7$\pm$10.1 \\ 
 &  &  & DT & 245.2$\pm$8.2 & 239.7$\pm$8.6 & 103.2$\pm$2.0 & 295.1$\pm$32.8 & -- \\ 
\hline
 &  &  & ATAC & 808.4$\pm$15.3 & 999.7$\pm$0.3 & 998.9$\pm$1.0 & 997.6$\pm$1.9 & 840.2$\pm$19.8 \\ 
hopper &  &  & PSPI & 906.8$\pm$16.9 & 897.7$\pm$16.2 & 993.0$\pm$6.4 & 907.8$\pm$6.0 & 833.9$\pm$29.7 \\ 
medium & 457.2 & 540.5$\pm$1.2 & IQL & 571.2$\pm$8.5 & 538.7$\pm$4.0 & 594.8$\pm$20.3 & 631.4$\pm$21.7 & 446.4$\pm$65.8 \\ 
 &  &  & CQL & 851.1$\pm$35.9 & 594.7$\pm$123.9 & 447.2$\pm$97.4 & 962.2$\pm$17.0 & 503.7$\pm$123.0 \\ 
 &  &  & DT & 630.4$\pm$6.7 & 641.1$\pm$8.2 & 638.8$\pm$6.1 & 633.4$\pm$9.1 & -- \\ 
\hline
 &  &  & ATAC & 1000.0$\pm$0.0 & 904.5$\pm$44.3 & 909.4$\pm$35.3 & 909.6$\pm$29.5 & 1000.0$\pm$0.0 \\ 
hopper &  &  & PSPI & 1000.0$\pm$0.0 & 1000.0$\pm$0.0 & 995.9$\pm$3.9 & 1000.0$\pm$0.0 & 1000.0$\pm$0.0 \\ 
medium & 197.0 & 345.4$\pm$7.0 & IQL & 598.5$\pm$19.8 & 222.5$\pm$33.2 & 22.3$\pm$4.2 & 88.1$\pm$65.5 & 100.9$\pm$53.6 \\ 
replay &  &  & CQL & 801.9$\pm$38.2 & 24.9$\pm$2.6 & 217.5$\pm$123.7 & 153.8$\pm$91.8 & 116.4$\pm$93.2 \\ 
 &  &  & DT & 934.5$\pm$13.6 & 742.3$\pm$33.4 & 836.2$\pm$20.6 & 825.6$\pm$31.2 & -- \\ 
\hline
 &  &  & ATAC & 930.1$\pm$13.1 & 965.3$\pm$10.5 & 836.5$\pm$71.7 & 998.3$\pm$1.6 & 885.1$\pm$36.3 \\ 
hopper &  &  & PSPI & 974.5$\pm$6.1 & 848.0$\pm$44.6 & 546.0$\pm$84.4 & 988.9$\pm$4.2 & 969.8$\pm$5.3 \\ 
medium & 622.2 & 541.8$\pm$1.1 & IQL & 980.4$\pm$5.1 & 535.4$\pm$3.4 & 581.2$\pm$74.7 & 883.2$\pm$28.5 & 735.3$\pm$62.8 \\ 
expert &  &  & CQL & 800.8$\pm$31.4 & 516.6$\pm$72.7 & 633.1$\pm$87.9 & 901.9$\pm$55.7 & 489.1$\pm$103.0 \\ 
 &  &  & DT & 817.4$\pm$22.5 & 813.8$\pm$22.0 & 817.8$\pm$17.0 & 852.1$\pm$17.4 & -- \\ 
\hline
 &  &  & ATAC & 998.8$\pm$0.5 & 997.3$\pm$1.1 & 995.3$\pm$2.1 & 998.8$\pm$1.2 & 999.9$\pm$0.1\\
hopper &  &  & PSPI & 991.5$\pm$1.5 & 998.8$\pm$0.9 & 998.4$\pm$0.8 & 995.8$\pm$1.7 & 992.1$\pm$3.0\\
expert & 972.8 & 998.4$\pm$0.6 & IQL & 993.3$\pm$1.7 & 607.7$\pm$112.9 & 658.7$\pm$97.1 & 439.6$\pm$102.0 & 238.3$\pm$77.3\\
 &  &  & CQL & 906.7$\pm$25.9 & 19.3$\pm$1.9 & 19.9$\pm$1.4 & 603.2$\pm$110.0 & 17.7$\pm$1.2\\
 &  &  & DT & 963.1$\pm$6.8 & 966.0$\pm$4.9 & 972.9$\pm$2.9 & 979.4$\pm$2.9 & --\\
\hline

\multicolumn{9}{c}{}\\ \hline
 &  &  & ATAC & 227.6$\pm$10.2 & 231.0$\pm$11.0 & 149.6$\pm$10.4 & 214.8$\pm$9.2 & 189.0$\pm$19.8 \\ 
walker2d &  &  & PSPI & 221.1$\pm$13.7 & 143.8$\pm$10.5 & 165.7$\pm$9.8 & 135.8$\pm$1.2 & 135.9$\pm$0.5 \\ 
random & 20.4 & 81.2$\pm$2.1 & IQL & 240.5$\pm$3.8 & 83.7$\pm$4.9 & 79.9$\pm$21.4 & 182.3$\pm$1.6 & 62.8$\pm$23.0 \\ 
 &  &  & CQL & 182.9$\pm$79.5 & 56.0$\pm$23.1 & 188.7$\pm$50.9 & 34.7$\pm$5.1 & 37.2$\pm$10.2 \\ 
 &  &  & DT & 228.4$\pm$7.6 & 210.9$\pm$10.2 & 75.2$\pm$1.2 & 203.3$\pm$11.1 & -- \\ 
\hline
 &  &  & ATAC & 976.9$\pm$17.6 & 999.7$\pm$0.3 & 997.8$\pm$1.5 & 999.8$\pm$0.2 & 990.2$\pm$5.9 \\ 
walker2d &  &  & PSPI & 989.1$\pm$7.2 & 998.2$\pm$1.7 & 1000.0$\pm$0.0 & 973.0$\pm$4.9 & 991.7$\pm$4.7 \\ 
medium & 839.6 & 895.3$\pm$2.4 & IQL & 930.1$\pm$12.3 & 831.1$\pm$37.3 & 976.5$\pm$7.4 & 739.1$\pm$30.7 & 517.4$\pm$11.2 \\ 
 &  &  & CQL & 991.0$\pm$3.0 & 105.3$\pm$34.7 & 86.9$\pm$32.7 & 321.5$\pm$107.8 & 130.4$\pm$35.0 \\ 
 &  &  & DT & 968.7$\pm$3.3 & 955.6$\pm$7.5 & 951.5$\pm$4.7 & 954.9$\pm$6.9 & -- \\ 
\hline
 &  &  & ATAC & 923.1$\pm$16.4 & 967.8$\pm$21.5 & 968.4$\pm$19.8 & 817.4$\pm$83.6 & 985.8$\pm$10.0 \\ 
walker2d &  &  & PSPI & 914.4$\pm$30.4 & 997.1$\pm$2.7 & 995.3$\pm$3.2 & 931.8$\pm$53.2 & 997.3$\pm$1.2 \\ 
medium & 276.3 & 312.7$\pm$5.8 & IQL & 659.9$\pm$37.5 & 313.5$\pm$27.5 & 211.1$\pm$37.1 & 119.5$\pm$25.1 & 41.8$\pm$3.4 \\ 
replay &  &  & CQL & 945.1$\pm$14.9 & 71.3$\pm$27.5 & 35.1$\pm$6.9 & 72.8$\pm$23.3 & 87.5$\pm$26.7 \\ 
 &  &  & DT & 776.1$\pm$21.3 & 755.5$\pm$10.7 & 841.9$\pm$14.8 & 740.8$\pm$17.8 & -- \\ 
\hline
 &  &  & ATAC & 929.8$\pm$44.4 & 979.4$\pm$15.9 & 999.4$\pm$0.6 & 1000.0$\pm$0.0 & 960.3$\pm$36.0 \\ 
walker2d &  &  & PSPI & 1000.0$\pm$0.0 & 918.9$\pm$21.3 & 1000.0$\pm$0.0 & 960.2$\pm$8.3 & 987.9$\pm$9.8 \\ 
medium & 912.8 & 976.9$\pm$3.5 & IQL & 999.6$\pm$0.4 & 989.0$\pm$2.9 & 808.8$\pm$60.9 & 856.7$\pm$21.7 & 334.1$\pm$12.5 \\ 
expert &  &  & CQL & 998.4$\pm$1.5 & 55.8$\pm$20.4 & 44.3$\pm$17.4 & 514.5$\pm$112.3 & 12.6$\pm$2.2 \\ 
 &  &  & DT & 1000.0$\pm$0.0 & 999.1$\pm$0.9 & 1000.0$\pm$0.0 & 999.5$\pm$0.4 & -- \\ 
\hline
 &  &  & ATAC & 1000.0$\pm$0.0 & 1000.0$\pm$0.0 & 993.6$\pm$3.6 & 1000.0$\pm$0.0 & 1000.0$\pm$0.0\\
walker2d &  &  & PSPI & 1000.0$\pm$0.0 & 1000.0$\pm$0.0 & 951.7$\pm$22.1 & 997.4$\pm$1.7 & 1000.0$\pm$0.0\\
expert & 999.0 & 1000.0$\pm$0.0 & IQL & 1000.0$\pm$0.0 & 1000.0$\pm$0.0 & 955.3$\pm$29.0 & 1000.0$\pm$0.0 & 1000.0$\pm$0.0\\
 &  &  & CQL & 1000.0$\pm$0.0 & 1000.0$\pm$0.0 & 1000.0$\pm$0.0 & 1000.0$\pm$0.0 & 998.3$\pm$1.6\\
 &  &  & DT & 1000.0$\pm$0.0 & 1000.0$\pm$0.0 & 1000.0$\pm$0.0 & 1000.0$\pm$0.0 & --\\
\hline
    \end{tabular}
    \vskip 2mm
    \caption{Episode lengths for hopper and walker2d datasets. The mean and standard error of episode lengths are computed across $10$ random seeds. For each random seed, we evaluate the final policy of each algorithm over $50$ episodes.}
    \label{tab:d4rl_steps}
    \end{scriptsize}
\end{table}

\subsection{Episode Lengths for \texttt{hopper} and \texttt{walker2d} Tasks}
We show the episode lengths for \texttt{hopper} and \texttt{walker2d} tasks in~\cref{fig:d4rl_steps}. We observe that policies learned by ATAC and PSPI can consistently keep the agent from falling down for a significant longer period than the behavior policies. The only exception is when running PSPI on \texttt{hopper-medium-expert} dataset with random reward. This is potentially due to the fact that we use only $2$ seeds for hyperparameter tuning. The average normalized scores and episode lengths during tuning are $101.1$ and $925.0$, respectively, higher than the final values of $58.8$ and $546.0$. We observe that DT policies also tend to keep the agent alive for a longer number of steps than behavior policies. 

For IQL and CQL, it is relatively rare for them to achieve long episode lengths when trained with wrong rewards. IQL and CQL, when using negative reward, can keep the agent from falling for longer in \texttt{hopper-medium} and \texttt{hopper-medium-expert} datasets. We would like to note that our hyperparameters are tuned based on normalized scores rather than episode lengths. The statistics of episode lengths are included in~\cref{tab:d4rl_steps}.

\subsection{Effect of Removing Terminal Transitions}
For \texttt{hopper} and \texttt{walker2d} tasks, we remove the terminal transitions, i.e., the transitions where the \texttt{hopper} or walker falls, from the datasets with modified rewards. This is to satisfy the safety condition in~\cref{cr:safety guarantee (informal)}. A terminal signal implicitly adds an absorbing\footnote{This is also why we mark goal completion as terminal in Meta-World experiments. For goal-oriented tasks, terminal signal upon task completion implicitly creates an absorbing goal state in the data distribution.}, but in this case, unsafe, state into the data distribution $\mu$. This means that staying within data support in a long term would not provide safety, as the agent can stay within support by falling. For D4RL datasets in particular, we need to remove terminal transitions (rather than just the terminal flag) since terminal transitions in the original datasets do not contain valid next states.\footnote{The authors of D4RL datasets set the next state to be a place-holder value when a transition is marked as terminal.}

This, however, means that there are two underlying variables: 
\begin{enumerate*}[label=\emph{\arabic*)}]
        \item the data reward, and
        \item whether terminal transitions are removed,
    \end{enumerate*}
 when comparing offline RL with wrong rewards and offline RL on the original dataset. To separate the effect of two variables, we study the performance of offline RL algorithms with the original true reward but the terminal transitions removed. The results\footnote{We do not study decision transformer (DT) in this ablation since DT does not use terminal signals.} are listed in the No-terminal column in~\cref{tab:d4rl_scores}. We observe that ATAC and PSPI with the original true reward show similar results whether the terminal transitions are removed. IQL and CQL with true reward, interestingly, does significantly worse when the terminal transitions are removed. This is potentially due to the increased instability of target network~\cite{van2018deep,sutton2018reinforcement} when there are no terminal signals.

\subsection{Results on Safety Gymnasium Datasets}\label{sec:safety-gym-results}
\new{In~\cref{tab:safety-gym}, we present results on \texttt{point} datasets and \texttt{car} from Safety Gymnasium~\cite{liu2023datasets}. For baselines, we select the best none-BC Algorithms from \cite{liu2023datasets}: the best-performing offline safe RL agent selected using the same criterion (safe agent with highest reward if there is at least one safe agent, and agent with lowest cost otherwise). We observe that ATAC and PSPI with a na\"ive filtering strategy can achieve comparable performance with state-of-the-art offline safe RL algorithms benchmarked in~\cite{liu2023datasets}. 

\begin{table}[ht]
\centering
\small
\begin{tabular}{c|cc|cc|ccc}
\hline
\multirow{2}{*}{Task} & \multicolumn{2}{c|}{ATAC} & \multicolumn{2}{c|}{PSPI} & \multicolumn{3}{c}{Best None-BC Algorithm from \cite{liu2023datasets}} \\ 
 & reward $\uparrow$ & cost $\downarrow$ & reward $\uparrow$ & cost $\downarrow$ & reward $\uparrow$ & cost $\downarrow$ & algorithm \\ 
\hline
PointButton1 & 0.08 & 0.82 & \color{gray}{0.10} & \color{gray}{1.04} & \color{gray}{0.13} & \color{gray}{1.35} & COpiDICE~\cite{lee2022coptidice} \\ 
\hline
PointButton2 & 0.14 & 0.89 & \color{gray}{0.19} & \color{gray}{1.29} & \color{gray}{0.15} & \color{gray}{1.51} & COpiDICE~\cite{lee2022coptidice} \\ 
\hline
PointCircle1 & \color{gray}{0.40} & \color{gray}{2.79} & \color{gray}{0.34} & \color{gray}{2.39} & 0.59 & 0.69 & CDT~\cite{liu2023constrained} \\ 
\hline
PointCircle2 & \color{gray}{0.61} & \color{gray}{4.45} & \color{gray}{0.55} & \color{gray}{2.84} & \color{gray}{0.64} & \color{gray}{1.05} & CDT~\cite{liu2023constrained} \\ 
\hline
PointGoal1 & 0.58 & 0.94 & 0.59 & 0.90 & 0.71 & 0.98 & BCQ-Lag~\cite{xu2022constraints} \\ 
\hline
PointGoal2 & \color{gray}{0.55} & \color{gray}{2.30} & \color{gray}{0.45} & \color{gray}{2.55} & \color{gray}{0.40} & \color{gray}{1.31} & CPQ~\cite{xu2022constraints} \\ 
\hline
PointPush1 & 0.18 & 0.51 & 0.20 & 0.85 & 0.33 & 0.86 & BCQ-Lag~\cite{xu2022constraints} \\ 
\hline
PointPush2 & 0.10 & 0.68 & 0.10 & 0.81 & 0.23 & 0.99 & BCQ-Lag~\cite{xu2022constraints} \\ 
\hline
CarButton1 & \color{gray}{-0.06} & \color{gray}{1.25} & \color{gray}{-0.08} & \color{gray}{1.11} & \color{gray}{0.21} & \color{gray}{1.60} & CDT~\cite{liu2023constrained} \\ 
\hline
CarButton2 & \color{gray}{-0.11} & \color{gray}{1.22} & \color{gray}{-0.08} & \color{gray}{1.04} & \color{gray}{0.13} & \color{gray}{1.58} & CDT~\cite{liu2023constrained} \\ 
\hline
CarCircle1 & \color{gray}{0.66} & \color{gray}{5.51} & \color{gray}{0.64} & \color{gray}{5.92} & \color{gray}{0.60} & \color{gray}{1.73} & CDT~\cite{liu2023constrained} \\ 
\hline
CarCircle2 & \color{gray}{0.64} & \color{gray}{5.96} & \color{gray}{0.64} & \color{gray}{5.99} & \color{gray}{0.66} & \color{gray}{2.53} & CDT~\cite{liu2023constrained} \\ 
\hline
CarGoal1 & 0.50 & 0.99 & 0.41 & 0.78 & 0.47 & 0.78 & BCQ-Lag~\cite{xu2022constraints} \\ 
\hline
CarGoal2 & 0.24 & 1.00 & \color{gray}{0.24} & \color{gray}{1.05} & 0.25 & 0.91 & COptiDICE~\cite{lee2022coptidice} \\ 
\hline
CarPush1 & 0.33 & 0.96 & 0.32 & 0.75 & 0.31 & 0.40 & CDT~\cite{liu2023constrained} \\ 
\hline
CarPush2 & 0.10 & 0.95 & 0.11 & 0.81 & \color{gray}{0.09} & \color{gray}{1.07} & COptiDICE~\cite{lee2022coptidice} \\ 
\hline
\end{tabular}
\vspace{4mm}
\caption{Results on \texttt{point} and \texttt{car} datasets from offline SafetyGymnasium~\cite{liu2023datasets} over $3$ random seeds. Cumulative reward and cost are normalized as described in~\cite{liu2023datasets}. An agent is considered safe if the cumulative cost is no larger than $1$. Unsafe agent with cumulative cost more than $1$ (total cost 34.29) is shown in {\color{gray}gray}. }
\label{tab:safety-gym}
\end{table}
}

\subsection{Ablation Study on CQL Architecture}\label{sec:cql-ablation}

Our results are generated by CQL with the architecture in~\cref{tab:cql_hp}, which is given by the original implementation. In this study, we examine the performance of CQL when using the same architecture as ATAC and PSPI, i.e., with $3$ hidden layers and scaled $\tanh$-Gaussian. The normalized scores for D4RL datasets and success rate for Meta-World tasks are included in~\cref{tab:d4rl_cql} and~\cref{tab:mw_cql}, respectively.
We find the performance of CQL with $3$ hidden layers and scaled $\tanh$-Gaussian policy, denoted as CQL$_3$, to be similar with the performance of CQL with the original architecture.   
\begin{table}[h]
    \centering
    \begin{scriptsize}

\begin{tabular}{c|l|c|c|c|c|c}
\hline
Dataset & Algo & Original & Zero & Random & Negative & No-terminal \\ 
\hline
hopper-random & CQL & 3.0$\pm$0.7 & 2.0$\pm$1.1 & 1.8$\pm$0.2 & 0.7$\pm$0.0 & 1.2$\pm$0.5 \\ 
 & CQL$_3$ & 11.2$\pm$2.2 & 2.0$\pm$0.4 & 7.7$\pm$3.5 & 3.9$\pm$2.9 & 10.1$\pm$3.2 \\ 
\hline
hopper-medium & CQL & 85.0$\pm$3.7 & 55.0$\pm$11.7 & 36.9$\pm$9.8 & 89.0$\pm$1.5 & 46.3$\pm$11.5 \\ 
 & CQL$_3$ & 48.0$\pm$12.2 & 61.7$\pm$12.2 & 17.3$\pm$9.8 & 75.8$\pm$6.9 & 32.8$\pm$10.3 \\ 
\hline
hopper-medium-replay & CQL & 79.7$\pm$3.7 & 1.9$\pm$0.1 & 7.6$\pm$3.7 & 5.8$\pm$2.7 & 4.4$\pm$2.8 \\ 
 & CQL$_3$ & 100.3$\pm$1.6 & 1.8$\pm$0.0 & 1.8$\pm$0.0 & 1.6$\pm$0.2 & 1.8$\pm$0.0 \\ 
\hline
hopper-medium-expert & CQL & 88.8$\pm$3.6 & 49.7$\pm$8.1 & 60.7$\pm$9.4 & 88.6$\pm$6.1 & 47.4$\pm$11.0 \\ 
 & CQL$_3$ & 104.7$\pm$1.4 & 40.6$\pm$10.7 & 17.3$\pm$9.0 & 48.2$\pm$8.7 & 30.0$\pm$9.5 \\ 
\hline
\multicolumn{7}{c}{}\\ \hline
walker2d-random & CQL & 4.0$\pm$1.8 & 0.9$\pm$0.9 & 4.3$\pm$1.4 & -0.4$\pm$0.0 & 0.2$\pm$0.4 \\ 
 & CQL$_3$ & 4.8$\pm$0.2 & 1.4$\pm$0.8 & 3.3$\pm$1.2 & 0.9$\pm$0.6 & -0.2$\pm$0.1 \\ 
\hline
walker2d-medium & CQL & 81.4$\pm$0.3 & 2.0$\pm$0.9 & 1.2$\pm$0.7 & 14.5$\pm$7.4 & 2.1$\pm$0.8 \\ 
 & CQL$_3$ & 81.0$\pm$0.4 & 5.0$\pm$4.7 & -0.2$\pm$0.0 & 13.3$\pm$7.7 & 0.1$\pm$0.3 \\ 
\hline
walker2d-medium-replay & CQL & 74.1$\pm$1.2 & 0.5$\pm$0.3 & -0.2$\pm$0.1 & 0.2$\pm$0.3 & 0.4$\pm$0.4 \\ 
 & CQL$_3$ & 76.8$\pm$1.3 & -0.2$\pm$0.1 & 0.1$\pm$0.1 & 0.1$\pm$0.3 & 0.3$\pm$0.3 \\ 
\hline
walker2d-medium-expert & CQL & 109.3$\pm$0.3 & 0.8$\pm$0.5 & 0.1$\pm$0.2 & 28.0$\pm$9.1 & -0.2$\pm$0.0 \\ 
 & CQL$_3$ & 108.0$\pm$0.6 & -0.2$\pm$0.1 & -0.3$\pm$0.1 & 26.8$\pm$11.0 & -0.2$\pm$0.0 \\ 
\hline
\multicolumn{7}{c}{}\\ \hline
halfcheetah-random & CQL & 30.7$\pm$0.6 & 1.0$\pm$0.8 & -1.4$\pm$0.3 & -5.4$\pm$0.3 & -- \\ 
 & CQL$_3$ & 31.3$\pm$0.6 & 0.5$\pm$0.4 & 1.9$\pm$0.1 & -7.0$\pm$0.4 & -- \\ 
\hline
halfcheetah-medium & CQL & 57.2$\pm$1.5 & 43.2$\pm$0.1 & 43.5$\pm$0.1 & 42.7$\pm$0.1 & -- \\ 
 & CQL$_3$ & 65.1$\pm$0.6 & 43.1$\pm$0.1 & 43.3$\pm$0.1 & 42.2$\pm$0.1 & -- \\ 
\hline
halfcheetah-medium-replay & CQL & 48.2$\pm$5.4 & 34.2$\pm$0.7 & 33.1$\pm$0.9 & 32.4$\pm$1.0 & -- \\ 
 & CQL$_3$ & 47.8$\pm$5.2 & 38.3$\pm$1.0 & 37.7$\pm$1.1 & 32.6$\pm$1.2 & -- \\ 
\hline
halfcheetah-medium-expert & CQL & 75.2$\pm$1.0 & 81.2$\pm$2.1 & 69.6$\pm$2.8 & 42.4$\pm$0.2 & -- \\ 
 & CQL$_3$ & 88.0$\pm$1.7 & 75.5$\pm$1.7 & 80.0$\pm$2.5 & 42.6$\pm$0.1 & -- \\ 
\hline
    \end{tabular}
    \vskip 2mm
    \caption{D4RL normalized scores for CQL with the original architecture in~\cref{tab:cql_hp} (CQL) and with the same architecture as ATAC and PSPI (CQL$_3$), i.e., with 3 hidden layers and scaled $\tanh$-Gaussian. We find that CQL performs similarly under the two choices of architecture.}
    \label{tab:d4rl_cql}
    \end{scriptsize}
\end{table}

\begin{table}[h]
    \centering
    \begin{scriptsize}

\begin{tabular}{c|l|c|c|c|c}
\hline
Dataset & Algo & Original & Zero & Random & Negative \\ 
\hline
button-press & CQL & 0.88$\pm$0.07 & 0.83$\pm$0.10 & 0.84$\pm$0.09 & 0.94$\pm$0.03 \\ 
 & CQL$_3$ & 0.78$\pm$0.10 & 0.90$\pm$0.06 & 0.73$\pm$0.11 & 0.68$\pm$0.11 \\ 
\hline
door-open & CQL & 0.82$\pm$0.09 & 0.07$\pm$0.07 & 0.12$\pm$0.08 & 0.00$\pm$0.00 \\ 
 & CQL$_3$ & 0.47$\pm$0.15 & 0.29$\pm$0.14 & 0.00$\pm$0.00 & 0.38$\pm$0.15 \\ 
\hline
drawer-close & CQL & 0.91$\pm$0.04 & 0.99$\pm$0.01 & 0.96$\pm$0.02 & 0.97$\pm$0.02 \\ 
 & CQL$_3$ & 0.99$\pm$0.01 & 0.93$\pm$0.03 & 0.97$\pm$0.02 & 0.98$\pm$0.01 \\ 
\hline
drawer-open & CQL & 0.78$\pm$0.04 & 0.66$\pm$0.07 & 0.65$\pm$0.07 & 0.84$\pm$0.03 \\ 
 & CQL$_3$ & 0.86$\pm$0.03 & 0.79$\pm$0.09 & 0.78$\pm$0.06 & 0.80$\pm$0.05 \\ 
\hline
peg-insert-side & CQL & 0.08$\pm$0.02 & 0.00$\pm$0.00 & 0.02$\pm$0.01 & 0.00$\pm$0.00 \\ 
 & CQL$_3$ & 0.10$\pm$0.02 & 0.05$\pm$0.03 & 0.08$\pm$0.02 & 0.03$\pm$0.02 \\ 
\hline
pick-place & CQL & 0.01$\pm$0.00 & 0.06$\pm$0.02 & 0.05$\pm$0.02 & 0.07$\pm$0.03 \\ 
 & CQL$_3$ & 0.00$\pm$0.00 & 0.00$\pm$0.00 & 0.00$\pm$0.00 & 0.00$\pm$0.00 \\ 
\hline
push & CQL & 0.00$\pm$0.00 & 0.01$\pm$0.00 & 0.01$\pm$0.00 & 0.03$\pm$0.02 \\ 
 & CQL$_3$ & 0.03$\pm$0.01 & 0.00$\pm$0.00 & 0.00$\pm$0.00 & 0.01$\pm$0.01 \\ 
\hline
reach & CQL & 0.17$\pm$0.03 & 0.29$\pm$0.04 & 0.25$\pm$0.05 & 0.23$\pm$0.03 \\ 
 & CQL$_3$ & 0.35$\pm$0.05 & 0.35$\pm$0.03 & 0.37$\pm$0.05 & 0.22$\pm$0.06 \\ 
\hline
window-close & CQL & 1.00$\pm$0.00 & 1.00$\pm$0.00 & 0.59$\pm$0.13 & 0.55$\pm$0.13 \\ 
 & CQL$_3$ & 0.97$\pm$0.03 & 0.75$\pm$0.12 & 0.77$\pm$0.12 & 0.98$\pm$0.02 \\ 
\hline
window-open & CQL & 0.83$\pm$0.03 & 0.82$\pm$0.05 & 0.92$\pm$0.02 & 0.83$\pm$0.07 \\ 
 & CQL$_3$ & 0.84$\pm$0.07 & 0.94$\pm$0.02 & 0.93$\pm$0.02 & 0.77$\pm$0.09 \\ 
\hline
bin-picking & CQL & 0.00$\pm$0.00 & 0.02$\pm$0.02 & 0.00$\pm$0.00 & 0.00$\pm$0.00 \\ 
 & CQL$_3$ & 0.00$\pm$0.00 & 0.00$\pm$0.00 & 0.00$\pm$0.00 & 0.00$\pm$0.00 \\ 
\hline
box-close & CQL & 0.04$\pm$0.01 & 0.07$\pm$0.02 & 0.01$\pm$0.00 & 0.05$\pm$0.01 \\ 
 & CQL$_3$ & 0.03$\pm$0.02 & 0.00$\pm$0.00 & 0.01$\pm$0.00 & 0.01$\pm$0.00 \\ 
\hline
door-lock & CQL & 0.75$\pm$0.04 & 0.61$\pm$0.10 & 0.61$\pm$0.11 & 0.60$\pm$0.08 \\ 
 & CQL$_3$ & 0.63$\pm$0.06 & 0.64$\pm$0.09 & 0.57$\pm$0.10 & 0.62$\pm$0.09 \\ 
\hline
door-unlock & CQL & 0.87$\pm$0.03 & 0.77$\pm$0.09 & 0.89$\pm$0.04 & 0.93$\pm$0.03 \\ 
 & CQL$_3$ & 0.92$\pm$0.02 & 0.73$\pm$0.11 & 0.71$\pm$0.11 & 0.76$\pm$0.11 \\ 
\hline
hand-insert & CQL & 0.13$\pm$0.04 & 0.22$\pm$0.06 & 0.34$\pm$0.04 & 0.32$\pm$0.03 \\ 
 & CQL$_3$ & 0.04$\pm$0.03 & 0.08$\pm$0.03 & 0.18$\pm$0.03 & 0.08$\pm$0.02 \\ 
\hline
    \end{tabular}
    \vskip 2mm
    \caption{Meta-World success rate for CQL with the original architecture in~\cref{tab:cql_hp} (CQL) and with the same architecture as ATAC and PSPI (CQL$_3$), i.e., with 3 hidden layers and scaled $\tanh$-Gaussian. We find that CQL performs similarly under the two choices of architecture.} 
    \label{tab:mw_cql}
    \end{scriptsize}
\end{table}

\subsection{Compute Usage}\label{sec:compute}
For ATAC, PSPI, IQL, CQL and BC, each run takes around $5.5$ hours on an 
\href{https://learn.microsoft.com/en-us/azure/virtual-machines/nct4-v3-series}{NC4as\_T4\_v3} Azure virtual machine for D4RL, Meta-World, and SafetyGymnasium experiments. For D4RL experiments (including ablation studies), we use a total of $5\text{ (algorithms)}\times 15 \text{ (D4RL datasets)}\times 5 \text{ (reward types)} \times 4 \text{ (hyperparameters)} \times 2 \text{ (seeds)} = 3000$ runs for hyperparameter tuning and $6\text{ (algorithms)}\times 15 \text{ (D4RL datasets)}\times 5 \text{ (reward types)} \times 10 \text{ (seeds)} = 4500$ runs for generating the final results. For Meta-World experiments, we use $5\text{ (algorithms)}\times 15 \text{ (Meta-World datasets)}\times 4 \text{ (reward types)} \times 4 \text{ (hyperparameters)} \times 3 \text{ (seeds)} = 3600$ runs for hyperparameter tuning and $6\text{ (algorithms)}\times 15 \text{ (Meta-World datasets)}\times 4 \text{ (reward types)} \times 10 \text{ (seeds)} = 3600$ runs for generating the final results. SafetyGymnasium experiments require a total of $2\text{ (algorithms)}\times 16 \text{ (SafetyGymnasium datasets)}\times 4 \text{ (hyperparameters)} \times 3 \text{ (seeds)} = 384$ runs.
This amount to $15084$ runs or $82962$ hours of training on NC4as\_T4\_v3 virtual machines.

For DT, we do not tune any parameters at large scale. Each run takes around $2.5$ hours on an 
\href{https://learn.microsoft.com/en-us/azure/virtual-machines/ncv2-series?source=recommendations}{NC6s\_v2} or \href{https://learn.microsoft.com/en-us/azure/virtual-machines/nd-series?source=recommendations}{ND6s} Azure virtual machine. We train a total of $30 \text{ (D4RL and Meta-World datasets) }\times 4\text{ (reward types) }\times 10\text{ (seeds) }=1200$ DT agents. 

\begin{figure*}[t]
    \centering
    \textbf{D4RL Datasets}\\
    \includegraphics[width=0.9\columnwidth,trim={0 3mm 0 3mm},clip]{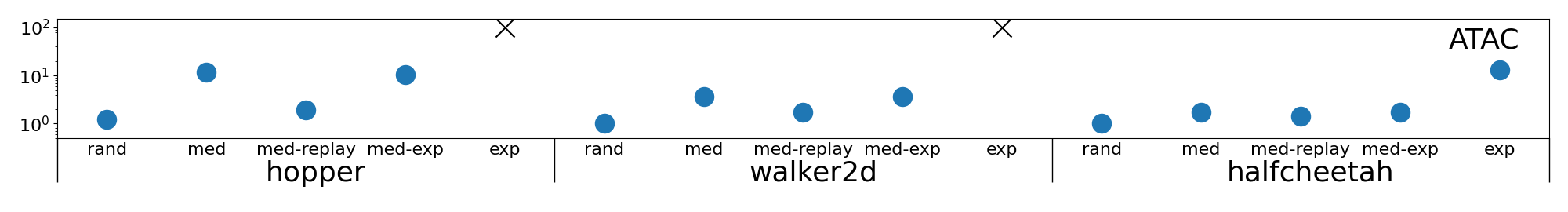}
    \includegraphics[width=0.9\columnwidth,trim={0 3mm 0 3mm},clip]{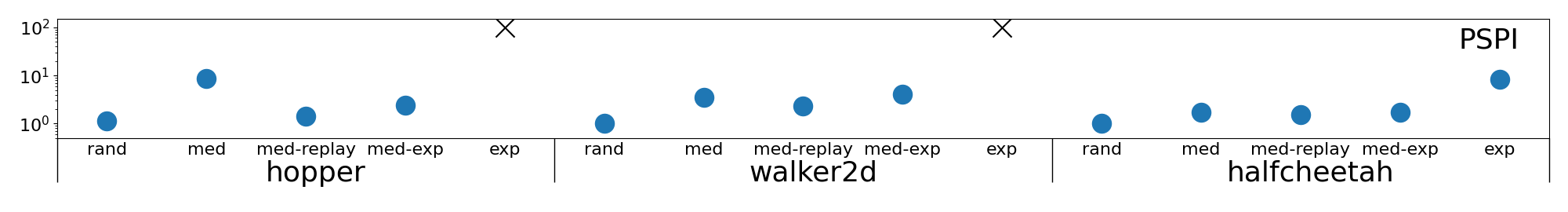}
    \includegraphics[width=0.9\columnwidth,trim={0 3mm 0 3mm},clip]{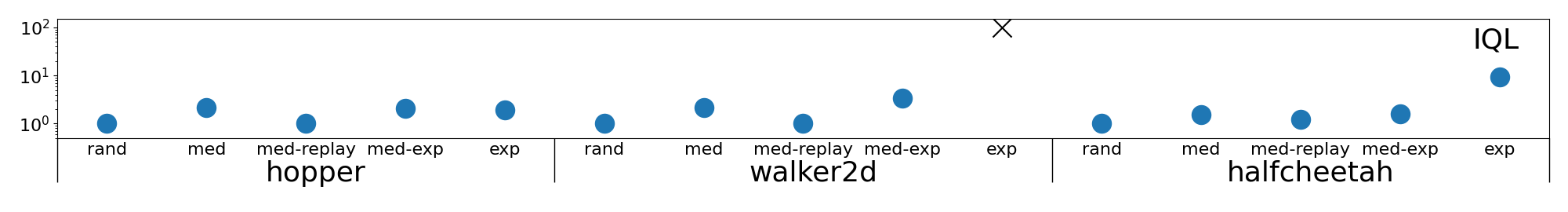}
    \includegraphics[width=0.9\columnwidth,trim={0 3mm 0 3mm},clip]{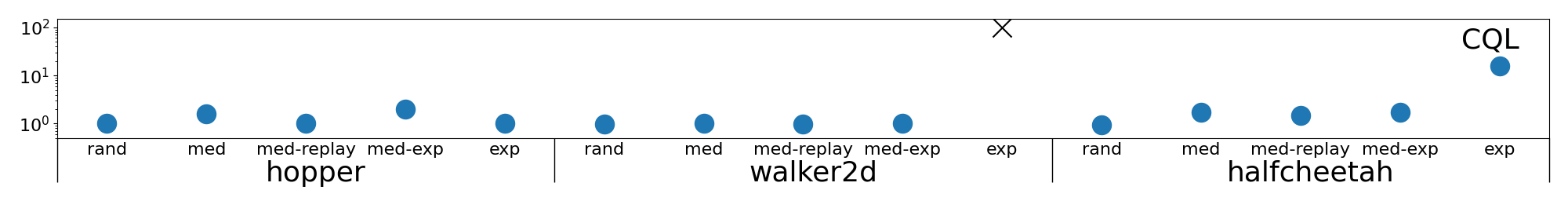}
    \includegraphics[width=0.9\columnwidth,trim={0 3mm 0 3mm},clip]{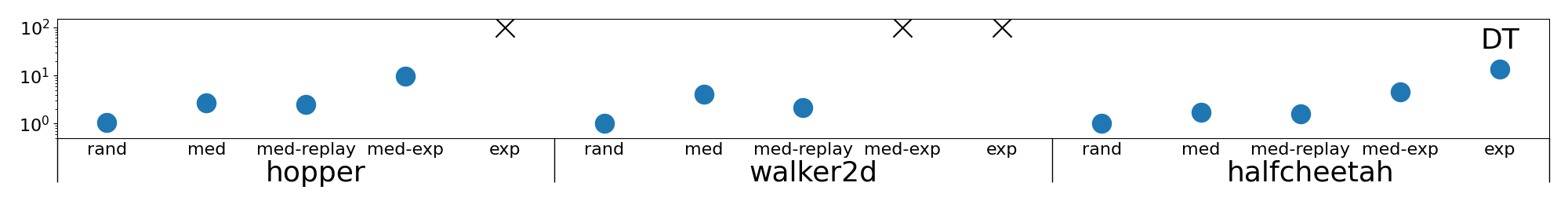}
    \vspace{5mm}

    \textbf{Meta-World Datasets}\\
    \includegraphics[width=0.9\columnwidth,trim={0 3mm 0 3mm},clip]{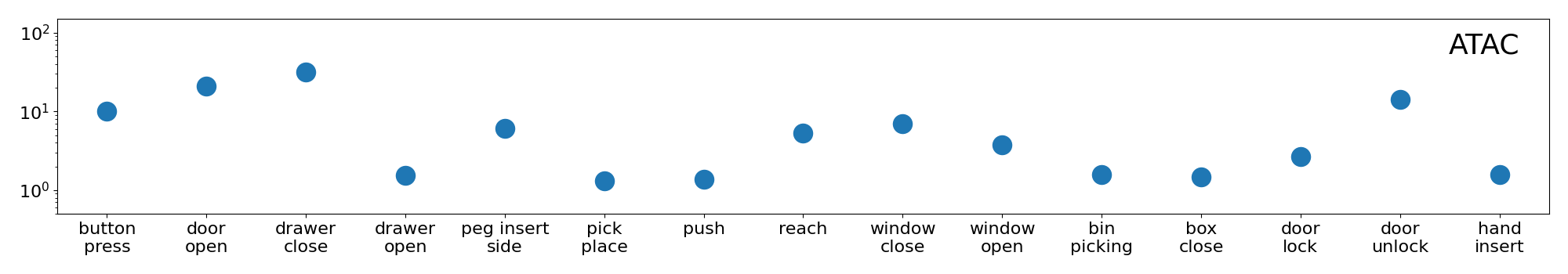}
    \includegraphics[width=0.9\columnwidth,trim={0 3mm 0 3mm},clip]{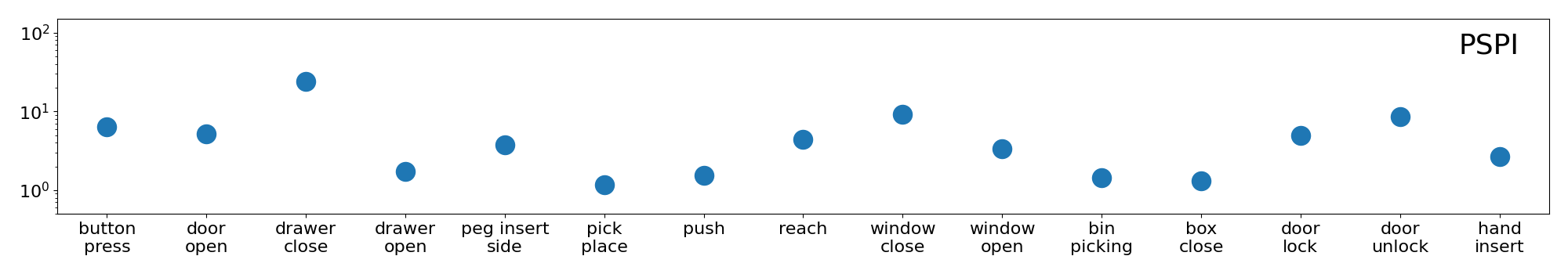}
    \includegraphics[width=0.9\columnwidth,trim={0 3mm 0 3mm},clip]{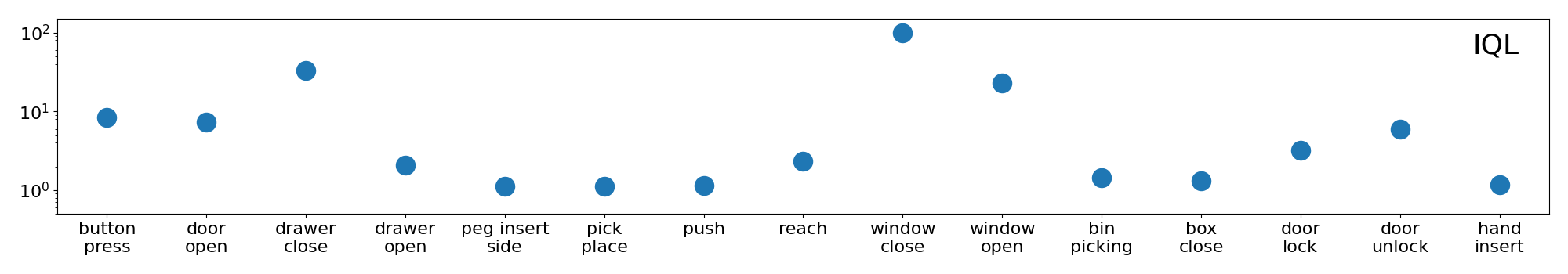}
    \includegraphics[width=0.9\columnwidth,trim={0 3mm 0 3mm},clip]{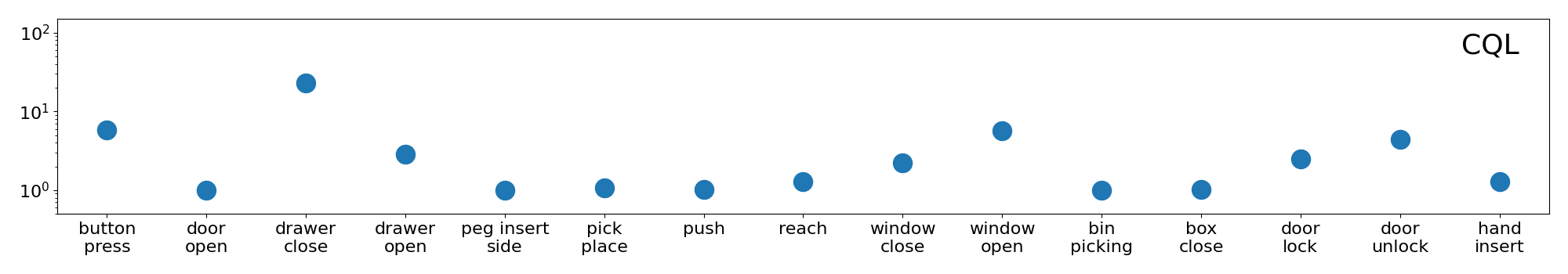}
    \includegraphics[width=0.9\columnwidth,trim={0 3mm 0 3mm},clip]{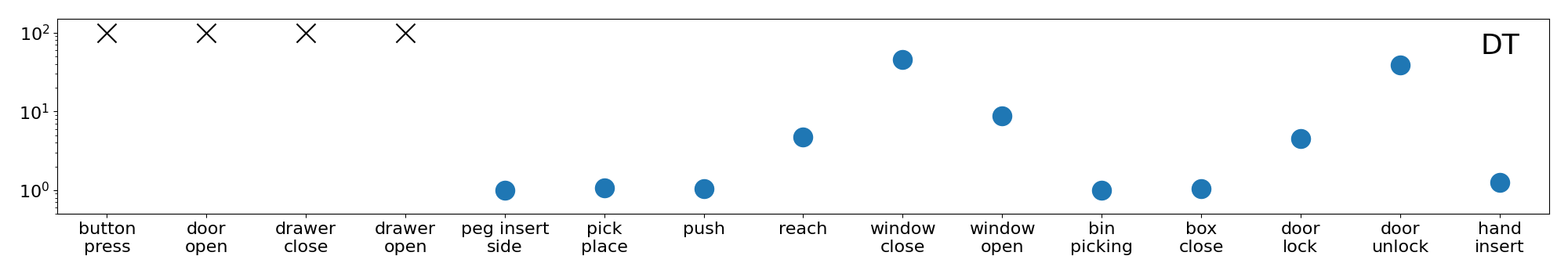}
    \caption{Estimated positive data bias of D4RL and Meta-World datasets given all $6$ offline RL algorithms, ATAC, PSPI, IQL, CQL, and DT. Datasets marked by ``$\times$'' have infinite positive bias. }
    \label{fig:positive-bias-full}
    \vspace{-4mm}
\end{figure*}

\begin{table}[h]
    \centering
    \begin{scriptsize}

\begin{tabular}{c|c|c|c|c|c}
\hline
Dataset & Original & Zero & Random & Negative & No-terminal \\ 
\hline
hopper-random & $\beta=100.0$ & $\beta=100.0$ & $\beta=10.0$ & $\beta=1.0$ & $\beta=10.0$ \\ 
\hline
hopper-medium & $\beta=10.0$ & $\beta=100.0$ & $\beta=100.0$ & $\beta=1.0$ & $\beta=100.0$ \\ 
\hline
hopper-medium-replay & $\beta=100.0$ & $\beta=1.0$ & $\beta=1.0$ & $\beta=0.1$ & $\beta=100.0$ \\ 
\hline
hopper-medium-expert & $\beta=1.0$ & $\beta=10.0$ & $\beta=10.0$ & $\beta=1.0$ & $\beta=1.0$ \\ 
\hline
hopper-expert & $\beta=0.1$ & $\beta=1.0$ & $\beta=1.0$ & $\beta=0.1$ & $\beta=0.1$ \\ 
\hline
\multicolumn{6}{c}{}\\ \hline
walker2d-random & $\beta=10.0$ & $\beta=0.1$ & $\beta=1.0$ & $\beta=0.1$ & $\beta=100.0$ \\ 
\hline
walker2d-medium & $\beta=10.0$ & $\beta=1.0$ & $\beta=1.0$ & $\beta=0.1$ & $\beta=100.0$ \\ 
\hline
walker2d-medium-replay & $\beta=100.0$ & $\beta=10.0$ & $\beta=10.0$ & $\beta=10.0$ & $\beta=100.0$ \\ 
\hline
walker2d-medium-expert & $\beta=10.0$ & $\beta=10.0$ & $\beta=1.0$ & $\beta=0.1$ & $\beta=10.0$ \\ 
\hline
walker2d-expert & $\beta=1.0$ & $\beta=100.0$ & $\beta=0.1$ & $\beta=0.1$ & $\beta=1.0$ \\ 
\hline
\multicolumn{6}{c}{}\\ \hline
halfcheetah-random & $\beta=0.1$ & $\beta=0.1$ & $\beta=0.1$ & $\beta=0.1$ & -- \\ 
\hline
halfcheetah-medium & $\beta=1.0$ & $\beta=100.0$ & $\beta=1.0$ & $\beta=0.1$ & -- \\ 
\hline
halfcheetah-medium-replay & $\beta=1.0$ & $\beta=1.0$ & $\beta=1.0$ & $\beta=0.1$ & -- \\ 
\hline
halfcheetah-medium-expert & $\beta=0.1$ & $\beta=0.1$ & $\beta=0.1$ & $\beta=0.1$ & -- \\ 
\hline
halfcheetah-expert & $\beta=0.1$ & $\beta=100.0$ & $\beta=0.1$ & $\beta=0.1$ & -- \\ 
\hline
\multicolumn{6}{c}{}\\ \hline
button-press & $\beta=10.0$ & $\beta=10.0$ & $\beta=1.0$ & $\beta=100.0$ & -- \\ 
\hline
door-open & $\beta=100.0$ & $\beta=10.0$ & $\beta=10.0$ & $\beta=1.0$ & -- \\ 
\hline
drawer-close & $\beta=1.0$ & $\beta=10.0$ & $\beta=10.0$ & $\beta=100.0$ & -- \\ 
\hline
drawer-open & $\beta=10.0$ & $\beta=1.0$ & $\beta=10.0$ & $\beta=10.0$ & -- \\ 
\hline
peg-insert-side & $\beta=1.0$ & $\beta=1.0$ & $\beta=1.0$ & $\beta=1.0$ & -- \\ 
\hline
pick-place & $\beta=1.0$ & $\beta=10.0$ & $\beta=1.0$ & $\beta=1.0$ & -- \\ 
\hline
push & $\beta=1.0$ & $\beta=1.0$ & $\beta=1.0$ & $\beta=1.0$ & -- \\ 
\hline
reach & $\beta=10.0$ & $\beta=100.0$ & $\beta=10.0$ & $\beta=1.0$ & -- \\ 
\hline
window-close & $\beta=1.0$ & $\beta=1.0$ & $\beta=1.0$ & $\beta=1.0$ & -- \\ 
\hline
window-open & $\beta=0.1$ & $\beta=1.0$ & $\beta=10.0$ & $\beta=100.0$ & -- \\ 
\hline
bin-picking & $\beta=1.0$ & $\beta=1.0$ & $\beta=1.0$ & $\beta=1.0$ & -- \\ 
\hline
box-close & $\beta=1.0$ & $\beta=1.0$ & $\beta=1.0$ & $\beta=1.0$ & -- \\ 
\hline
door-lock & $\beta=10.0$ & $\beta=1.0$ & $\beta=1.0$ & $\beta=0.1$ & -- \\ 
\hline
door-unlock & $\beta=100.0$ & $\beta=10.0$ & $\beta=10.0$ & $\beta=10.0$ & -- \\ 
\hline
hand-insert & $\beta=10.0$ & $\beta=1.0$ & $\beta=1.0$ & $\beta=1.0$ & -- \\ 
\hline
\multicolumn{6}{c}{}\\ \hline
point-button1 & $\beta=1.0$ & -- & -- & -- & --\\
\hline
point-button2 & $\beta=1.0$ & -- & -- & -- & --\\
\hline
point-circle1 & $\beta=0.1$ & -- & -- & -- & --\\
\hline
point-circle2 & $\beta=0.1$ & -- & -- & -- & --\\
\hline
point-goal1 & $\beta=0.1$ & -- & -- & -- & --\\
\hline
point-goal2 & $\beta=1.0$ & -- & -- & -- & --\\
\hline
point-push1 & $\beta=10.0$ & -- & -- & -- & --\\
\hline
point-push2 & $\beta=1.0$ & -- & -- & -- & --\\
\hline
car-button1 & $\beta=0.1$ & -- & -- & -- & --\\
\hline
car-button2 & $\beta=1.0$ & -- & -- & -- & --\\
\hline
car-circle1 & $\beta=0.1$ & -- & -- & -- & --\\
\hline
car-circle2 & $\beta=100.0$ & -- & -- & -- & --\\
\hline
car-goal1 & $\beta=10.0$ & -- & -- & -- & --\\
\hline
car-goal2 & $\beta=1.0$ & -- & -- & -- & --\\
\hline
car-push1 & $\beta=100.0$ & -- & -- & -- & --\\
\hline
car-push2 & $\beta=0.1$ & -- & -- & -- & --\\
\hline
    \end{tabular}
    \vskip 2mm
    \caption{Tuned hyperparameter values for ATAC}
    \label{tab:atac_tuned_hp}
    \end{scriptsize}
\end{table}
\begin{table}[h]
    \centering
    \begin{scriptsize}

\begin{tabular}{c|c|c|c|c|c}
\hline
Dataset & Original & Zero & Random & Negative & No-terminal \\ 
\hline
hopper-random & $\beta=10.0$ & $\beta=10.0$ & $\beta=100.0$ & $\beta=100.0$ & $\beta=100.0$ \\ 
\hline
hopper-medium & $\beta=100.0$ & $\beta=10.0$ & $\beta=100.0$ & $\beta=1.0$ & $\beta=100.0$ \\ 
\hline
hopper-medium-replay & $\beta=10.0$ & $\beta=10.0$ & $\beta=10.0$ & $\beta=1.0$ & $\beta=1.0$ \\ 
\hline
hopper-medium-expert & $\beta=1.0$ & $\beta=10.0$ & $\beta=100.0$ & $\beta=1.0$ & $\beta=1.0$ \\ 
\hline
hopper-expert & $\beta=1.0$ & $\beta=1.0$ & $\beta=1.0$ & $\beta=0.1$ & $\beta=1.0$ \\ 
\hline
\multicolumn{6}{c}{}\\ \hline
walker2d-random & $\beta=10.0$ & $\beta=10.0$ & $\beta=10.0$ & $\beta=1.0$ & $\beta=1.0$ \\ 
\hline
walker2d-medium & $\beta=10.0$ & $\beta=10.0$ & $\beta=10.0$ & $\beta=0.1$ & $\beta=10.0$ \\ 
\hline
walker2d-medium-replay & $\beta=10.0$ & $\beta=10.0$ & $\beta=10.0$ & $\beta=1.0$ & $\beta=100.0$ \\ 
\hline
walker2d-medium-expert & $\beta=1.0$ & $\beta=1.0$ & $\beta=10.0$ & $\beta=0.1$ & $\beta=10.0$ \\ 
\hline
walker2d-expert & $\beta=1.0$ & $\beta=1.0$ & $\beta=10.0$ & $\beta=0.1$ & $\beta=1.0$ \\ 
\hline
\multicolumn{6}{c}{}\\ \hline
halfcheetah-random & $\beta=100.0$ & $\beta=100.0$ & $\beta=0.1$ & $\beta=1.0$ & -- \\ 
\hline
halfcheetah-medium & $\beta=1.0$ & $\beta=10.0$ & $\beta=1.0$ & $\beta=0.1$ & -- \\ 
\hline
halfcheetah-medium-replay & $\beta=10.0$ & $\beta=1.0$ & $\beta=10.0$ & $\beta=0.1$ & -- \\ 
\hline
halfcheetah-medium-expert & $\beta=0.1$ & $\beta=1.0$ & $\beta=0.1$ & $\beta=0.1$ & -- \\ 
\hline
halfcheetah-expert & $\beta=0.1$ & $\beta=1.0$ & $\beta=10.0$ & $\beta=0.1$ & -- \\ 
\hline
\multicolumn{6}{c}{}\\ \hline
button-press & $\beta=0.1$ & $\beta=10.0$ & $\beta=1.0$ & $\beta=100.0$ & -- \\ 
\hline
door-open & $\beta=10.0$ & $\beta=1.0$ & $\beta=10.0$ & $\beta=0.1$ & -- \\ 
\hline
drawer-close & $\beta=1.0$ & $\beta=1.0$ & $\beta=10.0$ & $\beta=10.0$ & -- \\ 
\hline
drawer-open & $\beta=100.0$ & $\beta=10.0$ & $\beta=10.0$ & $\beta=10.0$ & -- \\ 
\hline
peg-insert-side & $\beta=1.0$ & $\beta=1.0$ & $\beta=1.0$ & $\beta=10.0$ & -- \\ 
\hline
pick-place & $\beta=10.0$ & $\beta=1.0$ & $\beta=10.0$ & $\beta=1.0$ & -- \\ 
\hline
push & $\beta=100.0$ & $\beta=10.0$ & $\beta=1.0$ & $\beta=1.0$ & -- \\ 
\hline
reach & $\beta=100.0$ & $\beta=100.0$ & $\beta=10.0$ & $\beta=10.0$ & -- \\ 
\hline
window-close & $\beta=0.1$ & $\beta=0.1$ & $\beta=1.0$ & $\beta=0.1$ & -- \\ 
\hline
window-open & $\beta=10.0$ & $\beta=10.0$ & $\beta=10.0$ & $\beta=0.1$ & -- \\ 
\hline
bin-picking & $\beta=1.0$ & $\beta=1.0$ & $\beta=1.0$ & $\beta=1.0$ & -- \\ 
\hline
box-close & $\beta=1.0$ & $\beta=1.0$ & $\beta=1.0$ & $\beta=1.0$ & -- \\ 
\hline
door-lock & $\beta=1.0$ & $\beta=1.0$ & $\beta=10.0$ & $\beta=10.0$ & -- \\ 
\hline
door-unlock & $\beta=100.0$ & $\beta=10.0$ & $\beta=10.0$ & $\beta=10.0$ & -- \\ 
\hline
hand-insert & $\beta=10.0$ & $\beta=100.0$ & $\beta=1.0$ & $\beta=1.0$ & -- \\ 
\hline
\multicolumn{6}{c}{}\\ \hline
point-button1 & $\beta=1.0$ & -- & -- & -- & --\\
\hline
point-button2 & $\beta=10.0$ & -- & -- & -- & --\\
\hline
point-circle1 & $\beta=1.0$ & -- & -- & -- & --\\
\hline
point-circle2 & $\beta=1.0$ & -- & -- & -- & --\\
\hline
point-goal1 & $\beta=0.1$ & -- & -- & -- & --\\
\hline
point-goal2 & $\beta=1.0$ & -- & -- & -- & --\\
\hline
point-push1 & $\beta=0.1$ & -- & -- & -- & --\\
\hline
point-push2 & $\beta=10.0$ & -- & -- & -- & --\\
\hline
car-button1 & $\beta=0.1$ & -- & -- & -- & --\\
\hline
car-button2 & $\beta=1.0$ & -- & -- & -- & --\\
\hline
car-circle1 & $\beta=0.1$ & -- & -- & -- & --\\
\hline
car-circle2 & $\beta=100.0$ & -- & -- & -- & --\\
\hline
car-goal1 & $\beta=1.0$ & -- & -- & -- & --\\
\hline
car-goal2 & $\beta=1.0$ & -- & -- & -- & --\\
\hline
car-push1 & $\beta=100.0$ & -- & -- & -- & --\\
\hline
car-push2 & $\beta=1.0$ & -- & -- & -- & --\\
\hline
    \end{tabular}
    \vskip 2mm
    \caption{Tuned hyperparameter values for PSPI}
    \label{tab:pspi_tuned_hp}
    \end{scriptsize}
\end{table}
\begin{table}[h]
    \centering
    \begin{scriptsize}

\begin{tabular}{c|l|c|c|c|c}
\hline
Dataset & Original & Zero & Random & Negative & No-terminal \\ 
\hline
hopper-random & $\beta=0.3,\tau=0.7$ & $\beta=3.0,\tau=0.9$ & $\beta=0.3,\tau=0.7$ & $\beta=0.3,\tau=0.9$ & $\beta=3.0,\tau=0.7$ \\ 
\hline
hopper-medium & $\beta=0.3,\tau=0.7$ & $\beta=3.0,\tau=0.7$ & $\beta=0.3,\tau=0.7$ & $\beta=0.3,\tau=0.7$ & $\beta=0.3,\tau=0.7$ \\ 
\hline
hopper-medium-replay & $\beta=3.0,\tau=0.7$ & $\beta=3.0,\tau=0.7$ & $\beta=3.0,\tau=0.7$ & $\beta=0.3,\tau=0.9$ & $\beta=3.0,\tau=0.9$ \\ 
\hline
hopper-medium-expert & $\beta=0.3,\tau=0.7$ & $\beta=0.3,\tau=0.7$ & $\beta=0.3,\tau=0.7$ & $\beta=3.0,\tau=0.7$ & $\beta=0.3,\tau=0.7$ \\ 
\hline
hopper-expert & $\beta=0.3,\tau=0.7$ & $\beta=3.0,\tau=0.7$ & $\beta=3.0,\tau=0.7$ & $\beta=0.3,\tau=0.9$ & $\beta=0.3,\tau=0.7$ \\ 
\hline
\multicolumn{6}{c}{}\\ \hline
walker2d-random & $\beta=3.0,\tau=0.9$ & $\beta=0.3,\tau=0.7$ & $\beta=0.3,\tau=0.7$ & $\beta=0.3,\tau=0.7$ & $\beta=0.3,\tau=0.7$ \\ 
\hline
walker2d-medium & $\beta=0.3,\tau=0.7$ & $\beta=0.3,\tau=0.7$ & $\beta=3.0,\tau=0.7$ & $\beta=0.3,\tau=0.7$ & $\beta=0.3,\tau=0.7$ \\ 
\hline
walker2d-medium-replay & $\beta=3.0,\tau=0.7$ & $\beta=3.0,\tau=0.7$ & $\beta=3.0,\tau=0.7$ & $\beta=0.3,\tau=0.7$ & $\beta=3.0,\tau=0.7$ \\ 
\hline
walker2d-medium-expert & $\beta=3.0,\tau=0.7$ & $\beta=0.3,\tau=0.7$ & $\beta=0.3,\tau=0.7$ & $\beta=0.3,\tau=0.7$ & $\beta=0.3,\tau=0.7$ \\ 
\hline
walker2d-expert & $\beta=3.0,\tau=0.7$ & $\beta=0.3,\tau=0.7$ & $\beta=0.3,\tau=0.9$ & $\beta=0.3,\tau=0.7$ & $\beta=3.0,\tau=0.7$ \\ 
\hline
\multicolumn{6}{c}{}\\ \hline
halfcheetah-random & $\beta=3.0,\tau=0.9$ & $\beta=0.3,\tau=0.7$ & $\beta=3.0,\tau=0.9$ & $\beta=0.3,\tau=0.7$ & -- \\ 
\hline
halfcheetah-medium & $\beta=3.0,\tau=0.9$ & $\beta=0.3,\tau=0.9$ & $\beta=0.3,\tau=0.7$ & $\beta=0.3,\tau=0.7$ & -- \\ 
\hline
halfcheetah-medium-replay & $\beta=3.0,\tau=0.9$ & $\beta=0.3,\tau=0.7$ & $\beta=0.3,\tau=0.7$ & $\beta=3.0,\tau=0.9$ & -- \\ 
\hline
halfcheetah-medium-expert & $\beta=3.0,\tau=0.7$ & $\beta=0.3,\tau=0.9$ & $\beta=0.3,\tau=0.7$ & $\beta=3.0,\tau=0.7$ & -- \\ 
\hline
halfcheetah-expert & $\beta=3.0,\tau=0.7$ & $\beta=3.0,\tau=0.7$ & $\beta=3.0,\tau=0.7$ & $\beta=0.3,\tau=0.7$ & -- \\ 
\hline
\multicolumn{6}{c}{}\\ \hline
button-press & $\beta=0.3,\tau=0.7$ & $\beta=0.3,\tau=0.7$ & $\beta=0.3,\tau=0.7$ & $\beta=0.3,\tau=0.7$ & -- \\ 
\hline
door-open & $\beta=3.0,\tau=0.7$ & $\beta=3.0,\tau=0.7$ & $\beta=0.3,\tau=0.7$ & $\beta=0.3,\tau=0.7$ & -- \\ 
\hline
drawer-close & $\beta=0.3,\tau=0.7$ & $\beta=0.3,\tau=0.9$ & $\beta=0.3,\tau=0.7$ & $\beta=0.3,\tau=0.7$ & -- \\ 
\hline
drawer-open & $\beta=0.3,\tau=0.7$ & $\beta=0.3,\tau=0.7$ & $\beta=0.3,\tau=0.7$ & $\beta=3.0,\tau=0.7$ & -- \\ 
\hline
peg-insert-side & $\beta=0.3,\tau=0.7$ & $\beta=3.0,\tau=0.7$ & $\beta=0.3,\tau=0.7$ & $\beta=0.3,\tau=0.7$ & -- \\ 
\hline
pick-place & $\beta=0.3,\tau=0.9$ & $\beta=3.0,\tau=0.7$ & $\beta=0.3,\tau=0.7$ & $\beta=0.3,\tau=0.7$ & -- \\ 
\hline
push & $\beta=0.3,\tau=0.9$ & $\beta=3.0,\tau=0.7$ & $\beta=0.3,\tau=0.9$ & $\beta=0.3,\tau=0.9$ & -- \\ 
\hline
reach & $\beta=0.3,\tau=0.7$ & $\beta=0.3,\tau=0.7$ & $\beta=0.3,\tau=0.7$ & $\beta=0.3,\tau=0.7$ & -- \\ 
\hline
window-close & $\beta=0.3,\tau=0.9$ & $\beta=0.3,\tau=0.9$ & $\beta=0.3,\tau=0.7$ & $\beta=0.3,\tau=0.7$ & -- \\ 
\hline
window-open & $\beta=0.3,\tau=0.9$ & $\beta=3.0,\tau=0.9$ & $\beta=0.3,\tau=0.7$ & $\beta=0.3,\tau=0.9$ & -- \\ 
\hline
bin-picking & $\beta=3.0,\tau=0.7$ & $\beta=0.3,\tau=0.7$ & $\beta=0.3,\tau=0.7$ & $\beta=0.3,\tau=0.7$ & -- \\ 
\hline
box-close & $\beta=0.3,\tau=0.9$ & $\beta=0.3,\tau=0.7$ & $\beta=0.3,\tau=0.7$ & $\beta=0.3,\tau=0.9$ & -- \\ 
\hline
door-lock & $\beta=0.3,\tau=0.7$ & $\beta=0.3,\tau=0.7$ & $\beta=0.3,\tau=0.7$ & $\beta=0.3,\tau=0.7$ & -- \\ 
\hline
door-unlock & $\beta=0.3,\tau=0.7$ & $\beta=3.0,\tau=0.7$ & $\beta=0.3,\tau=0.9$ & $\beta=0.3,\tau=0.7$ & -- \\ 
\hline
hand-insert & $\beta=0.3,\tau=0.9$ & $\beta=3.0,\tau=0.7$ & $\beta=0.3,\tau=0.7$ & $\beta=0.3,\tau=0.7$ & -- \\ 
\hline
    \end{tabular}
    \vskip 2mm
    \caption{Tuned hyperparameter values for IQL}
    \label{tab:iql_tuned_hp}
    \end{scriptsize}
\end{table}
\begin{table}[h]
    \centering
    \begin{scriptsize}

\begin{tabular}{c|c|c|c|c|c}
\hline
Dataset & Original & Zero & Random & Negative & No-terminal \\ 
\hline
hopper-random & $\alpha=0.1$ & $\alpha=0.1$ & $\alpha=100.0$ & $\alpha=0.1$ & $\alpha=1.0$ \\ 
\hline
hopper-medium & $\alpha=1.0$ & $\alpha=10.0$ & $\alpha=10.0$ & $\alpha=10.0$ & $\alpha=10.0$ \\ 
\hline
hopper-medium-replay & $\alpha=10.0$ & $\alpha=10.0$ & $\alpha=100.0$ & $\alpha=10.0$ & $\alpha=10.0$ \\ 
\hline
hopper-medium-expert & $\alpha=10.0$ & $\alpha=100.0$ & $\alpha=100.0$ & $\alpha=10.0$ & $\alpha=10.0$ \\ 
\hline
hopper-expert & $\alpha=10.0$ & $\alpha=1.0$ & $\alpha=0.1$ & $\alpha=10.0$ & $\alpha=100.0$ \\ 
\hline
\multicolumn{6}{c}{}\\ \hline
walker2d-random & $\alpha=0.1$ & $\alpha=100.0$ & $\alpha=100.0$ & $\alpha=100.0$ & $\alpha=10.0$ \\ 
\hline
walker2d-medium & $\alpha=10.0$ & $\alpha=10.0$ & $\alpha=1.0$ & $\alpha=10.0$ & $\alpha=10.0$ \\ 
\hline
walker2d-medium-replay & $\alpha=10.0$ & $\alpha=10.0$ & $\alpha=1.0$ & $\alpha=100.0$ & $\alpha=10.0$ \\ 
\hline
walker2d-medium-expert & $\alpha=10.0$ & $\alpha=0.1$ & $\alpha=1.0$ & $\alpha=10.0$ & $\alpha=100.0$ \\ 
\hline
walker2d-expert & $\alpha=10.0$ & $\alpha=10.0$ & $\alpha=100.0$ & $\alpha=100.0$ & $\alpha=10.0$ \\ 
\hline
\multicolumn{6}{c}{}\\ \hline
halfcheetah-random & $\alpha=0.1$ & $\alpha=0.1$ & $\alpha=1.0$ & $\alpha=100.0$ & -- \\ 
\hline
halfcheetah-medium & $\alpha=0.1$ & $\alpha=1.0$ & $\alpha=1.0$ & $\alpha=100.0$ & -- \\ 
\hline
halfcheetah-medium-replay & $\alpha=0.1$ & $\alpha=100.0$ & $\alpha=100.0$ & $\alpha=100.0$ & -- \\ 
\hline
halfcheetah-medium-expert & $\alpha=100.0$ & $\alpha=1.0$ & $\alpha=1.0$ & $\alpha=100.0$ & -- \\ 
\hline
halfcheetah-expert & $\alpha=100.0$ & $\alpha=10.0$ & $\alpha=100.0$ & $\alpha=100.0$ & -- \\ 
\hline
\multicolumn{6}{c}{}\\ \hline
button-press & $\alpha=100.0$ & $\alpha=100.0$ & $\alpha=10.0$ & $\alpha=100.0$ & -- \\ 
\hline
door-open & $\alpha=1.0$ & $\alpha=10.0$ & $\alpha=100.0$ & $\alpha=10.0$ & -- \\ 
\hline
drawer-close & $\alpha=100.0$ & $\alpha=10.0$ & $\alpha=10.0$ & $\alpha=10.0$ & -- \\ 
\hline
drawer-open & $\alpha=100.0$ & $\alpha=100.0$ & $\alpha=100.0$ & $\alpha=10.0$ & -- \\ 
\hline
peg-insert-side & $\alpha=1.0$ & $\alpha=100.0$ & $\alpha=100.0$ & $\alpha=10.0$ & -- \\ 
\hline
pick-place & $\alpha=100.0$ & $\alpha=100.0$ & $\alpha=100.0$ & $\alpha=100.0$ & -- \\ 
\hline
push & $\alpha=100.0$ & $\alpha=100.0$ & $\alpha=100.0$ & $\alpha=100.0$ & -- \\ 
\hline
reach & $\alpha=100.0$ & $\alpha=100.0$ & $\alpha=100.0$ & $\alpha=10.0$ & -- \\ 
\hline
window-close & $\alpha=1.0$ & $\alpha=0.1$ & $\alpha=10.0$ & $\alpha=10.0$ & -- \\ 
\hline
window-open & $\alpha=100.0$ & $\alpha=100.0$ & $\alpha=100.0$ & $\alpha=100.0$ & -- \\ 
\hline
bin-picking & $\alpha=100.0$ & $\alpha=100.0$ & $\alpha=100.0$ & $\alpha=100.0$ & -- \\ 
\hline
box-close & $\alpha=1.0$ & $\alpha=100.0$ & $\alpha=100.0$ & $\alpha=100.0$ & -- \\ 
\hline
door-lock & $\alpha=1.0$ & $\alpha=10.0$ & $\alpha=10.0$ & $\alpha=100.0$ & -- \\ 
\hline
door-unlock & $\alpha=100.0$ & $\alpha=10.0$ & $\alpha=10.0$ & $\alpha=10.0$ & -- \\ 
\hline
hand-insert & $\alpha=100.0$ & $\alpha=100.0$ & $\alpha=100.0$ & $\alpha=100.0$ & -- \\ 
\hline
    \end{tabular}
    \vskip 2mm
    \caption{Tuned hyperparameter values for CQL}
    \label{tab:cql_tuned_hp}
    \end{scriptsize}
\end{table}

\begin{table}[h]
    \centering
    \begin{scriptsize}

\begin{tabular}{c|c|c|c|c|c|c|c|c}
\hline
Dataset & Behavior & BC & Algo & Original & Zero & Random & Negative & No-terminal \\ 
\hline
 &  &  & ATAC & 5.6$\pm$0.8 & 22.6$\pm$3.8 & 18.4$\pm$3.8 & 30.5$\pm$1.1 & 18.8$\pm$4.5 \\ 
hopper &  &  & PSPI & 7.2$\pm$0.3 & 12.3$\pm$3.5 & 17.3$\pm$3.9 & 23.5$\pm$3.7 & 17.1$\pm$4.4 \\ 
random & 1.2 & 3.6$\pm$0.2 & IQL & 7.7$\pm$0.2 & 5.0$\pm$0.7 & 1.0$\pm$0.0 & 0.9$\pm$0.1 & 1.0$\pm$0.0 \\ 
 &  &  & CQL & 3.0$\pm$0.7 & 2.0$\pm$1.1 & 1.8$\pm$0.2 & 0.7$\pm$0.0 & 1.2$\pm$0.5 \\ 
 &  &  & DT & 11.5$\pm$0.3 & 11.0$\pm$0.4 & 6.5$\pm$0.1 & 12.3$\pm$0.8 & -- \\ 
\hline
 &  &  & ATAC & 84.8$\pm$1.5 & 94.0$\pm$0.6 & 92.3$\pm$0.3 & 91.5$\pm$0.5 & 88.1$\pm$2.0 \\ 
hopper &  &  & PSPI & 94.5$\pm$1.7 & 88.4$\pm$1.3 & 92.7$\pm$0.5 & 89.4$\pm$0.6 & 87.2$\pm$3.0 \\ 
medium & 44.3 & 53.3$\pm$0.1 & IQL & 57.4$\pm$0.9 & 53.1$\pm$0.4 & 59.0$\pm$2.1 & 63.0$\pm$2.1 & 43.0$\pm$7.3 \\ 
 &  &  & CQL & 85.0$\pm$3.7 & 55.0$\pm$11.7 & 36.9$\pm$9.8 & 89.0$\pm$1.5 & 46.3$\pm$11.5 \\ 
 &  &  & DT & 62.4$\pm$0.6 & 63.5$\pm$0.8 & 63.3$\pm$0.6 & 62.6$\pm$0.8 & -- \\ 
\hline
 &  &  & ATAC & 101.5$\pm$0.2 & 65.7$\pm$3.9 & 77.2$\pm$2.9 & 48.1$\pm$2.1 & 99.4$\pm$0.6 \\ 
hopper &  &  & PSPI & 101.4$\pm$0.3 & 32.4$\pm$1.3 & 46.9$\pm$7.2 & 31.0$\pm$0.0 & 99.1$\pm$0.4 \\ 
medium & 15.0 & 26.7$\pm$0.6 & IQL & 61.0$\pm$1.9 & 16.5$\pm$2.5 & 1.5$\pm$0.3 & 3.5$\pm$2.2 & 6.1$\pm$3.1 \\ 
replay &  &  & CQL & 79.7$\pm$3.7 & 1.9$\pm$0.1 & 7.6$\pm$3.7 & 5.8$\pm$2.7 & 4.4$\pm$2.8 \\ 
 &  &  & DT & 84.0$\pm$0.9 & 60.7$\pm$2.8 & 67.8$\pm$1.1 & 68.8$\pm$2.6 & -- \\ 
\hline
 &  &  & ATAC & 104.2$\pm$1.5 & 106.4$\pm$1.1 & 92.2$\pm$8.0 & 90.4$\pm$0.4 & 98.9$\pm$4.1 \\ 
hopper &  &  & PSPI & 108.9$\pm$0.6 & 93.3$\pm$5.0 & 58.8$\pm$9.4 & 95.3$\pm$0.3 & 108.3$\pm$0.6 \\ 
medium & 64.8 & 53.6$\pm$0.1 & IQL & 109.2$\pm$0.5 & 52.8$\pm$0.3 & 60.5$\pm$8.7 & 94.5$\pm$3.7 & 77.9$\pm$7.6 \\ 
expert &  &  & CQL & 88.8$\pm$3.6 & 49.7$\pm$8.1 & 60.7$\pm$9.4 & 88.6$\pm$6.1 & 47.4$\pm$11.0 \\ 
 &  &  & DT & 89.9$\pm$2.7 & 89.8$\pm$2.6 & 89.7$\pm$2.2 & 93.8$\pm$2.3 & -- \\ 
\hline
 &  &  & ATAC & 111.1$\pm$0.1 & 110.9$\pm$0.1 & 110.7$\pm$0.2 & 110.9$\pm$0.1 & 111.4$\pm$0.0 \\ 
hopper &  &  & PSPI & 110.9$\pm$0.2 & 111.1$\pm$0.1 & 111.1$\pm$0.1 & 111.0$\pm$0.2 & 110.9$\pm$0.3 \\ 
expert & 108.5 & 110.6$\pm$0.0 & IQL & 110.3$\pm$0.2 & 66.7$\pm$12.8 & 73.5$\pm$11.2 & 48.5$\pm$11.8 & 24.3$\pm$8.9 \\ 
 &  &  & CQL & 101.2$\pm$3.0 & 0.7$\pm$0.0 & 0.7$\pm$0.0 & 66.0$\pm$12.5 & 0.6$\pm$0.0 \\ 
 &  &  & DT & 107.2$\pm$0.7 & 107.7$\pm$0.6 & 108.5$\pm$0.3 & 109.0$\pm$0.3 & -- \\ 
\hline

\multicolumn{9}{c}{}\\ \hline
 &  &  & ATAC & 6.4$\pm$0.7 & 5.8$\pm$0.5 & 3.2$\pm$0.9 & 5.6$\pm$0.6 & 4.9$\pm$1.1 \\ 
walker2d &  &  & PSPI & 6.3$\pm$0.4 & 2.1$\pm$0.8 & 1.6$\pm$0.5 & 1.0$\pm$0.5 & 5.5$\pm$0.0 \\ 
random & 0.0 & 1.4$\pm$0.0 & IQL & 7.2$\pm$0.1 & 1.3$\pm$0.1 & 1.4$\pm$0.7 & 6.7$\pm$0.0 & 1.8$\pm$1.0 \\ 
 &  &  & CQL & 4.0$\pm$1.8 & 0.9$\pm$0.9 & 4.3$\pm$1.4 & -0.4$\pm$0.0 & 0.2$\pm$0.4 \\ 
 &  &  & DT & 6.5$\pm$0.2 & 5.3$\pm$0.5 & 1.7$\pm$0.0 & 6.4$\pm$0.2 & -- \\ 
\hline
 &  &  & ATAC & 86.8$\pm$1.6 & 76.6$\pm$0.3 & 76.8$\pm$0.7 & 72.7$\pm$0.2 & 86.4$\pm$0.6 \\ 
walker2d &  &  & PSPI & 87.7$\pm$0.7 & 71.2$\pm$0.5 & 72.2$\pm$0.6 & 77.5$\pm$0.5 & 85.2$\pm$0.5 \\ 
medium & 62.0 & 68.1$\pm$0.5 & IQL & 78.4$\pm$1.1 & 62.8$\pm$3.0 & 72.0$\pm$2.2 & 53.3$\pm$3.3 & 39.3$\pm$0.9 \\ 
 &  &  & CQL & 81.4$\pm$0.3 & 2.0$\pm$0.9 & 1.2$\pm$0.7 & 14.5$\pm$7.4 & 2.1$\pm$0.8 \\ 
 &  &  & DT & 76.6$\pm$0.3 & 75.7$\pm$0.5 & 75.8$\pm$0.2 & 75.5$\pm$0.3 & -- \\ 
\hline
 &  &  & ATAC & 85.7$\pm$1.5 & 67.2$\pm$2.6 & 66.5$\pm$1.6 & 42.3$\pm$6.7 & 80.2$\pm$1.0 \\ 
walker2d &  &  & PSPI & 84.9$\pm$3.2 & 69.9$\pm$0.7 & 68.1$\pm$2.2 & 56.2$\pm$4.5 & 81.3$\pm$0.8 \\ 
medium & 14.8 & 19.3$\pm$0.4 & IQL & 57.7$\pm$3.6 & 15.5$\pm$2.2 & 3.2$\pm$1.2 & 0.2$\pm$0.7 & 0.7$\pm$0.1 \\ 
replay &  &  & CQL & 74.1$\pm$1.2 & 0.5$\pm$0.3 & -0.2$\pm$0.1 & 0.2$\pm$0.3 & 0.4$\pm$0.4 \\ 
 &  &  & DT & 56.3$\pm$1.3 & 53.9$\pm$1.0 & 62.0$\pm$1.1 & 53.5$\pm$1.5 & -- \\ 
\hline
 &  &  & ATAC & 105.9$\pm$5.4 & 96.8$\pm$4.4 & 93.6$\pm$4.8 & 72.7$\pm$0.3 & 108.9$\pm$4.5 \\ 
walker2d &  &  & PSPI & 110.0$\pm$0.1 & 80.6$\pm$6.3 & 109.2$\pm$0.1 & 75.5$\pm$0.7 & 111.7$\pm$1.3 \\ 
medium & 82.7 & 103.0$\pm$1.0 & IQL & 111.0$\pm$0.2 & 106.2$\pm$0.4 & 71.5$\pm$7.4 & 70.7$\pm$2.9 & 28.0$\pm$0.7 \\ 
expert &  &  & CQL & 109.3$\pm$0.3 & 0.8$\pm$0.5 & 0.1$\pm$0.2 & 28.0$\pm$9.1 & -0.2$\pm$0.0 \\ 
 &  &  & DT & 108.5$\pm$0.1 & 108.4$\pm$0.1 & 108.5$\pm$0.2 & 108.4$\pm$0.2 & -- \\ 
\hline
 &  &  & ATAC & 110.2$\pm$0.1 & 108.0$\pm$0.1 & 107.1$\pm$0.5 & 107.8$\pm$0.0 & 110.3$\pm$0.0 \\ 
walker2d &  &  & PSPI & 109.3$\pm$0.0 & 107.9$\pm$0.1 & 102.1$\pm$2.8 & 107.8$\pm$0.2 & 109.5$\pm$0.0 \\ 
expert & 107.1 & 108.3$\pm$0.0 & IQL & 110.9$\pm$0.1 & 108.4$\pm$0.0 & 103.0$\pm$3.5 & 108.3$\pm$0.1 & 111.0$\pm$0.1 \\ 
 &  &  & CQL & 108.3$\pm$0.2 & 108.2$\pm$0.2 & 108.0$\pm$0.1 & 108.1$\pm$0.2 & 108.3$\pm$0.3 \\ 
 &  &  & DT & 109.0$\pm$0.1 & 108.9$\pm$0.1 & 109.0$\pm$0.1 & 109.0$\pm$0.1 & -- \\ 
\hline
\multicolumn{9}{c}{}\\ \hline
 &  &  & ATAC & 2.2$\pm$0.0 & 2.2$\pm$0.0 & 2.2$\pm$0.0 & 2.2$\pm$0.0 & -- \\ 
halfcheetah &  &  & PSPI & 2.6$\pm$0.4 & 2.3$\pm$0.0 & 2.2$\pm$0.0 & 2.3$\pm$0.0 & -- \\ 
random & -0.1 & 1.9$\pm$0.0 & IQL & 18.6$\pm$0.3 & 1.8$\pm$0.1 & 2.0$\pm$0.1 & 2.2$\pm$0.0 & -- \\ 
 &  &  & CQL & 30.7$\pm$0.6 & 1.0$\pm$0.8 & -1.4$\pm$0.3 & -5.4$\pm$0.3 & -- \\ 
 &  &  & DT & 2.3$\pm$0.0 & 2.3$\pm$0.0 & 2.3$\pm$0.0 & 2.3$\pm$0.0 & -- \\ 
\hline
 &  &  & ATAC & 51.2$\pm$0.1 & 43.1$\pm$0.1 & 43.0$\pm$0.1 & 41.4$\pm$0.1 & -- \\ 
halfcheetah &  &  & PSPI & 49.4$\pm$0.1 & 43.0$\pm$0.1 & 42.9$\pm$0.1 & 42.8$\pm$0.1 & -- \\ 
medium & 40.7 & 42.2$\pm$0.0 & IQL & 50.0$\pm$0.1 & 39.3$\pm$2.8 & 42.3$\pm$0.1 & 34.7$\pm$0.3 & -- \\ 
 &  &  & CQL & 57.2$\pm$1.5 & 43.2$\pm$0.1 & 43.5$\pm$0.1 & 42.7$\pm$0.1 & -- \\ 
 &  &  & DT & 43.0$\pm$0.0 & 43.0$\pm$0.0 & 43.0$\pm$0.0 & 43.0$\pm$0.0 & -- \\ 
\hline
 &  &  & ATAC & 47.4$\pm$0.1 & 33.7$\pm$2.2 & 31.1$\pm$2.5 & 40.3$\pm$0.2 & -- \\ 
halfcheetah &  &  & PSPI & 48.5$\pm$0.2 & 35.6$\pm$0.6 & 36.1$\pm$0.7 & 35.5$\pm$0.5 & -- \\ 
medium & 27.2 & 35.9$\pm$0.1 & IQL & 43.7$\pm$0.2 & 36.1$\pm$0.5 & 35.3$\pm$0.3 & 18.1$\pm$0.4 & -- \\ 
replay &  &  & CQL & 48.2$\pm$5.4 & 34.2$\pm$0.7 & 33.1$\pm$0.9 & 32.4$\pm$1.0 & -- \\ 
 &  &  & DT & 38.6$\pm$0.1 & 37.2$\pm$0.2 & 37.1$\pm$0.2 & 37.0$\pm$0.2 & -- \\ 
\hline
 &  &  & ATAC & 91.5$\pm$0.6 & 46.4$\pm$2.0 & 44.9$\pm$1.2 & 41.0$\pm$0.1 & -- \\ 
halfcheetah &  &  & PSPI & 86.7$\pm$1.9 & 52.6$\pm$1.7 & 49.8$\pm$1.9 & 42.2$\pm$0.1 & -- \\ 
medium & 64.4 & 67.9$\pm$0.4 & IQL & 94.8$\pm$0.1 & 56.5$\pm$3.2 & 70.8$\pm$0.8 & 37.9$\pm$0.2 & -- \\ 
expert &  &  & CQL & 75.2$\pm$1.0 & 81.2$\pm$2.1 & 69.6$\pm$2.8 & 42.4$\pm$0.2 & -- \\ 
 &  &  & DT & 78.8$\pm$0.8 & 81.1$\pm$0.7 & 79.6$\pm$1.4 & 78.3$\pm$0.9 & -- \\ 
\hline
 &  &  & ATAC & 94.0$\pm$0.4 & 92.4$\pm$0.4 & 92.4$\pm$1.2 & 93.1$\pm$0.1 & -- \\ 
halfcheetah &  &  & PSPI & 94.6$\pm$0.1 & 92.9$\pm$0.3 & 88.0$\pm$1.3 & 93.5$\pm$0.3 & -- \\ 
expert & 88.1 & 92.9$\pm$0.0 & IQL & 96.8$\pm$0.1 & 92.9$\pm$0.1 & 92.9$\pm$0.0 & 89.3$\pm$0.1 & -- \\ 
 &  &  & CQL & 92.3$\pm$0.9 & 93.9$\pm$0.4 & 93.7$\pm$0.6 & 94.0$\pm$0.2 & -- \\ 
 &  &  & DT & 92.6$\pm$0.1 & 92.8$\pm$0.1 & 92.6$\pm$0.1 & 92.8$\pm$0.1 & -- \\ 
\hline
    \end{tabular}
    \vskip 2mm
    \caption{D4RL normalized scores. The mean and standard error of normalized scores are computed across $10$ random seeds. For each random seed, we evaluate the final policy over $50$ episodes. 
    }
    \label{tab:d4rl_scores}
    \end{scriptsize}
\end{table}
\begin{table}[h]
    \centering
    \begin{scriptsize}

\begin{tabular}{c|c|c|c|c|c|c}
\hline
Dataset & BC & Algo & Original & Zero & Random & Negative \\ 
\hline
 &  & ATAC & 1.00$\pm$0.00 & 0.90$\pm$0.09 & 1.00$\pm$0.00 & 0.90$\pm$0.09 \\ 
 &  & PSPI & 0.86$\pm$0.10 & 0.90$\pm$0.09 & 0.84$\pm$0.10 & 0.90$\pm$0.09 \\ 
button-press & 0.98$\pm$0.00 & IQL & 0.98$\pm$0.01 & 0.97$\pm$0.01 & 0.94$\pm$0.02 & 0.88$\pm$0.02 \\ 
 &  & CQL & 0.88$\pm$0.07 & 0.83$\pm$0.10 & 0.84$\pm$0.09 & 0.94$\pm$0.03 \\ 
 &  & DT & 1.00$\pm$0.00 & 1.00$\pm$0.00 & 1.00$\pm$0.00 & 1.00$\pm$0.00 \\ 
\hline
 &  & ATAC & 1.00$\pm$0.00 & 1.00$\pm$0.00 & 1.00$\pm$0.00 & 0.95$\pm$0.04 \\ 
 &  & PSPI & 1.00$\pm$0.00 & 0.95$\pm$0.03 & 0.96$\pm$0.02 & 0.81$\pm$0.09 \\ 
door-open & 0.81$\pm$0.02 & IQL & 0.94$\pm$0.02 & 0.86$\pm$0.03 & 0.92$\pm$0.03 & 0.92$\pm$0.02 \\ 
 &  & CQL & 0.82$\pm$0.09 & 0.07$\pm$0.07 & 0.12$\pm$0.08 & 0.00$\pm$0.00 \\ 
 &  & DT & 1.00$\pm$0.00 & 1.00$\pm$0.00 & 1.00$\pm$0.00 & 1.00$\pm$0.00 \\ 
\hline
 &  & ATAC & 0.89$\pm$0.09 & 0.97$\pm$0.03 & 1.00$\pm$0.00 & 1.00$\pm$0.00 \\ 
 &  & PSPI & 0.77$\pm$0.11 & 0.96$\pm$0.04 & 1.00$\pm$0.00 & 1.00$\pm$0.00 \\ 
drawer-close & 0.99$\pm$0.00 & IQL & 0.98$\pm$0.01 & 0.99$\pm$0.01 & 0.99$\pm$0.00 & 0.97$\pm$0.01 \\ 
 &  & CQL & 0.91$\pm$0.04 & 0.99$\pm$0.01 & 0.96$\pm$0.02 & 0.97$\pm$0.02 \\ 
 &  & DT & 0.99$\pm$0.01 & 1.00$\pm$0.00 & 1.00$\pm$0.00 & 1.00$\pm$0.00 \\ 
\hline
 &  & ATAC & 0.72$\pm$0.07 & 0.35$\pm$0.09 & 0.41$\pm$0.08 & 0.69$\pm$0.09 \\ 
 &  & PSPI & 0.61$\pm$0.10 & 0.42$\pm$0.09 & 0.57$\pm$0.07 & 0.74$\pm$0.06 \\ 
drawer-open & 0.88$\pm$0.01 & IQL & 0.93$\pm$0.02 & 0.84$\pm$0.02 & 0.73$\pm$0.04 & 0.52$\pm$0.03 \\ 
 &  & CQL & 0.78$\pm$0.04 & 0.66$\pm$0.07 & 0.65$\pm$0.07 & 0.84$\pm$0.03 \\ 
 &  & DT & 1.00$\pm$0.00 & 1.00$\pm$0.00 & 1.00$\pm$0.00 & 1.00$\pm$0.00 \\ 
\hline
 &  & ATAC & 0.82$\pm$0.04 & 0.88$\pm$0.06 & 0.83$\pm$0.05 & 0.92$\pm$0.03 \\ 
 &  & PSPI & 0.75$\pm$0.09 & 0.73$\pm$0.07 & 0.92$\pm$0.03 & 0.91$\pm$0.03 \\ 
peg-insert-side & 0.58$\pm$0.01 & IQL & 0.60$\pm$0.03 & 0.61$\pm$0.03 & 0.34$\pm$0.02 & 0.11$\pm$0.01 \\ 
 &  & CQL & 0.08$\pm$0.02 & 0.00$\pm$0.00 & 0.02$\pm$0.01 & 0.00$\pm$0.00 \\ 
 &  & DT & 0.07$\pm$0.01 & 0.00$\pm$0.00 & 0.01$\pm$0.00 & 0.00$\pm$0.00 \\ 
\hline
 &  & ATAC & 0.38$\pm$0.06 & 0.35$\pm$0.05 & 0.35$\pm$0.05 & 0.24$\pm$0.05 \\ 
 &  & PSPI & 0.47$\pm$0.04 & 0.16$\pm$0.05 & 0.31$\pm$0.04 & 0.14$\pm$0.04 \\ 
pick-place & 0.15$\pm$0.01 & IQL & 0.11$\pm$0.02 & 0.16$\pm$0.02 & 0.17$\pm$0.01 & 0.10$\pm$0.01 \\ 
 &  & CQL & 0.01$\pm$0.00 & 0.06$\pm$0.02 & 0.05$\pm$0.02 & 0.07$\pm$0.03 \\ 
 &  & DT & 0.06$\pm$0.01 & 0.06$\pm$0.01 & 0.06$\pm$0.01 & 0.06$\pm$0.01 \\ 
\hline
 &  & ATAC & 0.48$\pm$0.09 & 0.48$\pm$0.08 & 0.34$\pm$0.04 & 0.27$\pm$0.06 \\ 
 &  & PSPI & 0.47$\pm$0.05 & 0.42$\pm$0.04 & 0.35$\pm$0.08 & 0.37$\pm$0.08 \\ 
push & 0.19$\pm$0.01 & IQL & 0.10$\pm$0.02 & 0.18$\pm$0.02 & 0.14$\pm$0.01 & 0.13$\pm$0.01 \\ 
 &  & CQL & 0.00$\pm$0.00 & 0.01$\pm$0.00 & 0.01$\pm$0.00 & 0.03$\pm$0.02 \\ 
 &  & DT & 0.04$\pm$0.01 & 0.06$\pm$0.02 & 0.04$\pm$0.01 & 0.07$\pm$0.01 \\ 
\hline
 &  & ATAC & 0.86$\pm$0.03 & 0.81$\pm$0.05 & 0.83$\pm$0.03 & 0.81$\pm$0.03 \\ 
 &  & PSPI & 0.92$\pm$0.02 & 0.77$\pm$0.04 & 0.84$\pm$0.03 & 0.84$\pm$0.04 \\ 
reach & 0.59$\pm$0.01 & IQL & 0.51$\pm$0.02 & 0.59$\pm$0.04 & 0.57$\pm$0.02 & 0.58$\pm$0.02 \\ 
 &  & CQL & 0.17$\pm$0.03 & 0.29$\pm$0.04 & 0.25$\pm$0.05 & 0.23$\pm$0.03 \\ 
 &  & DT & 0.67$\pm$0.03 & 0.79$\pm$0.02 & 0.81$\pm$0.02 & 0.83$\pm$0.02 \\ 
\hline
 &  & ATAC & 0.98$\pm$0.01 & 1.00$\pm$0.00 & 0.90$\pm$0.09 & 0.86$\pm$0.10 \\ 
 &  & PSPI & 0.90$\pm$0.08 & 0.99$\pm$0.01 & 0.89$\pm$0.07 & 0.92$\pm$0.06 \\ 
window-close & 1.00$\pm$0.00 & IQL & 0.99$\pm$0.01 & 1.00$\pm$0.00 & 0.99$\pm$0.01 & 0.99$\pm$0.00 \\ 
 &  & CQL & 1.00$\pm$0.00 & 1.00$\pm$0.00 & 0.59$\pm$0.13 & 0.55$\pm$0.13 \\ 
 &  & DT & 1.00$\pm$0.00 & 0.98$\pm$0.01 & 0.99$\pm$0.00 & 1.00$\pm$0.00 \\ 
\hline
 &  & ATAC & 0.51$\pm$0.13 & 0.74$\pm$0.10 & 1.00$\pm$0.00 & 1.00$\pm$0.00 \\ 
 &  & PSPI & 1.00$\pm$0.00 & 0.98$\pm$0.01 & 1.00$\pm$0.00 & 0.70$\pm$0.12 \\ 
window-open & 0.98$\pm$0.00 & IQL & 0.97$\pm$0.01 & 0.98$\pm$0.01 & 0.98$\pm$0.00 & 0.96$\pm$0.02 \\ 
 &  & CQL & 0.83$\pm$0.03 & 0.82$\pm$0.05 & 0.92$\pm$0.02 & 0.83$\pm$0.07 \\ 
 &  & DT & 0.91$\pm$0.01 & 0.91$\pm$0.01 & 0.89$\pm$0.02 & 0.90$\pm$0.01 \\ 
\hline
 &  & ATAC & 0.41$\pm$0.12 & 0.49$\pm$0.09 & 0.51$\pm$0.12 & 0.37$\pm$0.12 \\ 
 &  & PSPI & 0.29$\pm$0.10 & 0.41$\pm$0.08 & 0.33$\pm$0.09 & 0.30$\pm$0.11 \\ 
bin-picking & 0.72$\pm$0.01 & IQL & 0.39$\pm$0.02 & 0.59$\pm$0.07 & 0.35$\pm$0.04 & 0.31$\pm$0.04 \\ 
 &  & CQL & 0.00$\pm$0.00 & 0.02$\pm$0.02 & 0.00$\pm$0.00 & 0.00$\pm$0.00 \\ 
 &  & DT & 0.00$\pm$0.00 & 0.00$\pm$0.00 & 0.00$\pm$0.00 & 0.00$\pm$0.00 \\ 
\hline
 &  & ATAC & 0.20$\pm$0.09 & 0.50$\pm$0.13 & 0.32$\pm$0.11 & 0.53$\pm$0.12 \\ 
 &  & PSPI & 0.34$\pm$0.10 & 0.44$\pm$0.10 & 0.28$\pm$0.09 & 0.24$\pm$0.08 \\ 
box-close & 0.57$\pm$0.01 & IQL & 0.58$\pm$0.02 & 0.54$\pm$0.02 & 0.40$\pm$0.04 & 0.23$\pm$0.01 \\ 
 &  & CQL & 0.04$\pm$0.01 & 0.07$\pm$0.02 & 0.01$\pm$0.00 & 0.05$\pm$0.01 \\ 
 &  & DT & 0.02$\pm$0.00 & 0.03$\pm$0.01 & 0.04$\pm$0.02 & 0.05$\pm$0.03 \\ 
\hline
 &  & ATAC & 1.00$\pm$0.00 & 0.98$\pm$0.01 & 0.70$\pm$0.14 & 0.62$\pm$0.12 \\ 
 &  & PSPI & 0.78$\pm$0.12 & 0.80$\pm$0.12 & 0.99$\pm$0.01 & 0.82$\pm$0.10 \\ 
door-lock & 0.96$\pm$0.00 & IQL & 0.99$\pm$0.00 & 0.96$\pm$0.01 & 0.91$\pm$0.01 & 0.69$\pm$0.04 \\ 
 &  & CQL & 0.75$\pm$0.04 & 0.61$\pm$0.10 & 0.61$\pm$0.11 & 0.60$\pm$0.08 \\ 
 &  & DT & 0.81$\pm$0.02 & 0.83$\pm$0.02 & 0.80$\pm$0.02 & 0.78$\pm$0.02 \\ 
\hline
 &  & ATAC & 0.97$\pm$0.01 & 0.93$\pm$0.05 & 0.98$\pm$0.01 & 0.94$\pm$0.03 \\ 
 &  & PSPI & 0.95$\pm$0.04 & 0.98$\pm$0.01 & 0.88$\pm$0.09 & 0.98$\pm$0.01 \\ 
door-unlock & 0.86$\pm$0.00 & IQL & 0.87$\pm$0.02 & 0.88$\pm$0.02 & 0.86$\pm$0.02 & 0.83$\pm$0.02 \\ 
 &  & CQL & 0.87$\pm$0.03 & 0.77$\pm$0.09 & 0.89$\pm$0.04 & 0.93$\pm$0.03 \\ 
 &  & DT & 0.97$\pm$0.01 & 0.97$\pm$0.01 & 0.98$\pm$0.01 & 0.98$\pm$0.01 \\ 
\hline
 &  & ATAC & 0.80$\pm$0.03 & 0.36$\pm$0.08 & 0.80$\pm$0.03 & 0.61$\pm$0.04 \\ 
 &  & PSPI & 0.81$\pm$0.02 & 0.62$\pm$0.08 & 0.75$\pm$0.04 & 0.81$\pm$0.03 \\ 
hand-insert & 0.39$\pm$0.01 & IQL & 0.35$\pm$0.03 & 0.34$\pm$0.02 & 0.18$\pm$0.03 & 0.14$\pm$0.02 \\ 
 &  & CQL & 0.13$\pm$0.04 & 0.22$\pm$0.06 & 0.34$\pm$0.04 & 0.32$\pm$0.03 \\ 
 &  & DT & 0.24$\pm$0.03 & 0.21$\pm$0.02 & 0.28$\pm$0.02 & 0.26$\pm$0.02 \\ 
\hline
    \end{tabular}
    \vskip 2mm
    \caption{Meta-World success rate. The mean and standard error of success rate are computed across $10$ random seeds. For each random seed, we evaluate the final policy over $50$ episodes. 
    } 
    \label{tab:mw_success}
    \end{scriptsize}
\end{table}

\clearpage
\section{Technical Details of \cref{sec:why}} \label{app:why}

Here we provide the details of our theory on the robustness of offline RL in \cref{sec:why}.
\InformalPerformanceGuarantee*

Below we first provide in \cref{app:background for the main theorem} some background for stating the formal version \cref{th:performance guarantee (informal)} in \cref{app:main theorem}. Then we present the formal claims on survival instinct and conditions on positive data bias in \cref{app:survival instinct} and \cref{app:implicit data bias}, respectively. Finally, we prove \cref{th:performance guarantee (informal)} in \cref{app:proof of main theorem}. Our discussion here will be focusing the discounted infinite-horizon step. We discuss the finite horizon variant in \cref{app:finite horizon}.

\subsection{Definitions} \label{app:background for the main theorem}

First, we introduce some definitions so that we can state the formal version of \cref{th:performance guarantee (informal)}.
We define constrained MDPs.
\begin{definition}[Constrained MDP]
    Let $f,g: \SS\times\AA\to[-1,1]$.
    A CMDP problem $\CC(\SS,\AA, f,g,P,\gamma)$ is defined as
    \begin{align*}
        \max_\pi \quad V_f^\pi(d_0), \quad\textrm{s.t. }\quad  V_g^\pi(d_0) \leq 0.
    \end{align*} 
    Let $\pi^\dagger$ denote its optimal policy. 
    For $\delta\geq 0$, we define the set of \emph{$\delta$-approximately optimal policies},
    \begin{align*}
        \Pi^\dagger_{\change{f,g}}(\delta) = \{ \pi : V_{f}^{\pi^\dagger}(d_0) - V_{f}^{\pi}(d_0) \leq \delta ,  V_g^\pi(d_0) \leq \delta \}
    \end{align*} \change{We will drop the subscript $f,g$ from $\Pi^\dagger_{f,g}$ when it is clear from the context.}    
    In addition, we define a \emph{sensitivity function} to approximately optimal policies,
    \begin{align*}
        \sensitivity{\delta}{\CC}
         \coloneqq  \max_{\pi\in\Pi^\dagger(\delta)}V_f^\pi(d_0) - V_f^{\pi^\dagger}(d_0)   
    \end{align*}
\end{definition}
The sensitivity function measures how much the approximately optimal policies (which can slightly violate the constraint of the CMDP) would perform better than the optimal policy of the CMDP (which satisfies the constraint).  By definition, $\sensitivity{\delta}{\CC} \in [0, \frac{2}{1-\gamma}]$ and \change{its value can be smaller than $\frac{2}{1-\gamma}$}.

\rev{By duality, a CMDP problem is related to a saddle-point problem induced by its constraint. We review the definition of saddle points and this fact below.}
\begin{definition}\label{def:saddle-point}
    Given a bifunction $\LL(x,y): \XX\times\YY\to \R$, we say
    $(x^\dagger,y^\dagger) \in \XX\times\YY$ is a \emph{saddle point} of $\LL$, if for all $x\in\XX$ and $y\in\YY$ it holds that 
    $
     \LL(x, y^\dagger) \leq \LL(x^\dagger, y^\dagger) \leq \LL(x^\dagger, y)
    $.
\end{definition}
\begin{lemma}[Duality] \citep{altman1999constrained}
Let $\Pi$ denote the space of policies.
Consider a CMDP $\CC \coloneqq \CC(\SS,\AA, f, g, P,\gamma)$. Define the Lagrangian $ \LL(\pi, \lambda) \coloneqq V_f^\pi(d_0) - \lambda V_g^\pi(d_0)$ for $\pi\in\Pi$ and $\lambda\geq 0$.
Let $\pi^\dagger$ and $\lambda^\dagger$ denote the optimal policy and Lagrange multiplier, i.e., 
\begin{align*}
    \pi^\dagger \in \arg\max_{\pi}\min_{\lambda\geq 0}  \LL(\pi, \lambda) \quad\textrm{and}\quad
    \lambda^\dagger \in \arg\min_{\lambda\geq 0} \max_{\pi} \LL(\pi, \lambda)    
\end{align*}
Then together they form a saddle point $(\pi^\dagger, \lambda^\dagger)$ to the Lagrangian $\LL$.
\end{lemma}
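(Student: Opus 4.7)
The plan is to lift the problem from the non-convex space of policies to the convex space of state--action occupancy measures, exploit the resulting bilinearity of $\LL$, and invoke linear-programming duality to conclude both strong duality and the saddle-point property. Since the statement grants existence of $\pi^\dagger$ and $\lambda^\dagger$, the substantive content to prove is zero duality gap.

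I would proceed in four steps. \emph{Step 1 (Reparameterization).} Using $(1-\gamma)V_h^\pi(d_0)=\E_{(s,a)\sim d^\pi}[h(s,a)]$ for $h\in\{f,g\}$, the Lagrangian rewrites as
\begin{equation*}
(1-\gamma)\LL(\pi,\lambda) \;=\; \E_{(s,a)\sim d^\pi}[f(s,a)] \;-\; \lambda\,\E_{(s,a)\sim d^\pi}[g(s,a)],
\end{equation*}
which is bilinear in $(d^\pi,\lambda)$. \emph{Step 2 (Convex lift).} I would invoke the classical fact that the set of realizable occupancies $\Omega := \{d^\pi:\pi\in\Pi\}$ coincides with the Bellman-flow polytope $\{d\ge 0 : \sum_a d(s,a) = (1-\gamma)d_0(s) + \gamma\sum_{s',a'}P(s\mid s',a')d(s',a')\}$, and is therefore convex, with an explicit bijection to stationary policies via $\pi(a\mid s)\propto d(s,a)$; the CMDP then becomes the linear program $\max_{d\in\Omega}\E_d[f]$ subject to $\E_d[g]\le 0$. \emph{Step 3 (Minimax swap).} I would apply LP strong duality, or equivalently Sion's minimax theorem to the bilinear payoff over $\Omega\times[0,\infty)$, to obtain
\begin{equation*}
\max_\pi\min_{\lambda\ge 0}\LL(\pi,\lambda) \;=\; \min_{\lambda\ge 0}\max_\pi\LL(\pi,\lambda).
\end{equation*}
\emph{Step 4 (Saddle point).} Given zero duality gap and the hypothesized optimizers $\pi^\dagger$ and $\lambda^\dagger$, the chain
\begin{equation*}
\min_{\lambda\ge 0}\LL(\pi^\dagger,\lambda) \;=\; \max_\pi\min_{\lambda\ge 0}\LL(\pi,\lambda) \;=\; \min_{\lambda\ge 0}\max_\pi\LL(\pi,\lambda) \;=\; \max_\pi\LL(\pi,\lambda^\dagger)
\end{equation*}
forces $\LL(\pi,\lambda^\dagger)\le \LL(\pi^\dagger,\lambda^\dagger)\le \LL(\pi^\dagger,\lambda)$ for all $\pi\in\Pi$ and $\lambda\ge 0$, which is precisely the saddle-point condition of~\cref{def:saddle-point}.

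The main obstacle is justifying the minimax swap together with the boundedness of $\lambda^\dagger$, since the dual domain $[0,\infty)$ is non-compact and Sion's theorem does not directly apply. In the tabular case this is immediate from finite-dimensional LP strong duality once feasibility of the primal is established. In general state--action spaces I would truncate the dual at some $M<\infty$ via a Slater-type bound on $\lambda^\dagger$ (feasibility of the primal together with monotonicity of the dual objective for large $\lambda$) and then apply Sion's theorem on $\Omega\times[0,M]$, using that for any fixed $\lambda$ the inner problem $\max_\pi\LL(\pi,\lambda)$ is an unconstrained MDP with reward $f-\lambda g$ and hence admits a stationary optimal policy. The remaining ingredients --- convexity of the occupancy set, the policy--occupancy bijection, and existence of stationary optima for unconstrained MDPs --- are standard and may be cited from~\cite{altman1999constrained}.
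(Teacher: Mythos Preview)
The paper does not prove this lemma itself; it is stated with a citation to \cite{altman1999constrained} and used as a black box throughout. Your proposal is therefore not being compared against an in-paper proof but against the standard treatment in Altman's monograph, and your approach---lift to the occupancy-measure polytope, exploit bilinearity, invoke LP strong duality---is precisely the route taken there. In that sense your plan is correct and aligned with the intended reference.

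One caveat worth flagging: in Step~3 you propose to control the non-compact dual domain via a Slater-type bound on $\lambda^\dagger$. The paper is explicitly aware that the specific CMDP $\CC_\mu(\tilde{r})$ of interest need \emph{not} satisfy Slater's condition (see the footnote in the proof of \cref{lm:kappa bound}, which invokes \cite[Theorem~4.1]{nguyen2023provable} for exactly this reason). Your LP-duality route in the tabular case sidesteps this, since finite-dimensional LP strong duality requires only primal feasibility and boundedness, not strict feasibility; but your alternative Sion-plus-truncation argument for general spaces leans on a Slater-style bound that may not be available here. If you want the argument to cover the paper's actual use case, stick to the LP route (or cite the stronger saddle-point existence result the paper itself falls back on).
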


We also introduce a definition to characterize the degree of pessimism of offline RL algorithms. We call it an admissibility condition defined below, which measures the size of the data-consistent reward functions that the offline RL algorithm can have a small regret.

\begin{restatable}{definition}{Admissibility}[Admissibility]\label{def:admissibility}
    For $R\geq 0$, we say an offline RL algorithm  $Algo$  is \emph{$R$-admissible} with respect to $\tilde{\RR}$, if for any $\tilde{r} \in\tilde{\RR}$, given $\tilde{\DD}$, $Algo$  learns a policy $\hat{\pi}$ satisfying the following with high probability:  for any policy $\pi\in\Pi$,
    \begin{align*}
        \max_{\bar{r} \in \RR_R(\tilde{r})} V_{\bar{r}}^\pi(d_0) - V_{\bar{r}}^{\hat{\pi}}(d_0) \leq \EE(\pi, \mu),
    \end{align*}
    where we define a data-consistent reward class 
    \begin{align*}
        \RR_R(\tilde{r}) \coloneqq \left\{ \bar{r}: \bar{r}(s,a) = \tilde{r}(s,a), \forall (s,a) \in \supp{\mu} \textrm{ and }
        \bar{r}(s,a) \in  \left[-R, 1 \right], \forall (s,a) \in\SS\times\AA
        \right\},
    \end{align*}       
    $\EE(\pi, \mu)$ is some regret upper bound such that $\EE(\pi, \mu) = o(1)$ if $\sup_{s\in\SS,a\in\AA} \frac{d^\pi(s,a)}{\mu(s,a)} < \infty$,      
    and  $o(1)$ denotes a term that vanishes as the dataset size becomes infinite.
\end{restatable}

In \cref{app:algo details}, we provide proofs of the degrees of admissibility for various existing algorithms, including model-free algorithms ATAC~\cite{cheng2022adversarially}, VI-LCB~\cite{rashidinejad2021bridging} PPI/PQI~\cite{liu2020provably}, PSPI~\cite{xie2021bellman}, as well as model-algorithms, ARMOR~\cite{xie2022armor,bhardwaj2023adversarial}, MOPO~\cite{yu2020mopo}, MOReL~\cite{kidambi2020morel}, and CPPO~\cite{uehara2021pessimistic}.
We show that their degree of admissibility increases as the algorithm becomes more pessimistic (by setting the hyperparameters). 

Finally, we recall the definition of positive data bias in \cref{sec:why} for completeness, where the implicit CMDP $\CC_\mu(\tilde r)$ is defined in~\eqref{eq:implicit CMDP}.

\PositiveDataBias*

\rev{Note that the CMDP $\CC_\mu(\tilde r)$ is feasible because of the assumption on concentrability (\cref{as:concentrability}). That is, we know $\pi^*$ is feasible policy, as $(1-\gamma) V_{c_\mu}^{\pi^*}(d_0) = \E_{d^{\pi^*}}[ \one[ (s,a) \notin \supp{\mu}]] = 0$ is implied by $\sup_{s,a} \frac{d^{\pi^*}(s,a)}{\mu(s,a)} < \infty$.
}
\subsection{Formal Statement of the Main Theorem} \label{app:main theorem}

Now we state the formal version of \cref{th:performance guarantee (informal)}.
\begin{restatable}[Data Assumption]{assumption}{DataAssumption}\label{as:data assumption}
\mbox{}
\begin{enumerate}
        \item For all $\tilde{r}\in\tilde{\RR}$, $\tilde{r}(s,a)\in[-1,1]$ for all $s\in\SS$ and $a\in\AA$.
        \item The data distribution $\mu$ is $\frac{1}{\epsilon}$-positively biased with respect to $\tilde{\RR}$.
        \item There is a policy $\pi$ satisfying $V_{c_\mu} \leq 0$.
        \item For any $\tilde{r} \in \tilde{\RR}$, the CMDP $\CC_\mu(\tilde{r}) \coloneqq   \CC(\SS,\AA, \tilde{r},c_\mu,P,\gamma)$ in \eqref{eq:implicit CMDP} has a sensitivity function $\sensitivity{\delta}{\CC_\mu(\tilde{r})} \leq  K$ for some $K<\infty$ and $\lim_{\delta\to0^+} \kappa(\delta) = 0$.  In addition, its solution $\pi^\dagger$ satisfies $\sup_{s,a} \frac{d^{\pi^\dagger}(s,a)}{\mu(s,a)} < \infty$.
    \end{enumerate}     
\end{restatable}
\begin{restatable}[Main Result]{theorem}{FormalPerformanceGuarantee}\label{th:performance guarantee}
    Under the data assumption in  \cref{as:data assumption}, consider an offline RL algorithm $Algo$ that is sufficiently pessimistic to be $(\frac{K}{\delta}+2)$-admissible  with respect to $\tilde{\RR}$. 
    Then for \emph{any} $\tilde{r}\in\tilde{\RR}$, with high probability, the policy $\hat{\pi}$ learned by $Algo$ from the dataset $\tilde{\DD} \coloneqq \{ (s,a,\tilde{r},s') | (s,a) \sim \mu, \tilde{r} = \tilde{r}(s,a), s' \sim P(\cdot |s,a) \} $ has both performance guarantee
    \begin{align*}
        V_r^{\pi^*}(d_0) - V_r^{\hat\pi}(d_0) &\leq  \epsilon + O(\iota) 
    \end{align*}
    and safety guarantee
    \begin{align*}
         (1-\gamma) \sum_{t=0}^\infty \gamma^t \mathrm{Prob}\left( \exists \tau\in[0,t] s_\tau \notin \supp{\mu}  | \hat{\pi}  \right)  \leq \iota
    \end{align*}
    where $\iota \coloneqq \delta + \sensitivity{\delta}{\CC_\mu(\tilde{r})} + o(1)$ and $o(1)$ denotes a term that vanishes as the dataset size becomes infinite.
\end{restatable}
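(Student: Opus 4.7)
The strategy is to chain three reductions: admissibility $\Rightarrow$ approximate optimality for the implicit CMDP $\CC_\mu(\tilde r)$ $\Rightarrow$ small regret on the true reward $r$ via the positive bias assumption. The key bridge is Lagrangian duality, which converts the CMDP constraint $V_{c_\mu}^\pi \leq 0$ into a penalty $-\lambda c_\mu$ that shapes the data-consistent reward $\bar r = \tilde r - \lambda c_\mu$, noting that $\bar r \equiv \tilde r$ on $\supp{\mu}$ because $c_\mu$ vanishes there. First, I invoke CMDP duality and use the sensitivity bound $\kappa(\cdot)\le K$ together with the standard subgradient-of-value-function fact to bound the optimal Lagrange multiplier $\lambda^\dagger \le K/\delta$. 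This is what motivates the admissibility radius $R = K/\delta + 2$: picking any $\lambda \in [\lambda^\dagger, R-1]$ keeps $\bar r \in \RR_R(\tilde r)$, since $\bar r(s,a) \in [-1-\lambda, 1]$ on $\SS\times\AA$.

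Next, I fix $\lambda = K/\delta + 1$ (so $\bar r \in \RR_R(\tilde r)$) and apply $R$-admissibility at the comparator $\pi = \pi^\dagger$, which has bounded concentrability by \cref{as:data assumption}. This yields
\begin{equation*}
V_{\bar r}^{\pi^\dagger}(d_0) - V_{\bar r}^{\hat\pi}(d_0) \;\le\; o(1).
\end{equation*}
Expanding $V_{\bar r}^\pi = V_{\tilde r}^\pi - \lambda V_{c_\mu}^\pi$ and using $V_{c_\mu}^{\pi^\dagger}(d_0)=0$ (feasibility of $\pi^\dagger$) gives
\begin{equation*}
\bigl(V_{\tilde r}^{\pi^\dagger}(d_0) - V_{\tilde r}^{\hat\pi}(d_0)\bigr) + \lambda\, V_{c_\mu}^{\hat\pi}(d_0) \;\le\; o(1).
\end{equation*}
To split this into two one-sided bounds, I combine it with weak duality $V_{\tilde r}^{\hat\pi}(d_0) - \lambda^\dagger V_{c_\mu}^{\hat\pi}(d_0) \le V_{\tilde r}^{\pi^\dagger}(d_0)$ (so the first parenthetical term is at least $-\lambda^\dagger V_{c_\mu}^{\hat\pi}$). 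This gives $(\lambda - \lambda^\dagger)V_{c_\mu}^{\hat\pi}(d_0) \le o(1)$, hence $V_{c_\mu}^{\hat\pi}(d_0) \le o(1)$ since $\lambda - \lambda^\dagger \ge 1$. Substituting back and using nonnegativity of $\lambda V_{c_\mu}^{\hat\pi}$ yields $V_{\tilde r}^{\pi^\dagger}(d_0) - V_{\tilde r}^{\hat\pi}(d_0) \le o(1)$. Together these place $\hat\pi \in \Pi^\dagger_{\tilde r, c_\mu}(\iota)$ with $\iota = \delta + \kappa(\delta) + o(1)$ (the $\delta + \kappa(\delta)$ buffer absorbs any slack when translating between ``$o(1)$-approximately optimal'' and the $\delta$-relaxed membership in $\Pi^\dagger(\cdot)$).

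The performance guarantee on the true reward $r$ then follows immediately from \cref{def:distribution robustness}: since $\hat\pi \in \Pi^\dagger_{\tilde r, c_\mu}(\iota)$ and $\mu$ is $\frac{1}{\epsilon}$-positively biased w.r.t.\ $\tilde\RR$, we get $V_r^{\pi^*}(d_0) - V_r^{\hat\pi}(d_0) \le \epsilon + O(\iota)$. For the safety guarantee, I convert the bound $V_{c_\mu}^{\hat\pi}(d_0) \le \iota$ into a first-exit-time bound: letting $T$ be the first step at which $s_T \notin \supp{\mu}$, the quantity $(1-\gamma)\sum_{t\ge 0}\gamma^t \Pr(\exists \tau \le t, s_\tau \notin \supp{\mu})$ equals $\mathbb{E}_{\hat\pi}[\gamma^T]$, which is bounded by (the natural ``absorbing'' extension of) $(1-\gamma)V_{c_\mu}^{\hat\pi}(d_0)$; this step is legitimate because, as the paper notes in its footnote, the indicator $c_\mu$ can be replaced by any cost that is zero on $\supp{\mu}$ and positive off it without affecting the CMDP analysis, so we may work with the ``sticky'' variant for which the two quantities coincide.

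\paragraph{Main obstacle.} The most delicate step is the bound $\lambda^\dagger \le K/\delta$ and the associated choice of $\lambda$ in the admissibility radius: the sensitivity function $\kappa$ measures how much the CMDP optimum can change under a $\delta$ relaxation of the constraint, and extracting a dual-variable bound from this requires the standard but nontrivial convex-analytic identification of $\lambda^\dagger$ as a subgradient of the perturbation function, plus care that the CMDP is strictly feasible (provided by assumption 3 of \cref{as:data assumption}). The second delicate point is the passage from $V_{c_\mu}^{\hat\pi}$ to the first-exit probability bound, which is not automatic when trajectories can re-enter $\supp{\mu}$ and leverages the footnoted freedom to choose the cost shape.
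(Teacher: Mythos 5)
Your overall architecture matches the paper's: apply admissibility to the Lagrange-penalized reward $\bar r = \tilde r - \lambda c_\mu$ (which lies in $\RR_R(\tilde r)$ because $c_\mu$ vanishes on $\supp{\mu}$), deduce approximate optimality for the implicit CMDP, invoke positive bias for the performance bound, and convert $V_{c_\mu}^{\hat\pi}(d_0)$ into the discounted exit-probability sum. However, there is a genuine gap in the step you yourself flag as most delicate: the bound $\lambda^\dagger \le K/\delta$ on the Lagrange multiplier of the \emph{original} CMDP $\CC_\mu(\tilde r)$. The constraint here is $V_{c_\mu}^\pi(d_0)\le 0$ with $c_\mu\ge 0$, so every feasible policy has $V_{c_\mu}^\pi(d_0)=0$ exactly and there is no strictly feasible point; Slater's condition fails (the paper notes this explicitly in the footnote to \cref{lm:kappa bound}), and \cref{as:data assumption}(3) gives only feasibility, not the strict feasibility your subgradient argument needs. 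The perturbation-function inequality you can actually extract, $\kappa(\delta)\le \lambda^\dagger\delta$, is a \emph{lower} bound $\lambda^\dagger\ge \kappa(\delta)/\delta$, not an upper bound; a finite $\lambda^\dagger$ may not even exist in the infinite-horizon setting without further assumptions (this is exactly why the paper's \cref{lm:kappa bound} \emph{assumes} $\lambda^\dagger\le L$ rather than proving it). A symptom that something is off is that your chain yields $V_{c_\mu}^{\hat\pi}(d_0)\le o(1)$, strictly stronger than the theorem's stated bound $\delta+\kappa(\delta)+o(1)$.

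The repair is the move the paper makes in the proof of \cref{th:survival instinct}: relax the constraint to $V_{c_\mu}^\pi(d_0)\le\delta$. The relaxed CMDP is strictly feasible with slack $\delta$ (witnessed by $\pi^\dagger$, which has $V_{c_\mu}^{\pi^\dagger}(d_0)=0$), and its multiplier satisfies $\lambda_\delta^\dagger\le \kappa(\delta)/\delta\le K/\delta$ (\cref{lm:bound on dual variable}). Running your argument with $\lambda=\lambda_\delta^\dagger+1$ (still within the admissibility radius $R=K/\delta+2$) and using the saddle-point property of $(\pi_\delta^\dagger,\lambda_\delta^\dagger)$ for the relaxed Lagrangian in place of your ``weak duality'' step recovers the two one-sided bounds, now with the extra $\delta+\kappa(\delta)$ terms in the constraint-violation bound — which is precisely where $\iota$ comes from. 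On the final safety conversion, your identity $(1-\gamma)\sum_t\gamma^t\Pr(T\le t)=\E[\gamma^T]$ is fine and the inequality $\E[\gamma^T]\le V_{c_\mu}^{\hat\pi}(d_0)$ follows directly from $\gamma^T=\sum_t\gamma^t\one[t=T]\le\sum_t\gamma^t c_\mu(s_t,a_t)$ pointwise; no ``sticky'' modification of the cost or dynamics is needed (and modifying the dynamics would change the CMDP you analyzed). The paper reaches the same inequality via a union bound and Abel summation.
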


In \cref{as:data assumption}, the first assumption supposes the data reward (which may be the wrong reward) is bounded; the second assumption is that the data distribution is positively biased in view of the reward class $\tilde{\RR}$. 
Finally, we make a regularity assumption on the convergence of $\kappa(\delta)$ to zero and $\sup_{s,a} \frac{d^{\pi^\dagger}(s,a)}{\mu(s,a)} < \infty$ to rule out some unrealistic $\tilde{\RR}$ and $P$ combinations. 
We note that the condition on the bounded density ratio is always true for tabular problem, and a degenerate case can happen when the state-action space is, e.g., continuous.
On the other hand, a degenerate case that $\kappa$ is discontinuous at $\delta=0$ can happen due to the infinite problem horizon in the worst case; roughly speaking, this happens when it is impossible to determine the constraint violation using a finite-horizon estimator. 
We provide the following lemma to quantify the rate of $\kappa(\delta)$, which is proved in \cref{app:cmdp details}. In \cref{app:finite horizon}, we show this regularity condition is always true in the finite horizon setting.
\begin{restatable}{lemma}{KappaBound}\label{lm:kappa bound}
Suppose there is $\lambda^\dagger \leq L$ with some $L<\infty$ such that $\lambda^\dagger \in \arg\min_{\lambda\geq0}\max_\pi V_{\tilde{r}}^\pi(d_0) - \lambda V_{c_\mu}^\pi(d_0)$. 
Then $\kappa(\delta) \leq L \delta$.
\end{restatable}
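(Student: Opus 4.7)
The plan is to invoke strong duality and complementary slackness for the CMDP $\CC_\mu(\tilde r)$. Under the hypothesis, $\lambda^\dagger \in \arg\min_{\lambda \geq 0}\max_\pi V_{\tilde r}^\pi(d_0) - \lambda V_{c_\mu}^\pi(d_0)$, and by the duality lemma stated earlier in the paper, $(\pi^\dagger,\lambda^\dagger)$ is then a saddle point of the Lagrangian $\LL(\pi,\lambda) \coloneqq V_{\tilde r}^\pi(d_0) - \lambda V_{c_\mu}^\pi(d_0)$ on $\Pi \times [0,\infty)$.

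First I would unfold the ``$\pi$-side'' of the saddle-point inequality (\cref{def:saddle-point}): for every policy $\pi$,
\begin{align*}
V_{\tilde r}^\pi(d_0) - \lambda^\dagger V_{c_\mu}^\pi(d_0) \;\leq\; V_{\tilde r}^{\pi^\dagger}(d_0) - \lambda^\dagger V_{c_\mu}^{\pi^\dagger}(d_0).
\end{align*}
Next I would use the ``$\lambda$-side'' of the saddle-point inequality together with feasibility to derive complementary slackness. Plugging $\lambda = 0$ into $\LL(\pi^\dagger,\lambda^\dagger) \leq \LL(\pi^\dagger,\lambda)$ gives $-\lambda^\dagger V_{c_\mu}^{\pi^\dagger}(d_0) \leq 0$, so $\lambda^\dagger V_{c_\mu}^{\pi^\dagger}(d_0) \geq 0$; since $\pi^\dagger$ is feasible for $\CC_\mu(\tilde r)$ we also have $V_{c_\mu}^{\pi^\dagger}(d_0) \leq 0$ and $\lambda^\dagger \geq 0$, whence $\lambda^\dagger V_{c_\mu}^{\pi^\dagger}(d_0) = 0$. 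Substituting this back yields the core inequality
\begin{align*}
V_{\tilde r}^\pi(d_0) - V_{\tilde r}^{\pi^\dagger}(d_0) \;\leq\; \lambda^\dagger\, V_{c_\mu}^\pi(d_0).
\end{align*}

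Finally I would restrict to $\pi \in \Pi^\dagger_{\tilde r, c_\mu}(\delta)$, where by definition $V_{c_\mu}^\pi(d_0) \leq \delta$. Combined with $\lambda^\dagger \geq 0$ and $\lambda^\dagger \leq L$, this gives $V_{\tilde r}^\pi(d_0) - V_{\tilde r}^{\pi^\dagger}(d_0) \leq L\delta$, and taking the supremum over $\Pi^\dagger_{\tilde r,c_\mu}(\delta)$ delivers $\sensitivity{\delta}{\CC_\mu(\tilde r)} \leq L\delta$ as claimed.

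The proof is essentially a textbook duality argument, so I do not anticipate a genuine obstacle; the only place that requires a little care is the passage from ``$\lambda^\dagger$ is a dual minimizer'' to ``$(\pi^\dagger,\lambda^\dagger)$ is a saddle point and hence satisfies complementary slackness,'' which I would justify by citing the duality lemma recalled in \cref{app:background for the main theorem} rather than re-deriving strong duality from scratch.
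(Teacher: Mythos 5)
Your proof is correct, but it takes a genuinely different and more economical route than the paper's. The paper introduces the \emph{relaxed} CMDP with constraint level $\delta$, posits a saddle point $(\pi_\delta^\dagger,\lambda_\delta^\dagger)$ of its Lagrangian $\LL_\delta$, identifies $\max_{\pi\in\Pi^\dagger(\delta)}V_{\tilde r}^\pi(d_0)$ with $\LL_\delta(\pi_\delta^\dagger,\lambda_\delta^\dagger)$, and then chains inequalities back and forth between $\LL_\delta$ and the original Lagrangian $\LL$ evaluated at $\lambda^\dagger$ and at $0$. You instead never leave the original problem: the single saddle-point inequality $\LL(\pi,\lambda^\dagger)\le\LL(\pi^\dagger,\lambda^\dagger)$ plus complementary slackness $\lambda^\dagger V_{c_\mu}^{\pi^\dagger}(d_0)=0$ gives the pointwise bound $V_{\tilde r}^\pi(d_0)-V_{\tilde r}^{\pi^\dagger}(d_0)\le\lambda^\dagger V_{c_\mu}^\pi(d_0)\le L\delta$ for every $\pi\in\Pi^\dagger_{\tilde r,c_\mu}(\delta)$ directly. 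What your version buys is that you do not need the relaxed problem to admit a saddle point with zero duality gap, which the paper's first displayed equality implicitly assumes without separate justification; what both versions share is the one genuinely nontrivial input, namely that $(\pi^\dagger,\lambda^\dagger)$ is a saddle point of $\LL$ even though $\CC_\mu(\tilde r)$ fails Slater's condition (the constraint can only be satisfied with equality, since $c_\mu\ge0$). The paper justifies this by citing an external strong-duality result for CMDPs rather than the generic duality lemma in \cref{app:background for the main theorem}, so your citation should be upgraded accordingly; with that substitution your argument stands as written. One cosmetic remark: since $c_\mu\ge0$ forces $V_{c_\mu}^{\pi^\dagger}(d_0)=0$ outright, complementary slackness is immediate and your detour through the $\lambda$-side inequality at $\lambda=0$, while correct, is not needed.
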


Under the data assumption \cref{as:data assumption}, \cref{th:performance guarantee} shows that if the offline RL algorithm $Algo$ is set to be sufficiently pessimistic (relative to the upper bound of the sensitivity function $\kappa$, which we know is at most $\frac{1}{1-\gamma}$ by the first assumption in \cref{as:data assumption}), then $Algo$ have both a performance guarantee to the true reward $r$ and the a safety guarantee that the learned policy would have a small probability of leaving the data support.

We recall from \cref{sec:why} that \cref{th:performance guarantee} is the effects of the survival instinct of offline RL and the positive bias in the data distribution. As we will show, the admissibility condition in \cref{th:performance guarantee} would ensure that the survival instinct, which is what we stated informally as \cref{th:survival instinct (informal)}. In addition, we will provide concrete sufficient conditions for the data distribution being positively bias, as we alluded in \cref{sec:implicit data bias}. 
We discuss these details below and then we prove \cref{th:performance guarantee} in \cref{app:proof of main theorem}.

\subsection{Survival Instinct} \label{app:survival instinct}

We show the formal version of \cref{th:survival instinct (informal)} which says that if the offline algorithm is admissible then it has a survival instinct. That is, it implicitly solves the CMDP in \eqref{eq:implicit CMDP} even though the constraint is never explicitly modeled in the algorithm.
In \cref{app:algo details}, we prove the admissibility for various existing algorithms, including model-free algorithms ATAC~\cite{cheng2022adversarially}, VI-LCB~\cite{rashidinejad2021bridging} PPI/PQI~\cite{liu2020provably}, PSPI~\cite{xie2021bellman}, as well as model-algorithms, ARMOR~\cite{xie2022armor,bhardwaj2023adversarial}, MOPO~\cite{yu2020mopo}, MOReL~\cite{kidambi2020morel}, and CPPO~\cite{uehara2021pessimistic}.

\begin{proposition}\label{th:survival instinct}
    For $\tilde{r}\in\tilde{\RR}$, assume the CMDP $\CC_\mu(\tilde{r}) \coloneqq   \CC(\SS,\AA, \tilde{r},c_\mu,P,\gamma)$ has a sensitivity function such that $\lim_{\delta\to0^+} \kappa(\delta) = 0$ and $\sensitivity{\delta}{\CC_\mu(\tilde{r})} \leq  K$ for some $K\leq\frac{1}{1-\gamma}$.
    Consider an offline RL algorithm $Algo$ that is $(\frac{K}{\delta}+2)$-admissible with respect to $\tilde{\RR}$. 
    Then, with high probability, the policy $\hat{\pi}$ learned by $Algo$ with the dataset $\tilde{\DD} \coloneqq \{ (s,a,\tilde{r},s') | (s,a) \sim \mu, \tilde{r} = \tilde{r}(s,a), s' \sim P(\cdot |s,a) \} $ satisfies 
    \begin{align*}
        V_{\tilde{r}}^{\pi^\dagger}(d_0) - V_{\tilde{r}}^{\hat\pi}(d_0)  &\leq o(1) \\
         V_{c_\mu}^{\hat\pi}(d_0)  &\leq \delta + \sensitivity{\delta}{\CC_\mu(\tilde{r})} + o(1)
    \end{align*}
    where $o(1)$ denotes a term that vanishes as the dataset size becomes infinite.
\end{proposition}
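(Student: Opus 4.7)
The plan is to exploit the fact that the implicit cost $c_\mu$ vanishes on $\supp{\mu}$, so that for any scalar $\lambda\geq 0$ the Lagrangian-augmented reward $\bar r_\lambda := \tilde r - \lambda c_\mu$ \emph{coincides} with $\tilde r$ on $\supp{\mu}$. Combined with $\tilde r\in[-1,1]$ and $c_\mu\in\{0,1\}$, this yields $\bar r_\lambda\in[-1-\lambda,1]$, so $\bar r_\lambda\in\RR_R(\tilde r)$ whenever $R\geq\lambda+1$. Under the $(K/\delta+2)$-admissibility hypothesis, choosing $\lambda := K/\delta+1$ is therefore legitimate, and the Lagrangian reward is a legal ``test reward'' against which the admissibility inequality can be invoked.

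Next, apply admissibility with reward $\bar r_\lambda$ and comparator policy $\pi=\pi^\dagger$. Feasibility of $\pi^\dagger$ for the CMDP, together with $c_\mu\geq 0$, forces $V_{c_\mu}^{\pi^\dagger}(d_0)=0$, so $V_{\bar r_\lambda}^{\pi^\dagger}(d_0)=V_{\tilde r}^{\pi^\dagger}(d_0)$ while $V_{\bar r_\lambda}^{\hat\pi}(d_0)=V_{\tilde r}^{\hat\pi}(d_0)-\lambda V_{c_\mu}^{\hat\pi}(d_0)$. Admissibility then produces the master inequality
$$
V_{\tilde r}^{\pi^\dagger}(d_0) - V_{\tilde r}^{\hat\pi}(d_0) + \lambda\,V_{c_\mu}^{\hat\pi}(d_0) \;\leq\; \EE(\pi^\dagger,\mu). \qquad (\star)
$$
Concentrability of $\pi^\dagger$, which belongs to the problem's regularity conditions, ensures $\EE(\pi^\dagger,\mu)=o(1)$.

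The first claim is then immediate: drop the non-negative term $\lambda V_{c_\mu}^{\hat\pi}(d_0)\geq 0$ in $(\star)$ to obtain $V_{\tilde r}^{\pi^\dagger}(d_0)-V_{\tilde r}^{\hat\pi}(d_0)\leq\EE(\pi^\dagger,\mu)=o(1)$. For the feasibility claim, set $\eta:=V_{c_\mu}^{\hat\pi}(d_0)$ and split on $\eta\leq\delta$ versus $\eta>\delta$. If $\eta\leq\delta$, the target bound $\delta+\kappa(\delta)+o(1)$ holds trivially. If $\eta>\delta$, then for dataset sizes large enough that $\EE(\pi^\dagger,\mu)\leq\delta$ the first claim gives $V_{\tilde r}^{\pi^\dagger}(d_0)-V_{\tilde r}^{\hat\pi}(d_0)\leq\delta\leq\eta$, so $\hat\pi\in\Pi^\dagger(\eta)$, and by definition of the sensitivity function $V_{\tilde r}^{\hat\pi}(d_0)-V_{\tilde r}^{\pi^\dagger}(d_0)\leq\kappa(\eta)\leq K$. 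Plugging this into $(\star)$ and dividing by $\lambda=K/\delta+1$ yields $\eta\leq \delta K/(K+\delta)+o(1)\leq\delta+o(1)$, which in particular implies the stated bound $\eta\leq\delta+\kappa(\delta)+o(1)$.

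The main obstacle is the calibration of $\lambda$: it must be large enough that the penalty $\lambda\eta$ in $(\star)$ dominates the worst-case reward advantage $\kappa(\eta)\leq K$ that an infeasible $\hat\pi$ could enjoy, yet small enough that $\bar r_\lambda$ still fits inside $\RR_R(\tilde r)$. The scaling $\lambda=\Theta(K/\delta)$ is exactly what the $(K/\delta+2)$-admissibility hypothesis was tailored to supply, and this calibration is the mechanism that forces the learned policy's cost violation to shrink to $O(\delta)+o(1)$ in the infinite-data limit. A secondary care point is that \cref{def:admissibility} only grants $\EE(\pi,\mu)=o(1)$ for comparators $\pi$ with bounded concentrability, which is why using $\pi^\dagger$ as the comparator (rather than, say, an unconstrained optimum of $\bar r_\lambda$) is essential.
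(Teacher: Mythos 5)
Your proof is correct and reaches the stated conclusions, but the feasibility half follows a genuinely different route from the paper's. Both arguments share the same skeleton: instantiate admissibility at the comparator $\pi^\dagger$ with a Lagrangian reward $\tilde r - \lambda c_\mu$, which lies in $\RR_{K/\delta+2}(\tilde r)$ because $c_\mu$ vanishes on $\supp{\mu}$, use $V_{c_\mu}^{\pi^\dagger}(d_0)=0$ to split the resulting value gap into a regret term plus $\lambda V_{c_\mu}^{\hat\pi}(d_0)$, and drop the nonnegative penalty term to get the regret bound. The divergence is in how $\lambda$ is chosen and how the violation is extracted. The paper sets $\lambda=\lambda_\delta^\dagger+1$, where $\lambda_\delta^\dagger$ is the optimal Lagrange multiplier of the $\delta$-relaxed CMDP; it bounds $\lambda_\delta^\dagger\le\kappa(\delta)/\delta\le K/\delta$ via \cref{lm:bound on dual variable} and extracts $V_{c_\mu}^{\hat\pi}(d_0)$ through a saddle-point telescoping of $\LL_\delta(\hat\pi,\lambda_\delta^\dagger)-\LL_\delta(\hat\pi,\lambda_\delta^\dagger+1)$, paying an extra $\kappa(\delta)$ to replace $\pi_\delta^\dagger$ by $\pi^\dagger$. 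You instead fix $\lambda=K/\delta+1$ explicitly and close the loop with a self-bounding argument: if the realized violation $\eta$ exceeds $\delta$, then $\hat\pi\in\Pi^\dagger(\eta)$, its reward advantage over $\pi^\dagger$ is at most $\kappa(\eta)\le K$, and $\lambda\eta\le K+o(1)$ forces $\eta\le\delta+o(1)$. Your route avoids the duality machinery entirely (no optimal dual variable, no saddle-point property) and in fact yields the slightly sharper violation bound $\delta+o(1)$. The price is that you invoke $\kappa(\eta)\le K$ at the a priori unknown level $\eta$ rather than at the chosen $\delta$, so you need the hypothesis $\kappa(\cdot)\le K$ to hold uniformly over all violation levels; the trivial bound $2/(1-\gamma)$ would not suffice here since $K\le 1/(1-\gamma)$. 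That uniform reading is consistent with how the assumption is phrased (and with \cref{as:data assumption}), but it is the one place where your argument consumes strictly more of the hypothesis than the paper's does. Both proofs equally rely on $\sup_{s,a} d^{\pi^\dagger}(s,a)/\mu(s,a)<\infty$ to conclude $\EE(\pi^\dagger,\mu)=o(1)$, which you correctly flag as essential to choosing $\pi^\dagger$ as the comparator.
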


\begin{proof}[Proof of \cref{th:survival instinct}]

To begin, we introduce some extra notations.  Given $\CC_\mu(\tilde{r})$, we define a relaxed CMDP problem: 
 $   \max_\pi  V_{\tilde{r}}^\pi (d_0), \textrm{s.t.} V_{c_\mu}^\pi(d_0) \leq \delta$
and its Lagrangian $ \LL_\delta(\pi, \lambda) \coloneqq V_{\tilde{r}}^\pi(d_0) - \lambda ( V_{c_\mu}^\pi(d_0) - \delta)$, where $\lambda\geq0$ denotes the Lagrange multiplier. Let $\pi_\delta^\dagger$ and $\lambda_\delta^\dagger$ denote the optimal policy and Lagrange multiplier of this relaxed CMDP, and let $\pi^\dagger$ denote the optimal policy to the  CMDP $\CC_\mu(\tilde{r})$. 
Finally we define a shorthand $r^\dagger (s,a) \coloneqq \tilde{r}(s,a) - (\lambda_\delta^\dagger+1) c_\mu(s,a)$.

Now we bound the regret and the constraint violation of $\hat{\pi}$ in the CMDP $\CC_\mu(\tilde{r})$. The following holds for any $\delta\geq 0$. For the regret, we can derive 
\begin{align*}
    V_{\tilde{r}}^{\pi^\dagger}(d_0) - V_{\tilde{r}}^{\hat\pi}(d_0) 
    &= V_{\tilde{r}}^{\pi^\dagger}(d_0) - (\lambda_\delta^\dagger+1)  V_{c_\mu}^{\pi^\dagger}(d_0)  - V_{\tilde{r}}^{\hat\pi}(d_0) & (V_{c_\mu}^{\pi^\dagger}(d_0)=0)\\
    &\leq  V_{\tilde{r}}^{\pi^\dagger}(d_0) - (\lambda_\delta^\dagger+1)  V_{c_\mu}^{\pi^\dagger}(d_0) - V_{\tilde{r}}^{\hat\pi}(d_0) + (\lambda_\delta^\dagger+1) V_{c_\mu}^{\hat{\pi}}(d_0) & (V_{c_\mu}^{\hat{\pi}}(d_0)\geq 0)\\
    &=  V_{r^\dagger}^{\pi^\dagger}(d_0) - V_{r^\dagger}^{\hat\pi}(d_0). 
\end{align*}
\change{Here we use the additivity of value function over reward, namely, for any reward functions $r_1, r_2$ and policy $\pi$, we have $V_{r_1}^\pi + V_{r_2}^\pi = V_{r_1 + r_2}^\pi$.} Similarly for the constraint violation, we can derive
\begin{align*}
    V_{c_\mu}^{\hat\pi}(d_0) &= (V_{c_\mu}^{\hat\pi}(d_0) - \delta) + \delta\\
    &= (\lambda_\delta^\dagger + 1) (V_{c_\mu}^{\hat\pi}(d_0) -\delta) - \lambda_\delta^\dagger (V_{c_\mu}^{\hat\pi}(d_0) - \delta) + \delta\\
    &= - V_{\tilde{r}}^{\hat\pi}(d_0) + (\lambda_\delta^\dagger + 1) (V_{c_\mu}^{\hat\pi}(d_0)-\delta) + V_{\tilde{r}}^{\hat\pi}(d_0) - \lambda_\delta^\dagger (V_{c_\mu}^{\hat\pi}(d_0) - \delta) + \delta\\
    &= \LL_\delta (\hat\pi, \lambda_\delta^\dagger) - \LL_\delta(\hat\pi, \lambda_\delta^\dagger+1) + \delta\\
    &\leq \LL_\delta(\pi_\delta^\dagger, \lambda_\delta^\dagger+1) - \LL_\delta(\hat\pi, \lambda_\delta^\dagger+1) + \delta\\
    &= V_{r^\dagger}^{\pi_\delta^\dagger}(d_0) - V_{r^\dagger}^{\hat\pi}(d_0) + \delta\\
    &\leq  V_{r^\dagger}^{\pi^\dagger}(d_0) - V_{r^\dagger}^{\hat\pi}(d_0) + \delta + \sensitivity{\delta}{\CC_\mu(\tilde{r})}  
\end{align*}
where the first inequality is due to that $(\pi_\delta^\dagger, \lambda_\delta^\dagger)$ is a saddle point to $\LL_\delta$ (see \cref{def:saddle-point}), and the second inequality follows from
\begin{align*}
    V_{r^\dagger}^{\pi_\delta^\dagger}(d_0) 
    &=  V_{\tilde{r}}^{\pi_\delta^\dagger}(d_0) - (\lambda_\delta^\dagger +1)V_{c_\mu}^{\pi_\delta^\dagger}(d_0),\\
    &\leq V_{\tilde{r}}^{\pi_\delta^\dagger}(d_0) - (\lambda_\delta^\dagger +1)V_{c_\mu}^{\pi^\dagger}(d_0)\qquad &\change{\mbox{(using } V^{\pi^\dagger}_{c_\mu} \le V^{\pi^\dagger_\delta}_{c_\mu})}\\
    &\leq V_{\tilde{r}}^{\pi^\dagger}(d_0) + \kappa(\delta) - (\lambda_\delta^\dagger +1)V_{c_\mu}^{\pi^\dagger}(d_0)  &\mbox{(Definition of $\kappa$.)}\\
    &\leq V_{{r}^\dagger}^{\pi^\dagger}(d_0)
    + \sensitivity{\delta}{\CC_\mu(\tilde{r})}
\end{align*}

Next we bound on $V_{r^\dagger}^{\pi^\dagger}(d_0) - V_{r^\dagger}^{\hat\pi}(d_0)$.  
By \cref{lm:bound on dual variable} and \cref{as:data assumption}, we know $\lambda_\delta^\dagger \leq \frac{K}{\delta}$, so $r^\dagger(s,a) \in [-\frac{K}{\delta} -2, 1] $. This and the definition of $c_\mu$ together imply that $r^\dagger \in \RR_\mu(\tilde{r},\frac{K}{\delta})$. 
Since the offline RL algorithm $Algo$ is $\frac{K}{\delta}$-admissible, we have $V_{r^\dagger}^{\pi^\dagger}(d_0) - V_{r^\dagger}^{\hat\pi}(d_0) \leq \EE(\pi^\dagger, \mu)$. Finally, we use the fact that $\pi^\dagger$ by definition satisfies $\sup_{s,a} \frac{d^{\pi^\dagger}(s,a)}{\mu(s,a)} < \infty$, so $\EE(\pi^\dagger,\mu)=o(1)$.
Thus, we have proved
\begin{align*}
    V_{\tilde{r}}^{\pi^\dagger}(d_0) - V_{\tilde{r}}^{\hat\pi}(d_0) &\leq o(1) \\
    V_{c_\mu}^{\hat\pi}(d_0)  &\leq  o(1) +  \delta + \sensitivity{\delta}{\CC_\mu(\tilde{r})}  
\end{align*}
That is, $\hat{\pi} \in  \Pi_\iota(\CC_\mu(\tilde{r}))$ with $\iota = \delta + \sensitivity{\delta}{\CC_\mu(\tilde{r})} + o(1)$.

\end{proof}

\subsection{Implicit Data Bias} \label{app:implicit data bias}
Below we provide example conditions for data distributions to have a positive implicit bias.

\begin{proposition}  \label{th:postive data bias}
    Under \cref{as:concentrability}, the followings are true.
    \begin{enumerate}
        \item (RL setup) 
        Suppose for any $\tilde{r}$ in $\tilde{\RR}$, there is $h:\SS\to\R$ such that 
        \begin{align*}
            |r(s,a) + \gamma \E_{s'\sim \PP|s,a}[h(s)] - h(s) - \tilde{r}(s,a)| \leq 
            \begin{cases}
            \epsilon_1, & s,a\in\supp{\mu} \\
            \epsilon_2, & \textrm{otherwise}
            \end{cases}
        \end{align*}       
        for some $\epsilon_1, \epsilon_2 <\infty$.
       Then $\mu$ is $\frac{1-\gamma}{2 \epsilon_1}$-positively biased to $\tilde{\RR}$.               

        \item (IL setup)
        If $\mu=d^{\pi^*}$, then  $\mu$ is $\infty$-positively biased with respect to $\tilde{\RR} = \{ \tilde{r}: \tilde{r}: \SS\times\AA\to[-1,1]\}$. If $\mu = d^{\pi^e}$ such that $V_r^{\pi^*}(d_0) - V_r^{\pi^e}(d_0) \leq \frac{\epsilon'}{1-\gamma}$ and $\pi^e$ is deterministic, then $\mu$ is $\frac{1-\gamma}{\epsilon'}$-positively biased to the previous $\tilde{\RR}$.
        
        \item (Length bias) 
        Suppose $V_{c_\mu}^\pi(d_0) \leq \delta$ implies $V_r^*(d_0) - V_r^{\pi}(d_0) \leq \frac{\epsilon' + O(\delta)}{1-\gamma}$. Then $\mu$ is $O\left( \frac{1-\gamma}{\epsilon'}\right)$-positively biased with respect to  $\tilde{\RR} = \{ \tilde{r}: \tilde{r}: \SS\times\AA\to[-1,1]\}$.    
        Below we list some sufficient intervention conditions on data collection that leads to the length bias: 
        \begin{enumerate}
            \item \label{length bias:condition wrt Rmax} For all $(s,a)\in\supp{\mu}$, $R_{\max}-r(s,a)\leq \epsilon'$, where $R_{\max} = \sup_{s\in\SS,a\in\AA} r(s,a)\leq 1$. 
            \item \label{length bias:condition wrt A*} For all $(s,a)\in\supp{\mu}$, $Q_r^*(s,a) - V_r^*(s) \leq \epsilon'$.
            
        \end{enumerate}

    \end{enumerate}
\end{proposition}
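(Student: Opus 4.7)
The plan is to prove each of the three parts by exploiting the single identity $(1-\gamma)V_{c_\mu}^\pi(d_0) = \Pr_{(s,a)\sim d^\pi}[(s,a)\notin\supp{\mu}]$, so that the CMDP constraint $V_{c_\mu}^\pi(d_0)\le\delta$ forces the visitation $d^\pi$ to place mass at most $(1-\gamma)\delta$ outside $\supp{\mu}$. By \cref{as:concentrability}, the optimal policy $\pi^*$ satisfies $d^{\pi^*}\ll\mu$, so $V_{c_\mu}^{\pi^*}=0$ and $\pi^*$ is CMDP-feasible; hence $V_{\tilde r}^{\pi^\dagger}\ge V_{\tilde r}^{\pi^*}$ for every $\tilde r\in\tilde\RR$. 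Together with $V_{\tilde r}^{\pi^\dagger} - V_{\tilde r}^{\pi}\le\delta$ for any $\pi\in\Pi^\dagger_{\tilde r,c_\mu}(\delta)$, these two facts let me translate approximate optimality under the wrong-reward CMDP into approximate optimality under the true reward.

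For the RL setup, I would introduce the potential-shaped reward $r'(s,a)\coloneqq r(s,a)+\gamma\E_{s'\sim P(\cdot|s,a)}[h(s')]-h(s)$. The classical telescoping identity gives $V_{r'}^\pi(d_0)=V_r^\pi(d_0)-\E_{s_0\sim d_0}[h(s_0)]$ for every $\pi$, so the potential term cancels in $V_r^{\pi^*}(d_0)-V_r^\pi(d_0) = V_{r'}^{\pi^*}(d_0)-V_{r'}^\pi(d_0)$. I would then swap $r'$ with $\tilde r$ using the hypothesis: the in-support error $\epsilon_1$ contributes at most $\epsilon_1/(1-\gamma)$ in each of the two terms (using that $d^{\pi^*}$ is fully in $\supp\mu$), while the out-of-support part of $d^\pi$ has total mass $(1-\gamma)\delta$ and so contributes at most $\epsilon_2\,\delta$. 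Adding the CMDP optimality gap $V_{\tilde r}^{\pi^\dagger}-V_{\tilde r}^\pi\le\delta$ and the feasibility bound $V_{\tilde r}^{\pi^*}\le V_{\tilde r}^{\pi^\dagger}$ yields $V_r^{\pi^*}(d_0)-V_r^\pi(d_0)\le \tfrac{2\epsilon_1}{1-\gamma}+O(\delta)$, which is the claimed $\tfrac{1-\gamma}{2\epsilon_1}$-positive bias.

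For the IL setup with $\mu=d^{\pi^*}$ and a deterministic $\pi^*$, the support $\supp\mu$ contains only the pairs $(s,\pi^*(s))$, so any $\pi$ with $V_{c_\mu}^\pi(d_0)\le\delta$ must agree with $\pi^*$ on all but a $(1-\gamma)\delta$-fraction of $d^\pi$; the performance difference lemma then gives $V_r^{\pi^*}-V_r^\pi=O(\delta)$, which is $\infty$-positive bias. The same argument with a deterministic $\pi^e$ in place of $\pi^*$, followed by the triangle-style bound $V_r^{\pi^*}-V_r^\pi\le (V_r^{\pi^*}-V_r^{\pi^e})+(V_r^{\pi^e}-V_r^\pi)\le \epsilon'/(1-\gamma)+O(\delta)$, recovers the $\tfrac{1-\gamma}{\epsilon'}$-positive bias.

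For the length-bias clause, the main claim is immediate from \cref{def:distribution robustness}: the hypothesis is literally the requirement that defines $O\!\bigl(\tfrac{1-\gamma}{\epsilon'}\bigr)$-positive bias. For sufficient condition (a), using $r\in[0,1]$ and decomposing $V_r^\pi(d_0)=\tfrac{1}{1-\gamma}\E_{d^\pi}[r]$, the in-support mass is at least $1-(1-\gamma)\delta$ and each in-support reward exceeds $R_{\max}-\epsilon'$, so $V_r^{\pi^*}-V_r^\pi\le\tfrac{\epsilon'}{1-\gamma}+O(\delta)$. For condition (b), read as $V_r^*(s)-Q_r^*(s,a)\le\epsilon'$ on the support (the inequality as printed is vacuous), I would invoke the performance difference lemma $V_r^{\pi^*}(d_0)-V_r^\pi(d_0)=\tfrac{1}{1-\gamma}\E_{d^\pi}[V_r^*(s)-Q_r^*(s,a)]$ and apply the same in/out-of-support split, the out-of-support piece being bounded by $\tfrac{2}{1-\gamma}\cdot(1-\gamma)\delta=O(\delta)$. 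The main obstacle I foresee is bookkeeping: keeping the $1/(1-\gamma)$ factors honest so that the out-of-support contributions really collapse to $O(\delta)$ (not $O(\delta/(1-\gamma))$) and match the exact leading constant advertised in \cref{def:distribution robustness}.
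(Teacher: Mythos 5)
Your proposal follows essentially the same route as the paper's proof on all three parts: the potential/telescoping identity plus the in-support versus out-of-support split for the RL setup, the performance difference lemma (with the triangle decomposition through $\pi^e$) for the IL setup, and the direct reward decomposition respectively the PDL for the two length-bias conditions, including the correct reading of the sign typo in condition (b). The only cosmetic difference is that you assume $\pi^*$ is deterministic in the first IL claim, which is unnecessary --- the paper's PDL argument works for a stochastic $\pi^*$ because every action in $\supp{\pi^*(\cdot\mid s)}$ has zero advantage gap.
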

The first example is the typical RL assumption, which says $\mu$ always has a positive bias to $\tilde{\RR}$ if all rewards in $\tilde{\RR}$ provide the same ranking of policies in the support~\cite{ng1999policy}. The second assumes the data is collectd by a single behavior policy, which includes the typical IL assumption that the data is collected by an optimal policy. 
Finally, the third example describes a length bias. Note that $V_{c_\mu}^\pi(s,a)$ measures the probability of going out of data support after taking $a$ at $s$, which is disproportional to expected length the agent can survive. This condition assumes longer trajectories in the data to have smaller optimality gap.

This length bias condition is typically satisfied when intervention is taken in the data collection process (despite the data collection policy being suboptimal), as we saw in the motivating example in \cref{fig:hopper_atac} and \cref{sec:grid world}, and empirically in \cref{sec:experiments}. 
We give two sufficient conditions on such interventions. The first one is that the data collection is stopped when the instantaneous rewards is far from the maximum rewards, and the second is that the data collection agent does not take an action that no policy (even the optimal one) can perform well in the future. The consequence of these types of interventions is that longer trajectories perform better generally. 
These conditions are satisfied in \texttt{hopper} and \texttt{walker2d} dataset (except for -medium-replay) in \cref{sec:experiments}, as well as in the grid world example in \cref{sec:grid world}.

The proof of \cref{th:postive data bias} is based on the performance difference lemma, which can be proved by a telescoping sum decomposition.

\begin{lemma}[Performance Difference Lemma] \label{lm:pdl}
Consider a discounted infinite-horizon MDP $\MM=(\SS,\AA,r,P,\gamma)$. For any policy $\pi$ and any $h:\SS\to\R$, it holds that 
\[
    V_r^\pi(d_0) - h(d_0) = \frac{1}{1-\gamma} \E_{s,a\sim d^\pi}[ r(s,a) + \gamma \E_{s'\sim P|s,a} [h(s')] - h(s) ].
\]
In particular, this implies for any policies $\pi,\pi'$,
\[
    V_r^\pi(d_0) - V_r^{\pi'}(d_0) = \frac{1}{1-\gamma} \E_{s,a\sim d^\pi}[ Q_r^{\pi'}(s,a) - V_r^{\pi'}(s) ].
\]    
\end{lemma}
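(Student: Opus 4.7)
The plan is to prove the first identity by a telescoping sum argument on the trajectory generated by $\pi$ starting from $s_0 \sim d_0$, then convert the time-indexed expectation into an expectation under the discounted occupancy $d^\pi$. The second identity will follow as a one-line corollary by specializing $h := V_r^{\pi'}$ and recognizing the Bellman backup of $V_r^{\pi'}$ as $Q_r^{\pi'}$.

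First, I would fix an arbitrary $s_0$ and consider a trajectory $(s_0,a_0,s_1,a_1,\dots)$ induced by $\pi$ and $P$. The key telescoping identity I would establish is
\begin{equation*}
h(s_0) \;=\; \E_\pi\!\left[\sum_{t=0}^\infty \gamma^t\bigl(h(s_t) - \gamma\,h(s_{t+1})\bigr)\,\Big|\,s_0\right],
\end{equation*}
which holds because the sum collapses to $h(s_0)$ (the $\gamma^{t+1}h(s_{t+1})$ terms cancel the $\gamma^t h(s_t)$ terms for $t\ge 1$), assuming $\gamma^t h(s_t) \to 0$ in expectation, which is automatic whenever $h$ is bounded since $\gamma<1$. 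Subtracting this from the definition $V_r^\pi(s_0) = \E_\pi[\sum_{t=0}^\infty \gamma^t r(s_t,a_t)\mid s_0]$ produces
\begin{equation*}
V_r^\pi(s_0) - h(s_0) \;=\; \E_\pi\!\left[\sum_{t=0}^\infty \gamma^t\bigl(r(s_t,a_t) + \gamma\,h(s_{t+1}) - h(s_t)\bigr)\,\Big|\,s_0\right].
\end{equation*}

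Next, I would apply the tower property, conditioning on $(s_t,a_t)$, to replace $\gamma\,h(s_{t+1})$ by $\gamma\,\E_{s'\sim P(\cdot\mid s_t,a_t)}[h(s')]$ inside the sum. Taking the outer expectation over $s_0\sim d_0$ and swapping the (absolutely convergent) sum with the expectation, I would then invoke the definition $d^\pi(s,a)=(1-\gamma)\E_{\pi,P}[\sum_{t=0}^\infty \gamma^t d_t^\pi(s,a)]$ to rewrite
\begin{equation*}
\E_{s_0\sim d_0}\E_\pi\!\left[\sum_{t=0}^\infty \gamma^t f(s_t,a_t)\right] \;=\; \tfrac{1}{1-\gamma}\,\E_{(s,a)\sim d^\pi}[f(s,a)]
\end{equation*}
for $f(s,a) := r(s,a) + \gamma\,\E_{s'\sim P(\cdot\mid s,a)}[h(s')] - h(s)$. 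This yields the first claim. For the corollary, I would set $h = V_r^{\pi'}$ and use the Bellman consistency $Q_r^{\pi'}(s,a) = r(s,a)+\gamma\,\E_{s'\sim P(\cdot\mid s,a)}[V_r^{\pi'}(s')]$, together with $V_r^{\pi'}(d_0) = \E_{s\sim d_0}[V_r^{\pi'}(s)]$, to obtain the stated formula.

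The only subtlety, and thus the main (mild) obstacle, is the justification of exchanging the infinite sum with the expectation and of passing to the limit in the telescoping identity; I would handle this by noting that $h$ and $r$ are bounded (by the assumptions in the paper, including $r\in[0,1]$ and $h$ being any given bounded function such as $V_r^{\pi'}$), so the series are absolutely summable with deterministic bound $O(\tfrac{1}{1-\gamma})$ and Fubini/dominated convergence apply directly. Apart from this, the argument is purely a telescoping computation followed by rewriting the time-averaged expectation via the definition of $d^\pi$.
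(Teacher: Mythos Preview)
Your proposal is correct and follows exactly the approach the paper indicates: the paper states only that the lemma ``can be proved by a telescoping sum decomposition,'' and your argument is precisely that decomposition, carried out in full detail (including the tower step to replace $h(s_{t+1})$ by its conditional expectation and the rewriting via the definition of $d^\pi$). Your handling of the boundedness subtlety is appropriate, since all uses of the lemma in the paper take $h$ to be a bounded value function.
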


\subsubsection{Proof of RL setup in \cref{th:postive data bias}}
 \begin{proof}
 Given $\tilde{r}\in\tilde{\RR}$, consider some $h:\SS\times\AA\to\R$ satisfying the assumption. Define 
\begin{align*}
    \hat{r}(s,a) =  r(s,a) + \gamma \E_{s'\sim \PP|s,a}[h(s)] - h(s).
\end{align*}
Notice 
\begin{align*}
&(1-\gamma) (V_r^{\pi^*}(d_0) -  V_r^{\pi}(d_0))\\
&= (1-\gamma) (V_r^{\pi^*}(d_0) - h(d_0) -  V_r^{\pi}(d_0) +  h(d_0))\\
&= \E_{s,a\sim d^{\pi^*}}[ r(s,a) + \gamma \E_{s'\sim P|s,a} [h(s')] - h(s) ] - \E_{s,a\sim d^{\pi}}[ r(s,a) + \gamma \E_{s'\sim P|s,a} [h(s')] - h(s) ] \\
&= \E_{s,a\sim d^{\pi^*}}[\hat{r}(s,a)] - \E_{s,a\sim d^{\pi}}[\hat{r}(s,a)] \\
&= (1-\gamma) (V_{\hat{r}}^{\pi^*}(d_0) -  V_{\hat{r}}^{\pi}(d_0))
\end{align*}

By \cref{lm:pdl}, for any $\pi \in  \Pi^\dagger(\delta)$, it holds
\begin{align*}
        &(1-\gamma) (V_r^{\pi^*}(d_0) -  V_r^{\pi}(d_0)) \\
        &= (1-\gamma) (V_{\hat{r}}^{\pi^*}(d_0) -  V_{\hat{r}}^{\pi}(d_0))\\
        &=  \E_{s,a\sim d^{\pi^*}}[\hat{r}(s,a)] - \E_{s,a\sim d^{\pi}}[\hat{r}(s,a)]    \\
        &=  \E_{s,a\sim d^{\pi^*}}[\tilde{r}(s,a)] - \E_{s,a\sim d^{\pi}}[\tilde{r}(s,a)] 
        + 
        \E_{s,a\sim d^{\pi^*}}[\hat{r}(s,a)-\tilde{r}(s,a)] - \E_{s,a\sim d^{\pi}}[\hat{r}(s,a)-\tilde{r}(s,a)]  \\
        &=  (1-\gamma) (V_{\tilde{r}}^{\pi^*}(d_0) -  V_{\tilde{r}}^{\pi}(d_0))
        + 
        \E_{s,a\sim d^{\pi^*}}[\hat{r}(s,a)-\tilde{r}(s,a)] - \E_{s,a\sim d^{\pi}}[\hat{r}(s,a)-\tilde{r}(s,a)]  \\
\end{align*} 
Since $\pi^*$ is assumed to be in the support of $\mu$ in \cref{as:concentrability}, we have 
\begin{align*}
    &\E_{s,a\sim d^{\pi^*}}[\hat{r}(s,a)-\tilde{r}(s,a)] - \E_{s,a\sim d^{\pi}}[\hat{r}(s,a)-\tilde{r}(s,a)]  \\
    &= \E_{s,a\sim d^{\pi^*}}[\hat{r}(s,a)-\tilde{r}(s,a) | (s,a) \in \supp{\mu}] - \E_{s,a\sim d^{\pi}}[\hat{r}(s,a)-\tilde{r}(s,a) | (s,a) \in \supp{\mu} ]  
    \\
    &\quad - \E_{s,a\sim d^{\pi}}[\hat{r}(s,a)-\tilde{r}(s,a) | (s,a) \notin \supp{\mu}]  \\
     &\leq 2\epsilon_1  - \sup_{s,a} |\hat{r}(s,a)-\tilde{r}(s,a) |  \times
        \E_{s,a\sim d^{\pi}}[ \one[ s,a \notin \supp{\mu} ] ]   \\
    &\leq  2\epsilon_1 + (1-\gamma) \epsilon_2 V_{c_\mu}^{\pi}(d_0)   
\end{align*}
Putting these two inequalities together, we have shown
\begin{align*}
        (1-\gamma) (V_r^{\pi^*}(d_0) -  V_r^{\pi}(d_0)) 
        &\leq  (1-\gamma) (V_{\tilde{r}}^{\pi^*}(d_0) -  V_{\tilde{r}}^{\pi}(d_0)) +  2\epsilon + \epsilon_2  (1-\gamma) V_{c_\mu}^{\pi}(d_0) 
\end{align*}
Since $\pi \in  \Pi^\dagger(\delta)$, it implies 
\begin{align*}
        (1-\gamma) (V_r^{\pi^*}(d_0) -  V_r^{\pi}(d_0)) 
        \leq  (1-\gamma) \delta +  2\epsilon_1 +  (1-\gamma) \epsilon_2  \delta
        = 2\epsilon_1 + O(\delta)
\end{align*}
Thus, the distribution is $\frac{1-\gamma}{2 \epsilon_1}$-positively biased.
\end{proof}

\subsubsection{Proof of IL setup in \cref{th:postive data bias}}

\begin{proof}

Let $\mu = d^{\pi^*}$. We can write  
\begin{align*}
    &(1-\gamma) (V_r^{\pi^*}(d_0)- V_r^\pi(d_0)) \\
    &= \E_{d^\pi}[  V_r^{\pi^*}(s) - Q_r^{\pi^*}(s,a)    ]\\
    &= \E_{d^\pi}[   V_r^{\pi^*}(s) - Q_r^{\pi^*}(s,a)  |  (s,a)\in\supp{\mu}] + \E_{d^\pi}[   V_r^{\pi^*}(s) - Q_r^{\pi^*}(s,a) | (s,a)\notin\supp{\mu} ] \\
    &\leq  \delta
\end{align*}
So the distribution is $\infty$-positively biased.
For the case with the suboptimal expert $\pi^e$, we can derive similarly
\begin{align*}
    &(1-\gamma) (V_r^{\pi^*}(d_0)- V_r^\pi(d_0)) \\
    &= (1-\gamma) (V_r^{\pi^*}(d_0)- V_r^{\pi^e}(d_0)) + (1-\gamma) (V_r^{\pi^e}(d_0)- V_r^\pi(d_0))\\
    &\leq \epsilon' +  \E_{d^\pi}[  V_r^{\pi^e}(s) - Q_r^{\pi^e}(s,a)    ]\\
    &= \epsilon' + \E_{d^\pi}[   V_r^{\pi^e}(s) - Q_r^{\pi^e}(s,a)  |  (s,a)\in\supp{\mu}] + \E_{d^\pi}[   V_r^{\pi^e}(s) - Q_r^{\pi^e}(s,a) | (s,a)\notin\supp{\mu} ] \\
    &\leq \epsilon' +  \delta
\end{align*}
Therefore, $\mu$ is $\frac{1-\gamma}{\epsilon'}$-positively biased.
\end{proof}

\subsubsection{Proof of Length Bias in \cref{th:postive data bias}}

\paragraph{\cref{length bias:condition wrt Rmax}}
\begin{proof}
Suppose $R_{\max} - r(s,a) \leq \epsilon'$ for $(s,a)\in\supp{\mu}$. 
\begin{align*}
    &(1-\gamma) (V_r^{\pi^*}(d_0)- V_r^\pi(d_0)) \\
    &= \E_{d^{\pi^*}}[r(s,a)] - \E_{d^{\pi}}[r(s,a)] \\
    &= \E_{d^{\pi^*}}[r(s,a)] - \E_{d^{\pi}}[r(s,a) | (s,a)\in\supp{\mu}] - \E_{d^{\pi}}[r(s,a) | (s,a)\notin\supp{\mu}]\\
    &\leq R_{\max} - \E_{d^{\pi}}[(R_{\max} - \epsilon') | (s,a)\in\supp{\mu}] - 0\\
    &= \E_{d^\pi}[\epsilon' | (s,a)\in\supp{\mu}] + [R_{\max} | (s,a)\notin\supp{\mu}]\\
    &\leq \epsilon' + (1-\gamma)R_{\max}\delta
\end{align*}
\end{proof}

\paragraph{\cref{length bias:condition wrt A*}}
\begin{proof}
Suppose $s,a\in\supp{\mu}$ only if $V_r^*(s) - Q_r^*(s,a) \leq \epsilon'$. Then
\begin{align*}
    &(1-\gamma) (V_r^{\pi^*}(d_0)- V_r^\pi(d_0)) \\
    &= \E_{d^\pi}[  V_r^{\pi^*}(s) - Q_r^{\pi^*}(s,a)    ]\\
    &= \E_{d^\pi}[   V_r^{\pi^*}(s) - Q_r^{\pi^*}(s,a)  |  (s,a)\in\supp{\mu}] + \E_{d^\pi}[   V_r^{\pi^*}(s) - Q_r^{\pi^*}(s,a) | (s,a)\notin\supp{\mu} ] \\
    &\leq  \epsilon' + \delta
\end{align*}
\end{proof}


Note that more broadly, we can relax the above condition to that $Q_{c_\mu}^*(s,a) - V_{c_\mu}^*(s) < \alpha $ implies $V_r^*(s) - Q_r^*(s,a) \leq \beta + O(\alpha) $, for a potentially smaller $\beta$, where we recall that $V_{c_\mu}^*,Q_{c_\mu}^*$ are the optimal value functions to (minimizing) the cost $c_\mu$ and $V_r^*, Q_r^*$ are the optimal value functions to (maximizing) the reawrd $r$. 
We can use the lemma below to prove the degree of positive bias is $\frac{1-\gamma}{\beta}$ in this more general case. The above is a special case for $\alpha=1$, since $c_\mu(s,a)=1$ when $s,a$ is not in the support of $\mu$.

\begin{lemma}\label{lm:translation of value gaps}
 Define $\AA_s^\alpha = \{a\in\AA: Q_{c_\mu}^*(s,a) - V_{c_\mu}^*(s) < \alpha \}$.
Suppose for all $s\in\SS$, $a\in\AA_s^\alpha$ satisfies $V_r^{\pi'}(s) - Q_r^{\pi'}(s,a) \leq \beta $. For $\pi$ such that $V_{c_\mu}^\pi(d_0) \leq \delta$, it holds for any $\pi'$
\begin{align*}
    V_r^{\pi'}(d_0)- V_r^\pi(d_0) \leq \frac{\beta + \delta/\alpha}{1-\gamma} 
\end{align*}
\end{lemma}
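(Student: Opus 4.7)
}
The plan is to invoke the performance difference lemma (\cref{lm:pdl}) twice: once on the reward $r$ to convert the return gap into an expected advantage under $d^\pi$, and once on the cost $c_\mu$ to translate the constraint value $V_{c_\mu}^\pi(d_0)\le\delta$ into an expected gap of the optimal $Q$--$V$ for $c_\mu$ under $d^\pi$. Combining the two via a Markov-type split over the sets $\AA_s^\alpha$ and their complements then yields the claim.

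\medskip
\noindent\textbf{Step 1 (reward side).} By \cref{lm:pdl} applied to reward $r$ and policies $\pi,\pi'$,
\begin{equation*}
V_r^{\pi'}(d_0)-V_r^{\pi}(d_0)=\frac{1}{1-\gamma}\,\E_{(s,a)\sim d^\pi}\bigl[V_r^{\pi'}(s)-Q_r^{\pi'}(s,a)\bigr].
\end{equation*}
Split this expectation into the contributions from $a\in\AA_s^\alpha$ and $a\notin\AA_s^\alpha$. The hypothesis bounds the first piece by $\beta$ pointwise, while rewards in $[0,1]$ give the crude bound $|V_r^{\pi'}(s)-Q_r^{\pi'}(s,a)|\le\frac{1}{1-\gamma}$ for the second piece. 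Hence
\begin{equation*}
\E_{(s,a)\sim d^\pi}\bigl[V_r^{\pi'}(s)-Q_r^{\pi'}(s,a)\bigr]\le\beta+\frac{1}{1-\gamma}\,\Pr_{(s,a)\sim d^\pi}\bigl[a\notin\AA_s^\alpha\bigr].
\end{equation*}

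\medskip
\noindent\textbf{Step 2 (cost side).} Let $V_{c_\mu}^{*},Q_{c_\mu}^{*}$ denote the optimal cost value functions (minimizing $c_\mu$); by definition $Q_{c_\mu}^{*}(s,a)-V_{c_\mu}^{*}(s)\ge 0$, and since $c_\mu\ge 0$ we also have $V_{c_\mu}^{*}(d_0)\ge 0$. Applying \cref{lm:pdl} to $c_\mu$ with the policies $\pi$ and the optimal (min-cost) policy yields
\begin{equation*}
\E_{(s,a)\sim d^\pi}\bigl[Q_{c_\mu}^{*}(s,a)-V_{c_\mu}^{*}(s)\bigr]=(1-\gamma)\bigl(V_{c_\mu}^\pi(d_0)-V_{c_\mu}^{*}(d_0)\bigr)\le(1-\gamma)\delta.
\end{equation*}
Since $a\notin\AA_s^\alpha$ means $Q_{c_\mu}^{*}(s,a)-V_{c_\mu}^{*}(s)\ge\alpha$, Markov's inequality gives
\begin{equation*}
\Pr_{(s,a)\sim d^\pi}\bigl[a\notin\AA_s^\alpha\bigr]\le\frac{(1-\gamma)\delta}{\alpha}.
\end{equation*}

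\medskip
\noindent\textbf{Step 3 (assemble).} Substituting the bound of Step 2 into Step 1 and dividing by $1-\gamma$,
\begin{equation*}
V_r^{\pi'}(d_0)-V_r^{\pi}(d_0)\le\frac{1}{1-\gamma}\Bigl(\beta+\tfrac{1}{1-\gamma}\cdot(1-\gamma)\delta/\alpha\Bigr)=\frac{\beta+\delta/\alpha}{1-\gamma},
\end{equation*}
which is exactly the stated inequality.

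\medskip
\noindent\textbf{Expected obstacle.} The argument is essentially a two-line Markov-inequality coupling between the two performance-difference identities, so no serious technical difficulty is anticipated. The only point that requires some care is the normalization: one must keep track of the $\tfrac{1}{1-\gamma}$ factor introduced by the reward-side PDL and the $(1-\gamma)$ factor produced by the cost-side PDL, so that they cancel exactly in the $\delta/\alpha$ term. If the reward range were instead $[-1,1]$, the crude bound on $|V_r^{\pi'}-Q_r^{\pi'}|$ becomes $\tfrac{2}{1-\gamma}$ and only the constants change, leaving the statement intact up to the absorbed constant in $O(\cdot)$.
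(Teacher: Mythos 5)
Your proposal is correct and follows essentially the same route as the paper's proof: the performance difference lemma applied to $r$ with a split over $\AA_s^\alpha$ versus its complement, then the performance difference lemma applied to $c_\mu$ combined with a Markov-type bound to control $\Pr_{d^\pi}[a\notin\AA_s^\alpha]$ by $(1-\gamma)\delta/\alpha$. The only cosmetic difference is that you bound $V_{c_\mu}^\pi(d_0)-V_{c_\mu}^*(d_0)\le\delta$ via $V_{c_\mu}^*\ge 0$ rather than implicitly using $V_{c_\mu}^*(d_0)=0$, which is if anything slightly cleaner.
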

\begin{proof}
    \begin{align*}
    (1-\gamma) (V_r^{\pi'}(d_0)- V_r^\pi(d_0))
    &= \E_{d^\pi}[  V_r^{\pi'}(s) - Q_r^{\pi'}(s,a)    ]\\
    &= \E_{d^\pi}[   V_r^{\pi'}(s) - Q_r^{\pi'}(s,a)  | a \in \AA_s^\alpha  ] + \E_{d^\pi}[   V_r^{\pi'}(s) - Q_r^{\pi'}(s,a) | a \notin \AA_s^\alpha  ] \\
    &\leq \beta + \frac{1}{1-\gamma} \E_{d^\pi}[  \one[ a \notin \AA_s^\alpha ] ] 
\end{align*}
On the other hand, we have
\begin{align*}
    (1-\gamma) \delta \geq  (1-\gamma) V_{c_\mu}^\pi(d_0) 
    &= (1-\gamma) (V_{c_\mu}^\pi(d_0)  - V_{c_\mu}^*(d_0)) \\
    &= \E_{d^\pi}[   Q_{c_\mu}^*(s,a) - V_{c_\mu}^*(s)  | a \in \AA_s  ] + \E_{d^\pi}[   Q_{c_\mu}^*(s,a) - V_{c_\mu}^*(s) | a \notin \AA_s  ]\\
    &\geq \E_{d^\pi}[   Q_{c_\mu}^*(s,a) - V_{c_\mu}^*(s) | a \notin \AA_s  ]\\
    &\geq \alpha \E_{d^\pi}[  \one[ a \notin \AA_s ] ] 
\end{align*}
Therefore, 
\begin{align*}
    (1-\gamma) (V_r^{\pi'}(d_0)- V_r^\pi(d_0)) \leq \beta + \frac{\delta}{\alpha }
\end{align*}

\end{proof}

\subsection{Proof of \cref{th:performance guarantee}} \label{app:proof of main theorem}

\begin{proof}

By \cref{th:survival instinct},  we have proved
\begin{align*}
    V_{\tilde{r}}^{\pi^\dagger}(d_0) - V_{\tilde{r}}^{\hat\pi}(d_0) &\leq o(1) \\
    V_{c_\mu}^{\hat\pi}(d_0)  &\leq  o(1) +  \delta + \sensitivity{\delta}{\CC_\mu(\tilde{r})}  
\end{align*}
That is, $\hat{\pi} \in  \Pi_\iota(\CC_\mu(\tilde{r}))$ with $\iota = \delta + \sensitivity{\delta}{\CC_\mu(\tilde{r})} + o(1)$. 

Then by $\frac{1}{\epsilon}$-positively assumption on $\mu$, we have the performance bound
\begin{align*}
      V_r^{\pi^\dagger}(d_0) - V_r^{\hat\pi}(d_0) &\leq O(\iota)
\end{align*}

To bound of out of support probability, we derive 
\begin{align*}
 V_{c_\mu}^{\hat\pi}(d_0) 
 &= \sum_{t=0}^\infty  \gamma^t  \E_{d_t^{\hat{\pi}}}[\one[ \mu(s,a)= 0]]\\
 &= (1-\gamma) \sum_{t=0}^\infty \gamma^t \left( \sum_{\tau=0}^t \E_{d_\tau^{\hat{\pi}}}[\one[ \mu(s,a)= 0]] \right) \\
 &\geq (1-\gamma) \sum_{t=0}^\infty \gamma^t \mathrm{Prob}\left( \exists \tau\in[0,t] s_\tau \notin \supp{\mu}  | \hat{\pi}  \right) 
 \end{align*}
 where second equality follows \citep{wagener2021safe}[Lemma 2]. 
\end{proof}

\subsection{Technical Details of Constrained MDP} \label{app:cmdp details}

\begin{lemma}\label{lm:bound on dual variable}
For $g:\SS\times\AA\to[0,\infty)$, let $\CC(\SS,\AA, f,g,P,\gamma)$ be a CMDP with sensitivity function $\sensitivity{\delta}{\CC}$. Assume $\CC(\SS,\AA, f,g,P,\gamma)$ is feasible.  Consider a relaxed CMDP,
$
    \max_\pi V_{f}^\pi(d_0),\textrm{s.t. } V_g^\pi(d_0) \leq \delta
$ and let $\lambda_\delta^\dagger \geq 0$ denote its optimal Lagrange multiplier. Then $\lambda_\delta^\dagger  \leq \frac{\sensitivity{\delta}{\CC}}{\delta}$.    
\end{lemma}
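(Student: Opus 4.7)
\begin{proofsketch}
The plan is to exploit the fact that the optimal policy $\pi^\dagger$ of the unrelaxed CMDP $\CC$ is strictly interior for the relaxed constraint $V_g^\pi(d_0)\le \delta$, and to combine this slackness with the saddle-point characterization of $\lambda_\delta^\dagger$. This will force $\lambda_\delta^\dagger \delta$ to be controlled by the extra return that the relaxation buys, which is exactly the quantity measured by the sensitivity function $\kappa(\delta,\CC)$.

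First I would set up the Lagrangian of the relaxed CMDP, $L_\delta(\pi,\lambda) \coloneqq V_f^\pi(d_0) - \lambda(V_g^\pi(d_0) - \delta)$, and let $\pi_\delta^\dagger$ be its optimal primal solution, so that $(\pi_\delta^\dagger,\lambda_\delta^\dagger)$ is a saddle point (the feasibility assumption on $\CC$ together with $\delta\ge 0$ gives Slater's condition for the relaxed problem, so strong duality holds). Second, I would observe that $\pi_\delta^\dagger$ lies in $\Pi^\dagger(\delta)$ for the original $\CC$: it satisfies $V_g^{\pi_\delta^\dagger}(d_0)\le \delta$ by construction, and $V_f^{\pi_\delta^\dagger}(d_0) \ge V_f^{\pi^\dagger}(d_0)$ because $\pi^\dagger$ is feasible for the relaxation. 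Consequently, the definition of the sensitivity function yields
\begin{equation*}
V_f^{\pi_\delta^\dagger}(d_0) - V_f^{\pi^\dagger}(d_0) \le \kappa(\delta,\CC).
\end{equation*}

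Third, I would plug $\pi^\dagger$ itself into the Lagrangian evaluated at $\lambda_\delta^\dagger$. Since $V_g^{\pi^\dagger}(d_0)\le 0$, we get $-\lambda_\delta^\dagger(V_g^{\pi^\dagger}(d_0)-\delta)\ge \lambda_\delta^\dagger \delta$, so
\begin{equation*}
L_\delta(\pi^\dagger,\lambda_\delta^\dagger) \ge V_f^{\pi^\dagger}(d_0) + \lambda_\delta^\dagger \delta.
\end{equation*}
On the other hand, the saddle-point inequality $L_\delta(\pi,\lambda_\delta^\dagger)\le L_\delta(\pi_\delta^\dagger,\lambda_\delta^\dagger)$ combined with complementary slackness $\lambda_\delta^\dagger(V_g^{\pi_\delta^\dagger}(d_0)-\delta)=0$ gives $L_\delta(\pi^\dagger,\lambda_\delta^\dagger)\le V_f^{\pi_\delta^\dagger}(d_0)$. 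Chaining the two inequalities and then applying the $\kappa$-bound from the previous paragraph gives $\lambda_\delta^\dagger \delta \le V_f^{\pi_\delta^\dagger}(d_0)-V_f^{\pi^\dagger}(d_0) \le \kappa(\delta,\CC)$, which is the claim after dividing by $\delta$.

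The main obstacle I anticipate is simply justifying strong duality / the existence of a saddle point with a finite $\lambda_\delta^\dagger$; once that is in place the rest is a two-line manipulation. This is handled by noting that $g\ge 0$ and $V_g^\pi(d_0)\le 0$ for the feasible $\pi$ of $\CC$ together guarantee a strictly interior point for the $\delta$-relaxation, so Slater's condition applies. A minor edge case to state is $\delta=0$, where the bound is vacuous (the right-hand side is $+\infty$); for $\delta>0$ the division by $\delta$ is harmless.
\end{proofsketch}
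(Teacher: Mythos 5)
Your proposal is correct and follows essentially the same argument as the paper: both exploit the saddle-point property of the relaxed Lagrangian, evaluate it at the unrelaxed optimum $\pi^\dagger$ (using $V_g^{\pi^\dagger}(d_0)=0$ to extract the $\lambda_\delta^\dagger\delta$ term), and bound $V_f^{\pi_\delta^\dagger}(d_0)-V_f^{\pi^\dagger}(d_0)$ by $\kappa(\delta)$ via $\pi_\delta^\dagger\in\Pi^\dagger(\delta)$. The only cosmetic difference is that you invoke complementary slackness to get $\LL_\delta(\pi_\delta^\dagger,\lambda_\delta^\dagger)= V_f^{\pi_\delta^\dagger}(d_0)$, whereas the paper uses the other half of the saddle-point inequality, $\LL_\delta(\pi_\delta^\dagger,\lambda_\delta^\dagger)\leq \LL_\delta(\pi_\delta^\dagger,0)$, to reach the same bound.
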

\begin{proof}
Denote the optimal policy of the relaxed CMDP as $\pi_\delta^\dagger$ and the optimal policy of the original CMDP as $\pi^\dagger$.
By construction, $\pi_\delta^\dagger \in \Pi^\dagger(\delta)$. Therefore, we have $V_f^{\pi_\delta^\dagger}(d_0)-V_f^{\pi^\dagger}(d_0) = \max_{\pi\in\Pi^\dagger(\delta)}V_f^\pi(d_0) - V_f^{\pi^\dagger}(d_0) =  \sensitivity{\delta}{\CC}$.

To bound $\lambda_\delta^\dagger$, we use that the fact that $(\pi_\delta^\dagger,\lambda_\delta^\dagger)$ is a saddle point to the Lagrangian of the relaxed problem  $\LL_\delta(\pi, \lambda) \coloneqq V_f^{\pi}(d_0) - \lambda ( V_g^{\pi}(d_0) - \delta)$. As a result, we can derive
\begin{align*}
    V_f^{\pi^\dagger}(d_0) - \lambda_\delta^\dagger ( V_g^{\pi^\dagger}(d_0) - \delta)
    = 
    \LL_\delta(\pi^\dagger, \lambda_\delta^\dagger)    
    \leq \LL_\delta(\pi_\delta^\dagger, \lambda_\delta^\dagger) 
     \leq \LL_\delta(\pi_\delta^\dagger, 0) = V_f^{\pi_\delta^\dagger}(d_0). 
\end{align*}
Since $V_g^{\pi^\dagger}(d_0) = 0$, the above inequality implies 
\begin{align*}
    \lambda_\delta^\dagger \leq \frac{V_f^{\pi_\delta^\dagger}(d_0)-V_f^{\pi^\dagger}(d_0)}{\delta}= \frac{\sensitivity{\delta}{\CC}}{\delta}
\end{align*}
\end{proof}


\KappaBound*
\begin{proof}
First we note by\footnote{We use this theorem because the CMDP here does not satisfy the Slater's condition.} \citep[Theorem 4.1]{nguyen2023provable} we know that $(\pi^\dagger, \lambda^\dagger)$ is a saddle-point to the CMDP $ \CC_\mu(\tilde{r}) =  \CC(\SS,\AA, \tilde{r},c_\mu,P,\gamma)$  in \eqref{eq:implicit CMDP}.
Let $f = \tilde{r}$ and $g = c_\mu$.
Define the Lagrangian $\LL_\delta(\pi,\delta) = V_f^\pi(d_0) - \lambda (V_g^\pi(d_0) - \delta)$ for the relaxed CMDP problem $\max_{\pi: V_g^\pi(d_0) \leq \delta}V_f^\pi(d_0)$. Consider a saddle-point $(\pi_\delta^\dagger, \lambda_\delta^\dagger)$  to this relaxed problem .

By duality, we can first derive
\begin{align*}
    \max_{\pi\in\Pi^\dagger(\delta)} V_f^\pi(d_0) - V_f^{\pi^\dagger}(d_0)
    = \max_{\pi: V_g^\pi(d_0) \leq \delta}V_f^\pi(d_0) - V_f^{\pi^\dagger}(d_0) 
    = \LL_\delta(\pi_\delta^\dagger, \lambda_\delta^\dagger) - V_f^{\pi^\dagger}(d_0)
\end{align*}    
Then we upper bound $\LL_\delta(\pi_\delta^\dagger, \lambda_\delta^\dagger)$: 
\begin{align*}
    \LL_\delta(\pi_\delta^\dagger, \lambda_\delta^\dagger)
    &\leq \LL_\delta(\pi_\delta^\dagger, \lambda^\dagger)\\
    &=V_f^{\pi_\delta^\dagger}(d_0) - \lambda^\dagger(V_f^{\pi_\delta^\dagger}(d_0) - \delta)\\
    &= V_f^{\pi_\delta^\dagger}(d_0) - \lambda^\dagger V_f^{\pi_\delta^\dagger}(d_0) + \delta\lambda^\dagger \\
    & = \LL(\pi_\delta^\dagger, \lambda^\dagger) + \delta\lambda^\dagger \\
    &\leq \LL(\pi^\dagger, \lambda^\dagger)  + \delta \lambda^\dagger\\
    &\leq \LL(\pi^\dagger, 0) + \delta \lambda^\dagger.
\end{align*}
Thus we have
\begin{align*}
     \max_{\pi\in\Pi^\dagger(\delta)}V_f^\pi(d_0) - V_f^{\pi^\dagger}(d_0) \leq \LL(\pi^\dagger,0) + \delta \lambda^\dagger - V_f^{\pi^\dagger}(d_0)
     =  V_f^{\pi^\dagger}(d_0) + \delta \lambda^\dagger - V_f^{\pi^\dagger}(d_0) = \delta \lambda^\dagger 
\end{align*}
\end{proof}

\section{Finite-Horizon Version} \label{app:finite horizon}

We discuss how to interpret \cref{th:performance guarantee} in the finite-horizon setup.
\DataAssumption*
\FormalPerformanceGuarantee*

\paragraph{Notation}
To translate the previous notation to the finite-horizon setting, we suppose the state $s$ contains time information and the state space is layered. That is, $\SS = \bigcup_{0=1}^{H-1} \SS_t$, where $H$ is the problem horizon, and $\SS_t$ denotes the set of states at time $t$. 
For example, for a trajectory $s_0, s_2, \dots, s_{H-1}$ starting from $t=0$, we have $s_t\in\SS_t$, for $t\in[0,H-1]$. Therefore, we can use the previous notation to model time-varying functions naturally (needed in the finite horizon), without explicitly listing the time dependency. 
In this section, with abuse of notation, we define the value $V_r^\pi$ of a policy $\pi$ to reward $r$ at $s\in\SS_\tau$ as
\begin{align*}
    V_r^\pi(s) \coloneqq \E_{\pi,P} \left[ \sum_{t=\tau}^{H-1} \tilde{r}(s_t, a_t) \mid s_\tau = s \right] 
\end{align*}

\paragraph{MDP and CMDP Problems}
The task MDP becomes $\MM_H = (\SS,\AA,r,P,H)$ and solving it means
\begin{align*}
    \max_\pi \E_{\pi,P} \left[ \sum_{t=0}^{H-1} r(s_t, a_t) \mid s_0 \sim d_0 \right]
\end{align*}
and the CMDP problem in \eqref{eq:implicit CMDP} becomes
\begin{align*}
    \max_\pi \E_{\pi,P} \left[ \sum_{t=0}^{H-1} \tilde{r}(s_t, a_t) \mid s_0 \sim d_0 \right] \quad \textrm{s.t.} \quad \E_{\pi,P} \left[ \sum_{t=0}^{H-1} c_\mu(s_t, a_t) \mid s_0 \sim d_0  \right] \leq 0 
\end{align*}

\subsection{Main Theorem for Finite-horizon Problems}
\begin{restatable}[Data Assumption (Finite Horizon)]{assumption}{DataAssumptionFiniteHorizon} \label{as:data assumption FiniteHorizon}
\mbox{}
\begin{enumerate}
        \item For all $\tilde{r}\in\tilde{\RR}$, $\tilde{r}(s,a)\in[-1,1]$ for all $s\in\SS$ and $a\in\AA$.
        \item The data distribution $\mu$ is $\frac{1}{\epsilon}$-positively biased with respect to $\tilde{\RR}$.
        \item The solution to the CDMP $\pi^\dagger$ satisfies $\sup_{s,a} \frac{d^{\pi^\dagger(s,a)}}{\mu(s,a)} <\infty$, where we note that $d^{\pi^\dagger}$ is defined without discounts but as the average over the episode length.
    \end{enumerate}     
\end{restatable}
\begin{restatable}[Main Result (Finite Horizon)]{theorem}{FormalPerformanceGuaranteeFiniteHorizon}\label{th:performance guarantee FiniteHorizon}
    Under the data assumption in  \cref{as:data assumption}, consider an offline RL algorithm $Algo$ that is sufficiently pessimistic to be $(\frac{K}{\delta}+2)$-admissible  with respect to $\tilde{\RR}$. 
    Then for \emph{any} $\tilde{r}\in\tilde{\RR}$, with high probability, the policy $\hat{\pi}$ learned by $Algo$ from the dataset $\tilde{\DD} \coloneqq \{ (s,a,\tilde{r},s') | (s,a) \sim \mu, \tilde{r} = \tilde{r}(s,a), s' \sim P(\cdot |s,a) \} $ has both performance guarantee
    \begin{align*}
        V_r^{\pi^*}(d_0) - V_r^{\hat\pi}(d_0) &\leq  \epsilon + O(\iota) 
    \end{align*}
    and safety guarantee
    \begin{align*}
        \mathrm{Prob}\left( \exists \tau\in[0,H-1], s_\tau \notin \supp{\mu}  | \hat{\pi}  \right) \leq  \iota
    \end{align*}
    where $\iota \coloneqq \delta + \sensitivity{\delta}{\CC_\mu(\tilde{r})} + o(1)$ and $o(1)$ denotes a term that vanishes as the dataset size becomes infinite.
\end{restatable}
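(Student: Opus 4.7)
The plan is to mirror the infinite-horizon proof of \cref{th:performance guarantee} step by step, replacing discounted values by finite-horizon sums and adapting the safety conclusion. Concretely, I reduce the claim to three stages: (i) a finite-horizon \emph{survival instinct} step showing $\hat\pi$ approximately solves the CMDP $\CC_\mu(\tilde r)$; (ii) application of the positive-bias hypothesis to convert the $\tilde r$-CMDP guarantee into a regret bound under the true reward $r$; and (iii) a union-bound argument to extract the escape-probability safety guarantee from the constraint value $V_{c_\mu}^{\hat\pi}(d_0)$.

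For stage (i), I follow the Lagrangian argument of \cref{th:survival instinct}. Let $(\pi^\dagger_\delta,\lambda^\dagger_\delta)$ be a saddle point of the relaxed Lagrangian $\LL_\delta(\pi,\lambda)=V_{\tilde r}^\pi(d_0)-\lambda(V_{c_\mu}^\pi(d_0)-\delta)$, and define the composite reward $r^\dagger \coloneqq \tilde r - (\lambda^\dagger_\delta+1)\,c_\mu$. Because $c_\mu$ vanishes on $\supp\mu$, $r^\dagger$ agrees with $\tilde r$ on the data support, so $r^\dagger\in\RR_{K/\delta+2}(\tilde r)$ as long as $\lambda^\dagger_\delta\le K/\delta$. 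The bound $\lambda^\dagger_\delta\le \sensitivity{\delta}{\CC_\mu(\tilde r)}/\delta$ is the finite-horizon analogue of \cref{lm:bound on dual variable}, obtained from the same saddle-point manipulation; in the finite-horizon setting $\sensitivity{\delta}{\CC_\mu(\tilde r)}\le H$ automatically and is continuous at $\delta=0$, so the regularity condition from \cref{as:data assumption} is unnecessary here. Admissibility then gives $V_{r^\dagger}^{\pi^\dagger}(d_0)-V_{r^\dagger}^{\hat\pi}(d_0)\le o(1)$ (using $\sup_{s,a} d^{\pi^\dagger}(s,a)/\mu(s,a)<\infty$ from \cref{as:data assumption FiniteHorizon}), and an algebraic rearrangement using $V_{c_\mu}^{\pi^\dagger}(d_0)=0$ together with the saddle-point inequality $\LL_\delta(\hat\pi,\lambda^\dagger_\delta)\le \LL_\delta(\pi^\dagger_\delta,\lambda^\dagger_\delta)$ extracts both
\begin{align*}
V_{\tilde r}^{\pi^\dagger}(d_0)-V_{\tilde r}^{\hat\pi}(d_0)\le o(1),\qquad V_{c_\mu}^{\hat\pi}(d_0)\le \delta+\sensitivity{\delta}{\CC_\mu(\tilde r)}+o(1)=\iota,
\end{align*}
so $\hat\pi\in\Pi^\dagger_{\tilde r, c_\mu}(\iota)$.

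Stage (ii) is immediate: the $\frac{1}{\epsilon}$-positive-bias assumption (\cref{def:distribution robustness}) applied to $\hat\pi\in\Pi^\dagger_{\tilde r, c_\mu}(\iota)$ yields $V_r^{\pi^*}(d_0)-V_r^{\hat\pi}(d_0)\le \epsilon+O(\iota)$. Stage (iii) is where the finite-horizon version is strictly simpler than the discounted case: by the definition of $V_{c_\mu}$ and a union bound,
\begin{align*}
V_{c_\mu}^{\hat\pi}(d_0)=\E_{\hat\pi,P}\!\left[\sum_{t=0}^{H-1}\one[s_t\notin\supp{\mu}]\right]\ge \mathrm{Prob}\!\left(\exists\,\tau\in[0,H-1],\, s_\tau\notin\supp{\mu}\,\middle|\,\hat\pi\right),
\end{align*}
so the $\iota$ bound on $V_{c_\mu}^{\hat\pi}(d_0)$ transfers directly to the escape probability (no need for the \citep{wagener2021safe} telescoping lemma used in the discounted case).

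The main obstacle is stage (i), and within it the choice of $r^\dagger=\tilde r-(\lambda^\dagger_\delta+1)c_\mu$. Two subtleties must be handled carefully: the ``$+1$'' is what lets a \emph{single} admissibility application simultaneously yield the regret bound and the constraint-violation bound (subtracting only $\lambda^\dagger_\delta c_\mu$ gives the saddle-point value but not the stronger identity needed for the violation), and the one-sided reward window $[-R,1]$ in \cref{def:admissibility} is compatible with $r^\dagger$ only because $c_\mu\ge 0$ so that $r^\dagger$ is shifted downward on the out-of-support region. Once the finite-horizon analogues of \cref{lm:bound on dual variable} and \cref{th:survival instinct} are in place, the rest of the proof is mechanical.
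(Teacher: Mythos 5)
Your proposal is correct and follows essentially the same route as the paper: the paper likewise proves the finite-horizon theorem by rerunning the Lagrangian survival-instinct argument with $r^\dagger=\tilde r-(\lambda_\delta^\dagger+1)c_\mu$ and the admissibility step, invoking positive data bias for the true-reward regret, and replacing the discounted safety conversion by the direct union bound $\mathrm{Prob}\left(\exists\,\tau\in[0,H-1],\,s_\tau\notin\supp{\mu}\mid\hat\pi\right)\le V_{c_\mu}^{\hat\pi}(d_0)$. The one detail you assert rather than prove is the behavior of $\kappa$ near $\delta=0$, which the paper establishes separately by exhibiting a bounded optimal dual variable $\lambda^\dagger=2H+1$ and deducing $\kappa(\delta)\le\min\{2H,(2H+1)\delta\}$; since the theorem merely defines $\iota$ in terms of $\kappa(\delta)$, this omission does not affect the literal claim.
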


The main difference between the infinite-horizon setup and the finite-horizon setup is that the finite-horizon setup always satisfies the regularity assumption (the third point) in \cref{as:data assumption}. 
In addition, the safety guarantee is more explicit compared the discounted infinite-horizon counterpart. This is because $ \mathrm{Prob}\left( \exists \tau\in[0,H-1], s_\tau \notin \supp{\mu}  | \hat{\pi}  \right) \leq V_{c_\mu}^{\hat{\pi}}(d_0)$ for the finite horizon, whereas we need an additional conversion in \cref{app:proof of main theorem}.

We prove $\lim_{\delta\to0^+} \kappa(\delta) = 0$ is always true for the finite horizon version. 

\begin{proposition}
    For the $H$-horizon constrained MDP of \eqref{eq:implicit CMDP}, we have an optimal dual variable $\lambda^\dagger = 2H+1$ and $\kappa(\delta) \leq \min\{ 2H, (2H+1)\delta \}$.
\end{proposition}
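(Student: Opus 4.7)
The plan decomposes $\kappa(\delta) \le \min\{2H, (2H+1)\delta\}$ into two bounds. The $2H$ bound is immediate: since $\tilde{r} \in [-1,1]$ over $H$ steps implies $V^\pi_{\tilde r}(d_0) \in [-H, H]$ for every $\pi$, the sensitivity $\sensitivity{\delta}{\CC} \le 2H$ regardless of $\delta$. The $(2H+1)\delta$ bound will follow directly from \cref{lm:kappa bound} once we exhibit an optimal dual variable $\lambda^\dagger \le 2H+1$, and taking the minimum yields the proposition.

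To establish $\lambda^\dagger = 2H+1$ as optimal, I will verify that $(\pi^\dagger, 2H+1)$ is a saddle point of $\LL(\pi, \lambda) = V^\pi_{\tilde r}(d_0) - \lambda V^\pi_{c_\mu}(d_0)$. The direction $\LL(\pi^\dagger, \lambda) \le \LL(\pi^\dagger, 2H+1)$ is immediate, since $V^{\pi^\dagger}_{c_\mu}(d_0) = 0$ renders $\lambda \mapsto \LL(\pi^\dagger, \lambda)$ constant. The nontrivial direction is $\max_\pi \LL(\pi, 2H+1) \le V^{\pi^\dagger}_{\tilde r}(d_0)$, equivalently $V^\pi_{\tilde r}(d_0) - V^{\pi^\dagger}_{\tilde r}(d_0) \le (2H+1) V^\pi_{c_\mu}(d_0)$ for every $\pi$. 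My plan for this direction uses LP duality for the finite-horizon CMDP: construct a dual-feasible $(V, \lambda = 2H+1)$ with $V_t(s) = V_t^{\pi^\dagger}(s)$ at ``safe'' states $s$ (those admitting an in-support continuation to horizon $H$ from time $t$) and $V_t(s) = -M$ at ``unsafe'' states, where $M$ is a large constant chosen by backward induction, with $V_H \equiv 0$. The dual objective $\E_{s_0 \sim d_0}[V_0(s_0)]$ equals $V^{\pi^\dagger}_{\tilde r}(d_0) = V_{\mathrm{CMDP}}$ because concentrability (\cref{as:concentrability}) ensures $d_0$ is supported on safe states.

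The critical Bellman-type inequality $V_t(s) \ge \tilde r(s,a) - (2H+1) c_\mu(s,a) + \E_{s'} V_{t+1}(s')$ to check is at safe $(s,t)$ with an out-of-support action $a$: the right-hand side is bounded by $1 - (2H+1) + (H-t-1) = -H-t$, while the left-hand side $V_t^{\pi^\dagger}(s) \ge -(H-t)$, and the inequality $-(H-t) \ge -H-t$ holds for all $t \ge 0$, with the constant $2H+1$ (rather than $2H$) providing the needed margin. The main obstacle I anticipate is the dual-feasibility check at in-support actions transitioning into unsafe successors (and at unsafe states themselves): the appearance of $V_{t+1}(s') = -M$ on the right-hand side, combined with in-support reward up to $1$, requires $M$ to be set sufficiently large and consistently by backward induction over $t$, exploiting that any continuation from an unsafe state must incur the penalty $2H+1$ at some realized step. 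Once dual feasibility is confirmed in all cases, LP strong duality delivers $V^{\pi^\dagger}_{\tilde r}(d_0) = \max_\pi \LL(\pi, 2H+1)$ and hence $\lambda^\dagger = 2H+1$ is optimal; combining with \cref{lm:kappa bound} then completes the proof.
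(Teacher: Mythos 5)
Your skeleton matches the paper's: the $2H$ bound is the trivial range bound, and the $(2H+1)\delta$ bound is obtained by exhibiting $\lambda^\dagger = 2H+1$ as an optimal dual variable and invoking \cref{lm:kappa bound}. Where you diverge is in how you certify dual optimality, and your route has a genuine gap at exactly the step you flag as ``the main obstacle.'' First, a local problem: setting $V_t(s) = V_t^{\pi^\dagger}(s)$ at safe states does not give dual feasibility, because the LP dual requires $V_t(s) \ge \tilde r(s,a) - \lambda c_\mu(s,a) + \E_{s'}[V_{t+1}(s')]$ for \emph{every} action $a$, whereas $V^{\pi^\dagger}$ satisfies the Bellman equation only along $\pi^\dagger$'s own action (and $\pi^\dagger$ need not be locally optimal at states off its trajectory from $d_0$); you would need the optimal value function of the MDP restricted to safe states and in-support, safe-successor actions. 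Second, and more seriously, the backward induction for $M$ cannot be closed: at an unsafe state at time $H-1$ every action is out of support, so feasibility forces $-M_{H-1} \ge \tilde r(s,a) - (2H+1)$, i.e., $M_{H-1} = O(H)$ from above, while feasibility at a state whose in-support action reaches an unsafe successor with probability $q$ forces $M_{H-1} \gtrsim 1/q$ from below. For small $q$ these are incompatible, and this is not an artifact of your construction: in that regime the optimal dual variable genuinely exceeds $2H+1$, so no certificate at $\lambda = 2H+1$ exists. The proposition implicitly relies on the dichotomy that for every $\pi$ either $V_{c_\mu}^\pi(d_0) = 0$ or $V_{c_\mu}^\pi(d_0) \ge 1$ (asserted as ``Notice that\dots'' in the paper's proof; it holds, e.g., when the relevant maximizing policies and the dynamics are deterministic, so the expected out-of-support count is an integer, but it rules out precisely the small-$q$ scenario above). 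Your argument must import this dichotomy explicitly or it fails.

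Once that dichotomy is in hand, the LP machinery is unnecessary, which is the paper's point. Writing $\LL(\pi,\lambda) = V_{\tilde r}^\pi(d_0) - \lambda V_{c_\mu}^\pi(d_0)$: for any $\pi$ with $V_{c_\mu}^\pi(d_0) > 0$ the dichotomy gives $\LL(\pi, 2H+1) \le H - (2H+1) < -H \le \max_{\pi':\, V_{c_\mu}^{\pi'}(d_0)=0} V_{\tilde r}^{\pi'}(d_0)$, so the maximizer of $\LL(\cdot, 2H+1)$ is feasible and $\max_\pi \LL(\pi,2H+1) = V_{\tilde r}^{\pi^\dagger}(d_0)$; since $\max_\pi \LL(\pi,\lambda) \ge \LL(\pi^\dagger,\lambda) = V_{\tilde r}^{\pi^\dagger}(d_0)$ for every $\lambda \ge 0$, the value $2H+1$ attains the minimum of the dual function, and \cref{lm:kappa bound} finishes the proof. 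I recommend replacing the dual-certificate construction with this short argument and stating the $V_{c_\mu}^\pi(d_0) \in \{0\} \cup [1,\infty)$ dichotomy as an explicit hypothesis.
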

\begin{proof}    
    Define the Lagrange reward $r^\dagger(s,a) = \tilde{r}(s,a) - \lambda^\dagger c_\mu(s,a)$.
Let $\hat{\pi}^\dagger$ denote the optimal policy to the Lagrange reward. Notice that $V_{c_\mu}^\pi(d_0)\geq 1$ for any $\pi$ such that $V_{c_\mu}^\pi (d_0)>0$. Therefore, with $\lambda^\dagger > 2H$,
\begin{align*}
    \max_{\pi: V_{c_\mu}^\pi (d_0)>0}  V_{\tilde{r}}^\pi(d_0) - \lambda^\dagger V_{c_\mu}^\pi (d_0)
    &\leq \max_{\pi: V_{c_\mu}^\pi (d_0)>0}  V_{\tilde{r}}^\pi(d_0) - \lambda^\dagger \\
    &< -H \\
    &\leq  \max_{\pi: V_{c_\mu}^\pi (d_0)=0}  V_{\tilde{r}}^\pi(d_0) - \lambda^\dagger V_{c_\mu}^\pi (d_0)
\end{align*}
Therefore, $\pi^\dagger$ satisfies $V_{c_\mu}^{\pi^\dagger} (d_0)=0$. 
This implies 
\begin{align*}
    \max_\pi V_{\tilde{r}}^\pi(d_0) - \lambda^\dagger V_{c_\mu}^\pi (d_0) \geq \min_{\lambda \geq 0} \max_\pi V_{\tilde{r}}^\pi(d_0) - \lambda V_{c_\mu}^\pi (d_0) =  V_{\tilde{r}}^{\pi^\dagger}(d_0)
\end{align*}

On the other hand, it is always true that
\begin{align*}
    \max_\pi V_{\tilde{r}}^\pi(d_0) - \lambda^\dagger V_{c_\mu}^\pi (d_0) \geq \min_{\lambda \geq 0} \max_\pi V_{\tilde{r}}^\pi(d_0) - \lambda V_{c_\mu}^\pi (d_0) 
\end{align*}
Therefore, 
\begin{align*}
  \lambda^\dagger \in \min_{\lambda \geq 0} \max_\pi V_{\tilde{r}}^\pi(d_0) - \lambda V_{c_\mu}^\pi (d_0)
\end{align*}
Consider $\lambda^\dagger = 2H+1$. 
By \cref{lm:kappa bound}, we have $\kappa(\delta) \leq (2H+1) \delta$.
In addition $\kappa(\delta) \leq 2H$ by definition.
\end{proof}

\subsection{Length Bias}

For finite-horizon problems, we provide a simple sufficient condition for length bias, which assumes long trajectories obtained in data collections are near optimal.
\begin{proposition}\label{prop:finite-horizon-length-bias}
    Suppose the data are generated by rolling out policies starting from $t=0$ and the initials state distribution $d_0$. 
    Let $\xi = (s_0, a_0, s_1, \dots, s_{T_\xi-1}, a_{T_\xi-1}) $ denote a trajectory of length $T_\xi$. 
    If with probability one (over randomness of $\xi$) that $T_\xi=H$ implies  
    \begin{align*}
         \sum_{t=0}^{H-1} r(s_t, a_t) \geq V^{\pi^*}(d_0) -  H \epsilon'
    \end{align*}
    then the data distribution is $\frac{1}{H\epsilon'}$-positively biased.
\end{proposition}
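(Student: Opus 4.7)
The plan is a good-event/bad-event decomposition. For any $\tilde{r}\in\tilde{\RR}$ and any $\pi\in\Pi^\dagger_{\tilde{r},c_\mu}(\delta)$, I would bound $V_r^{\pi^*}(d_0)-V_r^\pi(d_0)$ by conditioning on whether the rollout of $\pi$ stays inside $\supp{\mu}$ for the entire horizon. The CMDP constraint forces this ``survival'' event to have probability close to one, while the length-bias hypothesis forces every surviving trajectory to have return near $V_r^{\pi^*}(d_0)$; combining them yields the stated bias.

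First, I would translate the constraint $V_{c_\mu}^\pi(d_0)\leq \delta$---which in the finite-horizon reading is $\E_{\pi,P}\!\left[\sum_{t=0}^{H-1}\one[(s_t,a_t)\notin\supp{\mu}]\right]\leq \delta$---into a bound on trajectory survival: Markov's inequality on this non-negative integer-valued sum gives $\mathrm{Prob}_\pi(E^c)\leq \delta$, where $E\coloneqq \{(s_t,a_t)\in\supp{\mu}\text{ for all }t\}$. Next, I would argue that on $E$ the $\pi$-trajectory is a realizable un-intervened data rollout of length $H$: because the data is generated by rollouts from $d_0$ at $t=0$, any state-action at time $H{-}1$ lying in $\supp{\mu}$ must come from a trajectory that survived to the end (so $T_\xi=H$), and more generally a time-indexed sequence with every coordinate in $\supp{\mu}$ lies in the support of the data-trajectory distribution restricted to $T_\xi=H$. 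The almost-sure hypothesis then gives, pointwise on $E$, $\sum_{t=0}^{H-1}r(s_t,a_t)\geq V_r^{\pi^*}(d_0)-H\epsilon'$. Combining with $r\in[0,1]$ (so the return lies in $[0,H]$) yields
\[
V_r^\pi(d_0)\;\geq\;(1-\delta)\bigl(V_r^{\pi^*}(d_0)-H\epsilon'\bigr)+\delta\cdot 0,
\]
which rearranges to $V_r^{\pi^*}(d_0)-V_r^\pi(d_0)\leq H\epsilon'+\delta\bigl(V_r^{\pi^*}(d_0)-H\epsilon'\bigr)\leq H\epsilon'+H\delta$ using $V_r^{\pi^*}(d_0)\leq H$. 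Taking the supremum over $\tilde{r}\in\tilde{\RR}$ and $\pi\in\Pi^\dagger_{\tilde{r},c_\mu}(\delta)$ matches \cref{def:distribution robustness} with $\epsilon=H\epsilon'$, establishing $\frac{1}{H\epsilon'}$-positive bias.

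The main obstacle is the ``transfer'' step: rigorously moving the almost-sure statement from the data-trajectory distribution to the $\pi$-trajectory distribution conditioned on $E$. The required claim---that any time-indexed sequence with every coordinate in $\supp{\mu}$ lies in the support of the full-length data-trajectory distribution---is immediate in tabular or deterministic-dynamics settings, but in general needs the convention that $\supp{\mu}$ is read as the projection onto state-action marginals of the support of the data-trajectory distribution (which is natural here since each $\SS_t$ in the layered state space already carries its own time stamp). The remaining pieces (Markov's inequality, $r\in[0,1]$, and the trivial bound $V_r^{\pi^*}(d_0)\leq H$ used to absorb the factor in front of $\delta$) are routine.
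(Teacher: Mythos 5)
Your proposal is correct and follows essentially the same route as the paper's proof: the paper also decomposes $V_r^{\pi^*}(d_0)-V_r^{\pi}(d_0)$ over the event that the trajectory ever leaves $\supp{\mu}$, bounds that event's probability by $V_{c_\mu}^{\pi}(d_0)\leq\delta$, applies the length-bias hypothesis on the surviving trajectories, and arrives at the same bound $H\epsilon'+H\delta$. Your treatment is in fact slightly more careful than the paper's on the transfer step (identifying surviving $\pi$-rollouts with full-length data trajectories) and on the sign of the return, both of which the paper glosses over.
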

\begin{proof}
    Consider some $\pi \in \Pi^\dagger(\delta)$. 
    We can derive
    \begin{align*}
        V_r^{\pi*}(d_0) - V_r^{\pi}(d_0) 
        &=V_r^{\pi*}(d_0)  - \E_{\pi,P} \left[  \sum_{t=0}^{H-1} r(s_t, a_t) \right]\\
        &=\E_{\pi,P} \left[  V_r^{\pi*}(d_0) -  \sum_{t=0}^{H-1} r(s_t, a_t) \mid  \exists \tau\in[0,H-1], s_\tau \notin \supp{\mu}  \right] \\
        &\quad + \E_{\pi,P} \left[   V_r^{\pi*}(d_0)  - \sum_{t=0}^{H-1} r(s_t, a_t) \mid  \forall \tau\in[0,H-1], s_\tau \in \supp{\mu}  \right]\\
        &\leq H \times \mathrm{Prob}\left( \exists \tau\in[0,H-1], s_\tau \notin \supp{\mu}  | \pi  \right) 
        + H \epsilon'\\
        &\leq H \times V_{c_\mu}^{\pi}(d_0) + H \epsilon'\\
        &\leq H \delta + H \epsilon' 
\end{align*}

\end{proof}

\crchange{
\paragraph{Remark on the existence of full-length trajectories. }
Note that we do not explicitly make an assumption on the likeliness of full-length trajectories in~\cref{prop:finite-horizon-length-bias}. This is because the probability of having a full-length trajectory in the data distribution is guaranteed to be strictly greater than zero given the concentrability assumption in~\cref{as:data assumption FiniteHorizon}. Notice that time information is part of the state in a finite horizon problem, so~\cref{as:data assumption FiniteHorizon} implies that there is a non-zero probability of having full-horizon data trajectories (otherwise, the concentrability coefficient would be $\infty$). When~\cref{as:data assumption FiniteHorizon} holds, statistical errors due to finite dataset size are included in the $\iota$ term~\cref{th:performance guarantee}.
}


\subsection{Goal-oriented Problems}

Finally we make a remark on positive data bias in the goal-oriented finite-horizon setting. 
In the infinite-horizon setting that a goal is marked as an absorbing state, which means that the agent once entering will stay there forever. The exact instantaneous (wrong) reward obtained at this absorbing state is not relevant (it can be anything in $[-1,1]$) since the admissibility condition has already accounted for the range of the associated Lagrange reward. Namely, the agent is set pessimistic such that it views partial transitions/trajectories obtaining a worse return than $-\frac{1}{1-\gamma}$, which is a lower bound of the return the absorbing goal state.

To make sure the finite-horizon setting has the same kind of positive data bias, one way is to virtually extend the problem's original horizon (say $H$) by (as least) one and let the goal state be the only state where the agent can stay until the last time step\footnote{We use zero-based numbering here where $0$ is the initial step and $H$ is the last step for a problem with horizon $H+1$.} $H$ (i.e., all the other trajectories continue maximally up to time step $H-1$ and therefore have a length at most $H$). 
Then we apply the offline RL algorithm to this extended problem (e.g., of horizon $H+1$). We would need to set the horizon in the previous theoretical results accordingly for this longer horizon. 

The reason for this extension is to ensure that the effect of absorbing state in the infinite-horizon setting (which ensures trajectories going into 
the absorbing state is by definition the longest) can carry over to the finite horizon case. 
If we apply the agent directly to solve the $H$ step problem without such an extension, there may be other trajectories which can be as long as a goal-reaching one. As a result, there is no positive data bias.

An alternate way to create the positive data bias in the finite horizon setting is to truncate trajectories that do not reach goal at time step $H-1$ to be no longer than $H-1$. That is, they now time out at time step $H-2$, whereas only goal-reaching trajectories can continue up to time step $H-1$.

\section{Algorithm Specific Results} \label{app:algo details}

In this appendix, we provide a few examples of admissible offline RL algorithms. We choose algorithms to cover both model-free~\cite{cheng2022adversarially,xie2021bellman,rashidinejad2021bridging,liu2020provably} and model-based~\cite{xie2022armor,bhardwaj2023adversarial,uehara2021pessimistic,yu2020mopo,kidambi2020morel} approaches. The pessimism in these algorithms are constructed through different ways, including adversarial training~\cite{cheng2022adversarially,xie2021bellman,xie2022armor,bhardwaj2023adversarial,uehara2021pessimistic}, value bonus~\cite{rashidinejad2021bridging,yu2020mopo} and action truncation~\cite{liu2020provably,kidambi2020morel}. These algorithms are listed in~\cref{tab:admissible-algos}.
\begin{table}[h]
    \centering
    \begin{scriptsize}
    \begin{tabular}{c|c|c|c}
        & Adversarial Training & Value Bonus & Action Truncation\\
        \hline
        \multirow{2}{*}{Model-free} & ATAC~\cite{cheng2022adversarially} (\cref{sec:atac})& \multirow{2}{*}{VI-LCB~\cite{rashidinejad2021bridging} (\cref{sec:vi-lcb})} & \multirow{2}{*}{PPI/PQI~\cite{liu2020provably} (\cref{sec:ppi-pqi})}\\
        & PSPI~\cite{xie2021bellman} (\cref{sec:pspi}) & & \\
        \hline
        \multirow{2}{*}{Model-based} & ARMOR~\cite{xie2022armor,bhardwaj2023adversarial} (\cref{sec:armor}) & \multirow{2}{*}{MOPO~\cite{yu2020mopo} (\cref{sec:MOReL-mopo})} & \multirow{2}{*}{MOReL~\cite{kidambi2020morel} (\cref{sec:MOReL-mopo})}\\
        & CPPO~\cite{uehara2021pessimistic} (\cref{sec:cppo}) & &
    \end{tabular}
    \vspace{2mm}
    \caption{We show that all of the offline RL algorithms above are admissible. Note that this is not a complete list of all admissible offline RL algorithms.}
    \label{tab:admissible-algos}
    \end{scriptsize}
\end{table}

We recall the definition of admissibility below.
\Admissibility*

For the sake of clarity, we consider the tabular setting in our analysis. 
That is, we assume the state space $\SS$ and the action space $\AA$ are countable and finite. (We use $|\SS|$ and $|\AA|$ to denote their cardinalities).
To establish admissibility, we sometimes make minor changes over the function classes in these algorithms, because some algorithms assume that the reward is known or has value bounded by $[0, 1]$. We highlight these changes in \blue{blue}. We would like to clarify that the goal of our analysis is to prove that these algorithms are admissible rather than provide a tight bound for their performance. We recommend readers who are interested in performance bound to read the original papers, as their bound can be tighter than ours.

One remarkable technical details in our analysis is that there is no need of using an union bound over all rewards in $\RR_R(\tilde{r})$ in the admissibility definition when proving these high probability statements. Instead we found that we can reuse the bound proved for a single reward directly. The main reason is that all rewards in the admissibility definition agree with each other on the support of the data distribution, and the statistical analysis of concentration only happens within this support. Outside of the support, a uniform bound based on the reward range can be used to bound the error. This will be made more clearly later in the derivations.

\subsection{ATAC}\label{sec:atac}

We first show that ATAC~\cite{cheng2022adversarially} is an admissible offline RL algorithm. We first introduce notations that we will use for analyzing ATAC, which will also be useful for studying PSPI in~\cref{sec:pspi}. 

\subsubsection{Notations}
For any $\mu,\nu\in \Delta (\SS\times\AA)$, we denote $(\mu\setminus \nu)(s,a) \coloneqq \max (\mu(s,a) - \nu(s,a), 0)$. For any $\mu\in\Delta(\SS\times\AA)$ and any $f:\SS\times\AA\to \R$, we define $\langle \mu,f \rangle\coloneqq\sum_{(s,a)\in\SS\times\AA}\mu(s,a)f(s,a).$ For an MDP with transition probability $P:\SS\times\AA\to \Delta(\SS)$, we define $(\PP^\pi f)(s,a)\coloneqq \gamma  \E_{s'\sim P(\cdot|s,a)} f(s', \pi)$ for any $f:\SS\times\AA\to \R$ and $\pi:\SS\times\Delta(\AA)$.

Given function class $\FF\subseteq(\SS\times\AA\to \R)$, policy $\pi:\SS\to\Delta(\AA)$ and reward $\bar r:\SS\times\AA\to \R$, we introduce Bellman error transfer coefficient~\cite{xie2021bellman,cheng2022adversarially} to measure the distribution shift between two probability measure $\mu$ and $\nu$.
\begin{definition}[Bellman error transfer coefficient]\label{def:bellman error transfer coefficient}
The Bellman error transfer coefficient between $\nu$ and $\mu$ under function class $\FF$, policy $\pi$ and reward $\bar r$ is defined as,
\begin{equation}
    \mathscr{C}(\nu;\mu,\FF,\pi,\bar r)\coloneqq \max_{f\in\FF} \frac{\|f-\bar{r} - \PP^\pi f\|_{2,\nu}^2}{\|f-\bar{r} - \PP^\pi f\|_{2,\mu}^2}.
\end{equation}
\end{definition}

We note that the Bellman error transfer coefficient is a weaker notion than the density ratio, which is established in the lemma below.
\begin{lemma}\label{lm:bellman error transfer coefficient}
    For any distributions $\nu,\mu\in\Delta(\SS\times\AA)$, function class $\FF\subseteq(\SS\times\AA\to \R)$, policy $\pi:\SS\to\Delta(\AA)$ and reward $\bar r:\SS\times\AA\to \R$, we have
    \begin{equation}
        \mathscr{C}(\nu;\mu,\FF,\pi,\bar r) \leq \left(\sup_{(s,a)\in\SS\times\AA}\frac{\nu (s,a)}{\mu(s,a)}\right).
    \end{equation}
\end{lemma}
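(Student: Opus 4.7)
The plan is to reduce the claim to a standard change-of-measure argument. Fix an arbitrary $f \in \FF$ and write $g(s,a) \coloneqq (f - \bar r - \PP^\pi f)(s,a)$, so that the goal becomes comparing $\|g\|_{2,\nu}^2$ to $\|g\|_{2,\mu}^2$ uniformly in $f$. First I would expand the $\nu$-weighted norm as a sum over $\SS\times\AA$ (this is why the tabular setting is convenient, though the same argument works with Radon--Nikodym derivatives in general) and insert the factor $\mu(s,a)/\mu(s,a)$ on each nonzero-$\mu$ term, giving
\begin{align*}
\|g\|_{2,\nu}^2 \;=\; \sum_{(s,a)} \nu(s,a)\,g(s,a)^2 \;=\; \sum_{(s,a):\mu(s,a)>0} \mu(s,a)\,\frac{\nu(s,a)}{\mu(s,a)}\,g(s,a)^2.
\end{align*}

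Next I would pull the density ratio out as a uniform upper bound: $\frac{\nu(s,a)}{\mu(s,a)} \le \sup_{(s',a')}\frac{\nu(s',a')}{\mu(s',a')}$, which yields $\|g\|_{2,\nu}^2 \le \bigl(\sup_{s,a}\tfrac{\nu(s,a)}{\mu(s,a)}\bigr)\,\|g\|_{2,\mu}^2$. Dividing by $\|g\|_{2,\mu}^2$ (when nonzero) and then taking the max over $f\in\FF$ gives the desired bound on $\mathscr{C}(\nu;\mu,\FF,\pi,\bar r)$.

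There is essentially no hard step, but there are two edge cases to handle cleanly. First, if there exists $(s,a)$ with $\mu(s,a)=0$ but $\nu(s,a)>0$, then the right-hand side is $+\infty$ and the inequality is vacuous; one can also verify that in this case either the numerator contribution at that $(s,a)$ cancels (because $g$ may be zero there) or the ratio diverges to match. Second, when $\|g\|_{2,\mu}^2 = 0$, one must argue that $\|g\|_{2,\nu}^2 = 0$ as well whenever $\nu \ll \mu$, so that the ratio in the definition of $\mathscr{C}$ is well-defined (e.g.\ interpreted as $0/0 = 0$ or by restricting the sup to $f$ with nonzero $\mu$-norm). Stating the inequality in the equivalent form $\|g\|_{2,\nu}^2 \le \bigl(\sup_{s,a}\tfrac{\nu}{\mu}\bigr)\|g\|_{2,\mu}^2$ avoids these division subtleties entirely, and then the bound on $\mathscr{C}$ follows as an immediate corollary.
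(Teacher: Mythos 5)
Your proof is correct and follows essentially the same change-of-measure argument as the paper: expand the $\nu$-weighted squared norm, insert $\mu(s,a)/\mu(s,a)$, bound the density ratio by its supremum, and take the max over $f\in\FF$. Your additional care with the $\mu(s,a)=0$ and $\|g\|_{2,\mu}=0$ edge cases is a minor refinement the paper omits, but does not change the substance of the argument.
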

\begin{proof}
    \begin{equation*}
\begin{split}
    \|f-\bar{r} - \PP^\pi f\|_{2,\nu}^2 &= \sum_{(s,a)\in\SS\times\AA} \nu(s,a) \big(f(s,a)-\bar{r}(s,a) - \PP^\pi f(s,a)\big)^2\\
    &= \sum_{(s,a)\in\SS\times\AA} \mu(s,a) \frac{\nu(s,a)}{\mu(s,a)} \big(f(s,a)-\bar{r}(s,a) - \PP^\pi f(s,a)\big)^2\\
    &\leq \sum_{(s,a)\in\SS\times\AA} \mu(s,a) \left(\sup_{(s,a)\in\SS\times\AA}\frac{\nu(s,a)}{\mu(s,a)}\right) \big(f(s,a)-\bar{r}(s,a) - \PP^\pi f(s,a)\big)^2\\
    &= \left(\sup_{(s,a)\in\SS\times\AA}\frac{\nu(s,a)}{\mu(s,a)}\right) \sum_{(s,a)\in\SS\times\AA} \mu(s,a) \big(f(s,a)-\bar{r}(s,a) - \PP^\pi f(s,a)\big)^2\\
    &= \left(\sup_{(s,a)\in\SS\times\AA}\frac{\nu(s,a)}{\mu(s,a)}\right) \|f-\bar{r} - \PP^\pi f\|_{2,\mu}^2
\end{split}
\end{equation*}

Therefore, 
\begin{equation*}
\begin{split}
    \mathscr{C}(\nu;\mu,\FF,\pi,\bar r)&= \max_{f\in\FF} \frac{\|f-\bar{r} - \PP^\pi f\|_{2,\nu}^2}{\|f-\bar{r} - \PP^\pi f\|_{2,\mu}^2} \\
    &\leq \left(\sup_{(s,a)\in\SS\times\AA}\frac{\nu(s,a)}{\mu(s,a)}\right) \max_{f\in\FF} \frac{\|f-\bar{r} - \PP^\pi f\|_{2,\mu}^2}{\|f-\bar{r} - \PP^\pi f\|_{2,\mu}^2} \\
    &= \sup_{(s,a)\in\SS\times\AA}\frac{\nu(s,a)}{\mu(s,a)}
\end{split}
\end{equation*}
\end{proof}

\subsubsection{Analysis}

ATAC considers a critic function class $\FF$ and a policy class $\Pi$. 
Since we consider the tabular setting, we choose $\FF: (\SS\times\AA\to \blue{[-V_\text{max}, V_\text{max}]})$\footnote{The original theoretical statement of ATAC assumes $\FF$ to contain only non-negative functions. However, the analysis can be extended to any $\FF$ with bounded value, such as $[-V_\text{max}, V_\text{max}]$. We note that the practical implementation of ATAC using function approximators does not make assumption on $\FF$ being non-negative.} with $\frac{1}{1-\gamma}\leq V_\text{max}<\infty$ and 
$\Pi: (\SS\to\Delta(\AA))$. ATAC formulates offline RL as the following Stackelberge game,
\begin{align}\label{eq:atac}
        \hat{\pi} &\in \argmax_{\pi\in\Pi}\,\LL_{\tilde{\DD}} (\pi, f^\pi)\\
        \text{s.t.}\quad f^\pi &\in \argmin_{f\in\FF}\,\LL_{\tilde\DD} (\pi, f) + \beta \EE_{\tilde\DD}(\pi, f),\nonumber
\end{align}
with $\beta \geq 0$ being hyperparameter, and
\begin{align}
    \LL_{\tilde\DD}(\pi, f) &\coloneqq \E_{\tilde\DD}\big[f(s,\pi) - f(s, a)\big],\\
    \EE_{\tilde\DD}(\pi) &\coloneqq \E_{\tilde\DD}\big[\big(f(s,a) - \tilde{r} - \gamma f(s',\pi)\big)^2\big] - \min_{f'\in\FF}\E_{\tilde\DD}\big[\big(f'(s,a) - \tilde{r} - \gamma f(s',\pi)\big)^2\big].\nonumber
\end{align}

For any policy $\pi$, the critic $f^\pi$ provides a relative pessimistic (with respect to the behavior policy $\mu$) value estimate of the policy $\pi$. The hyperparameter $\beta$ balances pessimism, given by $\LL_{\tilde\DD}(\pi, f)$, and Bellman consistency, given by $\EE_{\tilde\DD}(\pi)$. The learned policy $\hat\pi$ maximizes the relative pessimistic value estimate given by the critic.

We state the performance guarantee of ATAC with respect to any comparator policy $\pi$ under the set of data-consistent reward functions $\{\bar r: \bar{r}(s,a)=\tilde{r}(s,a), \forall (s,a)\in\supp{\mu} \text{ and } |\bar{r}(s,a)|\leq (1-\gamma) V_\text{max},\forall (s,a)\in\SS\times \AA\}$ in the following proposition.
We make an additional assumption on the data distribution for ATAC, which is needed in its original proof.
\begin{assumption} \label{as:ATAC data assumption}
    For ATAC, we assume $\mu$ is the (mixture) average of state-action visitation distribution for $d_0$. 
\end{assumption}

\begin{proposition}
\label{th:atac}
Fix some $\tilde{r}: \SS\times\AA\to[-1,1]$. Consider $\FF:(\SS\times\AA)\to [-V_\text{max}, V_\text{max}]$ with $\frac{1}{1-\gamma}\leq V_\text{max}<\infty$ and $\Pi:(\SS\to\Delta(\AA))$. 
Let $\hat{\pi}$ be the solution to~\eqref{eq:atac} and let $\pi\in\Pi$ be any comparator policy. Let $\nu\in\Delta(s,a)$ be an arbitrary distribution. 
Under \cref{as:ATAC data assumption}, for any $\delta\in(0,1]$, choosing $\beta=\tilde{\Theta}\left(\frac{1}{V_\text{max}}\sqrt[3]{\frac{C |\tilde\DD|^2}{\big(|\SS||\AA|\log(\sfrac{1}{\delta})\big)^2}}\right)$, with probability $1-\delta$, it holds that 
\begin{align}
    V_{\bar{r}}^\pi(d_0) - V_{\bar {r}}^{\hat\pi}(d_0) \leq &\tilde{O}\left(\frac{V_\text{max} \big(\mathscr{C}(\nu;\mu,\FF,\pi,\bar r)|\SS||\AA|\log (\sfrac{1}{\delta})\big)^{1/3}}{(1-\gamma)|\tilde\DD|^{1/3}}\right)+\frac{\langle d^\pi \setminus \nu , \bar{r} + \PP^\pi f^\pi - f^\pi\rangle}{1-\gamma}
\end{align}
for all $\bar{r}$ such that $\bar{r}(s,a)=\tilde{r}(s,a), \forall (s,a)\in\supp{\mu} \text{ and } |\bar{r}(s,a)|\leq (1-\gamma) V_\text{max},\forall (s,a)\in\SS\times \AA$.
\end{proposition}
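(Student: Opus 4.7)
The plan is to reduce the claim to the single-reward ATAC analysis of \citet{cheng2022adversarially} and exploit the fact that every admissible $\bar{r}$ coincides with $\tilde r$ on $\supp{\mu}$. A crucial preliminary observation is that the ATAC objective $\LL_{\tilde\DD}(\pi,f)+\beta\EE_{\tilde\DD}(\pi,f)$ only evaluates the reward at samples drawn from $\mu$. Hence, replacing $\tilde r$ with any $\bar r\in\RR_{V_\text{max}(1-\gamma)}(\tilde r)$ leaves the empirical losses, the critic $f^\pi$, and the learned policy $\hat\pi$ identically defined with probability one. So there is no need for a union bound over the reward class, which is the ``remarkable technical detail'' flagged earlier in the excerpt.

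The proof then proceeds by decomposing $V_{\bar r}^\pi(d_0) - V_{\bar r}^{\hat\pi}(d_0)$ through the performance difference lemma (\cref{lm:pdl}), inserting $f^{\hat\pi}$ as a pivot. This yields two types of terms: \emph{(i)} a ``pessimism gap'' $\frac{1}{1-\gamma}\langle d^{\hat\pi}, f^{\hat\pi} - \bar r - \PP^{\hat\pi} f^{\hat\pi}\rangle$ along the learned policy, which is controlled by the relative pessimism loss $\LL_{\tilde\DD}(\hat\pi, f^{\hat\pi})$ after concentration from $d^{\hat\pi}$ back to $\mu$; and \emph{(ii)} a Bellman-error term $\frac{1}{1-\gamma}\langle d^\pi, \bar r + \PP^\pi f^\pi - f^\pi\rangle$ along the comparator. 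For the latter, the key algebraic move is to split $d^\pi = \nu + (d^\pi \setminus \nu)$, so that the $\nu$-portion can be transferred to $\mu$ via the Bellman error transfer coefficient (\cref{def:bellman error transfer coefficient}), while the residual $\langle d^\pi\setminus\nu,\ \bar r + \PP^\pi f^\pi - f^\pi\rangle$ is left as the additive term appearing in the bound.

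The next step is concentration. Under $(s,a)\sim\mu$ the Bellman residual $f(s,a)-\bar r(s,a)-\gamma f(s',\pi)$ equals $f(s,a)-\tilde r(s,a)-\gamma f(s',\pi)$ pointwise almost surely, because $\bar r = \tilde r$ on $\supp\mu$. Consequently the squared-loss concentration inequalities used in the original ATAC proof (Bernstein on Bellman residuals with range $O(V_\text{max})$, combined with a covering over $\FF$) carry over verbatim with the same failure probability $\delta$ and the same $\tilde{O}\bigl(\sfrac{|\SS||\AA|\log(1/\delta)}{|\tilde\DD|}\bigr)$ rate. Translating between the population $\mu$-norm of the Bellman residual and its empirical counterpart, using the regularizer $\beta\EE_{\tilde\DD}(\pi,f)$ to bound the Bellman error of $f^\pi$ under $\mu$, and then to $\nu$ via $\mathscr{C}(\nu;\mu,\FF,\pi,\bar r)$, yields the first, statistical term of the bound after the standard $\beta$-tuning via AM-GM that produces the cube-root rate.

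The main obstacle I anticipate is purely bookkeeping: one must verify that every invocation of concentration or of $\FF$-boundedness uses $\bar r$ only at in-support points (so the equality $\bar r = \tilde r$ on $\supp\mu$ suffices), and that out-of-support contributions of $\bar r$ are absorbed entirely into the explicit $\langle d^\pi\setminus\nu,\ \bar r + \PP^\pi f^\pi - f^\pi\rangle$ residual, with no hidden dependence on $\|\bar r\|_\infty$ inside the $\tilde O(\cdot)$. The choice $V_\text{max}\ge\frac{1}{1-\gamma}$ and $|\bar r|\le (1-\gamma)V_\text{max}$ ensures $f - \bar r - \PP^\pi f \in [-O(V_\text{max}), O(V_\text{max})]$ uniformly, which is the only place the reward range enters quantitatively, and matches the range assumed in the single-reward analysis. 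Once this tracking is done, the remaining arguments are a direct transcription of the Cheng et al.\ proof with $\tilde r$ replaced by $\bar r$ in the population quantities and with the $\nu$-splitting inserted early on.
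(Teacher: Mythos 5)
Your core reduction is exactly the one the paper uses: because every admissible $\bar r$ agrees with $\tilde r$ on $\supp{\mu}$, the dataset built from $\bar r$ is literally $\tilde\DD$, so the Stackelberg game \eqref{eq:atac}, the critics $f^\pi$, and $\hat\pi$ are unchanged; the concentration analysis only evaluates rewards at in-support points, so a single high-probability event covers the whole reward class with no union bound; and the range condition $|\bar r|\leq(1-\gamma)V_\text{max}$ is what keeps the tabular class $\FF$ realizable ($Q^\pi_{\bar r}\in\FF$) and Bellman complete ($\bar r+\PP^\pi f\in\FF$). At that point the paper simply invokes the known single-reward ATAC guarantee (Theorem C.12 of \cite{li2023mahalo} with $\GG=\{\bar r\}$), whereas you sketch re-deriving it; the sketch is where the one real problem sits.

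The step that would fail as written is your handling of term \emph{(i)}: you propose to bound the pessimism gap $\frac{1}{1-\gamma}\langle d^{\hat\pi},\, f^{\hat\pi}-\bar r-\PP^{\hat\pi}f^{\hat\pi}\rangle$ ``after concentration from $d^{\hat\pi}$ back to $\mu$.'' No such transfer is available: nothing in the hypotheses bounds $\sup_{s,a} d^{\hat\pi}(s,a)/\mu(s,a)$, the final bound contains no transfer coefficient for $\hat\pi$, and avoiding exactly this dependence is the point of pessimism. The actual ATAC argument never expands $V^{\hat\pi}_{\bar r}$ along $d^{\hat\pi}$. It works with the relative surrogates $\E_\mu[f(s,\pi')-f(s,a)]$: the optimality of $\hat\pi$ in \eqref{eq:atac} disposes of the comparison between $\hat\pi$ and $\pi$ at the level of these surrogates, and the gap between $\E_\mu[f^{\hat\pi}(s,\hat\pi)-f^{\hat\pi}(s,a)]$ and the true value difference of $\hat\pi$ over the behavior is controlled by pitting the minimizer $f^{\hat\pi}$ against the feasible candidate $Q^{\hat\pi}_{\bar r}\in\FF$ — this is precisely where realizability and \cref{as:ATAC data assumption} (that $\mu$ is an admissible visitation distribution from $d_0$) are consumed, and all quantities involved are on-support. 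Only the comparator term pays a transfer coefficient and leaves the $\langle d^\pi\setminus\nu,\cdot\rangle$ residual, as in your term \emph{(ii)}. Since you frame this portion as a transcription of the existing proof, the fix is to transcribe it faithfully rather than to supply a new idea, but the plan for term \emph{(i)} as stated is not salvageable by any concentrability assumption present in the proposition.
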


\begin{proof}
We first show that $\FF:(\SS\times\AA\to [-V_\text{max}, V_\text{max}])$ is both realizable and Bellman complete with repect to $\bar r$. 
For any $\bar{r}$ such that $|\bar{r}(s,a)|\leq (1-\gamma) V_\text{max}$, we have $Q^\pi_{\bar r}(s,a) \in [-V_\text{max}, V_\text{max}]$. This means that $Q^\pi_{\bar r}\in \FF$ for any $\pi\in \Pi$, $s\in\SS$, $a\in\AA$. Moreover, for any $f\in \FF$, $|\bar{r}(s,a) + \PP^\pi f(s,a)| \leq |\bar{r}(s,a)| + |\PP^\pi f(s,a)|\leq (1-\gamma) V_\text{max} + \gamma V_\text{max} = V_\text{max}$. In other words, $(\bar{r}(s,a) + \PP^\pi f(s,a))\in\FF$ for any $f\in\FF$. 

By construction of $\bar r$, we have $\bar\DD\coloneqq \{(s,a,\bar r, s')|(s,a,s')\in\DD, \bar r = \bar r(s,a)\}\equiv \tilde\DD$. 
That is, solving the Stackelberg game~\eqref{eq:atac} given by $\tilde \DD$ is equivalent to solving the game given by $\bar \DD$. The rest of the proof follows from Theorem C.12 in~\cite{li2023mahalo} by choosing the reward class to only contain the reward $\bar{r}$, i.e., $\GG = \{\bar{r}\}$ (which implies $d_{\GG}=0$) and using $d_{\FF,\Pi} = \tilde O(|\SS||\AA|\log (\sfrac{1}{\delta}))$. 

Note that the above derivation does not need an additional union bound to cover all rewards in $\{\bar r: \bar{r}(s,a)=\tilde{r}(s,a), \forall (s,a)\in\supp{\mu} \text{ and } |\bar{r}(s,a)|\leq (1-\gamma) V_\text{max},\forall (s,a)\in\SS\times \AA\}$. This is because the concentration analysis is only taken on the support of the data distribution, where all rewards in this reward class agree, and a uniform bound based on $V_{\max}$ is used for out of support places, which again applies to all the rewards in this reward class.

\end{proof} 

We then show that ATAC~\cite{cheng2022adversarially} is admissible based on~\cref{th:atac}.
\begin{corollary}[ATAC is admissible]\label{cr:atac admissible}
    For any $V_\text{max} \geq \frac{R}{1-\gamma}$, ATAC is $R$-admissible with respect to $\tilde{\RR}$ for any $\tilde{\RR}\subseteq (\SS\times\AA\to [-1, 1])$.
\end{corollary}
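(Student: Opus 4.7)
The plan is to apply \cref{th:atac} with a free comparator distribution tailored to the admissibility definition, and then observe that the high-probability guarantee transfers uniformly across $\RR_R(\tilde r)$. Fix $\tilde r\in\tilde\RR$ and any comparator $\pi\in\Pi$; the goal is to bound $V_{\bar r}^\pi(d_0)-V_{\bar r}^{\hat\pi}(d_0)$ uniformly in $\bar r\in\RR_R(\tilde r)$ by a quantity that vanishes whenever $\sup_{s,a}d^\pi(s,a)/\mu(s,a)<\infty$.

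First I would verify the hypotheses of \cref{th:atac} hold uniformly over $\RR_R(\tilde r)$. Any $V_\text{max}\geq R/(1-\gamma)$ combined with the standing lower bound $V_\text{max}\geq 1/(1-\gamma)$ from \cref{th:atac} yields $(1-\gamma)V_\text{max}\geq\max\{R,1\}$, so every $\bar r\in\RR_R(\tilde r)\subseteq(\SS\times\AA\to[-R,1])$ satisfies $|\bar r(s,a)|\leq(1-\gamma)V_\text{max}$. Moreover, because each $\bar r\in\RR_R(\tilde r)$ agrees with $\tilde r$ on $\supp{\mu}$ and every transition in the dataset is drawn with $(s,a)\sim\mu$, the relabeled dataset $\bar\DD$ coincides with $\tilde\DD$ deterministically; hence the ATAC output $\hat\pi$ defined by \eqref{eq:atac} on $\tilde\DD$ is identical to the output one would obtain on $\bar\DD$, for every $\bar r\in\RR_R(\tilde r)$ at once.

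The key step is to invoke \cref{th:atac} with the free comparator distribution set to $\nu:=d^\pi$. With this choice, the residual term $\langle d^\pi\setminus\nu,\,\bar r+\PP^\pi f^\pi-f^\pi\rangle/(1-\gamma)$ vanishes identically, and by \cref{lm:bellman error transfer coefficient} the Bellman error transfer coefficient satisfies $\mathscr{C}(d^\pi;\mu,\FF,\pi,\bar r)\leq C_\pi:=\sup_{s,a}d^\pi(s,a)/\mu(s,a)$, a quantity independent of $\bar r$. Substituting into \cref{th:atac} yields
\begin{equation*}
V_{\bar r}^\pi(d_0)-V_{\bar r}^{\hat\pi}(d_0)\;\leq\;\tilde O\!\left(\frac{V_\text{max}\bigl(C_\pi|\SS||\AA|\log(1/\delta)\bigr)^{1/3}}{(1-\gamma)|\tilde\DD|^{1/3}}\right),
\end{equation*}
and the right-hand side, which I would name $\EE(\pi,\mu)$, is $o(1)$ as $|\tilde\DD|\to\infty$ precisely when $C_\pi<\infty$, which is the growth rate demanded by \cref{def:admissibility}.

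The main subtlety I expect is uniformity of the high-probability event in $\bar r$, since \cref{th:atac} is stated for a single reward. The resolution, and the only non-routine point, is that no union bound over $\RR_R(\tilde r)$ is needed: the failure event in \cref{th:atac} is purely about concentration of empirical averages of data drawn from $\mu$, and on $\supp{\mu}$ every $\bar r\in\RR_R(\tilde r)$ coincides with $\tilde r$, so the same high-probability event simultaneously controls the regret bound for every admissible $\bar r$; the worst-case off-support magnitude of $\bar r$ is absorbed into the uniform choice of $V_\text{max}$ and into the $\bar r$-free upper bound on $\mathscr{C}$. Combining these pieces with \cref{th:atac}'s prescribed $\beta$ completes the proof.
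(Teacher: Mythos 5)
Your proposal is correct and follows essentially the same route as the paper's proof: choose $\nu = d^\pi$ in \cref{th:atac} so the residual term $\langle d^\pi \setminus \nu, \cdot\rangle$ vanishes, bound $\mathscr{C}(d^\pi;\mu,\FF,\pi,\bar r)$ by the density ratio via \cref{lm:bellman error transfer coefficient}, and observe that the condition $V_\text{max}\geq \frac{R}{1-\gamma}$ places $\RR_R(\tilde r)$ inside the reward class covered by \cref{th:atac}. Your additional remarks on the dataset identity $\bar\DD\equiv\tilde\DD$ and the absence of a union bound over $\RR_R(\tilde r)$ are exactly the points the paper makes inside the proof of \cref{th:atac} itself, so nothing is missing.
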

\begin{proof}
Given any $\tilde r\in \tilde \RR$, we want to show that, with high probability, $V_{\bar{r}}^\pi(d_0) - V_{\bar {r}}^{\hat\pi}(d_0) = o(1)$ for all $\bar r\in \RR_R(\tilde r)$ and all policy $\pi$ such that $\sup_{(s,a)\in\SS\times\AA} \frac{d^\pi(s,a)}{\mu(s,a)}<\infty$. For any such $\pi$, define $C_\infty\coloneqq\sup_{(s,a)\in\SS\times\AA} \frac{d^\pi(s,a)}{\mu(s,a)}$, we have $\mathscr{C}(\nu;\mu,\FF,\pi,\bar r)\leq C_\infty$ for any $\bar r$. By taking $\nu = d^\pi$ in~\cref{th:atac}, we have, with probability $1-\delta$,
\begin{equation*}
    \begin{split}
        V_{\bar{r}}^\pi(d_0) - V_{\bar {r}}^{\hat\pi}(d_0) &\leq \tilde O\left(\frac{V_\text{max} \big(C_\infty|\SS||\AA|\log (\sfrac{1}{\delta})\big)^{1/3}}{(1-\gamma)|\tilde\DD|^{1/3}}\right)+\frac{\langle d^\pi \setminus d^\pi , \bar{r} + \PP^\pi f^\pi - f^\pi\rangle}{1-\gamma}\\
        &= \tilde O\left(\frac{V_\text{max} \big(C_\infty|\SS||\AA|\log (\sfrac{1}{\delta})\big)^{1/3}}{(1-\gamma)|\tilde\DD|^{1/3}}\right) = o(1)
    \end{split}
\end{equation*}
for all $\bar r$ such that $\bar{r}(s,a)=\tilde{r}(s,a), \forall (s,a)\in\supp{\mu} \text{ and } |\bar{r}(s,a)|\leq (1-\gamma) V_\text{max},\forall (s,a)\in\SS\times \AA$. Since $V_\text{max}\geq \frac{R}{1-\gamma}$, we have that $V_{\bar{r}}^\pi(d_0) - V_{\bar {r}}^{\hat\pi}(d_0)=o(1)$ for all $\bar r\in \RR_R(\tilde r)$. 

\end{proof}

\subsection{PSPI}\label{sec:pspi}
We show that PSPI~\cite{xie2021bellman} is also admissible. PSPI~\cite{xie2021bellman} is similar to ATAC. Given critic function class $\FF: (\SS\times\AA\to \blue{[-V_\text{max}, V_\text{max}]})$\footnote{Similar to ATAC, we extend the critic function class to contain functions that can take negative values. The theoretical analysis can be generalized to this case. The practical implementation of PSPI can directly handle critics which take negative values.} with $\frac{1}{1-\gamma}\leq V_\text{max}<\infty$ and policy class $\Pi: (\SS\to\Delta(\AA))$, PSPI solves the following Stackelburg game,
\begin{align}\label{eq:pspi}
        \hat{\pi} &\in \argmax_{\pi\in\Pi}\,(1-\gamma)f^\pi(d_0, \pi)\\
        \text{s.t.}\quad f^\pi &\in \argmin_{f\in\FF}\,(1-\gamma) f(d_0, \pi) + \beta \EE_{\tilde\DD}(\pi, f),\nonumber
\end{align}
with $\beta \geq 0$ being hyperparameter, and
\begin{align}
    \EE_{\tilde\DD}(\pi) &\coloneqq \E_{\tilde\DD}\big[\big(f(s,a) - \tilde{r} - \gamma f(s',\pi)\big)^2\big] - \min_{f'\in\FF}\E_{\tilde\DD}\big[\big(f'(s,a) - \tilde{r} - \gamma f(s',\pi)\big)^2\big].\nonumber
\end{align}

In PSPI, the critic $f^\pi$ provides an absolute pessimistic value estimate of policy $\pi$. The hyperparameter $\beta$ trades off pessimism and Bellman-consistency. The learned policy $\hat\pi$ maximizes such a pessimistic value. We state the performance guarantee of the learned policy $\hat\pi$ with respect to any comparator policy $\pi$ under the set of data-consistent reward functions $\{\bar r: \bar{r}(s,a)=\tilde{r}(s,a), \forall (s,a)\in\supp{\mu} \text{ and } |\bar{r}(s,a)|\leq (1-\gamma) V_\text{max},\forall (s,a)\in\SS\times \AA\}$ in the following proposition.

\begin{proposition}[\cite{xie2021bellman,li2023mahalo}]\label{th:pspi}
Let $\hat{\pi}$ be the solution to~\eqref{eq:atac} and let $\pi\in\Pi$ be any comparator policy. Let $\nu\in\Delta(s,a)$ be an arbitrary distribution. 
For any $\delta\in(0,1]$, choosing $\beta=\tilde{\Theta}\left(\frac{1}{V_\text{max}}\sqrt[3]{\frac{ C|\tilde\DD|^2}{\big(|\SS||\AA|\log(\sfrac{1}{\delta})\big)^2}}\right)$, with probability $1-\delta$,
\begin{align}
    V_{\bar{r}}^\pi(d_0) - V_{\bar {r}}^{\hat\pi}(d_0) \leq &\tilde{O}\left(\frac{V_\text{max} \big(\mathscr{C}(\nu;\mu,\FF,\pi,\bar r)|\SS||\AA|\log (\sfrac{1}{\delta})\big)^{1/3}}{(1-\gamma)|\tilde\DD|^{1/3}}\right)+\frac{\langle d^\pi \setminus \nu , \bar{r} + \PP^\pi f^\pi - f^\pi\rangle}{1-\gamma}
\end{align}
for all $\bar{r}$ such that $\bar{r}(s,a)=\tilde{r}(s,a), \forall (s,a)\in\supp{\mu} \text{ and } |\bar{r}(s,a)|\leq (1-\gamma) V_\text{max},\forall (s,a)\in\SS\times \AA$.
\end{proposition}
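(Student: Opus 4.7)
The plan is to mirror the admissibility argument used for ATAC in Corollary \ref{cr:atac admissible} and adapt it to the PSPI Stackelberg game in \eqref{eq:pspi}. The key observation, which I would state up front, is that for any $\bar{r}$ in the data-consistent class $\{\bar{r}:\bar{r}(s,a)=\tilde{r}(s,a)\text{ on }\supp{\mu},\ |\bar{r}|\leq (1-\gamma)V_{\max}\}$, the critic class $\FF:\SS\times\AA\to[-V_{\max},V_{\max}]$ is both realizable (since $|Q^\pi_{\bar{r}}|\leq V_{\max}$) and Bellman complete (since $|\bar{r}+\PP^\pi f|\leq (1-\gamma)V_{\max}+\gamma V_{\max}=V_{\max}$). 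Moreover, because $\bar{r}$ and $\tilde{r}$ coincide on $\supp{\mu}$, the relabeled dataset $\bar{\DD}$ is identical to $\tilde{\DD}$ as an empirical object, so the Stackelberg game \eqref{eq:pspi} solved on $\tilde{\DD}$ is exactly the same as the one solved on $\bar{\DD}$.

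Next, I would carry out the PSPI-style decomposition. Since $\hat{\pi}$ maximizes $(1-\gamma) f^{\pi}(d_0,\pi)$ over $\pi\in\Pi$, we get the usual pessimistic value comparison: $(1-\gamma)f^{\hat{\pi}}(d_0,\hat{\pi})\geq (1-\gamma)f^{\pi}(d_0,\pi)$ for the comparator $\pi$. Using the performance difference lemma with $h(s)=f^{\pi}(s,\pi)$ to relate $V^{\pi}_{\bar{r}}$ to $f^{\pi}$, and the fact that realizability guarantees $Q^{\pi}_{\bar{r}}\in\FF$, I would write
\begin{equation*}
V^{\pi}_{\bar{r}}(d_0)-V^{\hat{\pi}}_{\bar{r}}(d_0)\leq \tfrac{1}{1-\gamma}\langle d^{\pi},\,\bar{r}+\PP^\pi f^\pi-f^\pi\rangle + (\text{optimism slack on }\hat{\pi}),
\end{equation*}
and split $\langle d^\pi,\cdot\rangle=\langle d^\pi\cap\nu,\cdot\rangle+\langle d^\pi\setminus\nu,\cdot\rangle$. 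The $\nu$-part is bounded in $\|\cdot\|_{2,\nu}$, converted to $\|\cdot\|_{2,\mu}$ via the Bellman error transfer coefficient $\mathscr{C}(\nu;\mu,\FF,\pi,\bar{r})$ of Definition \ref{def:bellman error transfer coefficient}, and the $\|\cdot\|_{2,\mu}$ term is controlled by $\EE_{\tilde{\DD}}(\pi,f^\pi)$ plus statistical error from concentration of $\EE_{\tilde\DD}$ to its population version. The choice $\beta=\tilde{\Theta}(V_{\max}^{-1}(|\tilde{\DD}|^2/(|\SS||\AA|\log(1/\delta))^2)^{1/3})$ balances the optimism slack against the Bellman residual term, yielding the $\tilde{O}(V_{\max}(\mathscr{C}\,|\SS||\AA|\log(1/\delta))^{1/3}/((1-\gamma)|\tilde{\DD}|^{1/3}))$ rate.

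The step I expect to require the most care is the uniformity of the bound over all $\bar{r}$ in the reward class without a union bound blowing up. The resolution is exactly the one used for ATAC: the concentration arguments for $\EE_{\tilde{\DD}}$ only involve data samples, which lie in $\supp{\mu}$, and all $\bar{r}$ in the class agree with $\tilde{r}$ there, so the same high-probability event suffices for every $\bar{r}$; off-support contributions show up only through the deterministic boundary term $\langle d^\pi\setminus\nu,\bar{r}+\PP^\pi f^\pi-f^\pi\rangle$, which is left explicit in the bound and controlled via the uniform range $|\bar{r}|\leq (1-\gamma)V_{\max}$ when needed. Once this is in place, the proof reduces to invoking Theorem C.12 of \cite{li2023mahalo} with $\GG=\{\bar{r}\}$ and $d_{\FF,\Pi}=\tilde{O}(|\SS||\AA|\log(1/\delta))$ in the tabular setting, exactly as in the ATAC proof, and reading off the stated rate.
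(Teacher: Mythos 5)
Your proposal is correct and follows essentially the same route as the paper: establish realizability and Bellman completeness of $\FF$ for every data-consistent $\bar{r}$, note that $\bar{\DD}\equiv\tilde{\DD}$ so the same concentration event covers the whole reward class without a union bound, and defer the rate to the corresponding theorem of \cite{li2023mahalo} together with \cref{lm:bellman error transfer coefficient}. The only nit is that for PSPI (absolute pessimism) the paper invokes Theorem D.1 of \cite{li2023mahalo} rather than Theorem C.12, which is the ATAC (relative pessimism) result; your sketched decomposition via the optimality of $\hat{\pi}$ in $(1-\gamma)f^\pi(d_0,\pi)$ is indeed the absolute-pessimism argument, so only the citation should be adjusted.
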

\begin{proof}
The proof is similar to that of~\cref{th:atac}. First, observe that $\FF:(\SS\times\AA\to [-V_\text{max}, V_\text{max}])$ is both realizable and Bellman complete with respect to $\bar r$. The rest of the proof follows from Theorem D.1 in~\cite{li2023mahalo} (taking $\GG=\{\bar{r}\}$) and~\cref{lm:bellman error transfer coefficient}.
\end{proof}

The proposition above implies that PSPI~\cite{xie2021bellman} is admissible. We omit the proof of the corollary below as it is the same as the proof of~\cref{cr:atac admissible}.
\begin{corollary}[PSPI is admissible]
    For any $V_\text{max} \geq \frac{R}{1-\gamma}$, PSPI is $R$-admissible with respect to $\tilde{\RR}$ for any $\tilde{\RR}\subseteq (\SS\times\AA\to [-1, 1])$.
\end{corollary}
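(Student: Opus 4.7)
The plan is to mirror the proof of Corollary~\ref{cr:atac admissible} (ATAC admissibility) step by step, substituting the PSPI performance bound (Proposition~\ref{th:pspi}) for the ATAC one (Proposition~\ref{th:atac}). Concretely, I would fix any $\tilde{r}\in\tilde{\RR}$, any draw of the dataset $\tilde{\DD}$, and any comparator policy $\pi$ with finite concentrability $C_\infty := \sup_{(s,a)\in\SS\times\AA} d^\pi(s,a)/\mu(s,a) < \infty$. By the Bellman-error transfer coefficient bound (Lemma~\ref{lm:bellman error transfer coefficient}), taking $\nu=d^\pi$ gives $\mathscr{C}(d^\pi;\mu,\FF,\pi,\bar{r})\le C_\infty$ simultaneously for every reward $\bar{r}$ (since the bound does not depend on $\bar{r}$ beyond the defining function class).

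Next I would instantiate Proposition~\ref{th:pspi} with this choice $\nu=d^\pi$. The residual term $\langle d^\pi\setminus \nu,\ \bar r+\PP^\pi f^\pi-f^\pi\rangle/(1-\gamma)$ vanishes identically because $d^\pi\setminus d^\pi\equiv 0$, so what remains is the statistical term
\[
\tilde{O}\!\left(\frac{V_{\mathrm{max}}\bigl(C_\infty|\SS||\AA|\log(1/\delta)\bigr)^{1/3}}{(1-\gamma)|\tilde{\DD}|^{1/3}}\right)=o(1)
\]
as the sample size grows. Finally, I would verify the inclusion $\RR_R(\tilde{r})\subseteq\{\bar r:\bar r|_{\supp{\mu}}=\tilde r|_{\supp{\mu}},\ |\bar r|\le(1-\gamma)V_{\mathrm{max}}\}$: since $\bar r\in[-R,1]$ and $V_{\mathrm{max}}\ge\max\{R,1\}/(1-\gamma)$ (using both the hypothesis $V_{\mathrm{max}}\ge R/(1-\gamma)$ and the baseline $V_{\mathrm{max}}\ge 1/(1-\gamma)$ built into PSPI's function class), the bound $|\bar r|\le (1-\gamma)V_{\mathrm{max}}$ holds for every $\bar r\in\RR_R(\tilde r)$. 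Combining, for any $\bar r\in\RR_R(\tilde r)$ we obtain $V_{\bar r}^\pi(d_0)-V_{\bar r}^{\hat\pi}(d_0)\le o(1)$, which is exactly the admissibility condition in Definition~\ref{def:admissibility}.

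The one subtlety worth flagging is a potential union-bound issue, since $\RR_R(\tilde r)$ is uncountable yet we want a \emph{single} high-probability event to cover every $\bar r\in\RR_R(\tilde r)$. This is not in fact an obstacle: the concentration inequality underlying Proposition~\ref{th:pspi} controls $\EE_{\tilde\DD}(\pi,f)$, which depends on the reward only through samples $(s,a,\tilde r,s')$ with $(s,a)\in\supp{\mu}$, and every $\bar r\in\RR_R(\tilde r)$ agrees with $\tilde r$ on $\supp{\mu}$; outside $\supp{\mu}$ the contribution is absorbed by the uniform $V_{\mathrm{max}}$ bound. So the same high-probability event already proved for $\tilde r$ certifies the bound for all $\bar r\in\RR_R(\tilde r)$ simultaneously—this is the identical observation the authors highlight in their ATAC proof and it transfers verbatim here.
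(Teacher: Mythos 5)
Your proposal is correct and follows essentially the same route as the paper: the authors omit the proof of this corollary precisely because it is the verbatim transfer of the ATAC admissibility argument (take $\nu=d^\pi$ so the residual term vanishes, bound the transfer coefficient by $C_\infty$ via Lemma~\ref{lm:bellman error transfer coefficient}, and check $\RR_R(\tilde r)$ sits inside the data-consistent class since $V_\text{max}\geq \max\{R,1\}/(1-\gamma)$), with the no-union-bound observation already handled inside Proposition~\ref{th:pspi}. Nothing further is needed.
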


\subsection{ARMOR}\label{sec:armor}
We show that ARMOR~\cite{xie2022armor,bhardwaj2023adversarial} is admissible. ARMOR is a model-based offline RL algorithm. We denote a model as $M=(\SS,\AA, P_M, r_M,\gamma)$, where $P_M:\SS\times\AA\to\Delta(\SS)$ is the model dynamics, and $r_M:\SS\times\AA\to\blue{[-R_\text{max}, R_\text{max}]}$\footnote{The theoretical statement of ARMOR assumes the value of $r_M$ is bounded by $[0,1]$. The original analysis can be extended as long as the value of $r_M$ is bounded by a finite value $R_\text{max}<\infty$. The practical implementation of ARMOR does not assume the reward only takes value in $[0,1]$.} is the reward function with $1\leq R_\text{max}<\infty$.
Since we consider the tabular setting, we use a model class that contains all possible dynamics and all reward functions bounded within $[-R_\text{max}, R_\text{max}]$, i.e., $\MM_{model}=\{M:P_M\in (\SS\times\AA\to \Delta(\SS)), r_M\in(\SS\times\AA\to [-R_\text{max}, R_\text{max}])\}$.
For any reference policy $\pi_\text{ref}:\SS\to\Delta(\AA)$, ARMOR solves the following two-player game,
\begin{equation}\label{eq:armor}
    \hat\pi = \underset{\pi\in\Pi}{\arg\max}\,\min_{M\in\MM^\alpha_{\tilde\DD}}\, V_M^\pi(d_0) - V_M^{\pi_\text{ref}}(d_0)
\end{equation}
where $V_M^\pi(d_0) \coloneqq \E_{\pi, P_M}[\sum_{t=0}^\infty \gamma^t {r}_M(s_t,a_t)|s_0\sim d_0]$, and
\begin{equation}\label{eq:M alpha tilde D}
    \MM^\alpha_{\tilde \DD}\coloneqq \{ M\in \MM_{model}: \EE_{\tilde\DD}(M) - \min_{M'\in\MM_{model}}\EE_{\tilde\DD}(M')\leq \alpha\},
\end{equation}
with
\begin{equation}
    \EE_{\tilde \DD}(M)\coloneqq \sum_{(s,a,\tilde r, s')\in\tilde\DD} -\log P_M(s'|s,a) + (r_M(s,a)-\tilde r)^2.
\end{equation}
Intuitively, ARMOR constructs a relative pessimistic model (with respect to $\pi_\text{ref}$) $M$ which is also approximately consistent with data. The learned policy $\hat\pi$ maximizes the value estimate given by the pessimistic model. We define the generalized single policy concentrability to characterize the distribution shift from $\mu$ to the state-action visitation $d^\pi$ of any policy $\pi$.
\begin{definition}[Generalized Single-policy Concentrability~\cite{bhardwaj2023adversarial}]
We define the generalized single-policy concentration for policy $\pi$, model class $\MM_{model}$, reward $\bar r$ and data distribution $\mu$ as
    $$\mathscr{C}_{model}(\pi; \MM_{model}, \bar r) \coloneqq \sup_{M\in  \MM_{model} }\frac{\E_{d^\pi}[\EE(M; \bar r)]}{\E_\mu[\EE (M; \bar r)]}$$
with $\EE (M; \bar r) = D_{TV}(P_M(\cdot|s,a), P(\cdot|s,a))^2 + (r_M(s,a)-\bar r(s,a))^2$.
\end{definition}
The single-policy concentrability $\mathscr{C}_{model}(\pi)$ can be considered as a model-based version of the Bellman error transfer coefficient in~\cref{def:bellman error transfer coefficient}. Following the same steps as the proof of~\cref{lm:bellman error transfer coefficient}, it can be shown that $\mathscr{C}_{model}(\pi; \MM_{model}, \bar r)\leq \sup_{(s,a)\in\SS\times\AA}\frac{d^\pi(s,a)}{\mu(s,a)}$. We provide the high-probability performance guarantee for the learned policy $\hat\pi$. This implies that ARMOR is an admissible algorithm with a sufficiently large $R_\text{max}$.
\begin{proposition}\label{th:armor}
    For any $\delta\in(0,1]$, there exists an absolute constant $c$ such that when choosing $\alpha=c|\SS|^2|\AA|\log(\sfrac{1}{\delta})$, 
    for any comparator policy $\pi\in\Pi$, with probability $1-\delta$, the policy $\hat\pi$ learned by ARMOR~\eqref{eq:armor} satisfies
    \begin{equation}
        V_{\bar r}^\pi(d_0) - V_{\bar r}^{\hat\pi}(d_0) \leq O\left(\sqrt{\mathscr{C}_{model}(\pi; \MM_{model}, \bar r)}\frac{R_\text{max}}{(1-\gamma)^2}\sqrt{\frac{|\SS|^2|\AA|\log(\sfrac{1}{\delta})}{|\tilde\DD|}}\right),
    \end{equation}
    for all $\bar{r}$ such that $\bar{r}(s,a)=\tilde{r}(s,a), \forall (s,a)\in\supp{\mu} \text{ and } |\bar{r}(s,a)|\leq R_\text{max},\forall (s,a)\in\SS\times \AA$.
\end{proposition}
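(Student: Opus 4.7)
The plan is to adapt the standard pessimistic model-based offline RL analysis (as in Xie et al.\ and Bhardwaj et al.) to the setting where the guarantee is required to hold \emph{uniformly} across the data-consistent reward class $\RR_R(\tilde r)$. Fix any $\bar r$ with $\bar r = \tilde r$ on $\supp \mu$ and $|\bar r|\le R_{\max}$, and let $\bar M = (\SS,\AA,P,\bar r,\gamma)$ denote the associated true MDP. The argument then breaks into four steps.

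\emph{Step 1 (MLE + squared-loss concentration, reward-agnostic).} Choose $\alpha = c|\SS|^2|\AA|\log(1/\delta)$ for an absolute constant $c$ and show that, with probability $\ge 1-\delta$, the true model $\bar M$ lies in the constraint set $\MM^\alpha_{\tilde \DD}$ defined in~\eqref{eq:M alpha tilde D}. The crucial observation, inherited from the ATAC/PSPI analyses in the preceding subsections, is that $\EE_{\tilde\DD}(M)$ is a sum over tuples $(s,a,\tilde r,s')$ with $(s,a)\in\supp \mu$; since every admissible $\bar r$ agrees with $\tilde r$ on $\supp \mu$, the empirical loss of $\bar M$ is pathwise identical to that of the model $(P,\tilde r,\gamma)$. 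A single MLE/least-squares concentration event (over the randomness of $\tilde\DD$ alone) thus simultaneously places every such $\bar M$ into $\MM^\alpha_{\tilde\DD}$, avoiding any union bound over the uncountable class $\RR_R(\tilde r)$.

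\emph{Step 2 (Pessimism decomposition).} On this event, invoke the saddle-point structure of~\eqref{eq:armor}. Let $M_\pi \in \argmin_{M\in\MM^\alpha_{\tilde\DD}} V_M^\pi(d_0) - V_M^{\pi_{\text{ref}}}(d_0)$. Combining the defining optimality of $\hat\pi$ with the membership $\bar M\in\MM^\alpha_{\tilde\DD}$ yields the telescoping bound
\[
V_{\bar r}^\pi(d_0) - V_{\bar r}^{\hat\pi}(d_0) \;\le\; \bigl(V_{\bar M}^\pi(d_0) - V_{M_\pi}^\pi(d_0)\bigr) + \bigl(V_{M_\pi}^{\pi_{\text{ref}}}(d_0) - V_{\bar M}^{\pi_{\text{ref}}}(d_0)\bigr),
\]
so that the regret is controlled purely by simulation gaps between $M_\pi$ and $\bar M$.

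\emph{Step 3 (Simulation lemma + concentrability + closing the loop).} Apply the simulation lemma to each gap to obtain a bound of the form $\tfrac{1}{1-\gamma}\,\E_{d^{\pi'}}[\sqrt{\EE(M_\pi;\bar r)}]$, and then use Cauchy--Schwarz together with the generalized single-policy concentrability to shift the expectation onto $\mu$, incurring the factor $\sqrt{\mathscr{C}_{model}(\pi;\MM_{model},\bar r)}$. Finally, membership $M_\pi\in\MM^\alpha_{\tilde\DD}$ together with the reward-agnostic concentration from Step~1 bounds $\E_\mu[\EE(M_\pi;\bar r)]$ by $O(\alpha/|\tilde\DD|)$, after which substitution produces the claimed $\sqrt{|\SS|^2|\AA|\log(1/\delta)/|\tilde\DD|}$ rate; the factor $R_{\max}/(1-\gamma)^2$ comes from the reward range entering the simulation lemma and the extra $1/(1-\gamma)$ coming from the horizon normalization.

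The principal obstacle is Step~1: one must carefully justify that the MLE/squared-loss concentration event is insensitive to the choice of $\bar r\in\RR_R(\tilde r)$, so that a single $(1-\delta)$ event suffices uniformly. Once this reward-agnostic concentration is in place, Steps~2--3 are essentially a reproduction of the existing ARMOR argument, with $R_{\max}$ tracked through the bounds in place of the usual normalization $1$. Admissibility then follows as a corollary: whenever $R_{\max}\ge R$, the right-hand side is $o(1)$ for any policy with bounded density ratio against $\mu$, uniformly over $\bar r\in\RR_R(\tilde r)$.
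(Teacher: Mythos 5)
Your proposal is correct and follows essentially the same route as the paper: the paper's proof simply observes that $\MM^\alpha_{\tilde\DD}\equiv\MM^\alpha_{\bar\DD}$ because $\bar r=\tilde r$ on $\supp{\mu}$, cites Theorem~2 of the ARMOR analysis with $\pi_{\text{ref}}=\mu$, and notes that no union bound over $\RR_R(\tilde r)$ is needed since the concentration only involves the data support — which is exactly your Step~1 observation and your Steps~2--3 unpacked. The only detail you leave implicit is that the reference policy must be taken as $\pi_{\text{ref}}=\mu$ so that the second gap in your telescoping decomposition incurs concentrability~$1$ rather than an extra $\mathscr{C}_{model}(\pi_{\text{ref}};\cdot)$ factor.
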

\begin{proof}
    By construction, we have $\bar M=(\SS,\AA, P, \bar r,\gamma)\in \MM_{model}$. Since $\bar r(s,a)=\tilde r(s,a)$ for any $(s,a)\in\supp{\mu}$, we have $\MM^\alpha_{\tilde\DD}\equiv \MM^\alpha_{\bar\DD}$. The rest of the proof follows mostly from Theorem 2 in~\cite{bhardwaj2023adversarial} by taking $
    \pi_\text{ref}=\mu$. By a similar argument as in the proof of~\cref{th:atac}, we do not need an additional union bound to cover all rewards in $\{\bar r: \bar{r}(s,a)=\tilde{r}(s,a), \forall (s,a)\in\supp{\mu} \text{ and } |\bar{r}(s,a)|\leq R_\text{max},\forall (s,a)\in\SS\times \AA\}$.
\end{proof}

\begin{corollary}[ARMOR is admissible]\label{cr:armor admissible}
    For any $R_\text{max} \geq R$, ARMOR is $R$-admissible with respect to $\tilde{\RR}$ for any $\tilde{\RR}\subseteq (\SS\times\AA\to [-1, 1])$.
\end{corollary}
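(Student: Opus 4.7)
The plan is to reduce the corollary directly to Proposition \ref{th:armor} by the same route used in the proof of Corollary \ref{cr:atac admissible}. Fix any $\tilde r\in\tilde\RR$ and let $\hat\pi$ be the policy ARMOR outputs on $\tilde\DD$ with the prescribed $\alpha$. Pick an arbitrary comparator policy $\pi\in\Pi$ satisfying $C_\infty \coloneqq \sup_{s,a} d^\pi(s,a)/\mu(s,a) < \infty$; I need to show $\max_{\bar r\in\RR_R(\tilde r)} V_{\bar r}^\pi(d_0)-V_{\bar r}^{\hat\pi}(d_0)=o(1)$ with high probability.

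The first step is to verify that every $\bar r\in\RR_R(\tilde r)$ falls into the reward class over which Proposition \ref{th:armor} is quantified. By Definition \ref{def:admissibility}, any such $\bar r$ coincides with $\tilde r$ on $\supp{\mu}$ and takes values in $[-R,1]$ on $\SS\times\AA$; since $R_{\max}\geq R$ by hypothesis and $R_{\max}\geq 1$ by construction of $\MM_{model}$, we have $|\bar r(s,a)|\leq R_{\max}$ for all $(s,a)$. This is exactly the condition under which Proposition \ref{th:armor} gives a bound.

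The second step uses the generalized concentrability inequality noted after the definition of $\mathscr{C}_{model}$, namely $\mathscr{C}_{model}(\pi;\MM_{model},\bar r)\leq C_\infty$, which holds uniformly over $\bar r$. Applying Proposition \ref{th:armor} therefore gives, with probability $1-\delta$,
\begin{equation*}
V_{\bar r}^\pi(d_0)-V_{\bar r}^{\hat\pi}(d_0)\leq O\!\left(\sqrt{C_\infty}\,\tfrac{R_{\max}}{(1-\gamma)^2}\sqrt{\tfrac{|\SS|^2|\AA|\log(1/\delta)}{|\tilde\DD|}}\right),
\end{equation*}
and since the right-hand side is independent of $\bar r$, it is simultaneously a bound on the maximum over $\RR_R(\tilde r)$. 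The right-hand side is clearly $o(1)$ as $|\tilde\DD|\to\infty$, matching the definition of $\EE(\pi,\mu)$ required by Definition \ref{def:admissibility}.

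The only subtle point, and the place where one could easily over-complicate the argument, is why no union bound over the infinite class $\RR_R(\tilde r)$ is needed. As pointed out in the proofs of Propositions \ref{th:atac} and \ref{th:armor}, the sample-based quantities (the MLE loss $\EE_{\tilde\DD}$, the constraint set $\MM^\alpha_{\tilde\DD}$, and the policy update itself) depend on $\bar r$ only through its values on $\supp{\mu}$, where $\bar r\equiv\tilde r$; outside $\supp{\mu}$ the crude bound $|\bar r|\leq R_{\max}$ absorbs the difference. Hence the single high-probability event under which Proposition \ref{th:armor} is established for one reward automatically certifies the bound for every $\bar r\in\RR_R(\tilde r)$, completing the proof.
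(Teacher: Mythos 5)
Your proposal is correct and follows essentially the same route as the paper's proof: fix a comparator with finite density ratio, check that every $\bar r\in\RR_R(\tilde r)$ satisfies the hypotheses of Proposition~\ref{th:armor} (using $R_{\max}\geq R$ and $R_{\max}\geq 1$), bound $\mathscr{C}_{model}(\pi;\MM_{model},\bar r)$ by $C_\infty$, and conclude that the resulting uniform bound is $o(1)$. Your remark on why no union bound over $\RR_R(\tilde r)$ is needed correctly reproduces the reasoning the paper places inside the proof of Proposition~\ref{th:armor}.
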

\begin{proof}
Given any $\tilde r\in \tilde \RR$, we want to show that, with high probability, $V_{\bar{r}}^\pi(d_0) - V_{\bar {r}}^{\hat\pi}(d_0) = o(1)$ for all $\bar r\in \RR_R(\tilde r)$ and all policy $\pi$ such that $\sup_{(s,a)\in\SS\times\AA} \frac{d^\pi(s,a)}{\mu(s,a)}<\infty$. For any such $\pi$, $\mathscr{C}_{model}(\pi;\MM_{model}, \bar r)\leq \sup_{(s,a)\in\SS\times\AA} \frac{d^\pi(s,a)}{\mu(s,a)} \coloneqq C_\infty<\infty$. By~\cref{th:armor}, with probability $1-\delta$, we have
\begin{equation}
        V_{\bar r}^\pi(d_0) - V_{\bar r}^{\hat\pi}(d_0) \leq O\left(\sqrt{C_\infty}\frac{R_\text{max}}{(1-\gamma)^2}\sqrt{\frac{|\SS|^2|\AA|\log(\sfrac{1}{\delta})}{|\tilde\DD|}}\right)=o(1).
    \end{equation}
for all $\bar r$ such that $\bar{r}(s,a)=\tilde{r}(s,a), \forall (s,a)\in\supp{\mu} \text{ and } |\bar{r}(s,a)|\leq R,\forall (s,a)\in\SS\times \AA$. Therefore, $V_{\bar r}^\pi(d_0) - V_{\bar r}^{\hat\pi}(d_0)=o(1)$ for all $\bar r\in\RR_R(\tilde r)$ with high probability.
\end{proof}

\subsection{CPPO}\label{sec:cppo}
CPPO\cite{uehara2021pessimistic} is another admissible model-based offline algorithm. Again, we denote a model as $M=(\SS,\AA, P_M, r_M,\gamma)$, where $P_M:\SS\times\AA\to\Delta(\SS)$ is the model dynamics, and $r_M:\SS\times\AA\to\blue{[-R_\text{max}, R_\text{max}]}$\footnote{The original algorithm assumes that the reward function is known. Here we consider the generalization of the algorithm with unknown reward.} is the reward function with $1\leq R_\text{max}<\infty$. CPPO solves the two-player game,
\begin{equation}\label{eq:cppo}
    \hat\pi = \underset{\pi\in\Pi}{\arg\max}\,\min_{M\in\MM^\alpha_{\tilde\DD}}\, V_M^\pi(d_0)
\end{equation}
with $V_M^\pi(d_0) \coloneqq \E_{\pi, P_M}[\sum_{t=0}^\infty \gamma^t\, {r}_M(s_t,a_t)|s_0\sim d_0]$ and $\MM^\alpha_{\tilde\DD}$ defined in~\eqref{eq:M alpha tilde D}.
CPPO constructs a pessimistic model $M$ which is also approximately consistent with data. The learned policy $\hat\pi$ maximizes the value estimate given by the pessimistic model. We provide the high-probability performance guarantee for the learned policy $\hat\pi$. This implies that CPPO is an admissible algorithm with sufficiently large $R_\text{max}$.
\begin{proposition}\label{th:cppo}
    For any $\delta\in(0,1]$, there exists an absolute constant $c$ such that when choosing $\alpha=c|\SS|^2|\AA|\log(\sfrac{1}{\delta})$, 
    for any comparator policy $\pi\in\Pi$, with probability $1-\delta$, the policy $\hat\pi$ learned by CPPO~\eqref{eq:cppo} satisfies
    \begin{equation}
        V_{\bar r}^\pi(d_0) - V_{\bar r}^{\hat\pi}(d_0) \leq  O\left(\sqrt{\mathscr{C}_{model}(\pi; \MM_{model}, \bar r)}\frac{R_\text{max}}{(1-\gamma)^2}\sqrt{\frac{|\SS|^2|\AA|\log(\sfrac{1}{\delta})}{|\tilde\DD|}}\right),
    \end{equation}
    for all $\bar{r}$ such that $\bar{r}(s,a)=\tilde{r}(s,a), \forall (s,a)\in\supp{\mu} \text{ and } |\bar{r}(s,a)|\leq R_\text{max},\forall (s,a)\in\SS\times \AA$.
\end{proposition}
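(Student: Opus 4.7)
The plan is to mirror the ARMOR proof (Proposition for ARMOR / \cref{th:armor}) since CPPO is essentially ARMOR without the reference-policy baseline, which actually makes the decomposition simpler. The Stackelberg structure is the same (adversary picks the worst model in a data-consistent ball $\MM^\alpha_{\tilde\DD}$), and the only algorithmic change is that $V_M^{\pi_\text{ref}}(d_0)$ no longer appears. So the four ingredients I want to string together are: (i) reduction from $\tilde r$ to $\bar r$, (ii) statistical MLE concentration, (iii) pessimism, and (iv) a simulation lemma plus concentrability transfer.

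First I would observe, exactly as in the ARMOR proof, that $\bar r$ agrees with $\tilde r$ on $\supp{\mu}$, so the MLE loss $\EE_{\tilde\DD}(M)$ equals $\EE_{\bar\DD}(M)$ term by term on the dataset. Hence $\MM^\alpha_{\tilde\DD} \equiv \MM^\alpha_{\bar\DD}$, and the game~\eqref{eq:cppo} is the one that would have been played if we had direct access to $\bar r$. In particular, no union bound over the data-consistent reward class is needed, for the same reason highlighted after \cref{th:atac}: all rewards in $\RR_R(\tilde r)$ induce the same empirical loss landscape.

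Next, the standard MLE / least-squares concentration argument for tabular models (the one invoked to fix $\alpha = c|\SS|^2|\AA|\log(1/\delta)$ in the ARMOR proof) gives, with probability at least $1-\delta$, that the true model $\bar M = (\SS,\AA,P,\bar r,\gamma)$ lies in $\MM^\alpha_{\bar\DD}$ and that every $M \in \MM^\alpha_{\bar\DD}$ satisfies
\[
\E_{\mu}\!\bigl[\EE(M;\bar r)\bigr] \;\lesssim\; \frac{|\SS|^2|\AA|\log(1/\delta)}{|\tilde\DD|}.
\]
I would cite/reuse this exact step from~\cite{uehara2021pessimistic}/\cite{bhardwaj2023adversarial} rather than re-derive it.

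Now comes the pessimism-plus-simulation chain. Write $\widetilde M \in \arg\min_{M\in\MM^\alpha_{\tilde\DD}} V_M^{\hat\pi}(d_0)$. Because $\bar M \in \MM^\alpha_{\tilde\DD}$,
\[
V_{\widetilde M}^{\hat\pi}(d_0) \;\le\; V_{\bar r}^{\hat\pi}(d_0),
\]
and by optimality of $\hat\pi$ in~\eqref{eq:cppo},
\[
V_{\widetilde M}^{\hat\pi}(d_0) \;\ge\; \min_{M\in\MM^\alpha_{\tilde\DD}} V_M^{\pi}(d_0) \;\ge\; V_{\bar r}^{\pi}(d_0) - \sup_{M\in\MM^\alpha_{\tilde\DD}}\bigl|V_{\bar r}^{\pi}(d_0) - V_M^{\pi}(d_0)\bigr|.
\]
Chaining gives $V_{\bar r}^\pi(d_0) - V_{\bar r}^{\hat\pi}(d_0) \le \sup_{M\in\MM^\alpha_{\tilde\DD}}|V_{\bar r}^\pi(d_0) - V_M^\pi(d_0)|$. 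A standard simulation lemma bounds this difference by $\tfrac{1}{(1-\gamma)^2}\,\E_{d^\pi}\!\bigl[\sqrt{\EE(M;\bar r)}\bigr]$ up to constants (summing reward mismatch and TV distance of dynamics along $d^\pi$, then applying Cauchy--Schwarz).

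Finally, I would apply Cauchy--Schwarz / Jensen to turn the $d^\pi$-expectation into an $L^2$-norm, then use the definition of $\mathscr{C}_{model}(\pi;\MM_{model},\bar r)$ to transfer from $d^\pi$ to $\mu$:
\[
\E_{d^\pi}\!\bigl[\sqrt{\EE(M;\bar r)}\bigr] \;\le\; \sqrt{\E_{d^\pi}[\EE(M;\bar r)]} \;\le\; \sqrt{\mathscr{C}_{model}(\pi;\MM_{model},\bar r)\,\E_{\mu}[\EE(M;\bar r)]},
\]
and then plug in the MLE bound. The resulting rate
$\sqrt{\mathscr{C}_{model}(\pi;\MM_{model},\bar r)}\cdot\tfrac{R_\text{max}}{(1-\gamma)^2}\sqrt{\tfrac{|\SS|^2|\AA|\log(1/\delta)}{|\tilde\DD|}}$ is exactly what the proposition claims; the $R_\text{max}$ factor enters from bounding the reward-mismatch contribution in the simulation lemma over models with $r_M\in[-R_\text{max},R_\text{max}]$.

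The step I expect to be the main obstacle is the simulation lemma in the \emph{signed-reward} regime: the original CPPO analysis assumes $r\in[0,1]$, whereas here we allow $\bar r, r_M \in [-R_\text{max},R_\text{max}]$. I need to verify that the simulation-lemma constants scale linearly in $R_\text{max}$ (rather than, say, in the range $2R_\text{max}$ in a way that breaks Bellman-recursion telescoping) — which they do, since the only place the reward magnitude enters is as a multiplicative factor in front of the TV-distance term via $\|V_M^\pi\|_\infty \le R_\text{max}/(1-\gamma)$. Everything else transfers verbatim from \cref{th:armor}; indeed the proof is strictly easier because there is no $V_M^{\pi_\text{ref}}$ term to cancel.
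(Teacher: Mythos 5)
Your proposal is correct and follows essentially the same route as the paper's proof: the identification $\MM^\alpha_{\tilde\DD}\equiv\MM^\alpha_{\bar\DD}$, the high-probability inclusion $\bar M\in\MM^\alpha_{\bar\DD}$, the pessimism-plus-optimality chain reducing the regret to $\max_{M\in\MM^\alpha_{\bar\DD}}|V_{\bar M}^\pi(d_0)-V_M^\pi(d_0)|$, and the simulation-lemma-plus-concentrability bound (which the paper imports as Eq.~(20) of the ARMOR analysis rather than re-deriving). Your extra care about the signed-reward range entering only through $\|V_M^\pi\|_\infty\le R_{\max}/(1-\gamma)$ is consistent with how the paper handles it.
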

\begin{proof}
    By construction, we have $\bar M=(\SS,\AA, P, \bar r,\gamma)\in \MM_{model}$. Since $\bar r(s,a)=\tilde r(s,a)$ for any $(s,a)\in\supp{\mu}$, we have $\MM^\alpha_{\tilde\DD}\equiv \MM^\alpha_{\bar\DD}$. The rest of the proof follows similarly from the proof of Theorem 2 in~\cite{bhardwaj2023adversarial}. By Lemma 5 from~\cite{bhardwaj2023adversarial}, with high probability, $\bar M \in \MM^\alpha_{\bar\DD}$. Therefore,
    \begin{equation*}
        \begin{split}
            V_{\bar r}^\pi(d_0) - V_{\bar r}^{\hat\pi}(d_0) &= V_{\bar M}^\pi(d_0) - V_{\bar M}^{\hat\pi}(d_0) \leq V_{\bar M}^\pi(d_0) - \min_{M\in\MM^\alpha_{\bar\DD}}\left(V_M^{\hat\pi}(d_0)\right)\\
            &\leq V_{\bar M}^\pi(d_0) - \min_{M\in\MM^\alpha_{\bar\DD}}\left(V_M^{\pi}(d_0)\right)\\
            &\leq \max_{M\in \MM^\alpha_{\bar\DD}} |V_{\bar M}^\pi(d_0) - V_{M}^\pi(d_0)|
        \end{split}
    \end{equation*}
    where the second inequality follows from the optimality of $\hat\pi$. By (20) from~\cite{bhardwaj2023adversarial}, for any $M\in \MM_{\bar\DD}^\alpha$, $|V_{\bar M}^\pi(d_0) - V_{M}^\pi(d_0)| \leq  O\left(\sqrt{\mathscr{C}_{model}(\pi;\MM_{model},\bar r)}\frac{R_\text{max}}{(1-\gamma)^2}\sqrt{\frac{|\SS|^2|\AA|\log(\sfrac{1}{\delta})}{|\tilde\DD|}}\right)$. Following a similar argument to the proof of~\cref{th:atac}, there is no need to take the union bound with respect to $\bar r$, which concludes the proof.
\end{proof}
We omit the proof for the following corollary as it is the same of the proof of~\cref{cr:armor admissible}.
\begin{corollary}[CPPO is admissible]
    For any $R_\text{max} \geq R$, CPPO is $R$-admissible with respect to $\tilde{\RR}$ for any $\tilde{\RR}\subseteq (\SS\times\AA\to [-1, 1])$.
\end{corollary}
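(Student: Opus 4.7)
The plan is to mirror the proof of Corollary~\ref{cr:armor admissible} verbatim, simply swapping the invocation of Proposition~\ref{th:armor} with Proposition~\ref{th:cppo}. First I would fix an arbitrary $\tilde r \in \tilde\RR$ and an arbitrary comparator policy $\pi$ with bounded density ratio $C_\infty \coloneqq \sup_{s,a} d^\pi(s,a)/\mu(s,a) < \infty$ (the only case that matters for the admissibility definition, since $\EE(\pi,\mu) = o(1)$ only needs to hold for such $\pi$). The goal is then to show that CPPO's learned policy $\hat\pi$ satisfies $V_{\bar r}^\pi(d_0) - V_{\bar r}^{\hat\pi}(d_0) = o(1)$ uniformly over all $\bar r \in \RR_R(\tilde r)$.

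Next I would verify two containment facts that allow Proposition~\ref{th:cppo} to be invoked uniformly over $\RR_R(\tilde r)$. The first is that $\mathscr{C}_{model}(\pi; \MM_{model}, \bar r) \leq C_\infty$ for every $\bar r$, which follows from the same density-ratio upper bound noted right after the definition of generalized single-policy concentrability (analogous to~\cref{lm:bellman error transfer coefficient}) — crucially, this bound is independent of $\bar r$. The second is that the reward-range requirement of Proposition~\ref{th:cppo}, namely $|\bar r(s,a)| \leq R_\text{max}$ with agreement on $\supp\mu$, is satisfied by every $\bar r \in \RR_R(\tilde r)$ whenever $R_\text{max} \geq R$, since by definition such $\bar r$ take values in $[-R,1] \subseteq [-R_\text{max}, R_\text{max}]$ and coincide with $\tilde r$ on $\supp\mu$.

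Combining these two facts with Proposition~\ref{th:cppo} yields, with probability $1-\delta$,
\begin{equation*}
  V_{\bar r}^\pi(d_0) - V_{\bar r}^{\hat\pi}(d_0) \leq O\!\left(\sqrt{C_\infty}\,\frac{R_\text{max}}{(1-\gamma)^2}\sqrt{\frac{|\SS|^2|\AA|\log(1/\delta)}{|\tilde\DD|}}\right) = o(1)
\end{equation*}
simultaneously for all $\bar r \in \RR_R(\tilde r)$. This establishes $R$-admissibility with $\EE(\pi,\mu)$ equal to the right-hand side above, matching the admissibility definition.

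I do not anticipate a genuine obstacle here: the proof is essentially a two-line argument once Proposition~\ref{th:cppo} is in hand, and the ARMOR corollary already demonstrates the identical reduction. The only minor point worth being careful about is the uniform-over-$\bar r$ aspect — but as already pointed out in the proofs of Propositions~\ref{th:atac} and~\ref{th:armor}, no union bound over the reward class is needed, because the concentration analysis is performed on $\supp\mu$ (where all $\bar r \in \RR_R(\tilde r)$ coincide with $\tilde r$) and the off-support contributions are controlled by the uniform range bound $R_\text{max}$.
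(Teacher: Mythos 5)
Your proposal matches the paper exactly: the paper omits this proof precisely because it is identical to that of Corollary~\ref{cr:armor admissible}, which is the reduction you carry out (bound $\mathscr{C}_{model}(\pi;\MM_{model},\bar r)$ by the density ratio $C_\infty$ independently of $\bar r$, check $[-R,1]\subseteq[-R_\text{max},R_\text{max}]$, and invoke Proposition~\ref{th:cppo}). Your remarks on the uniformity over $\bar r\in\RR_R(\tilde r)$ are also consistent with how the paper handles it inside the proof of Proposition~\ref{th:cppo}.
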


\subsection{VI-LCB}\label{sec:vi-lcb}

\begin{algorithm}[t]
\caption{VI-LCB \citep{rashidinejad2021bridging}} 
    \label{alg:vi-lcb}
        \centering
    \begin{algorithmic}[1]
        \STATE {\bfseries Input:} Batch dataset $\tilde{\DD}$, discount factor $\gamma$, confidence level $\delta$, and \blue{value range $V_{\max}$}.
        \STATE Set $J:=\frac{\log N}{1-\gamma}$. 
        \STATE Randomly split $\tilde{\DD}$ into $J+1$ sets $\tilde{\DD}_t = \{(s_i, a_i, r_i, s_i')\}_{i=1}^m$ for $j\in\{0,1,\cdots,J\}$ with $m=\frac{N}{J+1}$.
        \STATE Set $m_0(s,a):=\sum_{i=1}^m\one\{(s_i,a_i) = (s,a)\}$ based on dataset $\tilde{\DD}_0$. 
        \STATE For all $a\in\AA$ and $s\in\SS$, initialize $Q_0(s,a) = \blue{-V_{\max}}$, $V_0(s) = \blue{-V_{\max}}$ and set $\pi_0(s)=\argmax_a m_0(s,a)$.
        \FOR{$j=1,\cdots,J$}
        \STATE Initialize $r_t(s,a) = 0$ and set $P_{s,a}^j$ to be a random probability vector. 
        \STATE Set $m_t(s,a):=\sum_{i=1}^m \one \{(s_i,a_i) = (s,a)\}$ based on dataset $\tilde{\DD}_t$.
        \STATE Compute penalty $b_t(s,a)$ for 
        $L = 2000 \log(2(J+1)\abs{\SS}\abs{\AA}/\delta)$ 
        \[
            b_t(s,a):= \blue{2} V_{\max}\sqrt{\frac{L}{\max(m_t(s,a), 1)}}.
        \]
        \FOR{$(s,a)\in(\SS,\AA)$}
        \IF{$m_t(s,a)\geq 1$}
        \STATE Set $P_{s,a}^j$ to be empirical transitions and $r_t(s,a)$ be empirical average of rewards. 
        \ENDIF
        \STATE {Set $Q_{j}(s,a) \leftarrow r_t(s,a)-b_t(s,a) + \gamma P_{s,a}^j V_{j-1} $.}
        \ENDFOR
        \STATE Compute $V_t^{mid}(s)\leftarrow \max_{a}Q_t(s,a)$ and $\pi_t^{mid}(s)\in \argmax_a Q_t(s,a)$.
        \FOR{$s\in\SS$}
        \IF{$V_t^{mid}(s) \leq V_{j-1}(s)$}
        \STATE $V_t(s)\leftarrow V_{j-1}(s)$ and $\pi_t(s)\leftarrow \pi_{j-1}(s)$.
        \ELSE
        \STATE $V_t(s)\leftarrow V_t^{mid}(s)$ and $\pi_t(s)\leftarrow \pi_t^{mid}(s)$.
        \ENDIF
        \ENDFOR
        \ENDFOR
        \STATE{\bfseries Return} 
        $\hat{\pi} = \pi_J$
    \end{algorithmic}
\end{algorithm}
We show VI-LCB \citep{rashidinejad2021bridging} is admissible.
VI-LCB is a pessimistic version of value iteration. It adds negative bonuses to the Bellman backup step in order to underestimate the value when there are missing data. By being pessimistic in value estimation, it can overcome the issue of $\mu$ being non-exploratory. 
We recap the VI-LCB algorithm in \cref{alg:vi-lcb}. We highlight the changes we make in \blue{blue}. These changes are due to that originally the authors in \citep{rashidinejad2021bridging} assume the rewards are non-negative, but the rewards in the admissibility definition \cref{def:admissibility} can take negative values. Therefore, we make these changes accordingly.
We state and prove the guarantee below.\footnote{We referenced to lemmas and equations based on the version arXiv:2103.12021.}

\begin{proposition} \label{th:VI-LCB admissibility}
    For $V_{\max}\geq \frac{R}{1-\gamma}$, VI-LCB  is $R$-admissible with respect to any $\tilde{\RR} \subseteq ( \SS\times\AA \to [-1,1])$.
\end{proposition}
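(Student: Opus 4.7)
The plan is to adapt the original VI-LCB analysis of \citep{rashidinejad2021bridging} to the class $\RR_R(\tilde r)$ by retracing its two main ingredients, pessimism and regret, and observing that neither step needs a union bound over $\bar r \in \RR_R(\tilde r)$. The only substantive change to the original argument is that the value range $1/(1-\gamma)$ is replaced by $V_{\max}$ in every concentration inequality; the enlarged bonus factor $2V_{\max}$ on line 9 of \cref{alg:vi-lcb} compensates exactly for this rescaling.

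The first observation is algorithmic: VI-LCB's iterates $(Q_j, V_j, \pi_j)$ depend on the reward labels only through the empirical averages $r_t(s,a)$, which are formed only when $m_t(s,a)\ge 1$, i.e.\ essentially only for $(s,a)\in\supp{\mu}$. Because every $\bar r\in\RR_R(\tilde r)$ coincides with $\tilde r$ on $\supp{\mu}$, the policy $\hat\pi$ returned by the algorithm is identical whether the data is labelled with $\tilde r$ or with any $\bar r\in\RR_R(\tilde r)$. Consequently the pessimism and regret claims below may be stated for a fixed sample and a single high-probability event, while being uniform in $\bar r$.

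Next I would prove the pessimism step $V_J(s)\le V_{\bar r}^{\hat\pi}(s)$ uniformly in $\bar r\in\RR_R(\tilde r)$. The choice $V_{\max}\ge R/(1-\gamma)$ ensures $V_{\bar r}^{\pi}(s)\in[-V_{\max},V_{\max}]$ for every $\bar r$ and $\pi$, justifying the initialization $V_0=-V_{\max}$ and the Hoeffding concentration of the empirical Bellman operator with range $V_{\max}$. For $(s,a)\in\supp{\mu}$, the reward term is exact because $r_t(s,a)=\tilde r(s,a)=\bar r(s,a)$ (rewards are deterministic), and the bonus $2V_{\max}\sqrt{L/m_t(s,a)}$ dominates the Hoeffding deviation $\gamma V_{\max}\sqrt{L/m_t(s,a)}$ of the empirical transition applied to $V_{j-1}$; for $(s,a)\notin\supp{\mu}$ the bonus $2V_{\max}\sqrt{L}$ drives $Q_J(s,a)$ below $-V_{\max}\le Q_{\bar r}^{\hat\pi}(s,a)$ trivially, so pessimism is free off-support.

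Finally I would execute the regret step for an arbitrary comparator $\pi$ with $C^\pi\coloneqq \sup_{s,a} d^\pi(s,a)/\mu(s,a)<\infty$. Since $d^\pi$ is supported on $\supp{\mu}$, one has $V_{\bar r}^\pi(d_0)=V_{\tilde r}^\pi(d_0)$ for every $\bar r\in\RR_R(\tilde r)$. Following the telescoping decomposition in \citep{rashidinejad2021bridging} with $V_{\max}$ in place of $1/(1-\gamma)$, each per-step Bellman residual along $d^\pi$ is controlled by $b_t(s,a)$ (the residual involves only $(s,a)\in\supp{\mu}$, where $\tilde r=\bar r$), yielding
\[
V_{\bar r}^\pi(d_0)-V_{\bar r}^{\hat\pi}(d_0)\;\le\; V_{\tilde r}^\pi(d_0)-V_J(d_0)\;\le\; \tilde O\!\left(\frac{V_{\max}}{(1-\gamma)}\sqrt{\frac{C^\pi|\SS|\log(1/\delta)}{N}}\right)\;=\;o(1),
\]
which is the desired bound $\EE(\pi,\mu)$. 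The main obstacle is purely bookkeeping: checking that every Hoeffding/Bernstein step in the original proof remains valid after the substitution $1/(1-\gamma)\leadsto V_{\max}$ and that the union bound over iterations and state-actions (embedded in $L$) already suffices, so no further union bound over the reward class is required.
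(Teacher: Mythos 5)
Your proposal is correct and follows essentially the same route as the paper's proof: rescale the value range to $V_{\max}$, double the bonus accordingly, and observe that no union bound over $\RR_R(\tilde r)$ is needed because the concentration analysis only touches $\supp{\mu}$, where all rewards in the class agree, while off-support errors are absorbed by a uniform $V_{\max}$ bound. Your additional observation that the learned policy $\hat\pi$ is literally identical for every $\bar r\in\RR_R(\tilde r)$ (since the algorithm only sees rewards through on-support empirical averages) is a clean way to phrase the uniformity, and the minor discrepancies in the final rate (the $|\AA|$ factor and the power of $1-\gamma$) are immaterial since only $o(1)$ decay in $N$ is required for admissibility.
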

\begin{proof}
    First, we introduce a factor of $2$ in their proof of Lemma 1~\citep{rashidinejad2021bridging} to accomodate that the rewards here can be negative. This is reflected in the updated bonus definition in \cref{alg:vi-lcb}. 
This updated Lemma 1 of~\citep{rashidinejad2021bridging} now captures the good event that the bonus can upper bound the error of the empirical estimate of Bellman backup. It shows this good event is true with high probability. 

We remark that while originally Lemma 1 of~\citep{rashidinejad2021bridging} is proved for a single reward function, it actually holds for simultaneously for all the reward functions in $\RR_R(\tilde{r})$, without the need of introducing additionally a union bound. 
The reason is that in the proof of Lemma 1 in ~\citep{rashidinejad2021bridging}, the concentration is used only for the estimating the Bellman on the data support. This bound would apply to all rewards in $\RR_R(\tilde{r})$ since they agree exactly on the data. 
For state-action pairs out of the support, the proof takes a uniform bound based on the size of $V_{\max}$, which also applies to all rewards in $\RR_R(\tilde{r})$.

Therefore, we can apply the updated Lemma 1 of  ~\citep{rashidinejad2021bridging} to prove the desired high probability bound needed in the admissibility condition. 
Under this good event, we can use the upper bound in (54b) in \citep{rashidinejad2021bridging}  to bound the regret.
Consider some $\pi$ such that $C \coloneqq \sup_{s,a} \frac{d^{\pi(s,a)}}{\mu(s,a)} < \infty$. Suppose $\tilde{D}$ has $N$ transitions.
Take some $V_{\max} \geq \frac{R}{1-\gamma}$. 
Then running VI-LCB with $J$ iterations ensures\footnote{We add back an extra dependency on $|\AA|$ as their original proof assumes $\pi$ is deterinistic, which is not necessarily the case here.} for any $\bar{r} \in \RR_R(\tilde{r})$, 
\begin{align*}
    V_{\bar{r}}^{\pi}(d_0) - V_{\bar{r}}^{\hat{\pi}}(d_0)
    \leq  \gamma^J \blue{2 V_{\max}} + \frac{\blue{64 V_{\max}}}{1-\gamma} \sqrt{\frac{L|\SS||\AA| C (J+1)}{N}}
\end{align*}
where we mark the changes due to using rewards which can takes negative values in blue. Setting $J= \frac{\log N}{1-\gamma}$ as in \citep{rashidinejad2021bridging}, with probability $1-\delta$, it holds that
\begin{align*}
    V_{\bar{r}}^{\pi}(d_0) - V_{\bar{r}}^{\hat{\pi}}(d_0)
    &\leq \tilde{O}\left( \gamma^J V_{\max} + \frac{ V_{\max}}{1-\gamma} \sqrt{\frac{L|\SS||\AA| C J}{N}} \right)\\
    &= \tilde{O}\left( V_{\max}\sqrt{\frac{|\SS||\AA| C \log\frac{1}{\delta}}{N(1-\gamma)^3}} \right)  
\end{align*}
Thus, VI-LCB is $R$-admissible.
\end{proof}

\subsection{PPI and PQI}\label{sec:ppi-pqi}

\begin{algorithm}[t]
\caption{Pessimistic Policy Iteration (PPI) \citep{liu2020provably}} 
    \label{alg:ppi}
        \centering
    \begin{algorithmic}[1]
        \STATE {\bfseries Input:} Batch dataset $\tilde{\DD}$, function class $\FF$, probability estimator $\hat{\mu}$, hyperparameter $b$.
        
        \FOR{$i\in\{0,\dots,I-1\}$}
            \FOR{$j\in\{0,\dots,J\}$}
            \STATE 
            $ f_{i,j+1} \gets \argmin_{f\in\FF} \LL_{\tilde{D}}(f, f_{i,j}, \hat{\pi}_i)$
            \ENDFOR
            \STATE $\hat{\pi}_{i+1} \gets \argmax_\pi \E_{\tilde{\DD}}[ \E_\pi [ \zeta \circ f_{i,J+1}]$
        \ENDFOR
        \STATE{\bfseries Return} 
        $\hat{\pi}= \hat{\pi}_I$
    \end{algorithmic}
\end{algorithm}
\begin{algorithm}[t]
\caption{Pessimistic Q Iteration (PQI) \citep{liu2020provably}} 
    \label{alg:pqi}
        \centering
    \begin{algorithmic}[1]
        \STATE {\bfseries Input:} Batch dataset $\tilde{\DD}$, function class $\FF$, probability estimator $\hat{\mu}$, hyperparameter $b$.
        \FOR{$i\in\{0,\dots,I-1\}$}        
            \STATE 
            $ f_{i+1} \gets \argmin_{f\in\FF} \LL_{\tilde{D}}(f, f_i)$            
            \STATE $\hat{\pi}_{i+1}(s) \gets \argmax_a   \zeta \circ f_{i+1}(s,a)$
        \ENDFOR
        \STATE{\bfseries Return} 
        $\hat{\pi}= \hat{\pi}_I$
    \end{algorithmic}
\end{algorithm}

We show PPI and PQI proposed in \citep{liu2020provably} are admissible. We present their algorithms in \cref{alg:ppi} and \cref{alg:pqi}, with $(\zeta\circ f)(s,a)\coloneqq \zeta(s,a)f(s,a)$ for any $\zeta, f:\SS\times\AA\to \R$. 
These algorithms use a probability estimator of the data distribution (denoted as $\hat{\mu}$), which in the tabular case is the empirical distribution. Based on $\hat{\mu}$, they define a filter function 
\[
    \zeta(s,a;\hat{\mu}, b)  = \one[\hat\mu(s,a) \geq b ]
\]
This filter function classifies whether a state-action pair is in the support, and it is used to modify the Bellman operators in dynamics programming as shown in line 6 of \cref{alg:ppi} and line 4 of \cref{alg:pqi}. As a result, the Bellman backup and the policy optimization only consider in-support actions, which mitigates the issue of learning with non-exploratory $\mu$. The loss functions (i.e., $\LL_{\tilde{\DD}}$) in \cref{alg:ppi} and \cref{alg:pqi} denote the squared Bellman error (with a target network) as used in fitted Q iteration~\citep{riedmiller2005neural}. We omit the details here.

In summary, PPI and PQI follow the typical policy iteration and value iteration schemes, except that the backup and the argmax are only taken within the observed actions (or actions with sufficiently evidence to be in the support when using function approximators). This idea is similar to the spirit of the later IQL algorithm~\citep{kostrikov2021offline}, except IQL doesn not construct the filter function explicitly but relies instead on expectile maximization.
\begin{proposition} \label{th:PPI/PQI admissibility}
    Suppose $\FF = (\SS\times\AA\to\blue{[-V_{\max}, V_{\max}]})$\footnote{The original algorithms of PPI and PQI assumes $\FF=(\SS\times\AA\to[0, V_\text{max}])$. We note that our extension to $[-V_\text{max}, V_\text{max}]$ do not change the performance bound as it absorbed by the $\tilde O$ notation.}. 
    For $V_{\max}\geq \frac{R}{1-\gamma}$, PPI/PQI is $R$-admissible with respect any $\tilde{\RR} \subseteq ( \SS\times\AA \to [-1,1])$.
\end{proposition}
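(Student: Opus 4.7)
My plan is to adapt the admissibility argument used for VI-LCB in \cref{th:VI-LCB admissibility} to the PPI/PQI setting, leveraging the fact that the filter function $\zeta(s,a;\hat\mu,b) = \one[\hat\mu(s,a)\ge b]$ masks out precisely those state-actions where $\bar r$ can differ across the class $\RR_R(\tilde r)$. The first step is to observe that since $\bar r(s,a)=\tilde r(s,a)$ for all $(s,a)\in\supp{\mu}$, and since on the learner's side only in-support transitions appear in $\tilde \DD$, the sample-based Bellman loss $\LL_{\tilde\DD}(\cdot,\cdot)$ is bit-for-bit identical whether we run PPI/PQI on $\tilde\DD$ or on the hypothetical $\bar\DD$ relabeled with $\bar r$. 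Combined with the fact that the filter $\zeta$ zeros out the policy-improvement and target-backup contributions of any $(s,a)$ with $\hat\mu(s,a)<b$, the entire trajectory of iterates $\{f_i\},\{\hat\pi_i\}$ is independent of the choice of $\bar r\in\RR_R(\tilde r)$. Hence we only need one high-probability bound, rather than a union bound over the (uncountable) class $\RR_R(\tilde r)$.

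Next, I would invoke the main performance theorem of \citet{liu2020provably} (their guarantees on PPI and PQI for realizable $\FF$ with completeness under the filtered Bellman operator), modifying it in two places to match our setting. First, the original statement assumes $\FF\subseteq(\SS\times\AA\to[0,V_{\max}])$; extending to $\FF\subseteq(\SS\times\AA\to[-V_{\max},V_{\max}])$ only inflates the concentration and propagation constants by a fixed factor of $2$ because it enters the Rademacher/covering bounds through the range of $f$. Second, one must verify realizability and Bellman completeness under the filtered operator for the reward $\bar r$: since $\bar r(s,a)\in[-R,1]$ and $V_{\max}\ge R/(1-\gamma)$, every $\bar r+\gamma P^\pi f$ with $f\in\FF$ satisfies $|\bar r+\gamma P^\pi f|\le (1-\gamma)V_{\max}+\gamma V_{\max}=V_{\max}$, so the class is closed under the (unfiltered and hence filtered) Bellman operator. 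Realizability of $Q^\pi_{\bar r}$ follows by the same bound.

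With these adaptations in place, applying their bound to any comparator $\pi$ with $C_\infty:=\sup_{s,a}\frac{d^\pi(s,a)}{\mu(s,a)}<\infty$ yields, with probability at least $1-\delta$,
\begin{equation*}
V_{\bar r}^\pi(d_0)-V_{\bar r}^{\hat\pi}(d_0)
\;\le\;\tilde O\!\left(\frac{V_{\max}}{(1-\gamma)^2}\sqrt{\frac{C_\infty\,|\SS||\AA|\log(1/\delta)}{N}}\right)
\;+\;\text{(bias from choosing the threshold }b\text{)},
\end{equation*}
which is $o(1)$ as $N\to\infty$ provided $b$ is set as recommended in \citep{liu2020provably}. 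Because, as argued above, this bound is simultaneously valid for every $\bar r\in\RR_R(\tilde r)$ from a single application of the concentration argument, this establishes $R$-admissibility with $\EE(\pi,\mu)$ of the displayed order.

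The main obstacle I anticipate is the second step: verifying that the completeness/realizability conditions required by \citet{liu2020provably} continue to hold after symmetrizing the range to $[-V_{\max},V_{\max}]$, and that their concentration analysis for squared Bellman loss still goes through with the same scaling. This is routine but not entirely automatic: one has to track how the range of $f$ enters their covering-number and Bernstein-style arguments, and ensure that the filter $\zeta$ does not introduce bias terms that depend on the out-of-support values of $\bar r$ (it does not, precisely because $\zeta$ is built from $\hat\mu$, which is computed without reference to the reward). Once these are checked, the rest of the argument is a direct transcription.
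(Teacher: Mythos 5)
Your proposal is correct and follows essentially the same route as the paper's proof: invoke the PPI/PQI guarantee of \cite{liu2020provably}, note that the extension of $\FF$ to the symmetric range $[-V_{\max},V_{\max}]$ only affects constants, and observe that no union bound over $\RR_R(\tilde{r})$ is needed because all such rewards agree on $\supp{\mu}$ and the concentration analysis only touches the data support. The one place the paper does more explicit work is the threshold $b$: rather than deferring to a ``recommended'' setting, it bounds the support-estimation bias by $\frac{\sup_{s,a,t} d_t^\pi(s,a)}{b}\,\epsilon_\mu + C\,\E_\mu[\one[\mu(s,a)\le 2b]] \le \frac{\sup_{s,a,t} d_t^\pi(s,a)}{b\sqrt{N}} + 2Cb|\SS||\AA|$ and optimizes over $b$ to obtain an explicit $O(\sqrt{C|\SS||\AA|}\,N^{-1/4})$ term, which is what makes the $o(1)$ claim concrete.
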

\begin{proof}
We prove for PPI; the proof for PQI follows similarly.
Consider some $\pi$ such that $C \coloneqq \sup_{s,a} \frac{d^{\pi(s,a)}}{\mu(s,a)} < \infty$. 
Suppose $\tilde{D}$ has $N$ transitions.
Take some $V_{\max} \geq \frac{R}{1-\gamma}$ and some $\bar{r}$ from $\RR_R(\tilde{r})$.
We use the Corollary 1 in~\citep{liu2020provably}: With $I$ large enough, it holds with probability $1-\delta$,
\begin{align*}
    V_{\bar{r}}^{\pi}(d_0) - V_{\bar{r}}^{\hat{\pi}}(d_0) 
    &\leq \tilde{O} \left( \frac{V_{\max}}{(1-\gamma)^3} \left(  \sqrt{\frac{|\SS||\AA|\ln(1/\delta)}{N}}
    + \frac{\sup_{s,a,t} d_t^\pi(s,a)}{b} \epsilon_\mu  +  (1-\gamma) \E_{\pi^*, P}[\one [ \mu(s,a) \leq 2b ]]\right) \right)
\end{align*}
where $\epsilon_\mu =D_{TV}(\hat{\mu}, \mu) = O(\frac{1}{\sqrt{N}})$. 
Further, 
\begin{align*}
    (1-\gamma) \E_{\pi^*, P}[\one [ \mu(s,a) \leq 2b ]] 
    &=
    \sum_{s,a}  d^{\pi^*}(s,a) \one [ \mu(s,a) \leq 2b ]  \\
    &=\sum_{s,a}  \mu(s,a) \frac{d^{\pi^*}(s,a)}{\mu(s,a)}\one [ \mu(s,a) \leq 2b ]\\    
    &\leq C \E_{\mu}[ \one [ \mu(s,a) \leq 2b ]].
\end{align*}
We can then upper bound the performance difference above as
\begin{align*}
    V_{\bar{r}}^{\pi}(d_0) - V_{\bar{r}}^{\hat{\pi}}(d_0) 
    &\leq \tilde{O} \left( \frac{V_{\max}}{(1-\gamma)^3} \left(  \sqrt{\frac{|\SS||\AA|\ln(1/\delta)}{N}}
    + \frac{\sup_{s,a,t} d_t^\pi(s,a)}{b} \frac{1}{\sqrt{N}}  +   C \E_{\mu}[ \one [ \mu(s,a) \leq 2b ]] \right) \right)
\end{align*}

We bound the latter two terms by tuning $b$. 
Notice 
$\E_{\mu}[ \one [ \mu(s,a) \leq 2b ]]  \leq |\SS||\AA| 2b$ at most.  This implies 
\begin{align*}
    &\inf_b \frac{\sup_{s,a,t} d_t^\pi(s,a)}{b} \frac{1}{\sqrt{N}}  +   C \E_{\mu}[ \one [ \mu(s,a) \leq 2b ]] \\
    &\leq \inf_b \frac{\sup_{s,a,t} d_t^\pi(s,a)}{b} \frac{1}{\sqrt{N}} + C  |\SS||\AA| 2b\\
   &= O\left( \frac{\sqrt{C |\SS||\AA|}}{N^{1/4}}  \right)
\end{align*}
Thus, 
\begin{align*}
    V_{\bar{r}}^{\pi}(d_0) - V_{\bar{r}}^{\hat{\pi}}(d_0)     
    &\leq \tilde{O} \left( \frac{V_{\max}}{(1-\gamma)^3} \left(  \sqrt{\frac{|\SS||\AA|\ln(1/\delta)}{N}}
    +\frac{\sqrt{C |\SS||\AA|}}{N^{1/4}}
    \right) \right)
\end{align*}
We can apply this similar argument used in the previous proofs to show that this bound simultaneously applies to all $\bar{r} \in \RR_{R}(\tilde{r})$ since the high-probability concentration analysis is only taken on the data distribution where all rewards in $\RR_{R}(\tilde{r})$ are equal. Thus, when $V_{\max} \geq \frac{R}{1-\gamma}$, PPQ (and similarly PQI) is $R$-admissibile. 
\end{proof}

\subsection{MOReL and MOPO}\label{sec:MOReL-mopo}

MOReL~\citep{kidambi2020morel} and MOPO~\citep{yu2020mopo} are two popular model-based offline RL algorithms. The two algorithms operate in a similar manner, except that MOReL~\citep{kidambi2020morel} uses the truncation approach (similar to PPI and PQI~\citep{liu2020provably}) which classifies state-actions into known and unknown sets, whereas MOPO~\citep{yu2020mopo} uses negative bonuses (similar to VI-LCB~\citep{rashidinejad2021bridging}). In this section, we show both algorithms are admissible.

We denote a model as $M=(\SS,\AA, P_M, r_M,\gamma)$, where $P_M:\SS\times\AA\to\Delta(\SS)$ is the model dynamics, and $r_M:\SS\times\AA\to[-1,1]$\footnote{Both MOPO and MOReL assume that the reward function is known. Here we study the variants of the two algorithms which also learn the reward function.} with $1\leq R_\text{max}<\infty$ being the reward function. The model class is denoted as $\MM_{model}$.

\subsubsection{MOReL}

We analyze a variant of MOReL which also learns the reward function, whereas MOReL in the originally paper~\citep{kidambi2020morel} assumes the reward function is given.  We suppose model reward $r_M$ and model dynamics $P_M$ are learned by 
\begin{align*}
    r_M^\star, P_M^\star \in \argmin_{M\in\MM_{model}} \EE_{\tilde \DD}(M)
\end{align*}
where
\begin{align*}
    \EE_{\tilde{\DD}}(M)\coloneqq \sum_{(s,a, \tilde{r}, s')\in\tilde\DD} -\log P_M(s'|s,a) + (r_M(s,a)- \tilde{r})^2.
\end{align*}
We define the set of known state-actions $\KK_\xi$ as
\begin{align}\label{eq: K_xi}
    \KK_\xi = \Big\{ (s,a) : \blue{\sup_{M_1,M_2\in \MM^\alpha_{\tilde\DD}}\big\{| r_{M_1}(s,a)- {r}_{M_2}(s,a)| + \gamma V_{\max} D_{TV}(P_{M_1}(\cdot|s,a), P_{M_2}(\cdot|s,a))\big\}} \leq \xi \Big\},
\end{align} 
with
\begin{equation*}
    \MM^\alpha_{\tilde \DD}\coloneqq \{ M\in \MM_{model}: \EE_{\tilde\DD}(M) - \min_{M'\in\MM_{model}}\EE_{\tilde\DD}(M')\leq \alpha\},
\end{equation*}
We note that $\alpha$ can be chosen as $\Theta(|\SS|^2|\AA|\log\sfrac{1}{\delta})$ so that $\MM_\alpha$ contains model $(\SS,\AA,P,\tilde r,\gamma)$ with probability $1-\delta$ for any $\delta\in(0,1]$ (see~\cite{bhardwaj2023adversarial} for details). 

\paragraph{Remark} Note that we change the definition for the set of known state-actions. In~\cite{kidambi2020morel}, the set is given by 
\begin{align}
    \KK_\xi^0 = \{ (s,a) : | r_M(s,a)- \tilde{r}(s,a)| + \gamma V_{\max} D_{TV}(P_M(\cdot|s,a), P(\cdot|s,a)) \leq \xi \}. 
\end{align}
However, it is not impossible to use such a set in practice since the reward function $\tilde r$ and true model $P(\cdot|s,a)$ are not given to the learner. Our known state-action set $\KK_\xi$, in comparison, is easier to construct. It simply measures the maximum disagreement between two data-consistent models. With a considerate choice of $\alpha$ (see~\cite{bhardwaj2023adversarial}), $\KK_\xi$ is always a subset of $\KK_\xi^0$. Our definition also matches better with the practical implementation in~\cite{kidambi2020morel} which uses the disagreement between any two models in an ensemble.

Then MOReL learns policy $\hat{\pi}$ by solving the MDP $(\SS,\AA,\hat{P},\hat{r},\gamma)$, where 
\begin{align*}
    \hat{r}(s,a) &= 
    \begin{cases}
        r_M^\star(s,a), & \quad s,a \in \KK_\xi \\
         -X,  & \quad \textrm{otherwise}
    \end{cases}
    \\
    \hat{P}(s'|s,a) &= 
    \begin{cases}
        P_M^\star(s'|s,a), & \quad s,a \in \KK_\xi \\
        \one(s'=s^\dagger),  & \quad  \textrm{otherwise}
    \end{cases}
\end{align*}
Note MOReL introduces an absorbing state $s^\dagger$. 

Below we prove a performance guarantee of MOReL. The proof here is different from that in the original paper, because here we need to consider reward learning and the original proof does not provide an exact rate that converges to zero as the data size increases. (In the original paper there is a non-zero bias term that depends on the comparator policy.)

\begin{proposition}[MOReL Performance Guarantee]\label{th:morel performance}
Given data $\tilde{\DD}=\{(s,a,\tilde{r},s')\}$ of size $N$ of an unknown MDP $(\SS,\AA,\tilde{r},P,\gamma)$. Suppose $\tilde{r},r_M^\star\in[-1,1]$ and $(\tilde{r},P)\in \MM_{\tilde{\DD}}^\alpha$.
Choose $\xi =\min(1, O(V_\text{max}(\sfrac{|\SS|^2|\AA|\log(\sfrac{1}{\delta})}{N})^{1/4}))$. 
Let $X \geq R$. For $\pi$ such that $\sup_{s,a} \frac{d^\pi(s,a)}{\mu(s,a)} = C < \infty$, with probability $1-\delta$, it holds that 
\begin{align*}
    (1-\gamma) \left( V_{\bar{r}}^{\pi}(d_0) - V_{\bar{r}}^{\hat{\pi}}(d_0) \right)   
    &\leq  O\left( V_{\max} \left( \frac{|\SS|^2|\AA|\log(\frac{1}{\delta})}{N}  \right)^{1/4} \right),\quad\text{with } V_{\max} \coloneqq \frac{R}{1-\gamma}
\end{align*}
for all $\bar{r}\in\RR_R(\tilde{r}) \coloneqq \left\{ \bar{r}: \bar{r}(s,a) = \tilde{r}(s,a), \forall (s,a) \in \supp{\mu} \textrm{ and }
        \bar{r}(s,a) \in  \left[-R, 1 \right], \forall (s,a) \in\SS\times\AA
        \right\}
$.
\end{proposition}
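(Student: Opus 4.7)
My plan is to decompose the regret as
\begin{align*}
V^\pi_{\bar r,P}(d_0)-V^{\hat\pi}_{\bar r,P}(d_0) = \underbrace{\big(V^\pi_{\bar r,P}-V^\pi_{\hat r,\hat P}\big)}_{\text{(i)}} + \underbrace{\big(V^\pi_{\hat r,\hat P}-V^{\hat\pi}_{\hat r,\hat P}\big)}_{\text{(ii)}} + \underbrace{\big(V^{\hat\pi}_{\hat r,\hat P}-V^{\hat\pi}_{\bar r,P}\big)}_{\text{(iii)}},
\end{align*}
where (ii)$\le 0$ by optimality of $\hat\pi$ on the learned MDP. The structural point that will let the bound hold for every $\bar r\in\RR_R(\tilde r)$ simultaneously, without any union bound over the reward class, is that $(\bar r,P)\in\MM^{\alpha}_{\tilde\DD}$ for each such $\bar r$: since $\bar r\equiv\tilde r$ on $\supp\mu$ and the MLE loss $\EE_{\tilde\DD}(\cdot)$ only queries rewards and transitions at sampled $(s,a)$, the model $(\bar r,P)$ achieves exactly the same loss as $(\tilde r,P)$, which lies in $\MM^{\alpha}_{\tilde\DD}$ by the assumption in the statement.

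For term (iii), I plan to exploit $X\ge R$ and the absorbing-state construction: choosing $\hat r(s^\dagger,\cdot)=-X$ with $\hat P(\cdot|s^\dagger,a)=\one(\cdot=s^\dagger)$ makes the pessimistic value upon entering $\KK_\xi^c$ equal to $-X/(1-\gamma)\le -R/(1-\gamma)$, which lower-bounds the true value $V^{\hat\pi}_{\bar r,P}$; on $\KK_\xi$ the approximate agreement established below handles the slack, so (iii) contributes at most $O(\xi/(1-\gamma))$. Term (i) is the main step. The standard simulation lemma yields
\begin{align*}
V^{\pi}_{\bar r,P}(d_0)-V^{\pi}_{\hat r,\hat P}(d_0)\le\frac{1}{1-\gamma}\,\E_{(s,a)\sim d^{\pi}_P}\!\left[|\bar r(s,a)-\hat r(s,a)|+\gamma V_{\max} D_{TV}\big(P(\cdot|s,a),\hat P(\cdot|s,a)\big)\right].
\end{align*}
Splitting this expectation at $\KK_\xi$: on $\KK_\xi$ the integrand is at most $\xi$, because both $(\bar r,P)$ and $(r^\star_M,P^\star_M)$ lie in $\MM^{\alpha}_{\tilde\DD}$ so the defining property of $\KK_\xi$ applies directly; on $\KK_\xi^c$ the integrand is crudely at most $O(V_{\max})$ using $X\ge R$ and $V_{\max}=R/(1-\gamma)$.

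The remaining step is to control $d^\pi(\KK_\xi^c)$. By concentrability $d^\pi\le C\,\mu$, so it suffices to bound $\mu(\KK_\xi^c)$. I would invoke standard multinomial MLE concentration: with the choice $\alpha=\Theta(|\SS|^2|\AA|\log(1/\delta))$, any $(s,a)\in\KK_\xi^c$ must have empirical count $n(s,a)\lesssim V_{\max}^2|\SS|\log(1/\delta)/\xi^2$, and a Chernoff multiplicative bound converts this into $\mu(\KK_\xi^c)\lesssim V_{\max}^2|\SS|^2|\AA|\log(1/\delta)/(\xi^2 N)$ with probability $1-\delta$. Assembling the three terms gives
\begin{align*}
(1-\gamma)\big(V^{\pi}_{\bar r,P}(d_0)-V^{\hat\pi}_{\bar r,P}(d_0)\big)\lesssim \xi \;+\; V_{\max}\cdot C\cdot\frac{V_{\max}^2\,|\SS|^2|\AA|\log(1/\delta)}{\xi^2 N},
\end{align*}
and the stated choice $\xi=O\!\big(V_{\max}(|\SS|^2|\AA|\log(1/\delta)/N)^{1/4}\big)$ makes the first term dominate (the second then decays as $N^{-1/2}$), delivering the claimed $N^{-1/4}$ rate.

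The hard parts will be (a) rigorously establishing the ``no union bound over $\RR_R(\tilde r)$'' property --- this rests on the data-only dependence of $\EE_{\tilde\DD}$ plus a uniform bound on $\KK_\xi^c$ that does not depend on $\bar r$; and (b) carefully handling the pessimism slack (iii), where the definition of $\hat r$ on the absorbing state and the role of $X\ge R$ must be reconciled with the simulation-lemma bound and with the $[-1,1]$ reward range assumed for $\tilde r$ and $r^\star_M$. Once these are settled, the remaining work is routine MLE-concentration and Chernoff bookkeeping together with the $\xi$-balancing above.
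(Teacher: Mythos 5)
Your decomposition, the treatment of terms (ii) and (iii), and the ``no union bound'' observation all match the paper's proof; in fact your handling of the $\KK_\xi$ part of term (i) is slightly cleaner than the paper's, since you use $(\bar r,P)\in\MM^\alpha_{\tilde\DD}$ directly in the pairwise-disagreement definition of $\KK_\xi$, whereas the paper first proves $\KK_\xi\subseteq\supp{\mu}$ (\cref{lm:K-alpha in mu}) to replace $\bar r$ by $\tilde r$ there. The one place you genuinely diverge is the bound on $\mu(\KK_\xi^c)$: the paper applies Markov's inequality to $\E_\mu[E_{\sup}]$ and then invokes the aggregate MLE concentration bound $\E_\mu[(r_M-\tilde r)^2+D_{TV}^2]\le O(|\SS|^2|\AA|\log(\sfrac{1}{\delta})/N)$ (\cref{lm:model error bound}), giving $\mu(\KK_\xi^c)\le O(V_{\max}\sqrt{|\SS|^2|\AA|\log(\sfrac{1}{\delta})/N}/\xi)$ and hence the $N^{-1/4}$ balance; you instead bound the empirical count of each $(s,a)\in\KK_\xi^c$ and convert via Chernoff. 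Your route works, but be careful with the per-pair count bound: the version-space constraint $\EE_{\tilde\DD}(M)-\min_{M'}\EE_{\tilde\DD}(M')\le\alpha$ is a \emph{global} budget, so a single model may spend all of $\alpha=\Theta(|\SS|^2|\AA|\log(\sfrac{1}{\delta}))$ at one state-action pair; the correct implication is $n(s,a)\lesssim V_{\max}^2\alpha/\xi^2$, not $V_{\max}^2|\SS|\log(\sfrac{1}{\delta})/\xi^2$ as you wrote. This inflates your second term by polynomial factors in $|\SS|,|\AA|$ (and it carries the $C$ from concentrability), but since with the prescribed $\xi$ that term decays as $N^{-1/2}$ while the $\xi$ term decays as $N^{-1/4}$, the stated conclusion still holds for $N$ sufficiently large. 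The paper's Markov-plus-aggregate-concentration route is worth knowing because it avoids the per-coordinate bookkeeping entirely and yields the tighter dependence on $|\SS|,|\AA|$ directly.
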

\begin{proof}
We extend $\tilde r$ as $\tilde r(s^\dagger,a) = -X$ for all $a\in\AA$. Let $\hat{M} = (\hat{r}, \hat{P)}$.

First, by the optimality of $\hat{\pi}$, we write 
\begin{align*}
    &V_{\bar{r}}^\pi(d_0) - V_{\bar{r}}^{\hat{\pi}}(d_0)\\
    =& V_{\bar{r}}^\pi(d_0) - V_{\hat{M}}^\pi(d_0) +V_{\hat{M}}^\pi(d_0) - V_{\hat{M}}^{\hat\pi}(d_0) +  V_{\hat{M}}^{\hat\pi}(d_0) -   V_{\bar{r}}^{\hat{\pi}}(d_0)\\
    \leq& V_{\bar{r}}^\pi(d_0) - V_{\hat{M}}^\pi(d_0)  + V_{\hat{M}}^{\hat\pi}(d_0) -   V_{\bar{r}}^{\hat{\pi}}(d_0)
\end{align*}
where $V_{\hat{M}}^\pi$ denotes the value of policy $\pi$ with respect to the reward $\hat{r}$ and dynamics $\hat{P}$. The last inequality follows from the optimality of $\hat\pi$ on $\hat M$.

Then we notice the fact that $\KK_\xi \subseteq \supp{\mu}$ with a small $\xi$. The proof of the lemma below is given in~\cref{sec:morel-technical-lemma}.
\begin{lemma} \label{lm:K-alpha in mu}
If $\xi\leq 1$, $\KK_\xi\subseteq \supp{\mu}$. 
\end{lemma}

Let $\hat{d}^\pi$ denote the average state-action visitation of $\pi$ with respect to $\hat{P}$.
\begin{align*}
    &(1-\gamma) \left( V_{\hat{M}}^{\hat{\pi}}(d_0) - V_{\bar{r}}^{\hat{\pi}}(d_0) \right)\\
    = &\E_{s,a \sim \hat{d}^{\hat{\pi}}}[ \hat{r}(s,a) + \gamma \E_{s'\sim\hat{P}|s,a} [V_{\bar{r}}^{\hat{\pi}}(s')] 
    -  \bar{r}(s,a) - \gamma \E_{s'\sim P |s,a}[ V_{\bar{r}}^{\hat{\pi}}(s') ]    ]    \\
    \leq & \E_{s,a \sim \hat{d}^{\hat{\pi}}}[ \hat{r}(s,a) + \gamma \E_{s'\sim\hat{P}|s,a} [V_{\bar{r}}^{\hat{\pi}}(s')]-  \bar{r}(s,a) - \gamma \E_{s'\sim P |s,a}[ V_{\bar{r}}^{\hat{\pi}}(s') ]  
 | (s,a)\notin \KK_\xi ]   \\
    &\quad + \E_{s,a \sim \hat{d}^{\hat{\pi}}}[ |\hat{r}(s,a)-  \bar{r}(s,a)|  + \gamma V_{\max}  D_{TV}(\hat P(\cdot|s,a),{P}(\cdot|s,a)) | (s,a)\in \KK_\xi]\\
    \leq & \xi + \E_{s,a \sim \hat{d}^{\hat{\pi}}}[ \hat{r}(s,a) + \gamma \E_{s'\sim\hat{P}|s,a} [V_{\bar{r}}^{\hat{\pi}}(s')]-  \bar{r}(s,a) - \gamma \E_{s'\sim P |s,a}[ V_{\bar{r}}^{\hat{\pi}}(s') ]  | (s,a)\notin \KK_\xi ]  \\
    \leq& \xi + \E_{s,a \sim \hat{d}^{\hat{\pi}}}[ - V_{\max} -  \bar{r}(s,a)  + \gamma  V_{\max}   | (s,a)\notin \KK_\xi]   \\
    \leq& \xi
\end{align*}
where in the second inequality we use $\KK_\xi \subseteq \supp{\mu}$ (which implies $\tilde{r}=\bar{r}$ in $\KK_\xi$). 

Similarly we can show 
\begin{align*}
    &(1-\gamma) \left( V_{\hat{M}}^\pi(d_0) - V_{\bar{r}}^\pi(d_0) \right)\\
    \geq  &-\xi + \E_{s,a \sim d^\pi}[  \hat{r}(s,a) + \gamma \E_{s'\sim\hat{P}|s,a} [ V_{\hat{M}}^\pi(s')] - \bar{r}(s,a) - \gamma \E_{s'\sim P |s,a}[ V_{\hat{M}}^\pi(s') ] | (s,a)\notin \KK_\xi ]   \\    
    \geq &-\xi -  2 V_{\max} 
    \E_{s,a \sim d^\pi}[ \one[ (s,a)\notin \KK_\xi]  ]
\end{align*}
Combining the three inequalities above gives
\begin{align*}
    (1-\gamma) \left( V_{\bar{r}}^{\pi}(d_0) - V_{\bar{r}}^{\hat{\pi}}(d_0) \right) \leq  2 V_{\max} 
    \E_{s,a \sim d^{\pi}}[ \one[ (s,a)\notin \KK_\xi] ] + 2\xi
\end{align*}
By the concentratability assumption, we can further upper bound 
\begin{align*}
    \E_{s,a \sim d^{\pi}}[ \one[ (s,a)\notin \KK_\xi]  ] \leq C  \E_{s,a \sim \mu}[ \one[ (s,a)\notin \KK_\xi]  ] 
\end{align*}
Therefore we have 
\begin{align*}
    (1-\gamma) \left( V_{\bar{r}}^{\pi}(d_0) - V_{\bar{r}}^{\hat{\pi}}(d_0) \right) 
    &\leq  2 V_{\max} \E_{s,a \sim \mu}[ \one[ (s,a)\notin \KK_\xi] ] + 2\xi\\
    &\leq  2 V_{\max} \frac{\E_{s,a \sim \mu}[ E_\text{sup}(s,a)]}{\xi} + 2\xi\\
    &\leq 4 V_{\max} \frac{\sup_{r_M, P_M} \E_{s,a \sim \mu}[ E(s,a; r_M, P_M)]}{\xi} + 2\xi\\
\end{align*}
where the last step is Markov inequality,  
\begin{align*}
    E_\text{sup}(s,a) = \sup_{M_1,M_2\in \MM^\alpha_{\tilde\DD}}\big\{| r_{M_1}(s,a)- {r}_{M_2}(s,a)| + \gamma V_{\max} D_{TV}(P_{M_1}(\cdot|s,a), P_{M_2}(\cdot|s,a))\big\},
\end{align*}
and
\begin{align*}
    E(s,a; r_M, P_M) = |r_M(s,a)- \tilde{r}(s,a)| + \gamma V_{\max} D_{TV}(P_M(\cdot|s,a),{P}(\cdot|s,a)).
\end{align*}

Now we upper bound the expectation of $E$ over $\mu$. With probability greater than $1-\delta$,
\begin{align*}
    \E_\mu[ E(s,a; r_M, P_M) ] 
    &\leq \sqrt{\E_\mu[(|r_M(s,a)- \tilde{r}(s,a)| + \gamma V_{\max} D_{TV}(P_M(\cdot|s,a),P(\cdot|s,a)) )^2]}\\
    &\leq \sqrt{2}\sqrt{\E_\mu[(r_M(s,a)- \tilde{r}(s,a))^2 + (\gamma V_{\max})^2 D_{TV}(P_M(\cdot|s,a),P(\cdot|s,a))^2 ]}\\
    &\leq (1+\gamma V_{\max}) \sqrt{2}\sqrt{\E_\mu[(r_M(s,a)- \tilde{r}(s,a))^2 + D_{TV}(P_M(\cdot|s,a),P(\cdot|s,a))^2 ]}\\
     &\leq V_{\max} \sqrt{2}\sqrt{\E_\mu[(r_M(s,a)- \tilde{r}(s,a))^2 + D_{TV}(P_M(\cdot|s,a),P(\cdot|s,a))^2 ]}\\
      &\leq V_{\max} \sqrt{2}\sqrt{\frac{|\SS|^2|\AA|\log(\frac{1}{\delta})}{N}}
\end{align*}
where the last step is based on \cref{lm:model error bound}.

\begin{lemma}\citep{xie2022armor} \label{lm:model error bound}
With probability $1-\delta$, for any MDP model $M$, 
    \begin{align*}
        \E_{\mu} [ |r_M(s,a)- r(s,a)|^2 +  D_{TV}(P_M(\cdot|s,a),P(\cdot|s,a))^2 ] \leq O\left( \frac{\EE_{\DD}(M) - \min_{M'} \EE_{\DD}(M') + |\SS|^2|\AA|\log(\frac{1}{\delta})  }{N}  \right).
    \end{align*}
\end{lemma}

Thus, we have 
\begin{align*}
    (1-\gamma) \left( V_{\bar{r}}^{\pi}(d_0) - V_{\bar{r}}^{\hat{\pi}}(d_0) \right) 
    &\leq   O \left( \frac{V_{\max}^2\sqrt{|\SS|^2|\AA|\log(\frac{1}{\delta})}}{\xi \sqrt{N}} \right) + 2\xi\\
    &\leq  O\left( V_{\max} \left( \frac{|\SS|^2|\AA|\log(\frac{1}{\delta})}{N}  \right)^{1/4} \right).
\end{align*}
\end{proof}

By \cref{th:morel performance}, we can apply the previous analysis technique to MOReL to show it is admissible without addition union bounds, which leads to the corollary below.
\begin{corollary}
 \label{th:MOReL admissibility}   
    For $X \geq R$, MOReL is $R$-admissible with respect any $\tilde{\RR} \subseteq ( \SS\times\AA \to [-1,1])$.
\end{corollary}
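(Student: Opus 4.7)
The claim is an immediate consequence of the performance guarantee in \cref{th:morel performance}, so the plan is essentially a bookkeeping argument that matches the hypotheses of that theorem to the admissibility definition in \cref{def:admissibility}. I would begin by fixing an arbitrary $\tilde{r}\in\tilde{\RR}$, the corresponding dataset $\tilde{\DD}$, and a comparator policy $\pi$ satisfying $C \coloneqq \sup_{s,a}\tfrac{d^\pi(s,a)}{\mu(s,a)} < \infty$; if no such $\pi$ exists the admissibility requirement is vacuous for that $\pi$, since $\EE(\pi,\mu)$ need not be $o(1)$.

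Next I would verify the hypotheses of \cref{th:morel performance}. Since $|\tilde{r}(s,a)|\le 1$ and $r_M^\star$ is constrained to $[-1,1]$, both reward bounds hold. The choice $X\ge R$ matches the assumption on the penalty level of the absorbing reward, and taking $V_{\max} = R/(1-\gamma)$ makes the truncation threshold $\xi$ well-defined. With probability at least $1-\delta$, the event $(\tilde{r},P)\in\MM^\alpha_{\tilde{\DD}}$ holds for the choice $\alpha=\Theta(|\SS|^2|\AA|\log(1/\delta))$ (as discussed in the excerpt). All preconditions of \cref{th:morel performance} are then satisfied.

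Applying the theorem and dividing by $(1-\gamma)$ gives, with probability at least $1-\delta$,
\begin{equation*}
V_{\bar{r}}^{\pi}(d_0) - V_{\bar{r}}^{\hat{\pi}}(d_0) \;\le\; O\!\left(\frac{V_{\max}}{1-\gamma}\left(\frac{|\SS|^2|\AA|\log(1/\delta)}{N}\right)^{\!1/4}\right)
\end{equation*}
simultaneously for every $\bar{r}\in\RR_R(\tilde{r})$. Defining $\EE(\pi,\mu)$ to equal this right-hand side (for $\pi$ with finite $C$) satisfies $\EE(\pi,\mu)=o(1)$ as $N\to\infty$, which is exactly what \cref{def:admissibility} demands. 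Taking the maximum over $\bar{r}\in\RR_R(\tilde{r})$ on the left then yields the admissibility inequality.

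The only subtlety, which I would explicitly flag rather than grind through, is why no union bound over the uncountable class $\RR_R(\tilde{r})$ is needed. This is the same reasoning used in the ATAC, PSPI, ARMOR and VI-LCB proofs in the appendix: the high-probability concentration used to control $\MM^\alpha_{\tilde{\DD}}$ and the known-set $\KK_\xi$ depends only on $(s,a,s')$-samples from $\mu$, and on the support of $\mu$ all $\bar{r}\in\RR_R(\tilde{r})$ agree with $\tilde{r}$ by construction; off-support, the bound in the proof of \cref{th:morel performance} is obtained by a crude $V_{\max}$-range argument that applies uniformly to every $\bar{r}\in\RR_R(\tilde{r})$ once $X\ge R$. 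Hence the single high-probability event already suffices for the whole class, completing the proof of $R$-admissibility.
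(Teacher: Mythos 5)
Your proposal is correct and follows essentially the same route as the paper: the paper's own proof of this corollary is just an invocation of the MOReL performance guarantee together with the observation (shared with the ATAC/ARMOR/VI-LCB proofs) that no union bound over $\RR_R(\tilde{r})$ is needed, since concentration is only used on the support of $\mu$ where all such rewards coincide and the off-support contribution is handled by the uniform $V_{\max}$-range argument. Your bookkeeping of the hypotheses and the identification of $\EE(\pi,\mu)$ with the $O(N^{-1/4})$ bound is exactly what the paper leaves implicit.
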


\subsubsection{MOPO}

MOPO~\citep{yu2020mopo} is very similar to MOReL except that it uses negative bonuses (like VI-LCB) instead of truncation. 
In the original paper of MOPO, the authors assume the reward is given. Here we consider a variant that also learns the reward. 
Specifically, given $P_M^\star$ and $r_M^\star$ above it solves the MDP $(\SS,\AA,{P}_M^\star,\hat{r},\gamma)$, where 
\begin{align*}
    \hat{r}(s,a) &= r_M^\star(s,a) - b(s,a)
\end{align*}
While the original MOPO paper does not give a specific design of $b(s,a)$ that is provably correct, in principle making MOPO provably correct is possible.
Essentially, we need to choose a large enough bonus to cover the error of the model-based Bellman operator (defined by the estimated reward and dynamics). 
This would lead to a bonus of order $V_{\max}$ (see the analysis of VI-LCB in \cref{sec:vi-lcb}).
Then we can proceed with an analysis that combines that of MOReL and VI-LCB to prove MOPO's performance guarantee. This can then be used to show (like the previous proofs) that MOPO is admissible. We omit the proof here. 
\begin{corollary} \label{th:MOPO admissibility}
    Suppose $b(s,a) = \Theta(V_{\max})$ for $(s,a)\notin\supp{\mu}$. 
    For $V_{\max}\geq \frac{R}{1-\gamma}$, MOPO is $R$-admissible with respect any $\tilde{\RR} \subseteq ( \SS\times\AA \to [-1,1])$.
\end{corollary}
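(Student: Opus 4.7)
My plan is to combine the telescoping regret decomposition used for MOReL (\cref{th:morel performance}) with the penalty-as-pessimism argument used for VI-LCB (\cref{th:VI-LCB admissibility}), while reusing the model-estimation concentration bound in \cref{lm:model error bound}. Fix $\tilde r \in \tilde{\RR}$ and $\bar r \in \RR_R(\tilde r)$, and let $\hat M = (\SS,\AA,\hat r,P_M^\star,\gamma)$ with $\hat r = r_M^\star - b$. For any comparator $\pi$ with $C := \sup_{s,a} d^\pi(s,a)/\mu(s,a) < \infty$ (so $d^\pi$ is supported on $\supp{\mu}$), I would write
\begin{align*}
V_{\bar r}^\pi(d_0) - V_{\bar r}^{\hat\pi}(d_0) = \underbrace{V_{\bar r}^\pi(d_0) - V_{\hat M}^\pi(d_0)}_{(\mathrm{I})} + \underbrace{V_{\hat M}^\pi(d_0) - V_{\hat M}^{\hat\pi}(d_0)}_{\le\,0\text{ by optimality of }\hat\pi} + \underbrace{V_{\hat M}^{\hat\pi}(d_0) - V_{\bar r}^{\hat\pi}(d_0)}_{(\mathrm{II})}.
\end{align*}

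For term (I), apply the simulation/performance-difference lemma (\cref{lm:pdl}) on $\hat{d}^\pi$ (the visitation of $\pi$ under $P_M^\star$): the per-step error is bounded by $|r_M^\star - \bar r| + b + \gamma V_{\max}\,D_{TV}(P_M^\star,P)$, and since $d^\pi$ and $\hat d^\pi$ are both supported on $\supp{\mu}$ (where $\bar r = \tilde r$), I can bound it by $\tilde O\bigl(\sqrt{C}\,V_{\max}\sqrt{|\SS|^2|\AA|\log(1/\delta)/N}\bigr)$ via \cref{lm:model error bound} and Cauchy--Schwarz, plus $\E_{d^\pi}[b(s,a)]$; choosing the in-support bonus $b(s,a) = \Theta\bigl(V_{\max}\sqrt{|\SS|^2|\AA|\log(1/\delta)/(N\mu(s,a))}\bigr)$ (the VI-LCB-style scaling) makes this contribution $o(1)$. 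For term (II), I use the pessimism-is-a-lower-bound argument: a large enough in-support $b$ ensures $\hat r + \gamma \E_{P_M^\star}[V_{\bar r}^{\hat\pi}] \le \bar r + \gamma \E_P[V_{\bar r}^{\hat\pi}]$ pointwise on $\supp{\mu}$ under the good event of \cref{lm:model error bound}, while the choice $b(s,a) = \Theta(V_{\max})$ off-support makes the same inequality trivially true there since $|\bar r| \le R \le (1-\gamma)V_{\max}$ and $V_{\bar r}^{\hat\pi}\in[-V_{\max},V_{\max}]$. Telescoping then yields $(\mathrm{II}) \le 0$.

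The key admissibility twist, exactly as in the ATAC, VI-LCB and MOReL proofs, is that no union bound over $\bar r \in \RR_R(\tilde r)$ is needed: the concentration for $r_M^\star$ and $P_M^\star$ (\cref{lm:model error bound}) depends only on $\tilde \DD$, and on $\supp{\mu}$ every $\bar r \in \RR_R(\tilde r)$ coincides with $\tilde r$, so the in-support control is uniform; off support, the bound is deterministic via the $[-R,1]$ range and the $\Theta(V_{\max})$ bonus. Combined, this gives $V_{\bar r}^\pi(d_0) - V_{\bar r}^{\hat\pi}(d_0) = o(1)$ simultaneously for every $\bar r \in \RR_R(\tilde r)$ whenever $V_{\max} \ge R/(1-\gamma)$, establishing $R$-admissibility.

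The main obstacle I anticipate is calibrating the in-support bonus so that it (a) is large enough to dominate the model-Bellman error pointwise under the good event (needed for term (II) to be non-positive), and (b) is small enough in expectation under $d^\pi$ so that term (I) still vanishes with $N$. This is the same tension resolved in VI-LCB via the $1/\sqrt{\max(m_t(s,a),1)}$ choice; the cleanest route is to reuse that scaling here, translating empirical counts into $\mu(s,a)$ via standard concentration, and then invoking the single-policy concentrability $C$ to turn $\E_{d^\pi}[b]$ into an $\E_\mu[b]$ average that converges at rate $\tilde O(N^{-1/4})$ (as in \cref{th:morel performance}).
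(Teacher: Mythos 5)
Your proposal follows essentially the route the paper itself prescribes (the paper explicitly omits this proof, saying only that one should combine the MOReL and VI-LCB analyses with a bonus of order $V_{\max}$ covering the model-based Bellman error): your three-term decomposition, the dual calibration of the bonus (pointwise large enough for pessimism in term (II), small enough in expectation for term (I)), and the observation that no union bound over $\bar r\in\RR_R(\tilde r)$ is needed all match the intended argument and the paper's MOReL/VI-LCB proofs. One small correction: term (I) should be expanded via \cref{lm:pdl} over $d^\pi$ under the \emph{true} dynamics (exactly as in the paper's proof of \cref{th:morel performance}), because $\hat d^\pi$ --- the visitation of $\pi$ under the learned model $P_M^\star$ --- need not be supported on $\supp{\mu}$ even when $d^\pi$ is; with that substitution your in-support control of term (I) goes through as stated.
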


\subsubsection{Proof of Technical Lemma}\label{sec:morel-technical-lemma}

\begin{proof}[Proof of~\cref{lm:K-alpha in mu}]
    Consider any $(\bar s,\bar a)\notin\supp{\mu}$. Given any $M=(\SS,\AA,P_M,r_M,\gamma)\in\MM_{\tilde\DD}^\alpha$, we construct $M'=(\SS,\AA, P_M, r_{M'},\gamma)$ with $r_{M'}=r_M$ for all $(s,a)\in (\SS\times\AA)\setminus\{(\bar s, \bar a)\}$, and $r_{M'}(\bar s, \bar a)= -\text{sign}(r_M(\bar s,\bar a))$. By construction, since $M$ and $M'$ agrees on all state-actions except $(\bar s, \bar a)$, which is not in data support, we have $M'\in\MM_{\tilde\DD}^\alpha$. However, we have that
    \begin{equation*}
        \begin{split}
            &\sup_{M_1,M_2\in \MM^\alpha_{\tilde\DD}}\big\{| r_{M_1}(\bar s,\bar a)- {r}_{M_2}(\bar s,\bar a)| + \gamma V_{\max} D_{TV}(P_{M_1}(\cdot|\bar s,\bar a), P_{M_2}(\cdot|\bar s,\bar a))\big\} \\
            \geq &| r_{M}(\bar s,\bar a)- {r}_{M'}(\bar s,\bar a)| + \gamma V_{\max} D_{TV}(P_{M}(\cdot|\bar s,\bar a), P_{M}(\cdot|\bar s,\bar a))\\
            = &| r_{M}(\bar s,\bar a)- {r}_{M'}(\bar s,\bar a)| \geq 1\geq \xi.
        \end{split}
    \end{equation*}
    This means that $(\bar s,\bar a)\notin\KK_\xi$. Therefore $\KK_\xi\subseteq \supp{\mu}$.
\end{proof}

\section{Grid World}
In this section, we describe the full details of the grid world study that was discussed in Section 3.1.

\begin{figure}[b]
\centering
\begin{minipage}{.28\textwidth}
  \centering
  \includegraphics[width=0.9\linewidth]{./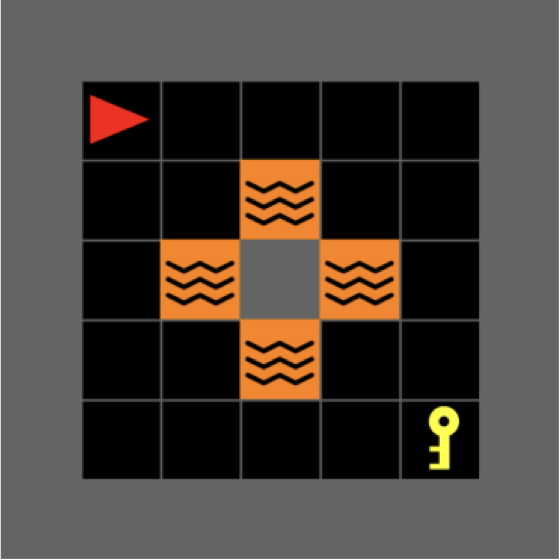}
  \label{fig:test1}
\end{minipage}%
\begin{minipage}{.4\textwidth}
  \centering
  \includegraphics[width=0.9\linewidth]{./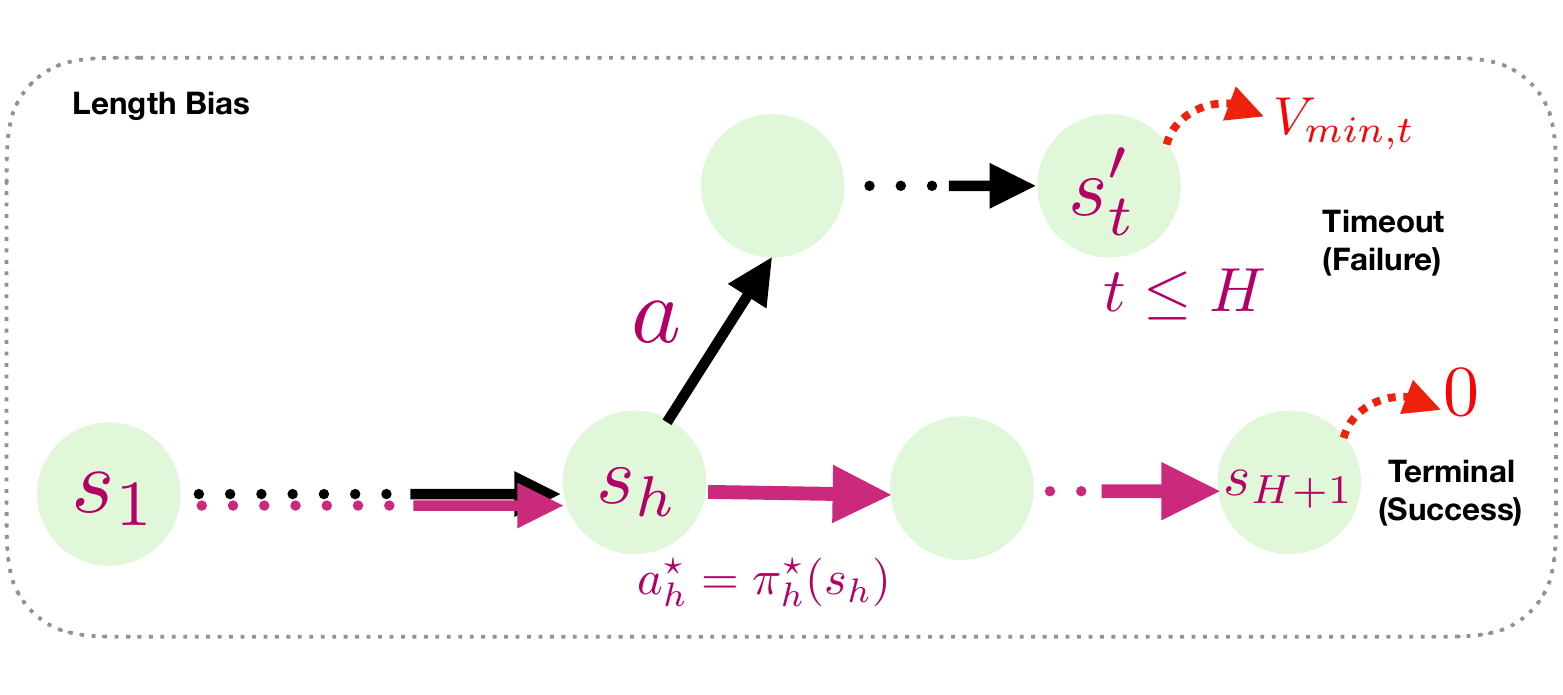}
  \label{fig:test2}
\end{minipage}
\begin{minipage}{.28\textwidth}
  \centering
  \includegraphics[width=0.9\linewidth]{./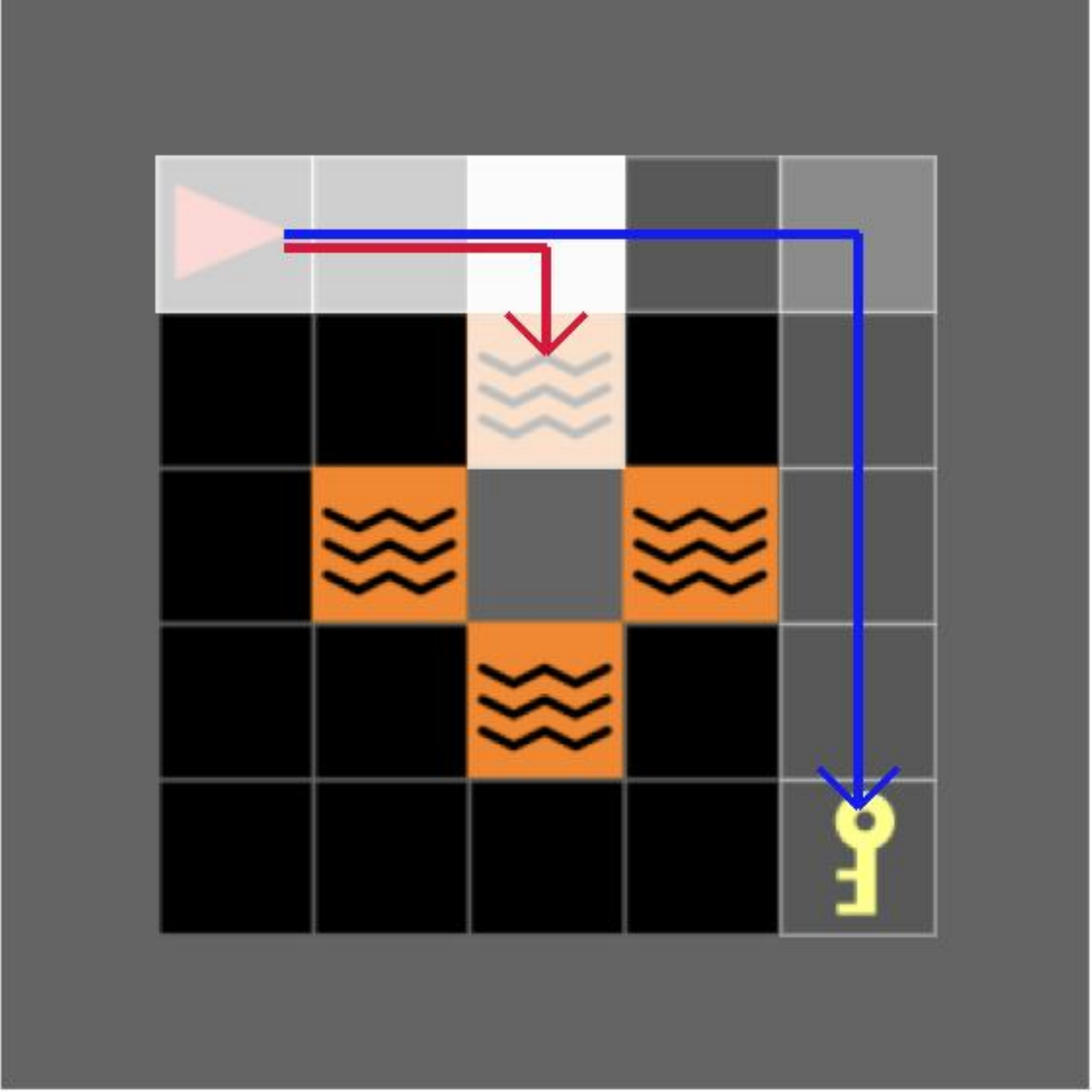}
  \label{fig:test3}
\end{minipage}
\caption{\textbf{Left}: A goal-directed gridworld navigation task where the agent (red triangle) has to reach the goal (key) and avoid lava (orange waves). \textbf{Center}: Shows the length bias that helps offline RL succeed even when rewards are entirely replaced with random or adversarial values. \textbf{Right}: Results on the grid world task. Behavior cloning fails to solve the task (red path) while using an offline RL method leads to success (blue path) \emph{even with wrong rewards.} The opacity of the grid square overlay indicates how frequent that state is in the dataset (more opaque means more frequent).}
\label{fig:minigrid-full}
\end{figure}

\paragraph{Environment.} We consider goal-directed navigation in the environment shown in~\cref{fig:minigrid-full}. We use a fixed grid world layout shown in the figure. The agent's state is given by a tuple $(x, y, d)$ where $(x, y) \in [5]^2$ represents the 2-d coordinate and $d \in \{N, W, E, S\}$ encodes the North, West, East, and South direction respectively that the agent is facing. The agent's action space is $\AA = \{f, l, r\}$ where $f$ denotes the action of moving to grid square in front of the agent, $l$ denotes a left turn of 90 degrees, and $r$ denotes a right turn of 90 degrees. The agent can go through the lava (orange wavy square) but cannot go through the wall (grey square). The goal (key) is an absorbing state. The agent gets a reward of +1 when first visiting the goal followed by a reward of 0 for then staying in the goal. The agent gets a reward of -1 for reaching lava and a reward of -0.01 for all other actions to incentivize the agent to reach the goal along the shortest safe path. We use a horizon of $H=20$. The agent deterministically starts in the top-right corner shown in~\cref{fig:minigrid-full}.

\paragraph{Dataset.} We collect a dataset $\DD$ of 500 episodes by taking 100 identical optimal episodes along with 400 identical episodes that all touch the topmost lava field. We introduce a length bias by terminating an episode if the agent touches the lava or fails to reach the goal in $H-1$ steps (i.e., the agent doesn't survive). In addition to the original setting with the observed reward, we consider three additional settings where the rewards in the dataset are replaced by: (i) a constant zero reward, (ii) their negative, and (iii) a random number sampled uniformly in $[0, 1]$.

\begin{algorithm}[H]
\caption{PEVI$(\SS, \AA, H, \DD, \beta, V_{\min}, V_{\max})$~\cite{jin2021pessimism}. We are given access to a set $\DD$ of episodes ($\tau$). Additionally, we are given a hyperparameter $\beta \in \RR_{\ge 0}$ that controls pessimism. The hyperparameters $V_{\min}, V_{\max} \in \RR^{H}$ indicate bounds on policy value with $V_{\min,h}$ and $V_{\max,h}$ denoting the minimum and maximum policy values respectively for time step $h \in [H]$. Unlike the original PEVI algorithm of~\cite{jin2021pessimism}, we treat \blue{$V_{\min}, V_{\max}$} as hyperparameters.}
\label{alg:pevi}
\begin{algorithmic}
\STATE Set $V_{H+1} = 0$
\FOR{$h=H, \cdots, 1$}
\FOR{$s \in \SS$}
\FOR{$a \in \AA$}
\STATE $n_h(s, a) = \sum_{\tau \in \DD} \one\{s^\tau_h=s\land a^\tau_h=a\}$
\STATE \textbf{If}$~~n_h(s, a) > 0$
\STATE \hspace*{0.4cm} $\tilde{Q}_h(s, a) = \frac{1}{n_h(s, a)}\sum_{\tau \in \DD} \one\{s^\tau_h=s \land a^\tau_h=a\}\{r^\tau_h + V_{h+1}(s^\tau_{h+1})\}$
\STATE \hspace*{0.4cm} $\bar{Q}_h(s, a) = \tilde{Q}_h(s, a) - \frac{\beta}{\sqrt{n_h(s, a)}}$
\STATE \hspace*{0.4cm} $\widehat{Q}_h(s, a) = \max\{V_{\min,h}, \min\{\bar{Q}_h(s, a), V_{\max,h}\}\}$
\STATE \textbf{Else}
\STATE \hspace*{0.4cm}  $\widehat{Q}_h(s, a) = V_{\min, h}$
\ENDFOR
\STATE $\pi_h(s) = \arg\max_{a \in \AA} \widehat{Q}_h(s, a)$
\STATE $V_h(s) = \widehat{Q}_h(s, \pi_h(s))$
\ENDFOR
\ENDFOR
\STATE \textbf{Return} $\pi = (\pi_{1:H})$
\end{algorithmic}
\end{algorithm}

\paragraph{Methods.} We evaluate two methods: behavior cloning which simply takes the action with the highest empirical count in a given state and a random action if the state was unseen, and PEVI~\cite{jin2021pessimism} an offline RL method with guarantees for tabular MDP. We provide a pseudocode of PEVI in Algorithm~\ref{alg:pevi}. Unlike the original PEVI in~\cite{jin2021pessimism}, we provide minimum and maximum bounds for the value function for each time step as an additional hyperparameter. These bounds should be dictated by our CMDP framework and should penalize out-of-distribution actions. In principle, we need to set these bounds such that the algorithm is admissible. Please see \cref{app:why} and \cref{app:finite horizon} for details.

Intuitively, PEVI performs dynamic programming similar to a standard value iteration with two key changes. Firstly, it uses pessimistic value initialization where the learned value $V_h(s)$ of a state $s$ at time step $h$ is set to the lowest possible return $V_{min,h}$ if the state is unseen at time step $h$. Secondly, when performing value iteration it adds a pessimism penalty of $\nicefrac{-\beta}{\sqrt{n(s, a)}}$ to the dataset reward where $n(s, a)$ is the empirical state-action count in $\DD$. This pessimism penalty ensures that the learned value function lower bounds the true value function (hence a pessimistic estimate). In our experiments, we vary $\beta$ and set $V_{\min, h} = \tilde{V}_{\min,h} - \beta (H - h + 1) - 1$ where $\tilde{V}_{\min,h}$ is the minimum possible return for the given reward type that we are working with. The value of $\tilde{V}_{\min,h}$ is $-(H - h + 1)$ for the original reward, $-1$ for the negative reward, and $0$ for the zero reward and random reward cases). We define $V_{\max,h} = \tilde{V}_{\max,h}$ where $\tilde{V}_{\max,h}$ is the maximum possible return for the given reward type. The value of $\tilde{V}_{\max,h}$ is $1$ of the original reward, $(H-h+1)$ for the random reward and the negative reward, and $0$ for the zero reward case.

\paragraph{Results.} We show numerical results in Table~\ref{tab:gridworld_eval}. We visualize results in~\cref{fig:minigrid-full}. The behavior cloning simply imitates the most common trajectory in the dataset which results in going to the lava (shown in red in the figure). In contrast, the PEVI is able to reach the goal in every reward situation.

\begin{table}
    \centering
    \begin{tabular}{l|c|c}
    \hline
        \textbf{Reward Type} & \textbf{Behavior Cloning} & \textbf{PEVI} \\
        \hline 
       Original Reward & $-1.19\pm 0.00$ &  $0.92 \pm 0.00$\\
       Zero Reward &  $-1.19\pm 0.00$ &  $0.92 \pm 0.00$\\
       Random Reward &  $-1.19\pm 0.00$ &  $0.92 \pm 0.00$\\
       Negative Reward &  $-1.19\pm 0.00$  &  $0.92 \pm 0.00$\\
       \hline
    \end{tabular}\\[.1cm]
    \caption{\textbf{Gridworld results:} Mean return on 1000 test episodes. PEVI achieves the optimal return of $V^\star=0.92$.}
    \label{tab:gridworld_eval}
\end{table}

\paragraph{Explanation.} The reason why PEVI works can be understood simply by pessimistic initialization and length bias. The length bias in this setting is visualized in the center of~\cref{fig:minigrid-full}. PEVI assigns the final state $s'_t$ of a failed trajectory will get a pessimistic value of $V_t(s'_t) = V_{\min,_t}$. This value isn't updated as we never take any action in $s'_t$ due to timeout. In our case, $s'_t$ will be the state where the agent first visits the lava. In contrast, the final state of a successful trajectory $s_{H+1}$ gets assigned a value of 0. If $V_{\min}$ is sufficiently negative, then PEVI will only consider policies that stay all $H$ steps within the data support. Due to length bias, this is not true for the non-surviving trajectories that reach the lava. Further, the only trajectories in our case that complete end up reaching and staying in the goal. Therefore, under the assumption that $V_{\min,h}$ are all sufficiently small, PEVI will learn a policy reaches the goal irrespective of the correctness of the reward in the data. The sufficiency of the negativity of $V_{\min,h}$ is implied by our CMDP framework. Lastly, note that as the world is deterministic, we didn't need to use the pessimism arising from the $\frac{-\beta}{\sqrt{n(s, a)}}$ penalty term.

\end{document}